\newcommand{\defeq}{\mathrel{\mathop:}=}
\newcommand{\vect}[1]{\ensuremath{\mathbf{#1}}}
\newcommand{\mat}[1]{\ensuremath{\mathbf{#1}}}
\newcommand{\grad}{\nabla}
\newcommand{\hess}{\nabla^2}
\newcommand{\norm}[1]{\|{#1}\|}
\newcommand{\trans}{^{\top}}
\newcommand{\poly}{\text{poly}}
\newcommand{\proj}{\mathcal{P}}
\newcommand\inner[2]{\langle #1, #2 \rangle}
\newcommand{\Z}{\mathbb{Z}}
\newcommand{\R}{\mathbb{R}}
\newcommand{\E}{\mathbb{E}}
\renewcommand{\P}{\mathbb{P}}
\newcommand{\Ind}{\mathbb{I}}
\newcommand{\fracpar}[2]{\frac{\partial #1}{\partial  #2}}
\newcommand{\A}{\mat{A}}
\newcommand{\B}{\mat{B}}
\newcommand{\I}{\mat{I}}
\newcommand{\D}{\mat{D}}
\newcommand{\e}{\vect{e}}
\newcommand{\g}{\vect{g}}
\renewcommand{\u}{\vect{u}}
\renewcommand{\v}{\vect{v}}
\newcommand{\w}{\vect{w}}
\newcommand{\x}{\vect{x}}
\newcommand{\y}{\vect{y}}
\newcommand{\z}{\vect{z}}
\renewcommand{\H}{\mathcal{H}}
\newcommand{\alg}{\mathcal{A}}
\newcommand{\F}{\mathcal{F}}
\newcommand{\nn}{\nonumber}
\newcommand{\eps}{\varepsilon}
\newcommand{\mineval}{\lambda_{\text{min}}}
\renewcommand{\sf}{f}
\newcommand{\sF}{F}
\newcommand{\sS}{S}
\newcommand{\sH}{H}
\newcommand{\tf}{\tilde{f}}
\newcommand{\tF}{\tilde{F}}
\newcommand{\tS}{\tilde{S}}
\newcommand{\tH}{\tilde{H}}
\newcommand{\gnu}{\tilde{\nu}}
\newtheorem{theorem}{Theorem}
\newtheorem{lemma}[theorem]{Lemma}
\theoremstyle{definition}
\newtheorem{definition}[theorem]{Definition}
\newtheorem{assumption}{Assumption}
\newtheorem{prob}{Problem}
\newcommand{\relu}{\text{ReLU}}
\newcommand{\emploss}{\hat{R}_n}
\newcommand{\poploss}{R}
\newcommand{\neibor}{\mathfrak{B}}
\renewcommand{\D}{\mathcal{D}}
\newcommand{\btheta}{\bm{\theta}}
\newcounter{lnotecount}[section]
\title{On the Local Minima of the Empirical Risk}
\author{
	Chi Jin\thanks{The first two authors contributed equally.} \\
	University of California, Berkeley \\
	\texttt{chijin@cs.berkeley.edu} 
	\And Lydia T.~Liu\footnotemark[1] \\
	University of California, Berkeley \\
	\texttt{lydiatliu@cs.berkeley.edu} 
	\And Rong Ge  \\
	Duke University \\
	\texttt{rongge@cs.duke.edu} 
	\And Michael I.~Jordan \\
	University of California, Berkeley \\
	\texttt{jordan@cs.berkeley.edu} 
}
\begin{document}

\maketitle

\begin{abstract}

Population risk is always of primary interest in machine learning; 
however, learning algorithms only have access to the empirical risk. 
Even for applications with nonconvex nonsmooth losses (such as modern 
deep networks), the population risk is generally significantly more 
well-behaved from an optimization point of view than the empirical
risk.  In particular, sampling can create many spurious local minima.
We consider a general framework which aims to optimize a smooth nonconvex 
function $F$ (population risk) given only access to an approximation 
$f$ (empirical risk) that is pointwise close to $F$ (i.e., 
$\norm{F-f}_{\infty} \le \nu$). Our objective is to find the 
$\epsilon$-approximate local minima of the underlying function $F$ while 
avoiding the shallow local minima---arising because of the tolerance $\nu$---which 
exist only in $f$. We propose a simple algorithm based on stochastic 
gradient descent (SGD) on a smoothed version of $f$ that is guaranteed 
to achieve our goal as long as $\nu \le O(\epsilon^{1.5}/d)$.
We also provide an almost matching lower bound showing that our algorithm 
achieves optimal error tolerance $\nu$ among all algorithms making a 
polynomial number of queries of $f$.
As a concrete example, we show that our results can be directly used to 
give sample complexities for learning a ReLU unit.


\end{abstract}


\section{Introduction}

The optimization of nonconvex loss functions has been key to the success of 
modern machine learning.  While classical research in optimization focused
on convex functions having a unique critical point that is both locally 
and globally minimal, a nonconvex function can have many local maxima, 
local minima and saddle points, all of which pose significant challenges for 
optimization. A recent line of research has yielded significant progress
on one aspect of this problem---it has been established that favorable 
rates of convergence can be obtained even in the presence of saddle points, using 
simple variants of stochastic gradient descent~\citep[e.g.,][]{ge2015escaping, 
carmon16accelerated,agarwal17finding,jin17escape}.  These research results
have introduced new analysis tools for nonconvex optimization, and it is
of significant interest to begin to use these tools to attack the problems 
associated with undesirable local minima.

It is NP-hard to avoid all of the local minima of a general nonconvex function. 
But there are some classes of local minima where we might expect that simple
procedures---such as stochastic gradient descent---may continue to prove effective.  
In particular, in this paper we consider local minima that are created by small 
perturbations to an underlying smooth objective function.  Such a setting is natural 
in statistical machine learning problems, where data arise from an underlying
population, and the population risk, $F$, is obtained as an expectation over
a continuous loss function and is hence smooth; i.e., we have $F(\btheta) = 
\E_{\z \sim \D} [L(\btheta; \z)]$, for a loss function $L$ and population distribution 
$\D$.   The sampling process turns this smooth risk into an empirical risk, 
$f(\btheta) = \sum_{i=1}^n L(\btheta; \z_i)/n$, which may be nonsmooth and 
which generally may have many shallow local minima.  From an optimization 
point of view $f$ can be quite poorly behaved; indeed, it has been observed in 
deep learning that the empirical risk may have exponentially many shallow 
local minima, even when the underlying population risk is well-behaved and 
smooth almost everywhere \citep{brutzkus17globally,auer96exponentially}.
From a statistical point of view, however, we can make use of classical 
results in empirical process theory~\citep[see, e.g.,][]{boucheron,
bartlett03rademacher} to show that, under certain assumptions on the
sampling process, $f$ and $F$ are uniformly close:
\begin{equation}\label{eq:problem_intro}
    \norm{F - f}_{\infty} \le \nu, 
\end{equation}
where the error $\nu$ typically decreases with the number of samples $n$.  
See Figure \ref{fig:intro}(a) for a depiction of this result, and
Figure \ref{fig:intro}(b) for an illustration of the effect of sampling on
the optimization landscape. We wish to exploit this nearness of $F$
and $f$ to design and analyze optimization procedures that find 
approximate local minima (see Definition \ref{def:exactSOSP}) of the 
smooth function $F$, while avoiding the local minima that exist only in 
the sampled function $f$.

Although the relationship between population risk and empirical risk is
our major motivation, we note that other applications of our framework 
include two-stage robust optimization and private learning (see Section 
5.2).  In these settings, the error $\nu$ can be viewed as the amount of adversarial 
perturbation or noise due to sources other than data sampling.  As in the
sampling setting, we hope to show that simple algorithms such as stochastic 
gradient descent are able to escape the local minima that arise as a function
of $\nu$.

\begin{figure}[t]
	\centering
	\includegraphics[width=0.28\textwidth, trim=80 70 250 10, clip]{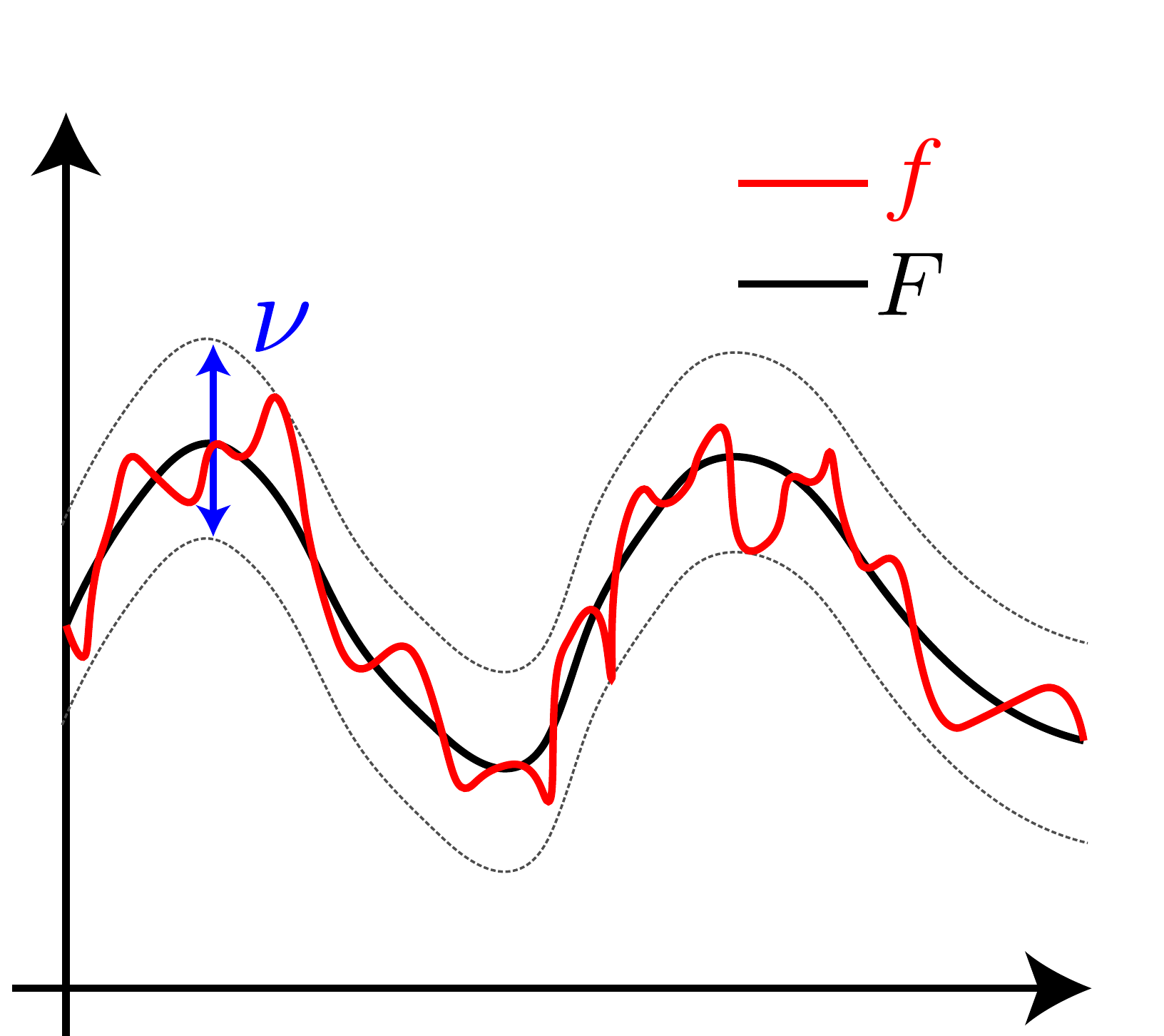}%
	\includegraphics[width=0.36\textwidth, trim=0 30 30 10, clip]{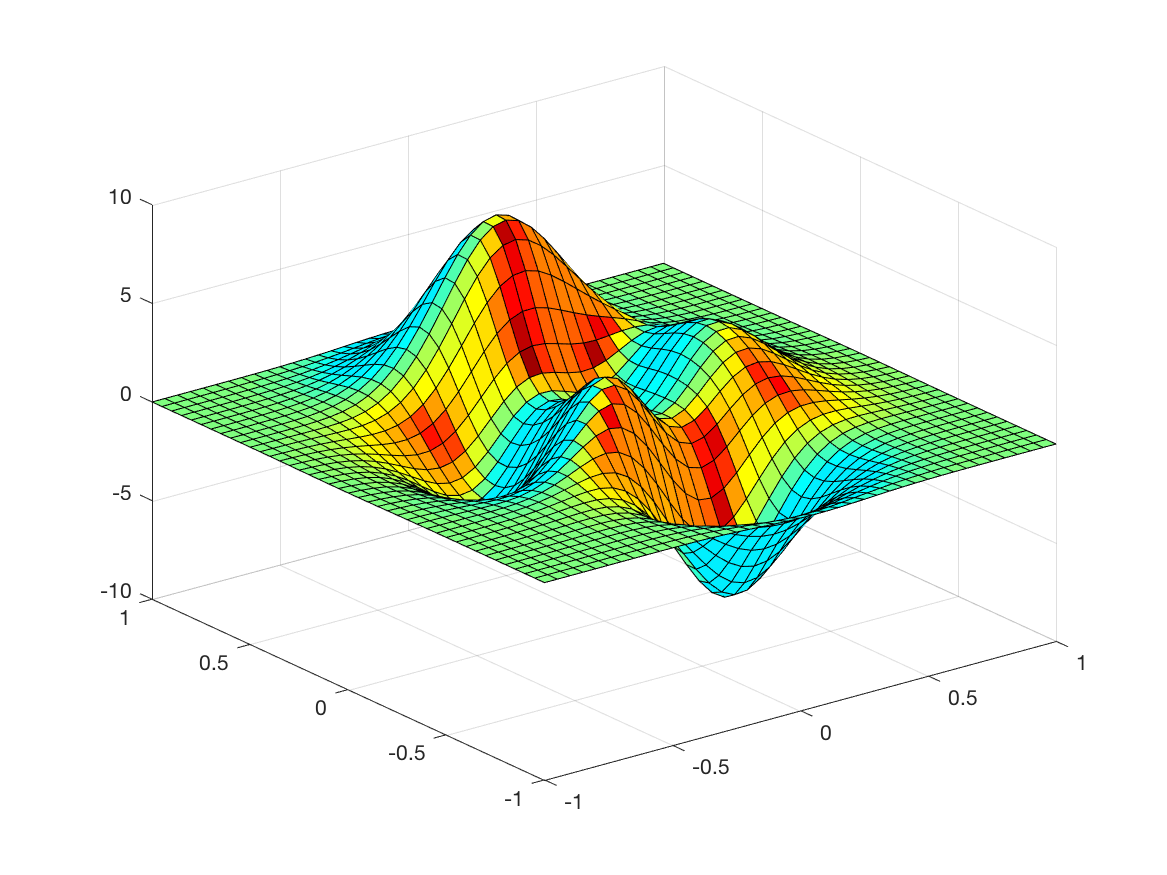}%
	\includegraphics[width=0.36\textwidth, trim=0 30 30 10, clip]{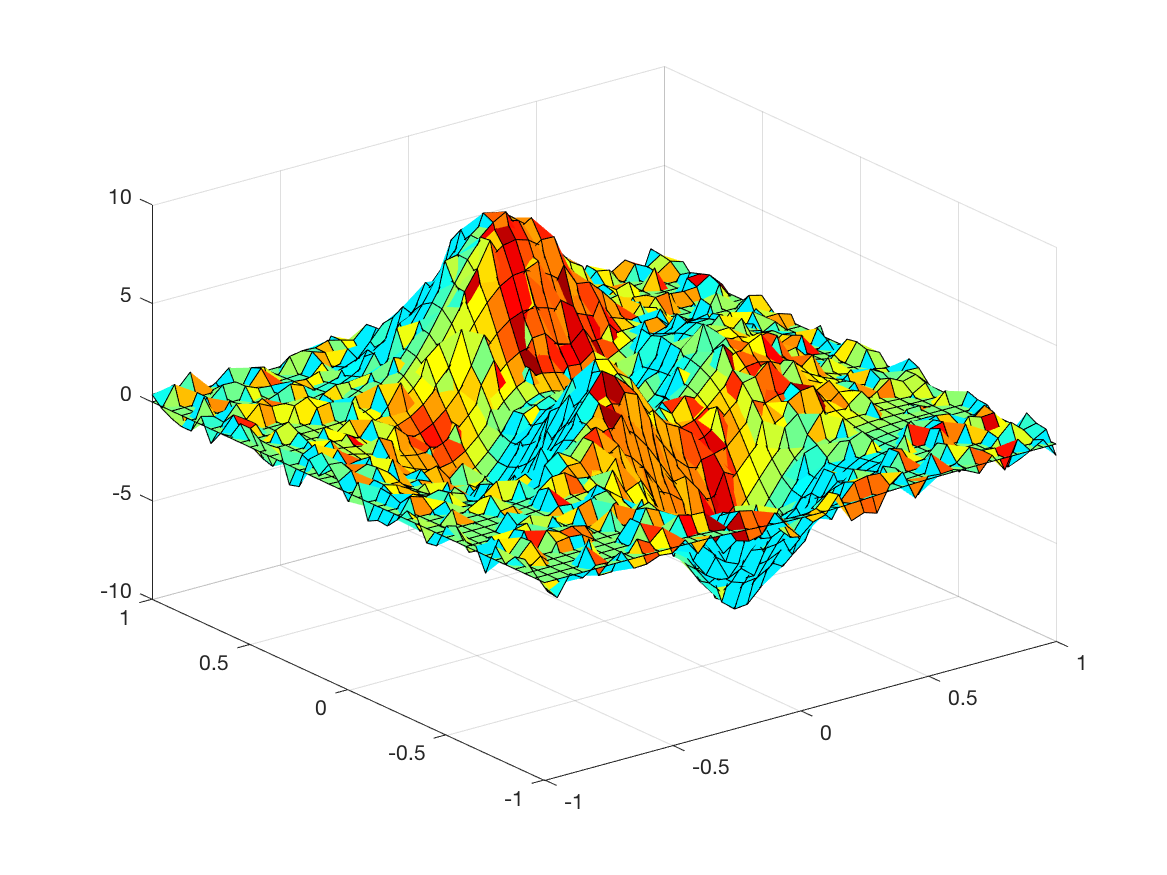}
	\caption{a) Function error $\nu$; b) Population risk vs empirical risk} \label{fig:intro}
\end{figure}

Much of the previous work on this problem studies relatively small values of $\nu$,
leading to ``shallow'' local minima, and applies relatively large amounts of noise,
through algorithms such as simulated annealing \citep{belloni2015escaping} 
and stochastic gradient Langevin dynamics (SGLD) \citep{zhang2017hitting}. 
While such ``large-noise algorithms'' may be justified if the goal is to
approach a stationary distribution, it is not clear that such large 
levels of noise is necessary in the optimization setting in order to escape 
shallow local minima.  
The best existing result for the setting of nonconvex 
$F$ requires the error $\nu$ to be smaller than $O(\epsilon^2/d^8)$, 
where $\epsilon$ is the precision of the optimization guarantee (see Definition 
\ref{def:exactSOSP}) and $d$ is the problem dimension~\citep{zhang2017hitting} 
(see Figure~\ref{fig:contri}).  A fundamental question is whether algorithms
exist that can tolerate a larger value of $\nu$, which would imply that they can 
escape ``deeper'' local minima.  In the context of empirical risk minimization, 
such a result would allow fewer samples to be taken while still providing a
strong guarantee on avoiding local minima.

We thus focus on the two central questions: \textbf{(1) Can a simple, 
optimization-based algorithm avoid shallow local minima despite the lack 
of ``large noise''?  (2) Can we tolerate larger error $\nu$ in the 
optimization setting, thus escaping ``deeper'' local minima?  
What is the largest error that the best algorithm can tolerate?} 

In this paper, we answer both questions in the affirmative, establishing
optimal dependencies between the error $\nu$ and the precision of a 
solution~$\epsilon$. We propose a simple algorithm based on 
SGD (Algorithm~\ref{algo:PSGD}) that is guaranteed to find an approximate 
local minimum of $F$ efficiently if $\nu \le O(\epsilon^{1.5}/d)$, 
thus escaping all saddle points of $F$ and all additional local minima 
introduced by $f$. Moreover, we provide a matching lower bound (up to logarithmic factors) for all algorithms making a polynomial number of queries of $f$.
The lower bound shows that 
our algorithm achieves the optimal tradeoff between $\nu$ and $\epsilon$, 
as well as the optimal dependence on dimension $d$.  We also consider 
the information-theoretic limit for identifying an approximate local 
minimum of $F$ regardless of the number of queries. We give a sharp 
information-theoretic threshold: $\nu = \Theta(\epsilon^{1.5})$ (see 
Figure~\ref{fig:contri}). 

As a concrete example of the application to minimizing population risk, we show 
that our results can be directly used to give sample complexities for learning 
a ReLU unit, whose empirical risk is nonsmooth while the population risk is smooth 
almost everywhere.

\vspace{-5mm}

\begin{figure}[t]
\centering
\includegraphics[width=.9\textwidth]{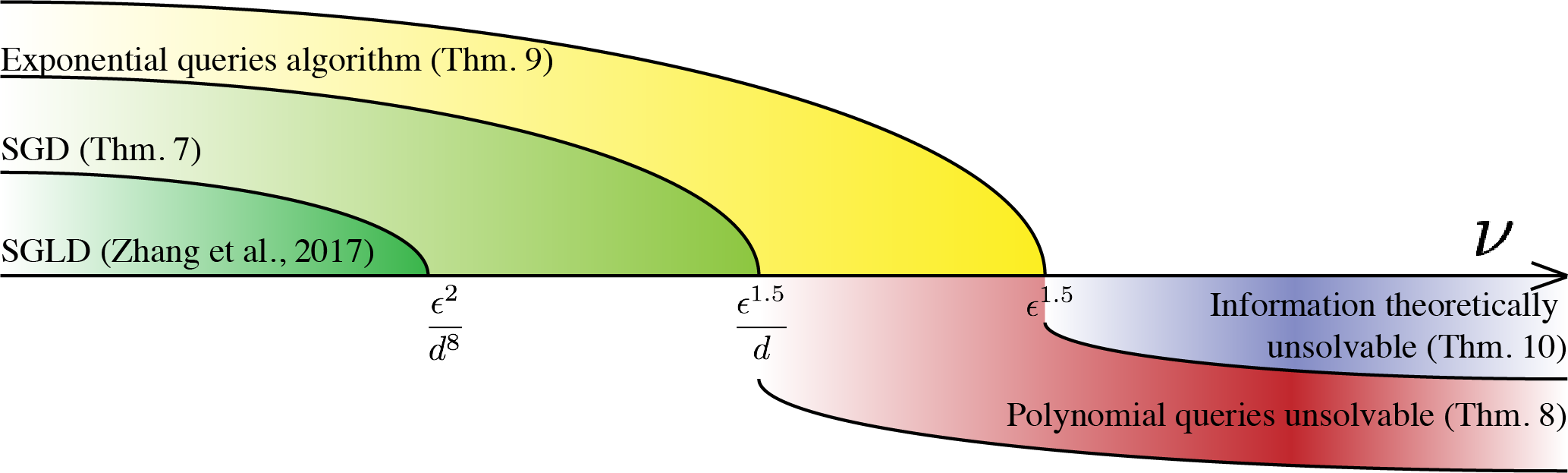}
\caption{Complete characterization of error $\nu$ vs accuracy $\epsilon$ and dimension $d$.} \label{fig:contri}
\end{figure}

\subsection{Related Work}
\label{sec:related}


A number of other papers have examined the problem of optimizing a target function 
$F$ given only function evaluations of a function $f$ that is pointwise close to $F$. 
\citet{belloni2015escaping} proposed an algorithm based on simulated annealing. 
The work of \citet{risteski16alg} and  \citet{singer15information} discussed
lower bounds, though only for the setting in which the target function $F$ is convex.
For nonconvex target functions $F$, \citet{zhang2017hitting} studied the problem of 
finding approximate local minima of $F$, and proposed an algorithm based on 
Stochastic Gradient Langevin Dynamics (SGLD) \citep{welling11bayesian},
with maximum tolerance for function error $\nu$ scaling as $O(\epsilon^2/d^8)$\footnote{The difference between the scaling for $\nu$ asserted here and the $\nu = O(\epsilon^2)$ 
claimed in \citep{zhang2017hitting} is due to difference in assumptions. 
In our paper we assume that the Hessian is Lipschitz with respect to the 
standard spectral norm; \citet{zhang2017hitting} make such an assumption with 
respect to nuclear norm.}. Other than difference in algorithm style and $\nu$ tolerance as shown in Figure \ref{fig:contri}, we also note that we do not require regularity assumptions on top of smoothness, which are inherently required by the MCMC algorithm proposed in \citet{zhang2017hitting}.
Finally, we note that in parallel, \citet{kleinberg18an} solved a similar problem using SGD under the assumption that $F$ is one-point convex.



Previous work has also studied the relation between the landscape of empirical risks and the landscape of population risks for nonconvex functions. \citet{mei2016landscape} examined a special case where the individual loss functions $L$ are also smooth, which under some assumptions implies uniform convergence of the gradient and Hessian of the empirical risk to their population versions. 
 \citet{loh2013regularized} showed for a restricted class of nonconvex losses that even though many local minima of the empirical risk exist, they are all close to the global minimum of population risk. 

Our work builds on recent work in nonconvex optimization, in particular, 
results on escaping saddle points and finding approximate local minima. 
Beyond the classical result by \citet{nesterov1998introductory} for finding 
first-order stationary points by gradient descent, recent work has given
guarantees for escaping saddle points by gradient descent \citep{jin17escape} 
and stochastic gradient descent \citep{ge2015escaping}. 
\citet{agarwal17finding} and \citet{carmon16accelerated} established faster 
rates using algorithms that make use of Nesterov's accelerated gradient descent 
in a nested-loop procedure \citep{nesterov83accel}, and \citet{jin17accel} have
established such rates even without the nested loop.
There have also been empirical studies on various types of local 
minima \citep[e.g.][]{keskar2016large, dinh2017sharp}.

Finally, our work is also related to the literature on zero-th order optimization 
or more generally, bandit convex optimization. Our algorithm uses 
function evaluations to construct a gradient estimate and perform SGD, 
which is similar to standard methods in this community \citep[e.g.,][]{flaxman05online, 
agarwal10optimal,duchi15optimal}. Compared to first-order optimization, however,
the convergence of zero-th order methods is typically much slower, depending 
polynomially on the underlying dimension even in the convex setting \citep{shamir13bandit}. 
Other derivative-free optimization methods include simulated annealing \citep{Kirkpatrick671} and evolutionary algorithms \citep{rechenberg73}, whose convergence guarantees are less clear.

\section{Preliminaries}\label{sec:prelim}

\paragraph{Notation}
We use bold lower-case letters to denote vectors, as in $\x, \y, \z$. We use $\norm{\cdot}$ to denote the $\ell_2$ norm of vectors and spectral norm of matrices. For a matrix, $\lambda_{\min}$ denotes its smallest eigenvalue. For a function $f:\R^d \to \R$, $\grad f $ and $\hess f$ denote its gradient vector and Hessian matrix respectively. We also use $\norm{\cdot}_\infty$ on a function $f$ to denote the supremum of its absolute function value over entire domain, $\sup_{\x \in \R^d} |f|$. We use $\mathbb{B}_0(r)$ to denote the $\ell_2$ ball of radius $r$ centered at $0$ in~$\R^d$. 
We use 
notation $\tilde{O}(\cdot),\tilde{\Theta}(\cdot), \tilde{\Omega}(\cdot) $ to hide only absolute constants and
poly-logarithmic factors. A multivariate Gaussian distribution with mean $\boldsymbol{0}$ and covariance $\sigma^2$ in every direction is denoted as $\mathcal{N}(\boldsymbol{0}, \sigma^2\boldsymbol{I})$. Throughout the paper, we say ``polynomial number of queries'' to mean that the number of queries depends polynomially on all problem-dependent parameters. 

\vspace{-2mm}

\paragraph{Objectives in nonconvex optimization} Our goal is to find a point that has zero gradient and positive semi-definite Hessian, thus escaping saddle points.  We formalize this idea as follows.
\begin{definition}\label{def:exactSOSP}
	$\x$ is called a \textbf{second-order stationary point} (SOSP) or \textbf{approximate local minimum} of a function $F$ if
	\begin{equation*}
		\norm{\grad F(\x)} = 0 \text{~and~} \lambda_{\min}(\hess F(\x)) 
		\geq 0.
	\end{equation*}
\end{definition}

We note that there is a slight difference between SOSP and local 
minima---an SOSP as defined here
does not preclude higher-order saddle points, which themselves can be NP-hard to escape from \citep{AnandkumarG16}.

Since an SOSP is characterized by its gradient and Hessian, and since 
convergence of algorithms to an SOSP will depend on these derivatives
in a neighborhood of an SOSP, it is necessary to impose smoothness
conditions on the gradient and Hessian.  A minimal set of conditions
that have become standard in the literature are the following.
\begin{definition}\label{def:gradLip}
A  function $F$ is \textbf{$\ell$-gradient Lipschitz} if 
$~ \forall \x, \y ~~\norm{\grad F(\x) - \grad F(\y)} \le \ell \norm{\x - \y}.$
\end{definition}

\begin{definition}\label{def:HessLip}
A function $F$ is \textbf{$\rho$-Hessian Lipschitz} if 
$~\forall \x, \y ~~\norm{\hess F(\x) - \hess F(\y)} \le \rho \norm{\x - \y}.$
\end{definition}

\noindent
Another common assumption is that the function is bounded.


\begin{definition}\label{def:bounded}
A function $F$ is \textbf{$B$-bounded} if for any $\x$ that $|F(\x)| \le B$.
\end{definition}


For any finite-time algorithm, we cannot hope to find an \emph{exact} SOSP. 
Instead, we can define $\epsilon$-approximate SOSP that satisfy relaxations 
of the first- and second-order optimality conditions.  Letting $\epsilon$
vary allows us to obtain rates of convergence.
\begin{definition}\label{def:SOSP}
$\x$ is an $\epsilon$-\textbf{second-order stationary point} ($\epsilon$-SOSP) of a $\rho$-Hessian Lipschitz function $F$ if
\begin{equation*}
\norm{\grad F(\x)} \le \epsilon \text{~and~} \lambda_{\min}(\hess F(\x)) \ge -\sqrt{\rho\epsilon}.
\end{equation*}
\end{definition}

Given these definitions, we can ask whether it is possible to find 
an $\epsilon$-SOSP in polynomial time under the Lipchitz properties.
Various authors have answered this question in the affirmative.
\begin{theorem}
\cite[e.g.][]{carmon16accelerated,agarwal17finding,jin17escape} \label{thm:basic}
If the function $F:\R^d\to \R$ is $B$-bounded, $l$-gradient Lipschitz and $\rho$ Hessian Lipschitz, given access to the gradient (and sometimes Hessian) of $F$, it 
is possible to find an $\epsilon$-SOSP in $\mbox{poly}(d, B,l,\rho,1/\epsilon)$ time.
\end{theorem}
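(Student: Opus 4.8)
The plan is to exhibit a descent-based method and argue that it reaches an $\epsilon$-SOSP after polynomially many iterations. The driving observation is a trichotomy at the current iterate $\x$: either (i) $\norm{\grad F(\x)} > \epsilon$, or (ii) $\norm{\grad F(\x)} \le \epsilon$ but $\lambda_{\min}(\hess F(\x)) < -\sqrt{\rho\epsilon}$, or (iii) $\x$ already satisfies Definition~\ref{def:SOSP} and we halt. I would show that in cases (i) and (ii) one cheap update strictly decreases $F$ by a fixed amount, so that $B$-boundedness caps the number of such updates.

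In case (i), the standard descent lemma for an $\ell$-gradient-Lipschitz function gives $F(\x - \tfrac{1}{\ell}\grad F(\x)) \le F(\x) - \tfrac{1}{2\ell}\norm{\grad F(\x)}^2 \le F(\x) - \tfrac{\epsilon^2}{2\ell}$, a decrease of $\Omega(\epsilon^2/\ell)$. In case (ii), let $\e$ be a unit eigenvector of $\hess F(\x)$ with eigenvalue $\lambda = \lambda_{\min}(\hess F(\x))$, its sign chosen so that $\grad F(\x)^\top \e \le 0$. Taylor expansion with $\rho$-Hessian-Lipschitz remainder yields, for any $r>0$,
$$F(\x + r\e) \le F(\x) + r\,\grad F(\x)^\top\e + \tfrac{r^2}{2}\lambda + \tfrac{\rho r^3}{6} \le F(\x) - \tfrac{r^2}{2}|\lambda| + \tfrac{\rho r^3}{6},$$
and choosing $r = |\lambda|/\rho$ (together with $|\lambda| \ge \sqrt{\rho\epsilon}$) gives a decrease of $\tfrac{|\lambda|^3}{3\rho^2} = \Omega(\sqrt{\epsilon^3/\rho})$. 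When Hessian access is available, an adequately accurate $\e$ is obtained in $\poly(d,\ell,1/\sqrt{\rho\epsilon})$ time by power iteration / Lanczos applied to $\ell\I - \hess F(\x)$, whose spectral radius is at most $2\ell$ by gradient-Lipschitzness. When only gradients are available, case (ii) cannot be detected directly; instead, whenever the gradient is small one adds a perturbation drawn uniformly from $\mathbb{B}_0(r)$ with $r = \tilde{O}(\sqrt{\epsilon/\rho})$ and runs $\mathcal{T} = \tilde{O}(\ell/\sqrt{\rho\epsilon})$ gradient steps. The key lemma (perturbed gradient descent, \citet{jin17escape}) is that the set of perturbations from which these $\mathcal{T}$ steps fail to decrease $F$ by $\Omega(\sqrt{\epsilon^3/\rho})$ is contained in a thin slab orthogonal to $\e$; this is proved by a two-point coupling showing that two runs started at points differing only along $\e$ by a $1/\poly(d)$ fraction of $r$ separate geometrically (driven by the $\sqrt{\rho\epsilon}$-negative curvature) and hence cannot both remain near the saddle. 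Consequently the escape succeeds with probability $1-\delta$, with $r$ and $\mathcal{T}$ absorbing only $\polylog$ and low-degree polynomial factors in $d,\ell,\rho,1/\epsilon,1/\delta$.

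Putting the pieces together: each progress step in case (i) or (ii) decreases $F$ by at least $c := \Omega(\min(\epsilon^2/\ell,\sqrt{\epsilon^3/\rho}))$, and since $F$ ranges over an interval of length at most $2B$, there are at most $2B/c = \poly(B,\ell,\rho,1/\epsilon)$ such steps before the algorithm is forced into case (iii). Each step costs $\poly(d,\ell,\rho,1/\epsilon)$ (a gradient or Hessian evaluation plus the eigenvector or perturbed-descent subroutine), so the total running time is $\poly(d,B,\ell,\rho,1/\epsilon)$; a union bound over the polynomially many escape episodes, with $\delta$ chosen inverse-polynomially, makes the overall success probability at least a constant. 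I expect the main obstacle to be the saddle-escape analysis in the gradient-only regime---quantifying the measure of the ``stuck region'' and showing that one small random perturbation suffices to escape with high probability while incurring only mild dimension dependence. With direct Hessian access the argument is cleaner, but one must still certify that approximate leading-eigenvector computation runs in polynomial time (standard via the shifted power method); the remaining bookkeeping is the routine descent-lemma accounting above.
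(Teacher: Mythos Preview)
The paper does not prove this theorem; it is stated in the Preliminaries as a known result with citations to \citet{carmon16accelerated}, \citet{agarwal17finding}, and \citet{jin17escape}, and no proof is given. Your proposal is a correct sketch of the standard argument underlying those references, and in fact matches the informal descent argument the paper later invokes for a different purpose in Lemma~\ref{lem:sosp_exists_in_ball} (case (i): gradient step gives $\Omega(\epsilon^2/\ell)$ decrease; case (ii): negative-curvature step gives $\Omega(\sqrt{\epsilon^3/\rho})$ decrease; bound the number of steps by $B$), as well as the coupling-based saddle-escape analysis the paper reproduces in Appendix~\ref{app:sgd} for the stochastic variant. So there is nothing to compare against here beyond noting that your outline is faithful to the cited literature and consistent with the fragments the paper itself uses.
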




\vspace{-2mm}

\section{Main Results}
\vspace{-2mm}

In the setting we consider, there is an unknown function $F$ (the population risk) that has regularity properties (bounded, gradient and Hessian Lipschitz). However, we only have access to a function $f$ (the empirical risk) that may not even be everywhere differentiable. The only information we use is that $f$ is pointwise close to $F$. More precisely, we assume

\begin{assumption} \label{assump} We assume that the function pair 
($F:\R^d\to\R, f:\R^d\to\R$) satisfies the following properties:
\begin{enumerate}
    \item $F$ is $B$-bounded, $\ell$-gradient Lipschitz, $\rho$-Hessian Lipschitz.
    \vspace{-2mm}
    \item $f, F$ are $\nu$-pointwise close; i.e., $\|F - f\|_\infty \le \nu$.
\end{enumerate}
\end{assumption}

As we explained in Section~\ref{sec:prelim}, our goal is to find second-order stationary points of $F$ given only function value access to $f$. More precisely:

\begin{prob}\label{problem}
	Given a function pair ($F, f$) that satisfies Assumption \ref{assump}, find an $\epsilon$-second-order stationary point of $F$ with only access to values of $f$.
\end{prob}

The only way our algorithms are allowed to interact with $f$ is to query a point $\x$, and obtain a function value $f(\x)$. This is usually called a {\em zero-th order} oracle in the optimization literature. In this paper we give tight upper and lower bounds for the dependencies between  $\nu$, $\epsilon$ and $d$, both  for algorithms with polynomially many queries and in the information-theoretic limit.

\subsection{Optimal algorithm with polynomial number of queries}

There are three main difficulties in applying stochastic gradient descent to Problem~\ref{problem}: (1) in order to converge to a second-order stationary point of $F$, the algorithm must avoid being stuck in saddle points; (2) the algorithm does not have access to the gradient of $f$; (3) there is a gap between the observed $f$ and the target $F$, which might introduce non-smoothness or additional local minima. 
The first difficulty was addressed in \cite{jin17escape} by perturbing the iterates in a small ball; 
this pushes the iterates away from any potential saddle points. For the latter two difficulties, we apply Gaussian smoothing to $f$ and use $\z[f(\x+\z)- f(\x)]/\sigma^2$ ($\z \sim \mathcal{N}(0,\sigma^2 \I)$) as a stochastic gradient estimate. 
This estimate, which only requires function values of $f$, is well known 
in the zero-th order optimization literature \citep[e.g.][]{duchi15optimal}. 
For more details, see Section~\ref{sec:polyupper}.

\begin{algorithm}[t]
	\caption{Zero-th order Perturbed Stochastic Gradient Descent (ZPSGD)}\label{algo:PSGD}
	\begin{algorithmic}
		\renewcommand{\algorithmicrequire}{\textbf{Input: }}
		\renewcommand{\algorithmicensure}{\textbf{Output: }}
		\REQUIRE $\x_0$, learning rate $\eta$, noise radius $r$, mini-batch size $m$.
		\FOR{$t = 0, 1, \ldots, $}
		\STATE sample $(\z^{(1)}_t, \cdots, \z^{(m)}_t) \sim \mathcal{N}(0,\sigma^2 \I)$
		\STATE $\g_t(\x_t)  \leftarrow  \sum_{i=1}^m \z^{(i)}_t[f(\x_t+\z^{(i)}_t)- f(\x_t)]/(m\sigma^2)$
		\STATE $\x_{t+1} \leftarrow \x_t - \eta (\g_t(\x_t) + \xi_t), \qquad \xi_t \text{~uniformly~} \sim \mathbb{B}_0(r)$
		\ENDFOR
		\STATE \textbf{return} $\x_T$
	\end{algorithmic}
\end{algorithm}

In short, our algorithm (Algorithm~\ref{algo:PSGD}) is a variant of SGD, which uses $\z[f(\x+\z)- f(\x)]/\sigma^2$ as the gradient estimate (computed over mini-batches), and adds isotropic perturbations. Using this algorithm, we can achieve the following trade-off between $\nu$ and $\epsilon$. 

\begin{theorem}[Upper Bound (ZPSGD)] \label{thm:upperbound_informal}
	Given that the function pair ($F, f$) satisfies Assumption \ref{assump} with $\nu \le O(\sqrt{\epsilon^3/\rho}\cdot(1/d))$, then for any $\delta >0$, with smoothing parameter $\sigma=\Theta(\sqrt{\epsilon/(\rho d)})$, learning rate $\eta=1/\ell$, perturbation $r = \tilde{\Theta}(\epsilon)$, and mini-batch size $m=\poly(d, B, \ell,\rho,1/\epsilon, \log (1/\delta))$, ZPSGD will find an $\epsilon$-second-order stationary point of $F$ with probability $1-\delta$, in $\poly(d, B, \ell,\rho,1/\epsilon, \log (1/\delta))$  number of queries.
\end{theorem}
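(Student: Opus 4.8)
The plan is to run ZPSGD as perturbed SGD on the \emph{Gaussian-smoothed} surrogate
$\tilde{f}_\sigma(\x) \defeq \E_{\z \sim \mathcal{N}(0,\sigma^2\I)}[f(\x+\z)]$,
prove that (i) the mini-batch estimate $\g_t$ is an unbiased, bounded-variance stochastic gradient of $\tilde{f}_\sigma$, (ii) $\tilde{f}_\sigma$ is genuinely smooth (bounded, gradient- and Hessian-Lipschitz with controlled constants), and (iii) $\grad\tilde{f}_\sigma$ and $\hess\tilde{f}_\sigma$ are uniformly close to $\grad F$ and $\hess F$; then a standard saddle-escaping analysis produces an approximate SOSP of $\tilde{f}_\sigma$, which the closeness estimates convert into an $\epsilon$-SOSP of $F$ under the stated bound on $\nu$.

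\textbf{Smoothing identities and regularity lemma.} By Stein's identity $\grad\tilde{f}_\sigma(\x) = \E_\z[\z f(\x+\z)/\sigma^2]$ and $\E_\z[\z f(\x)/\sigma^2]=0$, so $\g_t(\x_t)$ in Algorithm~\ref{algo:PSGD} is an unbiased estimator of $\grad\tilde{f}_\sigma(\x_t)$; using $|f|\le B+\nu$ (or local Lipschitzness of $F$) its second moment is $O((B+\nu)^2 d/\sigma^2)$ and its norm is sub-Gaussian, so a mini-batch of size $m=\poly(d,B,\ell,\rho,1/\epsilon,\log(1/\delta))$ drives the per-step variance below any desired threshold. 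Writing $e\defeq f-F$ with $\norm{e}_\infty\le\nu$ and $\tilde{e}_\sigma\defeq\tilde{f}_\sigma-\tilde{F}_\sigma$, differentiating the Gaussian convolution gives $\grad\tilde{e}_\sigma(\x)=\E_\z[e(\x+\z)\z/\sigma^2]$ and $\hess\tilde{e}_\sigma(\x)=\E_\z[e(\x+\z)(\z\z^\top-\sigma^2\I)/\sigma^4]$; testing these against \emph{fixed} unit vectors before taking expectations (so only one- and two-dimensional Gaussian marginals appear, incurring no dimension factor) yields $\norm{\grad\tilde{e}_\sigma}_\infty\le O(\nu/\sigma)$, $\norm{\hess\tilde{e}_\sigma}_\infty\le O(\nu/\sigma^2)$, and analogously $O(\nu/\sigma^3)$ for the third derivative. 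On the other side, $\tilde{F}_\sigma(\x)=\E_\z[F(\x+\z)]$ with $F$ being $\ell$-gradient- and $\rho$-Hessian-Lipschitz, so Taylor expansion gives $\norm{\grad\tilde{F}_\sigma-\grad F}_\infty\le O(\rho\sigma^2 d)$ and $\norm{\hess\tilde{F}_\sigma-\hess F}_\infty\le O(\rho\sigma\sqrt d)$ while $\tilde{F}_\sigma$ remains $\ell$-gradient- and $\rho$-Hessian-Lipschitz. Combining, $\tilde{f}_\sigma$ is $O(B)$-bounded, $\tilde\ell$-gradient- and $\tilde\rho$-Hessian-Lipschitz with $\tilde\ell=O(\ell+\nu/\sigma^2)$, $\tilde\rho=O(\rho+\nu/\sigma^3)$, and
\[
\norm{\grad\tilde{f}_\sigma-\grad F}_\infty\le O(\rho\sigma^2 d+\nu/\sigma),\qquad \norm{\hess\tilde{f}_\sigma-\hess F}_\infty\le O(\rho\sigma\sqrt d+\nu/\sigma^2).
\]

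\textbf{Saddle-escaping and parameter balancing.} Feeding the unbiased bounded-variance oracle into a perturbed-SGD analysis of the type underlying \citep{jin17escape,ge2015escaping}, ZPSGD with $\eta=\Theta(1/\tilde\ell)$ and $r=\tilde{\Theta}(\epsilon)$ outputs, within $\poly(d,B,\tilde\ell,\tilde\rho,1/\epsilon_1,\log(1/\delta))$ queries and with probability $1-\delta$, a point $\x$ with $\norm{\grad\tilde{f}_\sigma(\x)}\le\epsilon_1$ and $\lambda_{\min}(\hess\tilde{f}_\sigma(\x))\ge-\sqrt{\tilde\rho\,\epsilon_1}$. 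Choosing $\sigma=\Theta(\sqrt{\epsilon/(\rho d)})$ makes $\rho\sigma^2 d=\Theta(\epsilon)$ and $\rho\sigma\sqrt d=\Theta(\sqrt{\rho\epsilon})$; under $\nu\le O(\sqrt{\epsilon^3/\rho}\cdot(1/d))$ we get $\nu/\sigma=O(\epsilon)$, $\nu/\sigma^2=O(\sqrt{\rho\epsilon})$, hence $\tilde\ell=O(\ell)$ (so $\eta=1/\ell$ is valid, using $\sqrt{\rho\epsilon}\le\ell$ WLOG) and $\tilde\rho=O(\rho\sqrt d)$; taking $\epsilon_1=\Theta(\epsilon/\poly(d))$ small enough, the two closeness bounds give $\norm{\grad F(\x)}\le\epsilon_1+O(\rho\sigma^2 d)+O(\nu/\sigma)\le\epsilon$ and $\lambda_{\min}(\hess F(\x))\ge-\sqrt{\tilde\rho\,\epsilon_1}-O(\rho\sigma\sqrt d)-O(\nu/\sigma^2)\ge-\sqrt{\rho\epsilon}$, i.e.\ $\x$ is an $\epsilon$-SOSP of $F$. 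All quantities are polynomial, so the total number of queries is $\poly(d,B,\ell,\rho,1/\epsilon,\log(1/\delta))$.

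\textbf{Main obstacle.} The crux is the regularity lemma, and in particular obtaining the \emph{dimension-free} bound $\norm{\hess\tilde{f}_\sigma-\hess F}_\infty=O(\rho\sigma\sqrt d+\nu/\sigma^2)$: a careless estimate of $\norm{\hess\tilde{e}_\sigma}$ loses an extra factor of $d$, and it is precisely the $\nu/\sigma^2$ (not $\nu d/\sigma^2$) scaling—together with balancing $\rho\sigma\sqrt d$ against $\sqrt{\rho\epsilon}$, which is what pins down $\sigma=\Theta(\sqrt{\epsilon/(\rho d)})$—that yields the tolerance $\nu\le O(\sqrt{\epsilon^3/\rho}/d)$ rather than $O(\sqrt{\epsilon^3/\rho}/d^2)$. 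A secondary technical point is the second-moment and tail control of the zeroth-order estimate $\z[f(\x+\z)-f(\x)]/\sigma^2$, which is what forces the polynomial mini-batch size $m$ and must be threaded through the perturbed-SGD analysis.
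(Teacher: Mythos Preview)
Your proposal is correct and follows essentially the same route as the paper: Gaussian smoothing to produce $\tilde f_\sigma$, the same regularity lemma with the bounds $\tilde\ell=O(\ell+\nu/\sigma^2)$, $\tilde\rho=O(\rho+\nu/\sigma^3)$, $\|\nabla\tilde f_\sigma-\nabla F\|=O(\rho d\sigma^2+\nu/\sigma)$, $\|\nabla^2\tilde f_\sigma-\nabla^2 F\|=O(\rho\sqrt d\,\sigma+\nu/\sigma^2)$, a sub-Gaussian stochastic gradient fed into a perturbed-SGD black box to find an $(\epsilon/\sqrt d)$-SOSP of $\tilde f_\sigma$, and then the conversion to an $\epsilon$-SOSP of $F$ under $\sigma=\Theta(\sqrt{\epsilon/(\rho d)})$ and $\nu\le O(\sqrt{\epsilon^3/\rho}/d)$. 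You have also correctly identified the crux (the dimension-free $\nu/\sigma^2$ bound on the Hessian perturbation, obtained by testing against unit vectors before taking expectation), which is precisely where the paper spends its technical effort.
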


Theorem \ref{thm:upperbound_informal} shows that assuming a small enough function error $\nu$, ZPSGD will solve Problem~\ref{problem} within a number of queries that is polynomial in all the problem-dependent parameters. The tolerance on function error $\nu$ varies inversely with the number of dimensions, $d$. This rate is in fact optimal for all polynomial queries algorithms.
In the following result, we show that the $\epsilon, \rho,$ and $d$ dependencies in function difference $\nu$ are tight up to a logarithmic factors in $d$. 

\begin{restatable}[Polynomial Queries Lower Bound]{theorem}{thmlowerbnd} \label{thm:lowerbound_informal}
For any $B >0, \ell >0, \rho > 0$ there exists $\epsilon_0 = \Theta(\min\{\ell^2/\rho, (B^2 \rho/d^2)^{1/3}\})$ such that for any $\epsilon \in (0, \epsilon_0]$,
there exists a function pair ($F, f$) satisfying Assumption \ref{assump} with $\nu = \tilde{\Theta}(\sqrt{\epsilon^3/\rho} \cdot (1/d))$, so that any algorithm that only queries a polynomial number of function values of $f$ will fail, with high probability, to find an $\epsilon$-SOSP of $F$. 
\end{restatable}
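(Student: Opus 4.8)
The plan is to construct a hard instance via the standard information-theoretic/indistinguishability strategy: exhibit a "base" function $F$ whose unique $\epsilon$-SOSP is hidden in one of exponentially many candidate locations, and a perturbation of size $\nu$ that wipes out all gradient/Hessian information pointing toward that location, so that the perturbed function $f$ looks identical everywhere a polynomial-query algorithm is likely to probe. First I would set up a scaling argument so that it suffices to handle a single normalized regime (say $B = \ell = \rho = 1$) and then rescale the domain, range, and $\epsilon$ to obtain the stated $\epsilon_0 = \Theta(\min\{\ell^2/\rho, (B^2\rho/d^2)^{1/3}\})$; the two terms in the min correspond respectively to the gradient-Lipschitz constraint (the function cannot bend faster than $\ell$) and to the boundedness constraint (a valley of depth consistent with $\rho$-Hessian-Lipschitz and width set by $\nu$ can only be so deep before it violates $|F|\le B$), which is exactly where the $\sqrt{\epsilon^3/\rho}/d$ scaling of $\nu$ comes from.

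The core construction: take $F$ to be a smooth function that is flat (identically some constant, or a shallow bowl with a strictly positive-definite but tiny Hessian) on a large region, except that near one specially chosen direction/point $\x^\star$ it has a gentle valley whose bottom is the only $\epsilon$-SOSP — everywhere else $\|\grad F\|>\epsilon$ or $\lambda_{\min}(\hess F) < -\sqrt{\rho\epsilon}$. The key point is to make the valley shallow: its depth should be $\Theta(\nu)$ and its width (in the "informative" directions) $\Theta(\sqrt{\nu/\rho\cdot(\text{something})})$, tuned so that the $\rho$-Hessian-Lipschitz and $\ell$-gradient-Lipschitz conditions are met with $\nu = \tilde\Theta(\sqrt{\epsilon^3/\rho}/d)$. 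Then define $f$ by adding a perturbation supported away from the valley (or by "filling in" the valley) so that $\|F-f\|_\infty \le \nu$ and $f$ carries zero information about which of the $\sim e^{\Omega(d)}$ possible valley locations is the true one — e.g., make $f$ rotationally symmetric, or a mixture/average over a large packing of candidate directions on the sphere, so that any single query reveals essentially nothing. This is where the dimension dependence enters: to hide the valley we need a $1/d$ factor in $\nu$ (the valley must be thin enough that a Gaussian-width / packing argument over $S^{d-1}$ makes it invisible), and an $e^{\Omega(d)}$-sized packing defeats any $\poly(d)$-query algorithm.

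The formal core is then a standard adversary/coupling argument: fix any (possibly randomized) algorithm making $T = \poly$ queries; run it against the "null" function $f_0$ (the fully symmetric version with no valley), and argue that with probability $\ge 1 - T e^{-\Omega(d)}$ none of its queries lands in the $\nu$-neighborhood where $f$ for a randomly chosen valley location differs from $f_0$; conditioned on that event, the algorithm's output is independent of the valley location, so it misses the unique $\epsilon$-SOSP with probability $\ge 1 - e^{-\Omega(d)}$. A union bound over $T$ queries closes the argument. The main obstacle, and the part requiring real care, is the simultaneous satisfaction of all constraints: the valley must (i) be deep/wide enough that its bottom is a genuine $\epsilon$-SOSP while the rest of $F$ has no $\epsilon$-SOSP, (ii) be shallow/thin enough that the perturbation to erase it has sup-norm $\le \nu$, and (iii) keep $F$ simultaneously $B$-bounded, $\ell$-gradient-Lipschitz, and $\rho$-Hessian-Lipschitz — getting the constants to line up is what pins down $\epsilon_0$ and forces the precise exponent $\nu \asymp \sqrt{\epsilon^3/\rho}/d$, matching Theorem~\ref{thm:upperbound_informal} up to the logarithmic factor. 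I would build $F$ from a one-dimensional radial profile composed with distance to $\x^\star$, verify the Lipschitz bounds on the profile by elementary calculus, and handle the "no other SOSP" claim by making the complement of the valley a region of strictly positive-definite (or strictly negative-curvature-free with large gradient) behavior by construction.
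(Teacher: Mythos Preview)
Your high-level strategy---hide the SOSP along a random direction, make $f$ uninformative in the region where concentration of measure confines polynomially many queries, then union-bound over queries---is exactly the paper's. The gap is in the concrete construction, specifically in where the $1/d$ in $\nu$ comes from. You propose a valley of depth $\Theta(\nu)$ at a random $\x^\star$ with $f$ filling it in; but then $f$ is uninformative \emph{everywhere}, which is the information-theoretic construction (Theorem~\ref{thm:exp_lowerbound}) and gives only $\nu = \Theta(\sqrt{\epsilon^3/\rho})$ with no dimension gain. Moreover, a valley planted away from the base bowl's minimum does not remove that minimum as an $\epsilon$-SOSP of $F$, so an algorithm can succeed trivially by outputting the bowl's center; a purely radial profile in $\norm{\x-\x^\star}$ cannot fix this.

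The paper's construction works differently in a way that is essential. The base is a bowl $\norm{\sin\x}^2$, and the hidden perturbation $h$ does not carve a valley elsewhere---it plants \emph{negative curvature at the bowl's minimum} in the random direction $\v$, turning the origin into a saddle and forcing every $\epsilon$-SOSP of $F$ into the thin region $|\langle\sin\x,\v\rangle| > (\log d)/\sqrt{d}$. The perturbation factors as $h_1(\v^\top\y)\,h_2(\cdot)$ with $h_1(x)=O(x^2)$ near $0$, so inside the equatorial band of width $(\log d)/\sqrt{d}$ one has $|h|=\tilde{O}(1/d)$: this coupling of the quadratic vanishing of $h_1$ with the $1/\sqrt{d}$ concentration width is exactly what produces $\nu=\tilde{\Theta}(\sqrt{\epsilon^3/\rho}/d)$. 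Outside the band $f$ is set equal to $F$ (hence informative), but concentration on the sphere makes the probability a fixed query lands there only $e^{-(\log d)^2/2}=d^{-\Omega(\log d)}$, not $e^{-\Omega(d)}$; the trade-off between band width and $\nu$ cannot reach exponential decay at this target. Two further points your sketch does not handle: the $\sin$ is needed to make $F$ periodic and hence $B$-bounded on all of $\R^d$ (the problem is unconstrained), and verifying ``no $\epsilon$-SOSP anywhere in the non-informative region'' is the technically delicate part that dictates the specific two-factor form of $h$.
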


This theorem establishes that for any $\rho, \ell, B$ and any $\epsilon$ small enough, we can construct a randomized `hard' instance ($F, f$) such that any (possibly randomized) algorithm with a polynomial number of queries will fail to find an $\epsilon$-SOSP of $F$ with high probability. Note that the error $\nu$ here is only a poly-logarithmic factor larger than the requirement for our algorithm. 
In other words, the guarantee of our Algorithm~\ref{algo:PSGD} in Theorem \ref{thm:upperbound_informal} is optimal up to a logarithmic factor.


\subsection{Information-theoretic guarantees}

If we allow an unlimited number of queries, 
we can show that the upper and lower bounds on the function error tolerance $\nu$ no longer depends on the problem dimension $d$. That is, Problem~\ref{problem} exhibits a statistical-computational gap---polynomial-queries algorithms are unable to achieve the information-theoretic limit. We first state that an algorithm (with exponential queries) is able to find an $\epsilon$-SOSP of $F$ despite a much larger value of error $\nu$. The basic algorithmic idea is that an $\epsilon$-SOSP must exist within some compact space, such that once we have a subroutine that approximately computes the gradient and Hessian of $F$ at an arbitrary point, we can perform a grid search over this compact space (see Section \ref{app:exp} for more details):

\begin{theorem}\label{thm:exp_upperbound}
		There exists an algorithm so that if the function pair ($F, f$) satisfies Assumption \ref{assump} with $\nu \le O(\sqrt{\epsilon^3/\rho})$ and $\ell > \sqrt{\rho \epsilon}$, then the algorithm will find an $\epsilon$-second-order stationary point of $F$ with an exponential number of queries.
\end{theorem}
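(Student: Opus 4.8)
The plan is to make the informal description precise while being careful that the tolerance on $\nu$ comes out \emph{dimension-free}, which forces the use of \emph{directional} (rather than coordinate-wise) finite differences together with an exponential-size search.

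\textbf{Step 1: a directional zeroth-order oracle for $\grad F$ and $\hess F$.} For a unit vector $\v$ and step $r$, a second-order Taylor expansion of $F$ with the $\rho$-Hessian-Lipschitz remainder, followed by replacing $F$ by $f$ (which costs at most $O(\nu)$ per evaluation), gives
\[
\frac{f(\x+r\v)-f(\x-r\v)}{2r}=\langle\grad F(\x),\v\rangle+O(\rho r^2)+O(\nu/r),\qquad \frac{f(\x+r\v)+f(\x-r\v)-2f(\x)}{r^2}=\v\trans\hess F(\x)\v+O(\rho r)+O(\nu/r^2).
\]
Taking $r=\Theta(\sqrt{\epsilon/\rho})$ makes the truncation terms $O(\epsilon)$ and $O(\sqrt{\rho\epsilon})$ respectively, and the function-error terms are $O(\epsilon)$, $O(\sqrt{\rho\epsilon})$ exactly when $\nu\le O(\sqrt{\epsilon^3/\rho})$ — both constraints collapse to this same threshold, and no $d$ appears because we never estimate a full coordinate gradient/Hessian. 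I would then convert these into estimates of $\norm{\grad F(\x)}=\max_{\norm{\v}=1}\langle\grad F(\x),\v\rangle$ and $\lambda_{\min}(\hess F(\x))=\min_{\norm{\v}=1}\v\trans\hess F(\x)\v$ by maximizing/minimizing over a fine net of the unit sphere in $\R^d$. Such a net has size exponential in $d$ (allowed here), and since $\v\mapsto\langle\grad F(\x),\v\rangle$ and $\v\mapsto\v\trans\hess F(\x)\v$ are $O(\ell)$-Lipschitz on the sphere (using $\norm{\hess F}\le\ell$, and $\norm{\grad F(\x)}\le 2\sqrt{\ell B}$ which follows from $B$-boundedness and $\ell$-smoothness), a net of resolution $\poly(\epsilon,\rho,1/\ell,1/B)$ suffices to recover $\norm{\grad F(\x)}$ to accuracy $O(\epsilon)$ and $\lambda_{\min}(\hess F(\x))$ to accuracy $O(\sqrt{\rho\epsilon})$. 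The hypothesis $\ell>\sqrt{\rho\epsilon}$ is used only to guarantee the second-order tolerance is non-vacuous (otherwise $\lambda_{\min}(\hess F)\ge-\ell\ge-\sqrt{\rho\epsilon}$ always holds).

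\textbf{Step 2: reduce to a compact domain.} Because $F$ is $B$-bounded and $\ell$-smooth, idealized perturbed gradient descent on $F$ started at $\mathbf 0$ reaches an $(\epsilon/2)$-SOSP within $\poly(B,\ell,\rho,1/\epsilon)$ iterations (Theorem~\ref{thm:basic}), and each iterate moves by at most $\norm{\grad F}/\ell\le 2\sqrt{B/\ell}$ (plus the small perturbation radius); hence some $(\epsilon/2)$-SOSP of $F$ lies inside $\mathbb{B}_0(R)$ for an explicit $R=\poly(B,\ell,\rho,1/\epsilon)$.

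\textbf{Step 3: grid search and correctness.} The algorithm covers $\mathbb{B}_0(R)$ with a net of resolution $\min\{\epsilon/\ell,\sqrt{\epsilon/\rho}\}$ (exponentially many points), runs the estimators of Step 1 at each net point, and outputs any point whose estimated gradient norm is $\le 3\epsilon/4$ and whose estimated minimum Hessian eigenvalue is $\ge-3\sqrt{\rho\epsilon}/4$. By gradient- and Hessian-Lipschitzness, the net point nearest the true $(\epsilon/2)$-SOSP still has $\norm{\grad F}\le 3\epsilon/4$ and $\lambda_{\min}(\hess F)\ge-3\sqrt{\rho\epsilon}/4$, so it passes the test (up to the $O(\epsilon)$, $O(\sqrt{\rho\epsilon})$ estimator error, with constants chosen to absorb this), and the search succeeds; conversely, any point passing the test is, after adding back the same estimator error, a genuine $\epsilon$-SOSP of $F$. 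The query count is a product of three exponential-in-$d$ factors (outer net $\times$ sphere net for the gradient $\times$ sphere net for the Hessian), hence exponential.

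I expect the only genuine subtlety to be Step 1: realizing that one should estimate \emph{directional} derivatives and pay an exponential net cost on the unit sphere — rather than the obvious coordinate-wise finite differences, which would inject $\poly(d)$ factors into the constraint on $\nu$ — is exactly what makes the dimension drop out and pins the threshold at $O(\sqrt{\epsilon^3/\rho})$. The remaining pieces (Taylor-remainder bookkeeping, compactness of the effective search region, and the covering-argument correctness check) are routine.
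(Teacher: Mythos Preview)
Your proposal is correct and arrives at the same dimension-free threshold $\nu\le O(\sqrt{\epsilon^3/\rho})$ for the same underlying reason (directional second-order Taylor estimates at scale $r=\Theta(\sqrt{\epsilon/\rho})$, combined with a sphere net), but the mechanism is different from the paper's. The paper casts the per-point subroutine as a \emph{feasibility problem}: at each grid point $\x$ it searches over a discretized set of candidate pairs $(\g,\H)$ (covers of a ball in $\R^d$ and of the $d\times d$ matrices with $\norm{\cdot}\le\ell$) and checks whether $|f(\x)+\g\trans(\y-\x)+\tfrac12(\y-\x)\trans\H(\y-\x)-f(\y)|\le O(\rho r^3+\nu)$ for all $\y=\x+r\z$ with $\z$ in a sphere cover; any feasible $(\g,\H)$ is then shown to satisfy $\norm{\g-\grad F(\x)}\le O(\epsilon)$ and $\norm{\H-\hess F(\x)}\le O(\sqrt{\rho\epsilon})$. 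You instead bypass the reconstruction of $\grad F$ and $\hess F$ entirely and estimate the two scalars that actually matter, $\norm{\grad F(\x)}$ and $\lambda_{\min}(\hess F(\x))$, via symmetric first and second differences maximized/minimized over the sphere net. Your route is more elementary and avoids an extra exponential loop over the $d^2$-dimensional matrix cover; the paper's feasibility formulation has the minor advantage of producing full surrogates $(\g,\H)$ (useful if one wanted more than just the SOSP test) and makes the ``neighborhood of solutions'' argument (their Lemma~30) explicit. For the compactness step, the paper gives a short self-contained descent argument (Lemma~31) showing an $\epsilon$-SOSP exists inside a ball of radius $O(B/\epsilon)$; your appeal to Theorem~\ref{thm:basic} as a pure existence statement is legitimate but less direct. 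Your reading of the hypothesis $\ell>\sqrt{\rho\epsilon}$ as a non-vacuity condition is also apt.
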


%

We also show a corresponding information-theoretic lower bound that prevents any algorithm from even identifying a second-order stationary point of $F$. This completes the characterization of function error tolerance $\nu$ in terms of required accuracy $\epsilon$. 

\begin{theorem} \label{thm:exp_lowerbound}
For any $B >0, \ell >0, \rho > 0$, there exists $\epsilon_0 = \Theta(\min\{\ell^2/\rho, (B^2 \rho/d)^{1/3}\})$ such that for any $\epsilon \in (0, \epsilon_0]$
there exists a function pair ($F, f$) satisfying Assumption \ref{assump} with $\nu = O(\sqrt{\epsilon^3/\rho})$, so that any algorithm will fail, with high probability, to find an $\epsilon$-SOSP of $F$.
\end{theorem}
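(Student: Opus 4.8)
The plan is to construct a randomized hard instance $(F,f)$ in which $f$ carries essentially no information about the location of the unique approximate local minimum of $F$, so that no algorithm—regardless of the number of queries—can locate it with constant probability. First I would set up the basic building block: a smooth ``bump'' function $h:\R^d\to\R$ that is zero outside a ball of radius $r$, is $B$-bounded, $\ell$-gradient Lipschitz and $\rho$-Hessian Lipschitz, and whose unique $\epsilon$-SOSP inside its support is the center of the ball. A natural choice is a radially symmetric bump whose depth is $\Theta(B)$ and whose width $r$ is tuned so that the gradient Lipschitz and Hessian Lipschitz constraints are exactly saturated; matching these constraints is what forces the relation $\epsilon_0 = \Theta(\min\{\ell^2/\rho,(B^2\rho/d)^{1/3}\})$ — the first term comes from $\ell$-smoothness forcing $\sqrt{\rho\epsilon}\le \ell$, and the second from requiring the bump to be deep enough (depth $\gtrsim B$) yet still $\rho$-Hessian Lipschitz over a region of the relevant scale in $d$ dimensions. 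The ``baseline'' function is $F_0 \equiv 0$ (or a fixed mild quadratic), which trivially has every point as an SOSP, and then we hide a single copy of the bump at a random center $\x^\star$.

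Second, and this is the crux of the construction, I would choose $f$ so that it does \emph{not} reveal $\x^\star$. Concretely, take $f \equiv F_0$ (i.e., $f$ is the baseline with the bump removed), and $F = F_0 + h(\cdot - \x^\star)$. The pointwise closeness requirement $\|F-f\|_\infty \le \nu$ then becomes $\|h\|_\infty \le \nu$, i.e. the bump depth is at most $\nu$. To make the bump's center be a genuine $\epsilon$-SOSP of $F$ while every other point fails (because $F_0$ alone should \emph{not} be a valid answer), I need $F_0$ itself to have no $\epsilon$-SOSP: e.g. let $F_0$ be a slightly tilted/quadratic-like function with gradient bounded below by $\epsilon$ everywhere except inside the hidden ball, where the bump cancels the tilt and creates a true local min. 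Balancing ``bump deep enough to flatten the ambient gradient over a ball of radius $r$'' against ``bump shallow enough that $\|h\|_\infty\le\nu$'' against the Hessian-Lipschitz budget is exactly the calculation that yields $\nu = \Theta(\sqrt{\epsilon^3/\rho})$: the bump must have depth $\gtrsim \epsilon r$ and width $r \lesssim \sqrt{\epsilon/\rho}$ (so that a region of near-zero gradient and near-PSD Hessian of the required size fits), giving depth $\gtrsim \epsilon\cdot\sqrt{\epsilon/\rho} = \sqrt{\epsilon^3/\rho}$.

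Third, I would run the standard information-theoretic (packing + adversary) argument. Place the candidate centers on a packing $\{\x^\star_j\}$ of the domain so that the supports of the bumps are pairwise disjoint; there are $N = 2^{\Omega(d)}$ (or at least super-polynomially many, which suffices since we only need an exponential-query lower bound to be tight against Theorem \ref{thm:exp_upperbound}) such centers in the ball of radius $O(\sqrt{B/\ell})$ guaranteed to contain an SOSP. Draw $\x^\star$ uniformly from this packing. Since $f$ is identically $F_0$ — independent of $\x^\star$ — any algorithm's entire transcript of queries and answers is independent of $\x^\star$; hence its output $\hat\x$ is independent of $\x^\star$, and the probability that $\hat\x$ lands in the (tiny) ball around the true $\x^\star$ where the $\epsilon$-SOSP condition holds is at most $1/N$, which is exponentially small. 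I would conclude by Fano's inequality or a direct averaging argument that the failure probability is $1 - o(1)$.

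The main obstacle I anticipate is the simultaneous satisfaction of \emph{all} the regularity constraints by the bump while keeping it deep enough: I must exhibit an explicit $C^2$ radial profile with depth $\Theta(\sqrt{\epsilon^3/\rho})$, support radius $\Theta(\sqrt{\epsilon/\rho})$, gradient bounded by $O(\epsilon)$, Hessian bounded by $O(\sqrt{\rho\epsilon})$ \emph{and} $\rho$-Hessian-Lipschitz, and — crucially — such that $F = F_0 + h(\cdot-\x^\star)$ really does have its unique $\epsilon$-SOSP at $\x^\star$ and no $\epsilon$-SOSP elsewhere (this last point requires choosing $F_0$ carefully so that its gradient magnitude stays $> \epsilon$ off the hidden ball but the problem parameters $B,\ell,\rho$ are still respected over the whole $\R^d$, perhaps by making $F_0$ a bounded smooth function that looks locally linear with slope $\Theta(\epsilon)$). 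Verifying these inequalities with the right constants, and checking that the $d$-dependence enters only through the $B$-boundedness constraint (hence $(B^2\rho/d)^{1/3}$ in $\epsilon_0$, with no $d$ in $\nu$, unlike the polynomial-query bound), is the delicate part; everything after that is the routine packing/independence argument.
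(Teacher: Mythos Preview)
Your high-level strategy is right and matches the paper's: choose $f$ to be literally independent of the hidden randomness so that the algorithm's transcript carries zero information, then argue its output lands in the $\epsilon$-SOSP set with negligible probability. Your scaling computation for $\nu=\Theta(\sqrt{\epsilon^3/\rho})$ (depth $\asymp \epsilon r$, width $r\asymp\sqrt{\epsilon/\rho}$) is also the correct one.

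The genuine gap is the baseline $F_0$. You want $F_0$ to be $B$-bounded on all of $\R^d$ yet have no $\epsilon$-SOSP outside one hidden ball. That is impossible: any $B$-bounded, gradient- and Hessian-Lipschitz function on $\R^d$ has a $2\epsilon$-SOSP inside \emph{every} ball of radius $B/\epsilon$---this is exactly the paper's Lemma~\ref{lem:sosp_exists_in_ball}, proved by the standard descent argument. So ``bounded and locally linear with slope $\Theta(\epsilon)$ everywhere'' cannot hold, and once $F_0$ has a known SOSP $\x_0$ that your randomly placed bump does not cover, the adversary algorithm simply outputs $\x_0$ and succeeds. This is an obstruction, not a constant to be tuned; the random-center/packing plan as written does not close.

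The paper's resolution is to encode the randomness in the \emph{shape} of the bump rather than its \emph{location}. The baseline is the periodic $\norm{\sin(\cdot)}^2$, whose unique SOSP in each cell sits at the origin. The bump $h$ is always supported on the fixed ball $S$ around the origin---so it necessarily covers and destroys that baseline SOSP---but its profile depends on a uniformly random unit vector $\v$, pushing the new SOSPs into the thin cap $S\setminus S_\v$ aligned with $\v$. For the information-theoretic version one then sets $f$ equal to the baseline on \emph{all} of $S$ (rather than only on the band $S_\v$, as in the polynomial-query construction); since $h\equiv 0$ outside $S$ anyway, this $f$ is identical for every $\v$, whence any output $\hat{\x}$ is independent of $\v$ and concentration on the sphere gives $\Pr_\v(\hat{\x}\text{ is an }\epsilon\text{-SOSP})\le\Pr_\v(\hat{\x}\notin S_\v)\le 2e^{-(\log d)^2/2}$, with no restriction on the number of queries.
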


\subsection{Extension: Gradients pointwise close}
We may extend our algorithmic ideas to solve the problem of optimizing an unknown smooth function $F$ when given only a gradient vector field $\g: \R^d \to \R^d$ that is pointwise close to the gradient $\grad F$. Specifically, we answer the question: what is the error in the gradient oracle that we can tolerate to obtain optimization guarantees for the true function $F$? We observe that our algorithm's tolerance on gradient error is much better compared to Theorem \ref{thm:upperbound_informal}. Details can be found in Appendix \ref{app:extension_grad} and \ref{app:gradients_pw_close}.

	
	

\vspace{-3mm}
\section{Overview of Analysis}\label{sec:overview}
\vspace{-2mm}

In this section we present the key ideas underlying our theoretical results. We will focus on the results for algorithms that make a polynomial number of queries (Theorems~\ref{thm:upperbound_informal} and \ref{thm:lowerbound_informal}). 
\vspace{-2mm}

\subsection{Efficient algorithm for Problem~\ref{problem}}\label{sec:polyupper}

\vspace{-2mm}

We first argue the correctness of Theorem \ref{thm:upperbound_informal}. As discussed earlier, there are two key ideas in the algorithm: Gaussian smoothing and perturbed stochastic gradient descent. Gaussian smoothing allows us to transform the (possibly non-smooth) function $f$ into a smooth function $\tilde{f}_\sigma$ that has similar second-order stationary points as $F$; at the same time, it can also convert function evaluations of $f$ into a stochastic gradient of $\tilde{f}_\sigma$. We can use this stochastic gradient information to find a second-order stationary point of $\tilde{f}_\sigma$, which by the choice of the smoothing radius is guaranteed to be an approximate second-order stationary point of $F$.

First, we introduce Gaussian smoothing, which perturbs the current point $\x$ using a multivariate Gaussian and then takes an expectation over the function value.
\begin{definition}[Gaussian smoothing]\label{def:smoothed_f}
		Given $f$ satisfying assumption \ref{assump}, define its Gaussian smoothing as $\tilde{f}_\sigma(\x) = \E_{\z\sim \mathcal{N}(0,\sigma^2 \I)}[f(\x+\z)]$. The parameter $\sigma$ is henceforth called the smoothing radius.
\end{definition}

In general $f$ need not be smooth or even differentiable, but its Gaussian smoothing $\tilde{f}_\sigma$ will be a differentiable function.
Although it is in general difficult to calculate the exact smoothed function $\tilde{f}_\sigma$, it is not hard to give an unbiased estimate of function value and gradient of $\tilde{f}_\sigma$: 
\begin{lemma}\cite[e.g.][]{duchi15optimal}\label{lem:gensgd} Let $\tilde{f}_\sigma$ be the Gaussian smoothing of $f$ (as in Definition~\ref{def:smoothed_f}), the gradient of $\tilde{f}_\sigma$ can be computed as
$
~\grad \tilde{f}_\sigma = \frac{1}{\sigma^2} \E_{\z\sim \mathcal{N}(0,\sigma^2 \I)}[(f(\x+\z) - f(\x))\z].
$
\end{lemma}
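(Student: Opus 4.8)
The plan is to verify the identity by writing the Gaussian smoothing as a convolution integral, differentiating under the integral sign, and then using the fact that a centered Gaussian has mean zero to insert the $-f(\x)\z$ term for free. Concretely, I would start from the definition
\begin{equation*}
\tilde{f}_\sigma(\x) = \E_{\z\sim \mathcal{N}(0,\sigma^2 \I)}[f(\x+\z)] = \int_{\R^d} f(\x+\z)\, \phi_\sigma(\z)\, \dd \z, \qquad \phi_\sigma(\z) = \frac{1}{(2\pi\sigma^2)^{d/2}} e^{-\norm{\z}^2/(2\sigma^2)},
\end{equation*}
and then change variables $\u = \x + \z$ to rewrite this as $\tilde{f}_\sigma(\x) = \int_{\R^d} f(\u)\, \phi_\sigma(\u - \x)\, \dd \u$, which exposes the dependence on $\x$ purely through the smooth Gaussian kernel.

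Next I would differentiate under the integral sign. This is the one step that needs a justification rather than a one-line computation: I would invoke the dominated convergence theorem (or the standard Leibniz rule for integrals depending on a parameter), using that $f$ is bounded — by Assumption \ref{assump}, $\norm{F-f}_\infty \le \nu$ and $F$ is $B$-bounded, so $|f| \le B + \nu$ — together with the fact that $\grad_\x \phi_\sigma(\u - \x) = \phi_\sigma(\u-\x)\,(\u-\x)/\sigma^2$ has a Gaussian tail and hence an integrable envelope uniformly for $\x$ in any compact set. Granting this, I obtain
\begin{equation*}
\grad \tilde{f}_\sigma(\x) = \int_{\R^d} f(\u)\, \frac{\u - \x}{\sigma^2}\, \phi_\sigma(\u - \x)\, \dd \u = \frac{1}{\sigma^2}\, \E_{\z\sim \mathcal{N}(0,\sigma^2 \I)}[f(\x+\z)\,\z].
\end{equation*}

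Finally I would observe that, since $f(\x)$ does not depend on the integration variable and $\E_{\z\sim \mathcal{N}(0,\sigma^2 \I)}[\z] = \boldsymbol{0}$, we have $\E_{\z\sim \mathcal{N}(0,\sigma^2 \I)}[f(\x)\,\z] = \boldsymbol{0}$; subtracting this zero term gives the claimed form $\grad \tilde{f}_\sigma(\x) = \frac{1}{\sigma^2}\E_{\z\sim \mathcal{N}(0,\sigma^2 \I)}[(f(\x+\z) - f(\x))\,\z]$. The subtraction of $f(\x)$ is cosmetic for the identity itself but is exactly what makes the Monte Carlo estimator in Algorithm \ref{algo:PSGD} have controlled variance, so it is worth keeping. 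The main (and essentially only) obstacle is the interchange of differentiation and integration; everything else is a direct calculation with the Gaussian density. An alternative, if one wants to avoid the domination argument entirely, is to first establish it for bounded Lipschitz $f$ via the score-function / integration-by-parts identity $\grad_\x \E[f(\x+\z)] = \E[f(\x+\z)\,\z/\sigma^2]$ and then note that boundedness of $f$ alone already suffices for the interchange since the kernel is smooth — but the convolution route above is the cleanest.
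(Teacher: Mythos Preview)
Your proposal is correct and follows essentially the same route as the paper's own argument (given in the proof of Lemma~\ref{identity_1} in Appendix~\ref{app:lemsmoothing}): write the smoothing as a convolution, push the derivative onto the Gaussian kernel via a change of variables, and read off $\grad \tilde{f}_\sigma(\x) = \E_\z[\z f(\x+\z)/\sigma^2]$; the $-f(\x)\z$ term is then added for free since $\E_\z[\z]=\boldsymbol{0}$. The only difference is that you are more careful in justifying the interchange of differentiation and integration via boundedness of $f$ and dominated convergence, whereas the paper performs the swap without comment.
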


Lemma \ref{lem:gensgd} allows us to query the function value of $f$ to get an unbiased estimate of the gradient of $\tilde{f}_\sigma$. This stochastic gradient is used in Algorithm~\ref{algo:PSGD} to find a second-order stationary point of $\tilde{f}_\sigma$.

To make sure the optimizer is effective on $\tilde{f}_\sigma$ and that guarantees on $\tilde{f}_\sigma$ carry over to the target function $F$, we need two sets of properties: the smoothed function $\tilde{f}_\sigma$ should be gradient and Hessian Lipschitz, and at the same time should have gradients and Hessians close to those of the true function $F$. These properties are summarized in the following lemma:

%
%
%
\begin{lemma}[Property of smoothing]
		\label{lem:smoothing}
		Assume that the function pair ($F, f$) satisfies Assumption \ref{assump}, and let $\tilde{f}_\sigma(\x)$ be as given in definition \ref{def:smoothed_f}. Then, the following holds
		\begin{enumerate}
			\item $\tilde{f}_\sigma(\x)$ is $O(\ell + \frac{\nu}{\sigma^2})$-gradient Lipschitz and $O(\rho + \frac{\nu}{\sigma^3})$-Hessian Lipschitz.
			\item 
			$\norm{\grad \tilde{f}_\sigma(\x) - \grad F(\x)} \leq  O(\rho d \sigma^2+  \frac{\nu}{\sigma})$ and
			$\norm{\hess \tilde{f}_\sigma(\x) - \hess F(\x)} \leq O(\rho\sqrt{d}\sigma + \frac{\nu}{\sigma^2})$.
		\end{enumerate}
\end{lemma}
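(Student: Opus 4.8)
The plan is to separate the smooth target from the perturbation. Set $g \defeq f - F$, so $\norm{g}_\infty \le \nu$, and, by linearity of convolution with a Gaussian, $\tilde{f}_\sigma = \tilde{F}_\sigma + \tilde{g}_\sigma$, where $\tilde{F}_\sigma$ and $\tilde{g}_\sigma$ are the Gaussian smoothings of $F$ and $g$ in the sense of Definition~\ref{def:smoothed_f}. It then suffices to prove two statements and combine them by the triangle inequality: (i) $\tilde{F}_\sigma$ is $\ell$-gradient and $\rho$-Hessian Lipschitz, with $\norm{\grad\tilde{F}_\sigma(\x) - \grad F(\x)} \le O(\rho d\sigma^2)$ and $\norm{\hess\tilde{F}_\sigma(\x) - \hess F(\x)} \le O(\rho\sqrt d\,\sigma)$; and (ii) $\tilde{g}_\sigma$ is $O(\nu/\sigma^2)$-gradient and $O(\nu/\sigma^3)$-Hessian Lipschitz, with $\norm{\grad\tilde{g}_\sigma(\x)} \le O(\nu/\sigma)$ and $\norm{\hess\tilde{g}_\sigma(\x)} \le O(\nu/\sigma^2)$.

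Part (i) is routine. Since convolution against the smooth Gaussian density commutes with differentiation, $\grad\tilde{F}_\sigma(\x) = \E_{\z}[\grad F(\x + \z)]$ and $\hess\tilde{F}_\sigma(\x) = \E_{\z}[\hess F(\x + \z)]$ for $\z \sim \mathcal{N}(0, \sigma^2\I)$. Pulling the norm inside the expectation transfers the Lipschitz properties of $F$ to $\tilde{F}_\sigma$ (e.g., $\norm{\hess\tilde{F}_\sigma(\x) - \hess\tilde{F}_\sigma(\y)} \le \E_\z\norm{\hess F(\x+\z) - \hess F(\y+\z)} \le \rho\norm{\x - \y}$). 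For closeness, a first-order Taylor expansion of $\grad F$ with Hessian-Lipschitz remainder gives $\norm{\grad F(\x+\z) - \grad F(\x) - \hess F(\x)\z} \le \tfrac{\rho}{2}\norm{\z}^2$; taking expectations and using $\E_\z[\hess F(\x)\z] = 0$ together with $\E_\z\norm{\z}^2 = d\sigma^2$ yields the gradient bound, while $\norm{\hess F(\x+\z) - \hess F(\x)} \le \rho\norm{\z}$ with $\E_\z\norm{\z} \le \sigma\sqrt d$ yields the Hessian bound.

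Part (ii) is the crux, and the main obstacle is obtaining bounds that do \emph{not} carry a spurious $\sqrt d$ factor (a naive estimate of $\norm{\grad\tilde{g}_\sigma}$ would give $\nu\sqrt d/\sigma$). The starting point is Stein's identity, i.e., Gaussian integration by parts applied to the smooth density, so that no differentiability of $g$ (hence of $f$) is needed: $\grad\tilde{g}_\sigma(\x) = \sigma^{-2}\,\E_\z[\z\, g(\x + \z)]$ and $\hess\tilde{g}_\sigma(\x) = \sigma^{-4}\,\E_\z[(\z\z\trans - \sigma^2\I)\, g(\x + \z)]$. To remove the dimension dependence I would bound these operators by testing against a \emph{fixed} unit vector: for any unit $\u$, $\inner{\u}{\grad\tilde{g}_\sigma(\x)} = \sigma^{-2}\E_\z[(\u\trans\z)\, g(\x+\z)]$, and $\u\trans\z$ is a one-dimensional $\mathcal{N}(0,\sigma^2)$ variable, so $\lvert\inner{\u}{\grad\tilde{g}_\sigma(\x)}\rvert \le \sigma^{-2}\nu\,\E_\z\lvert\u\trans\z\rvert = O(\nu/\sigma)$; taking the supremum over $\u$ bounds $\norm{\grad\tilde{g}_\sigma(\x)}$. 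Similarly, $\u\trans(\z\z\trans - \sigma^2\I)\u = (\u\trans\z)^2 - \sigma^2$ with $\E_\z\lvert(\u\trans\z)^2 - \sigma^2\rvert = O(\sigma^2)$, so $\norm{\hess\tilde{g}_\sigma(\x)} = \sup_{\u}\lvert\u\trans\hess\tilde{g}_\sigma(\x)\u\rvert = O(\nu/\sigma^2)$. The gradient-Lipschitz bound for $\tilde{g}_\sigma$ is then immediate from this uniform bound on $\norm{\hess\tilde{g}_\sigma}$. For the Hessian-Lipschitz bound, fix unit $\u$ and set $\psi_\u(\x) \defeq \u\trans\hess\tilde{g}_\sigma(\x)\u = \sigma^{-4}\E_\z[((\u\trans\z)^2 - \sigma^2)\, g(\x+\z)]$; integrating along the segment from $\y$ to $\x$ reduces the claim to bounding $\norm{\grad\psi_\u}$, and one further application of Stein's identity gives, for unit $\v$, $\inner{\v}{\grad\psi_\u(\x)} = \sigma^{-6}\E_\z[((\u\trans\z)^2 - \sigma^2)(\v\trans\z)\, g(\x+\z)]$, where only the two-dimensional Gaussian $(\u\trans\z, \v\trans\z)$ is relevant; Cauchy--Schwarz on its moments bounds this by $\sigma^{-6}\nu\cdot O(\sigma^3) = O(\nu/\sigma^3)$. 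Taking suprema over $\u,\v$ gives $\norm{\hess\tilde{g}_\sigma(\x) - \hess\tilde{g}_\sigma(\y)} \le O(\nu/\sigma^3)\norm{\x-\y}$, which finishes part (ii); combining (i) and (ii) produces all four claimed estimates.
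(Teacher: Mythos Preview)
Your decomposition $\tilde f_\sigma=\tilde F_\sigma+\tilde g_\sigma$ and the way you handle each piece---passing derivatives inside the expectation for $\tilde F_\sigma$, and using the Stein/Gaussian-integration-by-parts identities together with testing against a fixed unit vector to kill the dimension factor for $\tilde g_\sigma$---is exactly the approach the paper takes (see their Lemmas for the gradient/Hessian identities and the individual sub-lemmas for gradient Lipschitz, Hessian Lipschitz, and the two difference bounds). The one place you diverge is the Hessian-Lipschitz bound for $\tilde g_\sigma$: the paper Taylor-expands the Gaussian kernel about the midpoint $(\x+\y)/2$ to first order in $\Delta=(\y-\x)/2$ and carries an unquantified $O(\norm{\x-\y}^2)$ remainder, whereas you go straight to the third derivative. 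Your route is cleaner and actually delivers a genuine global Lipschitz constant rather than a local one.

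One correction, though: your stated identity for $\inner{\v}{\grad\psi_{\u}(\x)}$ is missing a term. Writing $h(\z)=(\u\trans\z)^2-\sigma^2$, after the change of variables $\w=\x+\z$ the weight $h$ becomes $h(\w-\x)$ and also depends on $\x$, so differentiating gives
\[
\grad_{\x}\,\E_{\z}[h(\z)\,g(\x+\z)]
= -\,\E_{\z}[\grad h(\z)\,g(\x+\z)]
+ \sigma^{-2}\,\E_{\z}[h(\z)\,\z\,g(\x+\z)],
\]
and hence
\[
\inner{\v}{\grad\psi_{\u}(\x)}
= -\,2\sigma^{-4}(\u\trans\v)\,\E_{\z}[(\u\trans\z)\,g(\x+\z)]
+ \sigma^{-6}\,\E_{\z}\big[((\u\trans\z)^2-\sigma^2)(\v\trans\z)\,g(\x+\z)\big].
\]
The extra term is harmless---it is bounded by $2\sigma^{-4}\nu\,\E_{\z}|\u\trans\z|=O(\nu/\sigma^3)$---so your conclusion $\norm{\hess\tilde g_\sigma(\x)-\hess\tilde g_\sigma(\y)}\le O(\nu/\sigma^3)\norm{\x-\y}$ is correct, but the identity as you wrote it is not.
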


The proof is deferred to Appendix \ref{app:upper}.
Part (1) of the lemma says that the gradient (and Hessian) Lipschitz constants of $\tilde{f}_\sigma$ are similar to the gradient (and Hessian) Lipschitz constants of $F$ up to a term involving the function difference $\nu$ and the smoothing parameter $\sigma$. This means as $f$ is allowed to deviate further from $F$, we must smooth over a larger radius---choose a larger $\sigma$---to guarantee the same smoothness as before. On the other hand, part (2) implies that choosing a large $\sigma$ increases the upper bound on the gradient and Hessian \emph{difference} between $\tilde{f}_\sigma$ and $F$. Smoothing is a form of local averaging, so choosing a too-large radius will erase information about local geometry. The choice of $\sigma$ must strike the right balance between making $\tilde{f}_\sigma$ smooth (to guarantee ZPSGD finds a $\epsilon$-SOSP of $\tilde{f}_\sigma$ ) and keeping the derivatives of $\tilde{f}_\sigma$ close to those of $F$ (to guarantee any $\epsilon$-SOSP of $\tilde{f}_\sigma$ is also an $O(\epsilon)$-SOSP of $F$). In Appendix \ref{app:proof_upper}, we show that this can be satisfied by choosing $\sigma = \sqrt{\epsilon/(\rho d)}$.

\paragraph{Perturbed stochastic gradient descent}

In ZPSGD, we use the stochastic gradients suggested by Lemma~\ref{lem:gensgd}. 
Perturbed Gradient Descent (PGD) \citep{jin17escape} was shown to converge to a second-order stationary point. Here we use a simple modification of PGD that relies on batch stochastic gradient. 
In order for PSGD to converge, we require that the stochastic gradients are well-behaved; that is, they are unbiased and have good concentration properties, as asserted in the following lemma. It is straightforward to verify given that we sample $\z$ from a zero-mean Gaussian (proof in Appendix~\ref{app:proof_stoc}). 


\begin{lemma}[Property of stochastic gradient]\label{lem:stocgrad}
		Let $\g(\x;\z) = \z[f(\x+\z)- f(\x)]/\sigma^2$, where $\z \sim  \mathcal{N}(0,\sigma^2 \I)$. Then $\E_\z \g(\x; \z) = \grad \tilde{f}_\sigma(\x)$,
		and $\g(\x; \z)$ is sub-Gaussian with parameter $\frac{B}{\sigma}$.
\end{lemma}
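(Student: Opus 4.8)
The plan is to verify the two assertions independently; each is a short argument, and the substance of the lemma is already packaged in Lemma~\ref{lem:gensgd} and in the boundedness hypothesis of Assumption~\ref{assump}.

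For unbiasedness, the identity $\E_\z \g(\x;\z) = \grad\tilde{f}_\sigma(\x)$ is exactly the content of Lemma~\ref{lem:gensgd}, so one could simply invoke it; for completeness I would record the one-line derivation. Writing $\tilde{f}_\sigma(\x) = (2\pi\sigma^2)^{-d/2}\int_{\R^d} f(\u)\exp\!\big(-\norm{\u-\x}^2/(2\sigma^2)\big)\,\dd\u$ after the substitution $\u=\x+\z$, a standard dominated-convergence argument (valid since $f$ is bounded and the Gaussian density together with its $\x$-gradient are locally uniformly dominated by integrable functions) lets us differentiate under the integral sign to get $\grad\tilde{f}_\sigma(\x) = \sigma^{-2}\E_\z[f(\x+\z)\,\z]$. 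Since $\E_\z\z=\boldsymbol 0$, subtracting the vanishing term $\sigma^{-2}f(\x)\E_\z\z$ yields $\grad\tilde{f}_\sigma(\x) = \sigma^{-2}\E_\z[(f(\x+\z)-f(\x))\z] = \E_\z\g(\x;\z)$.

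For the sub-Gaussian claim, the key idea is to work one direction at a time, so that no factor of $\sqrt d$ enters. Fix a unit vector $\v$ and set $Y=\inner{\v}{\g(\x;\z)} = \sigma^{-2}\big(f(\x+\z)-f(\x)\big)\inner{\v}{\z}$. By Assumption~\ref{assump}, $\norm{f}_\infty \le B+\nu$, which we treat as $O(B)$ (in the regime of interest $\nu \le B$), so $|f(\x+\z)-f(\x)| \le 2\norm{f}_\infty = O(B)$ pointwise, hence $|Y| \le (O(B)/\sigma^2)\,|\inner{\v}{\z}|$ pointwise, where $\inner{\v}{\z}\sim\mathcal N(0,\sigma^2)$. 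This gives the Gaussian tail bound
\[
\P\big(|Y|\ge t\big)\;\le\;\P\Big(|\inner{\v}{\z}|\ge \Omega(\sigma^2 t/B)\Big)\;\le\;2\exp\!\big(-\Omega(\sigma^2 t^2/B^2)\big),\qquad t\ge 0,
\]
so $Y$ has sub-Gaussian tails with parameter $O(B/\sigma)$. A standard centering step (sub-Gaussian tail bounds are equivalent, up to absolute constants, to sub-Gaussian bounds on the MGF of the centered variable, together with $|\E Y|\le\E|Y| = O(B/\sigma)$) then shows that $Y-\E Y = \inner{\v}{\g(\x;\z)-\grad\tilde{f}_\sigma(\x)}$ is sub-Gaussian with parameter $O(B/\sigma)$. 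Since $\v$ was arbitrary, $\g(\x;\z)$ is sub-Gaussian as a random vector with parameter $O(B/\sigma)$, which is the stated bound up to the hidden absolute constant.

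The only place any care is needed is the dimension dependence: a crude bound $\norm{\g(\x;\z)}\le (O(B)/\sigma^2)\norm{\z}$ would lose a $\sqrt d$ because $\norm{\z}\approx\sigma\sqrt d$, whereas the one-dimensional marginals $\inner{\v}{\z}\sim\mathcal N(0,\sigma^2)$ are dimension-free, which is exactly why the per-direction argument recovers the sharp parameter $B/\sigma$. A secondary bookkeeping point is that $f$ is guaranteed bounded only by $B+\nu$ rather than $B$, which is harmless and can be absorbed into the constant.
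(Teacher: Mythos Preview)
Your proposal is correct and takes essentially the same approach as the paper: invoke the Gaussian smoothing identity for unbiasedness, then for the sub-Gaussian claim fix a direction $\v$, bound $|f(\x+\z)-f(\x)|$ by $O(B)$, and use $\inner{\v}{\z}\sim\mathcal N(0,\sigma^2)$ to get a Gaussian tail on the one-dimensional projection. Your extra remarks on centering and on $B$ versus $B+\nu$ are more careful than the paper's terser treatment but change nothing substantive.
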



As it turns out, these assumptions suffice to guarantee that perturbed SGD (PSGD), a simple adaptation of PGD in \citet{jin17escape} with stochastic gradient and large mini-batch size, converges to the second-order stationary point of the objective function.


\begin{theorem}[PSGD efficiently escapes saddle points \citep{jin2018sgd}, informal]\label{thm:psgd_guar}
Suppose $f(\cdot)$ is $\ell$-gradient Lipschitz and $\rho$-Hessian Lipschitz, and stochastic gradient $\g(\x, \theta)$ with $\E \g(\x; \theta) = \grad f(\x)$ has a sub-Gaussian tail with parameter $\sigma/\sqrt{d}$, then for any $\delta >0$, with proper choice of hyperparameters, PSGD (Algorithm \ref{algo:Mini_PSGD}) will find an $\epsilon$-SOSP of $f$ with probability $1-\delta$, in $\poly(d, B, \ell,\rho, \sigma, 1/\epsilon, \log (1/\delta))$  number of queries.
\end{theorem}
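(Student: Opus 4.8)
The plan is to prove convergence via a function-value decrease argument, following the analysis of perturbed (stochastic) gradient descent in \citet{jin17escape,jin2018sgd}. The only two obstructions to a visited iterate being an $\epsilon$-SOSP are (i) a large gradient, $\norm{\grad f(\x_t)} \ge \epsilon$, and (ii) sitting near a strict saddle, $\norm{\grad f(\x_t)} \le \epsilon$ but $\lambda_{\min}(\Hess f(\x_t)) \le -\sqrt{\rho\epsilon}$. In each case one shows that over a short window of steps the objective decreases by a fixed amount; since $f$ is $B$-bounded, only $\poly(B,\ell,\rho,1/\epsilon)$ such windows can occur, so some visited iterate must be an $\epsilon$-SOSP. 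The always-on perturbation $\xi_t$ is negligible in case (i) and is exactly what drives escape in case (ii), while the stochastic gradient is replaced by the true gradient $\grad f$ up to an error that is controlled by taking the mini-batch size $m$ polynomially large.

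\textbf{Step 1 (large gradient).} When $\norm{\grad f(\x_t)} \ge \epsilon$, apply the descent lemma for $\ell$-gradient-Lipschitz $f$ to the update $\x_{t+1} = \x_t - \eta(\g_t + \xi_t)$ with $\eta = \Theta(1/\ell)$: this yields an expected decrease $\Omega(\eta\epsilon^2)$, minus error terms of order $\eta^2\ell\,\E\norm{\g_t - \grad f(\x_t)}^2$ and $\eta^2\ell r^2$. Since $\g(\x;\theta)$ is sub-Gaussian with parameter $\sigma/\sqrt d$, the mini-batch average has variance $O(\sigma^2/m)$, so for $m = \poly(\cdot)$ and $r$ chosen small these error terms are dominated and $f$ decreases in expectation by $\Omega(\epsilon^2/\ell)$; hence this regime is entered at most $O(B\ell/\epsilon^2)$ times.

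\textbf{Step 2 (saddle point).} When case (ii) holds, consider the next $\tilde\Theta(\ell/\sqrt{\rho\epsilon})$ iterates and let $\e_1$ be the most-negative eigenvector of $\Hess f(\x_t)$. The central claim, proved by a coupling argument, is that the set of perturbations $\xi_t$ from which the ensuing trajectory fails to decrease $f$ by $\tilde\Omega(\sqrt{\epsilon^3/\rho})$ within the window (the ``stuck region'') lies in a slab of width $\ll r$ orthogonal to $\e_1$. Concretely, couple two runs whose initial perturbations differ by $c\,\e_1$: as long as both stay in a ball of radius $\tilde O(\sqrt{\epsilon/\rho})$, $\rho$-Hessian-Lipschitzness keeps the dynamics close to the linear recursion $\delta_{s+1}\approx(\I-\eta\Hess f(\x_t))\delta_s$, which expands the $\e_1$-component geometrically, so the two runs cannot both remain in the ball and at least one escapes (hence decreases $f$). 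Thus with probability $1 - \delta/\poly$ over $\xi_t$ the value $f$ drops by $\tilde\Omega(\sqrt{\epsilon^3/\rho})$ within $\tilde\Theta(\ell/\sqrt{\rho\epsilon})$ steps, and this regime is entered at most $\tilde O(B\sqrt{\rho/\epsilon^3})$ times.

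\textbf{Step 3 (stochastic error and wrap-up).} The new difficulty relative to deterministic PGD is that over the escape window the trajectory is driven by $\g_s$, not $\grad f$, so one must show $\sum_s \eta(\g_s - \grad f(\x_s))$ stays $o(r)$ uniformly over the window with probability $1 - \delta/\poly$; this follows from an Azuma/Bernstein-type bound for the martingale of sub-Gaussian increments, and it is precisely this requirement that fixes $m = \poly(d,B,\ell,\rho,\sigma,1/\epsilon,\log(1/\delta))$. Combining the regimes: the total decrease of $f$ along the run is at most $2B$ and each bad window forces a decrease $\tilde\Omega(\min\{\eta\epsilon^2,\sqrt{\epsilon^3/\rho}\})$, so there are only $\poly(B,\ell,\rho,1/\epsilon)$ windows; a union bound over them (each failing with probability $\delta/\poly$) shows that with probability $1-\delta$ some visited iterate is an $\epsilon$-SOSP, using $\poly(d,B,\ell,\rho,\sigma,1/\epsilon,\log(1/\delta))$ queries in total. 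The main obstacle is Step 2 together with the uniform noise control of Step 3 — pushing the coupling / thin-slab argument through in the presence of stochastic gradients — rather than the routine bookkeeping of Step 1.
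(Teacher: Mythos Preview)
Your proposal is correct and follows essentially the same route as the paper: the two-regime split (large gradient vs.\ strict saddle), the descent-lemma computation for the first regime, the coupling/thin-slab argument over a window of length $\tilde\Theta(\ell/\sqrt{\rho\epsilon})$ to show the stuck region is narrow along $\e_1$, and the function-value budget to bound the number of windows --- this is exactly the structure of Lemmas~\ref{lem:large_grad_GD}, \ref{lem:neg_curve_GD}, \ref{lem:locality}, and \ref{lem:width} in the appendix.

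The one place you diverge is Step~3. You propose to control $\sum_s \eta(\g_s - \grad f(\x_s))$ over the escape window via an Azuma/Bernstein martingale bound. The paper takes the cruder but simpler route: choose $m$ so large that \emph{each} mini-batch error $\norm{\g_t - \grad f(\x_t)}$ is already polynomially small (Lemma~\ref{lem:concentration}), so no cancellation across steps is needed and the stochastic trajectory is treated as a deterministic one plus a pointwise-tiny perturbation. In particular, in the coupling argument they need $\max_\tau \norm{\h_\tau} \le \gamma\norm{\w_0}$ (where $\h_\tau$ is the difference of stochastic errors between the two coupled sequences), which they get directly by inflating $m$. Your martingale approach would in principle allow a smaller $m$, but since the theorem only claims polynomial queries, the paper's per-step concentration is sufficient and avoids the extra conditioning care that Azuma requires when the summands depend on the running trajectory. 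Either way works for the stated result.
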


For completeness, we include the formal version of the theorem and its proof in Appendix \ref{app:sgd}. Combining this theorem and the second part of Lemma~\ref{lem:smoothing}, we see that by choosing an appropriate smoothing radius $\sigma$, our algorithm ZPSGD finds an $C\epsilon/\sqrt{d}$-SOSP for $\tilde{f}_\sigma$ which is also an $\epsilon$-SOSP for $F$ for some universal constant~$C$.
\vspace{-2mm}

\subsection{Polynomial queries lower bound}
\vspace{-2mm}

The proof of Theorem \ref{thm:lowerbound_informal} depends on the construction of a `hard' function pair. The argument crucially depends on the concentration of measure in high dimensions. We provide a proof sketch in Appendix~\ref{section:lower-bnd-overview} and the full proof in Appendix~\ref{app:lower_bnd}.

\vspace{-2mm}


\section{Applications}
\vspace{-2mm}

In this section, we present several applications of our algorithm. We first show a simple example of learning one rectified linear unit (ReLU), where the empirical risk is nonconvex and nonsmooth. 
We also briefly survey other potential applications for our model as stated in Problem~\ref{problem}.

\subsection{Statistical Learning Example: Learning ReLU}


Consider the simple example of learning a ReLU unit. Let $\relu(z) = \max\{z, 0\}$ for $z\in\R$. Let $\w^\star(\norm{\w^\star} = 1)$ be the desired solution. We 
assume data $(\x_i, \y_i)$ is generated as $y_i = \relu(\x_i\trans \w^\star) + \zeta_i$ where noise $\zeta_i \sim \mathcal{N}(0, 1)$.
We further assume the features $\x_i \sim \mathcal{N}(0, \I)$ are also generated from a standard Gaussian distribution. The empirical risk with a squared loss function is:
\begin{equation*}
\emploss(\w) = \frac{1}{n}\sum_{i=1}^n (y_i - \relu(\x_i\trans\w))^2.
\end{equation*}
Its population version is $\poploss(\w) = \E[ \emploss(\w)]$. In this case, the empirical risk is highly nonsmooth---in fact, not differentiable in all subspaces perpendicular to each $\x_i$. The population risk turns out to be smooth in the entire space $\R^{d}$ except at $\bm{0}$. This is illustrated in Figure \ref{fig:relu}, where the empirical risk displays many sharp corners.

\begin{figure}[t]
	\centering
	\includegraphics[width=.35\textwidth]{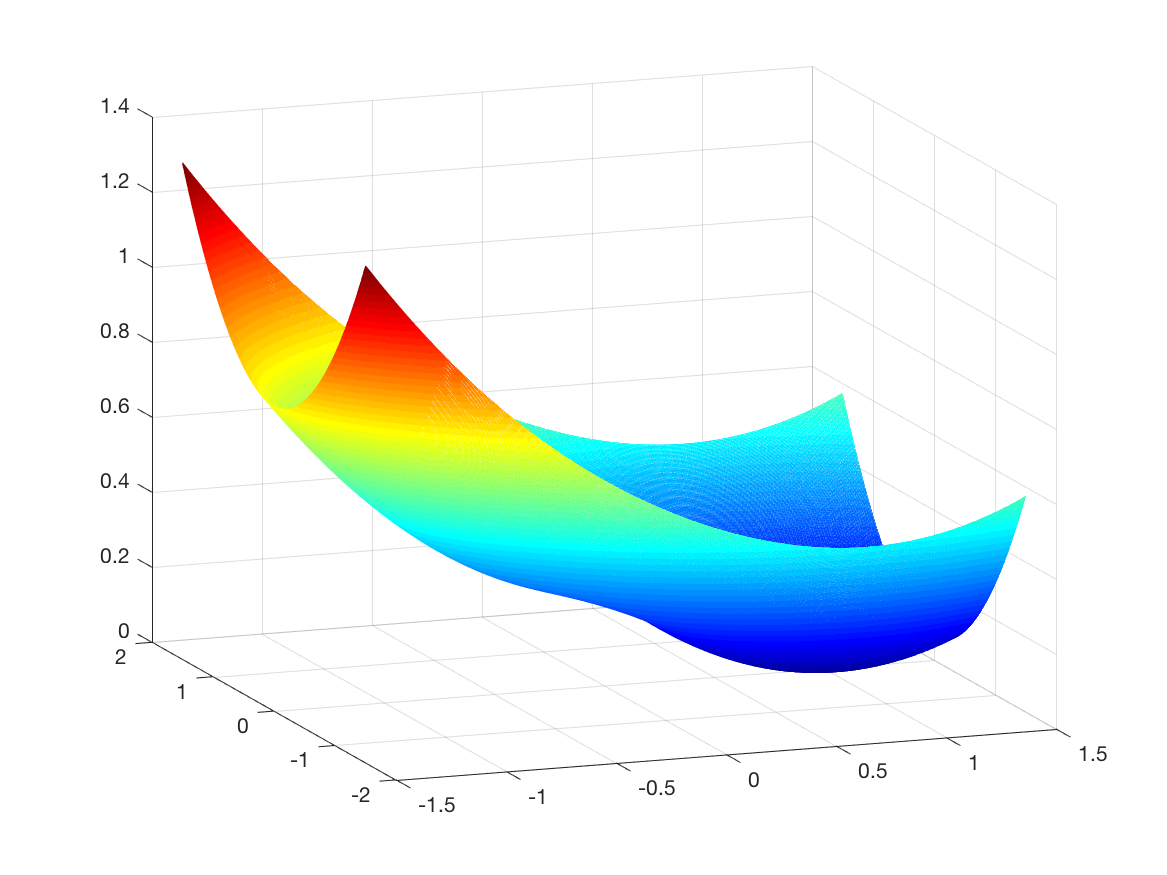}
	\includegraphics[width=.35\textwidth]{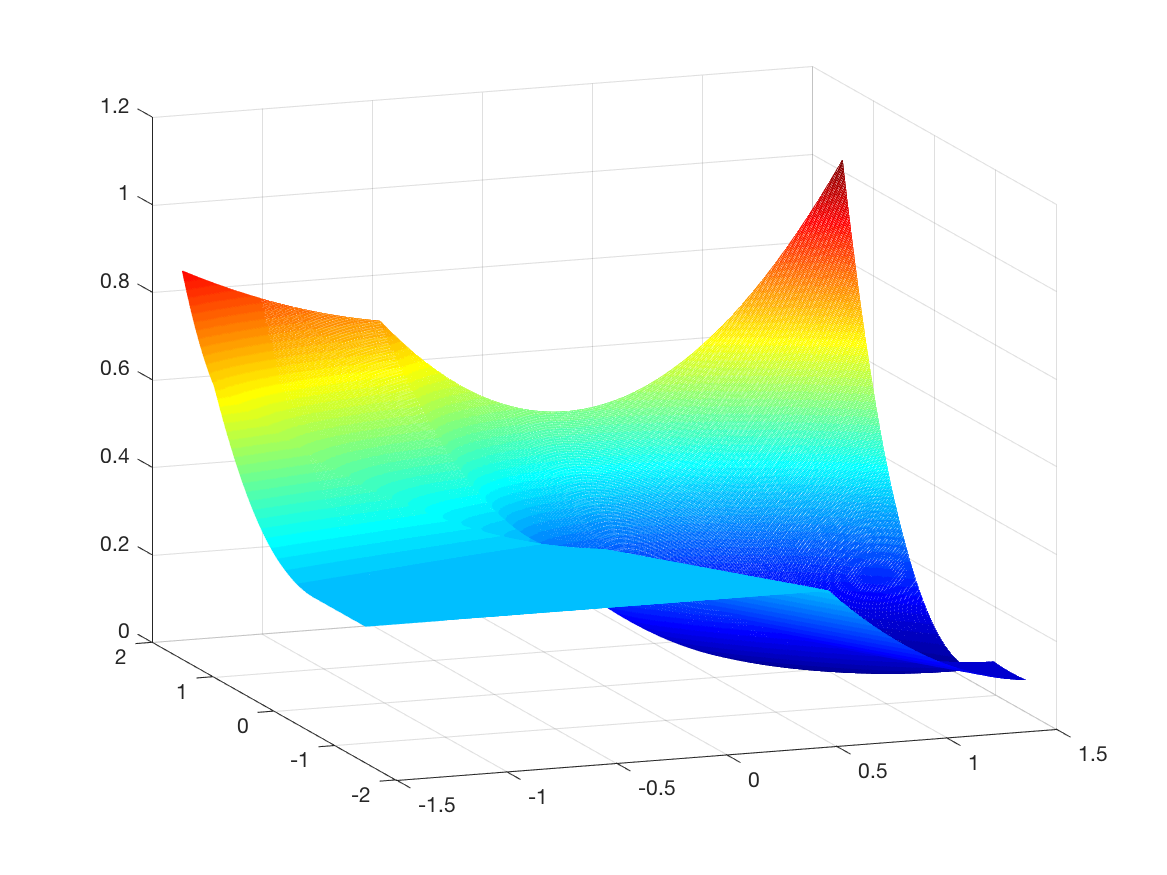}
	\caption{Population (left) and Empirical (right) risk for learning ReLU Unit , $d=2$. 
    Sharp corners present in the empirical risk are not found in the population version.}\label{fig:relu}
    \vspace{-3ex}
\end{figure}

Due to nonsmoothness at $\bm{0}$ even for population risk, we focus on a compact region $\neibor = \{\w| \w\trans \w^\star \ge \frac{1}{\sqrt{d}}\} \cap  \{\w| \norm{\w} \le 2\}$ which excludes $\bm{0}$. This region is large enough so that a random initialization has at least constant probability of being inside it. We also show the following properties that allow us to apply Algorithm~\ref{algo:PSGD} directly:

\begin{lemma} \label{lem:ReLU}
	The population and empirical risk $\poploss, \emploss$ of learning a ReLU unit problem satisfies: \newline
	1. If $\w_0 \in \neibor$, then runing ZPSGD (Algorithm \ref{algo:PSGD}) gives $\w_t \in \neibor$ for all $t$ with high probability. \newline
	2. Inside $\neibor$, $\poploss$ is $O(1)$-bounded, $O(\sqrt{d})$-gradient Lipschitz, and $O(d)$-Hessian Lipschitz. \newline
	3. $\sup_{\w \in \neibor} | \emploss(\w) -\poploss(\w)| \le \tilde{O}(\sqrt{d/n})$ w.h.p. \newline
	4. Inside $\neibor$, $\poploss$ is nonconvex function, $\w^\star$ is the only SOSP of $\poploss(\w)$.
\end{lemma}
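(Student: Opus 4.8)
The plan is to reduce everything to a closed form for the population risk. Since $\x_i\sim\mathcal N(0,\I)$ and $\norm{\w^\star}=1$, the arc-cosine kernel identity $\E[\relu(\x\trans\u)\relu(\x\trans\v)]=\frac{\norm\u\norm\v}{2\pi}(\sin\theta+(\pi-\theta)\cos\theta)$ with $\theta=\angle(\u,\v)$, together with $\E[\relu(\x\trans\u)^2]=\tfrac12\norm\u^2$ and independence of $\zeta_i$, gives
\begin{equation*}
\poploss(\w)=\tfrac32-2\phi(\w)+\tfrac12\norm\w^2,\qquad \phi(\w)\defeq\E[\relu(\x\trans\w)\relu(\x\trans\w^\star)]=\tfrac1{2\pi}\big(\beta+(\pi-\theta)\alpha\big),
\end{equation*}
where $\alpha\defeq\w\trans\w^\star$, $\beta\defeq\norm{\projT\w}$ with $\projT=\I-\w^\star{\w^\star}\trans$, and $\theta=\mathrm{atan2}(\beta,\alpha)\in[0,\pi]$. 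On $\neibor$ we have $\norm\w\le2$ and $\norm\w\ge\alpha\ge1/\sqrt d$, hence $1/\norm\w\le\sqrt d$. Differentiating under the expectation, $\grad\phi(\w)=\E[\Ind(\x\trans\w>0)\relu(\x\trans\w^\star)\,\x]$, and conditioning on $\{\x\trans\w=0\}$ with the density factor $\tfrac1{\sqrt{2\pi}\norm\w}$, $\hess\phi(\w)=\tfrac1{\sqrt{2\pi}\norm\w}\E[\relu(\x\trans\w^\star)\x\x\trans\mid\x\trans\w=0]\succeq0$ with $\norm{\hess\phi(\w)}=O(1/\norm\w)$ (Cauchy--Schwarz on the conditional law) and, differentiating once more, $\norm{\grad^3\phi(\w)}=O(1/\norm\w^2)$. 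This yields Part 2 on $\neibor$: $|\poploss|\le O(1)$ (from $0\le\phi\le\tfrac12\norm\w$ and $\poploss\ge0$), gradient-Lipschitz constant $\le1+O(\sqrt d)=O(\sqrt d)$, and Hessian-Lipschitz constant $O(d)$.

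For Part 4, write $\grad\phi=(\partial_\alpha\Phi)\w^\star+(\partial_\beta\Phi)\u$ with $\u=\projT\w/\beta$, $2\pi\partial_\alpha\Phi=(\pi-\theta)+\tfrac{\alpha\beta}{\alpha^2+\beta^2}$ and $2\pi\partial_\beta\Phi=\tfrac{\beta^2}{\alpha^2+\beta^2}$; then $\grad\poploss(\w)=0$ forces $\beta\big(1-\tfrac{\beta}{\pi(\alpha^2+\beta^2)}\big)=0$ and $\alpha-1+\tfrac\theta\pi-\tfrac{\alpha\beta}{\pi(\alpha^2+\beta^2)}=0$. If $\beta=0$ then $\theta=0$ and the second equation gives $\alpha=1$, i.e.\ $\w=\w^\star$; if instead $\pi(\alpha^2+\beta^2)=\beta$ the second equation collapses to $\theta=\pi$, impossible on $\neibor$ since there $\alpha>0$ forces $\theta<\pi/2$. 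So $\w^\star$ is the unique critical point in $\neibor$. Because the integral defining $\hess\phi(\w^\star)$ is over $\{\x\trans\w^\star=0\}$, where $\relu(\x\trans\w^\star)=\relu(0)=0$, we get $\hess\phi(\w^\star)=0$ and $\hess\poploss(\w^\star)=\I\succ0$, so $\w^\star$ is the unique SOSP. For nonconvexity, note that for $\w\in\neibor$ with $\norm\w=\Theta(1/\sqrt d)$ (e.g.\ $\alpha=\beta=1/\sqrt d$) the positive-semidefinite matrix $\hess\phi(\w)$ has an eigenvalue $\Theta(\sqrt d)$ along directions orthogonal to $\w$ and $\w^\star$, so $\hess\poploss(\w)=\I-2\hess\phi(\w)$ is indefinite once $d$ exceeds an absolute constant.

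Part 3 is a uniform-convergence estimate. Using $y_i-\relu(\x_i\trans\w)=\Delta_\w(\x_i)+\zeta_i$ with $\Delta_\w(\x)\defeq\relu(\x\trans\w^\star)-\relu(\x\trans\w)$,
\begin{equation*}
\emploss(\w)-\poploss(\w)=\Big(\tfrac1n\sum_i\zeta_i^2-1\Big)+\tfrac2n\sum_i\Delta_\w(\x_i)\zeta_i+\Big(\tfrac1n\sum_i\Delta_\w(\x_i)^2-\E\Delta_\w^2\Big).
\end{equation*}
The first term is $\tilde O(1/\sqrt n)$ by Bernstein. Condition on the high-probability event that $\max_i\norm{\x_i}\le\tilde O(\sqrt d)$, $\max_i|\x_i\trans\w^\star|\le\tilde O(1)$, $\max_i|\zeta_i|\le\tilde O(1)$. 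The cross term, by symmetrization over the symmetric $\zeta_i$ and contraction ($\relu$ is $1$-Lipschitz with $\relu(0)=0$), is $\tilde O\big(\tfrac1n\sqrt{\sum_i\zeta_i^2\norm{\x_i}^2}\big)=\tilde O(\sqrt{d/n})$. In the last term, expanding $\Delta_\w^2$: the $\relu(\x\trans\w^\star)^2$ part is a constant (Bernstein), the $\relu(\x\trans\w^\star)\relu(\x\trans\w)$ part has a one-dimensional multiplier $\relu(\x_i\trans\w^\star)=\tilde O(1)$ (symmetrization/contraction), and $\tfrac1n\sum_i\relu(\x_i\trans\w)^2-\tfrac12\norm\w^2$ is handled via $\relu(t)^2=\tfrac12 t^2+\tfrac12 t|t|$: the $t^2$ part equals $\tfrac12\w\trans(\hat\Sigma-\I)\w$, uniformly $\le2\norm{\hat\Sigma-\I}_{\mathrm{op}}=\tilde O(\sqrt{d/n})$ for $n\gtrsim d$, and the mean-zero process $\tfrac1n\sum_i(\x_i\trans\w)|\x_i\trans\w|$ is bounded uniformly over $\mathbb{B}_0(2)$ by a chaining argument over the $d$-dimensional ball (after truncating the rare atypically large $|\x_i\trans\w|$), again $\tilde O(\sqrt{d/n})$. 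A union bound gives $\sup_{\w\in\neibor}|\emploss-\poploss|\le\tilde O(\sqrt{d/n})$.

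For Part 1, it is enough to show that near each face of $\partial\neibor$ the negative-gradient drift of ZPSGD points strictly inward by a margin dominating the step size, the smoothing bias, and the (minibatch-concentrated, sub-Gaussian) fluctuations of $\g_t$ and of $\xi_t$. Near $\{\norm\w=2\}$: $\w\trans\grad\poploss(\w)=\norm\w^2-2\w\trans\grad\phi(\w)\ge\norm\w^2-O(\norm\w)>0$ since $\norm{\grad\phi}=O(1)$, so the drift decreases $\norm\w$. Near $\{\w\trans\w^\star=1/\sqrt d\}$: ${\w^\star}\trans\grad\poploss(\w)=\alpha-1+\tfrac\theta\pi-\tfrac{\alpha\beta}{\pi(\alpha^2+\beta^2)}<\alpha-\tfrac12<0$ (using $\theta<\pi/2$ on $\neibor$), so the drift increases $\w\trans\w^\star$. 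Since Lemma \ref{lem:smoothing} with the stated parameters gives $\norm{\grad\tilde f_\sigma-\grad\poploss}=O(\epsilon)$ on a slight enlargement of $\neibor$, a supermartingale/Azuma argument and a union bound over the polynomially many iterations keep $\w_t\in\neibor$ throughout with high probability; Parts 1 and 2 then let us invoke Theorem \ref{thm:upperbound_informal} with $F=\poploss$, $f=\emploss$, $\ell=O(\sqrt d)$, $\rho=O(d)$, and $\nu=\tilde O(\sqrt{d/n})$ from Part 3, so the output is an $\epsilon$-SOSP of $\poploss$ in $\neibor$, which by Part 4 must be $\w^\star$. The main obstacle is Part 3: the loss is unbounded (Gaussian noise) and $\relu(\x\trans\w)^2$ has quadratic growth, so a naive Lipschitz/covering bound over a $d$-dimensional ball loses a spurious factor of $\sqrt d$; obtaining $\tilde O(\sqrt{d/n})$ requires the $\relu(t)^2=\tfrac12(t^2+t|t|)$ split, operator-norm control of $\hat\Sigma$, and the truncated chaining bound. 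The confinement argument of Part 1 is the secondary technical point.
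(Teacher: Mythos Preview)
Your proposal is essentially correct, but it diverges from the paper's route in instructive ways, and in one place your assessment of what is ``the main obstacle'' is mistaken.

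\textbf{One-point convexity vs.\ face-by-face barriers.} The paper's proof hinges on a single inequality you do not isolate: inside $\neibor$,
\[
\langle -\grad \poploss(\w),\, \w^\star - \w\rangle \;\ge\; \tfrac{1}{10}\,\norm{\w-\w^\star}^2 .
\]
This one-point convexity does double duty. For Part~4 it immediately rules out any stationary point other than $\w^\star$ (if $\grad\poploss(\w)=0$ then $\w=\w^\star$), replacing your direct solution of the $(\alpha,\beta)$ stationarity system. For Part~1 it gives a Lyapunov function $\norm{\w_t-\w^\star}^2$ that strictly decreases (up to $O(\epsilon)$ noise and bias) whenever $\norm{\w_t-\w^\star}$ is bounded away from zero, so confinement follows from ``contract towards $\w^\star$'' plus ``a small ball around $\w^\star$ is deep inside $\neibor$, and one step cannot jump out.'' Your inward-drift check on each face of $\partial\neibor$ is valid and yields the same conclusion, but it is two separate arguments instead of one, and it does not reuse the information for Part~4.

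\textbf{Part 3: the naive covering does \emph{not} lose a $\sqrt d$.} This is where your diagnosis is off. The paper proves Part~3 by exactly the ``naive Lipschitz/covering'' argument you say fails: take an $\epsilon$-net of $\neibor$ of log-cardinality $O(d\log(1/\epsilon))$; use that each $(y_i-\relu(\x_i\trans\w))^2$ is $O(1)$-sub-exponential to get $\sup_j|\emploss(\w^j)-\poploss(\w^j)|\le\sqrt{(d\log(1/\epsilon)+\log(1/\delta))/n}$; and extend off the net using that $\emploss$ is (w.h.p.) $O(d)$-Lipschitz on $\neibor$, since $\tfrac1n\sum_i(\norm{\x_i}^2+|\zeta_i|\,\norm{\x_i})$ concentrates at $O(d)$. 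Choosing $\epsilon\asymp 1/\sqrt{dn}$ balances the two terms at $\tilde O(\sqrt{d/n})$. Your symmetrization/contraction/operator-norm/chaining decomposition is correct but unnecessary; the elementary route already hits the target rate.

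\textbf{Nonconvexity.} Your argument (an eigenvalue of $\hess\phi$ of order $\sqrt d$ at $\norm\w\asymp 1/\sqrt d$) works only for $d$ larger than an absolute constant. The paper instead exhibits an explicit line segment $\w(t)=\tfrac15(\w^\star+t\e)$, $\e\perp\w^\star$, along which the one-variable restriction has a negative second derivative at $t=0.6$; this holds for every $d\ge 2$ and sits entirely inside $\neibor$, so it is both sharper and simpler.

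\textbf{Part 2.} Your boundary-integral representation of $\hess\phi$ and the resulting $O(1/\norm\w)$, $O(1/\norm\w^2)$ bounds are fine. The paper instead writes down closed forms for $\grad\poploss$ and $\hess\poploss$ by differentiating the arc-cosine kernel directly and reads off the same Lipschitz orders. Either works.
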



These properties show that the population loss has a well-behaved landscape, while the empirical risk is pointwise close. This is exactly what we need for Algorithm~\ref{algo:PSGD}. Using Theorem~\ref{thm:upperbound_informal} we immediately get the following sample complexity, which guarantees an approximate population risk minimizer. We defer all proofs to Appendix \ref{app:relu}.

\begin{theorem} \label{thm:ReLU}
For learning a ReLU unit problem, suppose the sample size is $n \ge \tilde{O}(d^4/\epsilon^3)$, and the initialization is $\w_0 \sim \mathcal{N}(0, \frac{1}{d} \I)$, then with at least constant probability, Algorithm \ref{algo:PSGD} will output an estimator $\hat{\w}$ so that $\norm{\hat{\w} - \w^\star} \le \epsilon$.
\end{theorem}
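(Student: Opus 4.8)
The plan is to invoke the upper bound of Theorem~\ref{thm:upperbound_informal} with $F=\poploss$ and $f=\emploss$, supplying it with the four structural facts of Lemma~\ref{lem:ReLU}, and then to convert the resulting approximate-SOSP guarantee into the distance bound $\norm{\hat\w-\w^\star}\le\epsilon$. Concretely there are three things to check: (i) the random start $\w_0\sim\mathcal N(0,\tfrac{1}{d}\I)$ lands in the good region $\neibor$ with at least constant probability; (ii) restricted to $\neibor$ the pair $(\poploss,\emploss)$ satisfies Assumption~\ref{assump} with constants under which $\nu=\tilde{O}(\sqrt{d/n})$ meets the hypothesis of Theorem~\ref{thm:upperbound_informal} exactly when $n\ge\tilde{O}(d^4/\epsilon_{\mathrm{s}}^3)$, where $\epsilon_{\mathrm{s}}$ is the target SOSP accuracy fed to ZPSGD; and (iii) every $\epsilon_{\mathrm{s}}$-SOSP of $\poploss$ inside $\neibor$ lies within $\epsilon$ of $\w^\star$ once $\epsilon_{\mathrm{s}}=\Theta(\epsilon)$.

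For (i): $\w_0\trans\w^\star\sim\mathcal N(0,1/d)$, so $\P[\w_0\trans\w^\star\ge 1/\sqrt d]=\P[\mathcal N(0,1)\ge 1]$ is an absolute constant, while $\norm{\w_0}^2$ is $d^{-1}$ times a $\chi^2_d$ variable and hence concentrates around $1$, giving $\norm{\w_0}\le 2$ with probability $1-o(1)$; intersecting the two events, $\w_0\in\neibor$ with probability at least some $c_0>0$. For (ii): condition on $\w_0\in\neibor$. Part~1 of Lemma~\ref{lem:ReLU} guarantees all ZPSGD iterates stay in $\neibor$ with high probability, so we may run the argument of Theorem~\ref{thm:upperbound_informal} as though $\poploss$ had its $\neibor$-regularity on all of $\R^d$ (replace $\poploss$ off $\neibor$ by a smooth extension with the same constants---the iterates never leave $\neibor$ and so never detect the change). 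Parts~2--3 then give $B=O(1)$, $\ell=O(\sqrt d)$, $\rho=O(d)$, and $\nu\defeq\sup_{\w\in\neibor}|\emploss(\w)-\poploss(\w)|=\tilde{O}(\sqrt{d/n})$. Substituting $\rho=\Theta(d)$ into the requirement $\nu\le O(\sqrt{\epsilon_{\mathrm{s}}^3/\rho}\cdot(1/d))$ of Theorem~\ref{thm:upperbound_informal} yields $\nu\le\tilde{O}(\epsilon_{\mathrm{s}}^{3/2}d^{-3/2})$, i.e. $\tilde{O}(\sqrt{d/n})\le\tilde{O}(\epsilon_{\mathrm{s}}^{3/2}d^{-3/2})$, which rearranges to $n\ge\tilde{O}(d^4/\epsilon_{\mathrm{s}}^3)$. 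Hence under the stated sample size Theorem~\ref{thm:upperbound_informal} applies and, with probability $1-\delta$, ZPSGD returns $\hat\w\in\neibor$ that is an $\epsilon_{\mathrm{s}}$-SOSP of $\poploss$ using $\poly(d,1/\epsilon_{\mathrm{s}},\log(1/\delta))$ queries; take $\delta$ a small constant.

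For (iii)---the step I expect to be the main obstacle---I would upgrade the qualitative conclusion of part~4 of Lemma~\ref{lem:ReLU} (``$\w^\star$ is the only SOSP'') to a quantitative error-bound statement on $\neibor$: show that every $\w\in\neibor$ satisfies either a linear gradient lower bound $\norm{\grad\poploss(\w)}\ge\Omega(\norm{\w-\w^\star})$ (the natural ``one-point'' type property for the single-ReLU population risk, using that $\lambda_{\min}(\hess\poploss(\w^\star))=\Theta(1)$) or, in any region where the gradient can be small, a matching negative-curvature bound $\lambda_{\min}(\hess\poploss(\w))\le-\Omega(1)$. Either suffices: for $\epsilon_{\mathrm{s}}$ below the relevant absolute constants, the two inequalities defining an $\epsilon_{\mathrm{s}}$-SOSP ($\norm{\grad\poploss(\hat\w)}\le\epsilon_{\mathrm{s}}$ and $\lambda_{\min}(\hess\poploss(\hat\w))\ge-\sqrt{\rho\epsilon_{\mathrm{s}}}$) force $\hat\w$ into the linear-growth neighborhood of $\w^\star$ and there give $\norm{\hat\w-\w^\star}=O(\epsilon_{\mathrm{s}})$. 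Choosing $\epsilon_{\mathrm{s}}=\Theta(\epsilon)$ then yields $\norm{\hat\w-\w^\star}\le\epsilon$ while keeping the sample requirement at $n\ge\tilde{O}(d^4/\epsilon^3)$, and multiplying the three success probabilities ($c_0$ for the initialization, $1-o(1)$ for staying in $\neibor$, $1-\delta$ for ZPSGD) leaves an overall constant probability. The genuinely new work is the uniform landscape computation behind (iii)---controlling $\grad\poploss$ and $\hess\poploss$ over all of $\neibor$ and pinning down the curvature at $\w^\star$---and verifying that this can be done with only $\epsilon_{\mathrm{s}}=\Theta(\epsilon)$ loss (so that the $d^4$ dependence is not inflated); everything else is bookkeeping around the cited theorems.
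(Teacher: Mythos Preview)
Your proposal is correct and follows the same route as the paper, whose own proof is the one-liner ``directly computed from Lemma~\ref{lem:ReLU} and Theorem~\ref{thm:upperbound_informal}''; you have simply spelled out the arithmetic behind that sentence, including the initialization step~(i) that the paper leaves implicit. The only remark is that your anticipated ``main obstacle'' in step~(iii) dissolves: in proving Lemma~\ref{lem:ReLU} the paper establishes the one-point convexity inequality $\langle -\grad\poploss(\w),\w^\star-\w\rangle\ge\tfrac{1}{10}\norm{\w-\w^\star}^2$ on all of $\neibor$ (Lemma~\ref{lem:onepointconvex_RELU}), and Cauchy--Schwarz on this gives $\norm{\grad\poploss(\w)}\ge\tfrac{1}{10}\norm{\w-\w^\star}$ directly, so the gradient condition of an $\epsilon_{\mathrm s}$-SOSP alone yields $\norm{\hat\w-\w^\star}\le 10\epsilon_{\mathrm s}$ without any separate negative-curvature argument.
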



\vspace{-2mm}
\subsection{Other applications} \label{sec:otherapplications}
\vspace{-2mm}
\paragraph{Private machine learning} Data privacy is a significant concern in machine learning as it creates a trade-off between privacy preservation and successful learning. Previous work on differentially private machine learning \citep[e.g.][]{chaudhuri11differential} have studied \emph{objective perturbation}, that is, adding noise to the original (convex) objective and optimizing this perturbed objective, as a way to simultaneously guarantee differential privacy and learning generalization: $f = F + p(\eps)$.
Our results may be used to extend such guarantees to nonconvex objectives, characterizing when it is possible to optimize $F$ even if the data owner does not want to reveal the true value of $F(\x)$ and instead only reveals $f(\x)$ after adding a perturbation $p(\eps)$, which depends on the privacy guarantee~$\eps$.

\paragraph{Two stage robust optimization}
Motivated by the problem of adversarial examples in machine learning, there has been a lot of recent interest \citep[e.g.][]{steinhardt2017certified,sinha2018certifiable} in a form of robust optimization that involves a minimax problem formulation: $
\min_\x \max_\u G(\x, \u).$ 
The function $F(\x) = \max_\u G(\x, \u)$ tends to be nonconvex in such problems, since $G$ can be very complicated. 
It can be intractable or costly to compute the solution to the inner maximization exactly, but it is often possible to get a good enough approximation $f$, such that $\sup_{\x}|F(\x) - f(\x)| = \nu$. It is then possible to solve $\min_\x f(\x)$ by ZPSGD, with guarantees for the original optimization problem. 

%
%

%
\subsubsection*{Acknowledgments}

We thank Aditya Guntuboyina, Yuanzhi Li, Yi-An Ma, Jacob Steinhardt, and Yang Yuan for valuable discussions.

{\small
\bibliographystyle{plainnat}
\bibliography{localmin}
}

\newpage

\appendix


\section{Efficient algorithm for optimizing the population risk}\label{app:upper}

As we described in Section~\ref{sec:overview}, in order to find a second-order stationary point of the population loss $F$, we apply perturbed stochastic gradient on a smoothed version of the empirical loss $f$. Recall that the smoothed function is defined as

 
$$\tilde{f}_\sigma(\x) = \E_\z f(\x+ \z).$$

In this section we will also consider a smoothed version of the population loss $F$, as follows:
$$\tilde{F}_\sigma(\x) = \E_\z F(\x+ \z).$$ 
This function is of course not accessible by the algorithm and we only use it in the proof of convergence rates. 


This section is organized as follows. In section \ref{app:lemsmoothing}, we present and prove the key lemma on the properties of the smoothed function $\tilde{f}_\sigma(\x)$. Next, in section \ref{app:proof_stoc}, we prove the properties of the stochastic gradient $\g$. Combining the lemmas in these two subsections, in section \ref{app:proof_upper} we prove a main theorem about the guarantees of ZPSGD (Theorem \ref{thm:upperbound_informal}). For clarity, we defer all technical lemmas and their proofs to section \ref{app:A_technical_lems}.

\subsection{Properties of the Gaussian smoothing}\label{app:lemsmoothing}

In this section, we show the properties of smoothed function $\tilde{f}_\sigma(\x)$. We first restate Lemma \ref{lem:smoothing}.

\begin{lemma}[Property of smoothing]
	\label{lem:smoothing2}
	Assume that the function pair ($F, f$) satisfies Assumption \ref{assump}, and let $\tilde{f}_\sigma(\x)$ be as given in definition \ref{def:smoothed_f}. Then, the following holds
	\begin{enumerate}
		\item $\tilde{f}_\sigma(\x)$ is $O(\ell + \frac{\nu}{\sigma^2})$-gradient Lipschitz and $O(\rho + \frac{\nu}{\sigma^3})$-Hessian Lipschitz.
		\item 
		$\norm{\grad \tilde{f}_\sigma(\x) - \grad F(\x)} \leq  O(\rho d \sigma^2+  \frac{\nu}{\sigma})$ and
		$\norm{\hess \tilde{f}_\sigma(\x) - \hess F(\x)} \leq O(\rho\sqrt{d}\sigma + \frac{\nu}{\sigma^2})$.
	\end{enumerate}
\end{lemma}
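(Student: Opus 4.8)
The plan is to write $f = F + h$ with $h \defeq f - F$, so that $\norm{h}_\infty \le \nu$, and correspondingly split $\tilde f_\sigma = \tilde F_\sigma + \tilde h_\sigma$, where $\tilde F_\sigma(\x) = \E_\z F(\x+\z)$ and $\tilde h_\sigma(\x) = \E_\z h(\x+\z)$. The two pieces are handled by entirely different arguments: $\tilde F_\sigma$ is controlled by the regularity of $F$ (it is a local average of a smooth function), whereas $\tilde h_\sigma$ is controlled \emph{only} by $\norm{h}_\infty \le \nu$ together with the smoothness of the Gaussian kernel. For $\tilde F_\sigma$: since $\grad F, \hess F$ are Lipschitz (hence locally bounded), differentiation commutes with the expectation, giving $\grad \tilde F_\sigma(\x) = \E_\z \grad F(\x+\z)$ and $\hess \tilde F_\sigma(\x) = \E_\z \hess F(\x+\z)$; by the triangle inequality these inherit $\norm{\hess \tilde F_\sigma} \le \ell$ and the $\rho$-Lipschitzness of the Hessian, so $\tilde F_\sigma$ is $\ell$-gradient Lipschitz and $\rho$-Hessian Lipschitz. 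For the differences with $F$ itself, Taylor-expand around $\x$: $\grad F(\x+\z) - \grad F(\x) = \hess F(\x)\z + \vect{r}_1(\z)$ with $\norm{\vect{r}_1(\z)} \le \tfrac{\rho}{2}\norm{\z}^2$, and $\hess F(\x+\z)-\hess F(\x) = \vect{r}_2(\z)$ with $\norm{\vect{r}_2(\z)}\le \rho\norm{\z}$. Taking $\E_\z$, using $\E_\z\z = 0$, $\E_\z\norm{\z}^2 = d\sigma^2$, and $\E_\z\norm{\z} \le \sqrt d\,\sigma$, we get $\norm{\grad\tilde F_\sigma(\x)-\grad F(\x)} = O(\rho d\sigma^2)$ and $\norm{\hess\tilde F_\sigma(\x)-\hess F(\x)} = O(\rho\sqrt d\,\sigma)$.

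For $\tilde h_\sigma$ the key tool is Gaussian integration by parts (Stein's identity), which transfers derivatives off the possibly-nonsmooth $h$ onto the smooth density $\phi_\sigma$ of $\mathcal N(0,\sigma^2\I)$, yielding $\grad\tilde h_\sigma(\x) = \sigma^{-2}\E_\z[h(\x+\z)\z]$, $\hess\tilde h_\sigma(\x) = \sigma^{-4}\E_\z[h(\x+\z)(\z\z\trans - \sigma^2\I)]$, and a third-derivative tensor $\sigma^{-6}\E_\z[h(\x+\z)\,T_\sigma(\z)]$ where $T_\sigma(\z)_{ijk} = z_iz_jz_k - \sigma^2(\delta_{ij}z_k + \delta_{ik}z_j + \delta_{jk}z_i)$ is (a scaling of) the third Hermite tensor. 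Since each of $\z$, $\z\z\trans - \sigma^2\I$, and $T_\sigma(\z)$ has zero mean under $\mathcal N(0,\sigma^2\I)$, we may replace $h(\x+\z)$ by $h(\x+\z) - h(\x)$, whose absolute value is at most $2\nu$. The crucial step is then to bound operator norms by testing against unit vectors rather than by the triangle inequality: e.g. $\norm{\hess\tilde h_\sigma(\x)} = \sup_{\norm\u = 1}\big|\sigma^{-4}\E_\z[(h(\x+\z)-h(\x))((\u\trans\z)^2 - \sigma^2)]\big| \le 2\nu\sigma^{-4}\sup_{\norm\u=1}\E_\z|(\u\trans\z)^2 - \sigma^2|$, and similarly for the gradient and the third tensor. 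Because $\u\trans\z$ (and $\v\trans\z$, $\w\trans\z$) are scalar $\mathcal N(0,\sigma^2)$ variables, $\E|\u\trans\z| = O(\sigma)$, $\E|(\u\trans\z)^2-\sigma^2| = O(\sigma^2)$, and $\E|(\u\trans\z)(\v\trans\z)(\w\trans\z)| = O(\sigma^3)$ by Hölder—all dimension-free. This gives $\norm{\grad\tilde h_\sigma} = O(\nu/\sigma)$, $\norm{\hess\tilde h_\sigma} = O(\nu/\sigma^2)$, and (third-tensor operator norm) $= O(\nu/\sigma^3)$. Adding to the $\tilde F_\sigma$ estimates proves part 1 (via $\norm{\hess\tilde f_\sigma}\le\ell+O(\nu/\sigma^2)$ and third tensor $\le\rho+O(\nu/\sigma^3)$) and part 2 (the difference bounds add).

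The main obstacle is precisely the dimension independence in part 1. The naive bounds $\norm{\grad\tilde h_\sigma}\le 2\nu\sigma^{-2}\E\norm\z$ and $\norm{\hess\tilde h_\sigma}\le 2\nu\sigma^{-4}\E\norm{\z\z\trans-\sigma^2\I}$ carry spurious factors of $\sqrt d$ and $d$, which would ruin the claimed rates; the real work is reducing each multilinear form to moments of a $1$-dimensional Gaussian, which is what makes the Lipschitz bounds dimension-free—whereas the genuine $\sqrt d$ and $d$ in part 2 come from the curvature of $F$ being averaged over a ball of radius $\approx\sqrt d\,\sigma$ and are unavoidable. A secondary, routine technical point is justifying the Stein formulas when $h$ is merely bounded and measurable (and when $F$ is only Hessian Lipschitz, so its third derivative exists a.e.), which follows from dominated convergence, if desired after mollifying $h$ and passing to the limit.
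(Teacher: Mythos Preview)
Your proposal is correct and follows essentially the same plan as the paper: both split $\tilde f_\sigma = \tilde F_\sigma + \tilde h_\sigma$, handle $\tilde F_\sigma$ by pushing derivatives inside the expectation and Taylor-expanding $F$, handle $\tilde h_\sigma$ via the Gaussian integration-by-parts identities, and---crucially---obtain the dimension-free constants in part~1 by testing the resulting expectations against unit vectors so that only one-dimensional Gaussian moments appear (this is exactly the content of the paper's Lemma~\ref{small_lemma:norm_exp} and its uses).

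The one place where your argument differs is the Hessian-Lipschitz bound for $\tilde h_\sigma$. The paper bounds $\norm{\hess\tilde h_\sigma(\x)-\hess\tilde h_\sigma(\y)}$ by recentering at the midpoint and Taylor-expanding the Gaussian kernel in $\Delta=(\y-\x)/2$ (Lemma~\ref{small_lemma:hess_diff_diff}), which leaves an extraneous $O(\norm{\x-\y}^2)$ remainder. You instead differentiate once more and bound the third-derivative tensor $\sigma^{-6}\E_\z[h(\x+\z)T_\sigma(\z)]$ directly. Your route is cleaner: it gives a genuine global $O(\nu/\sigma^3)$ Hessian-Lipschitz constant without the quadratic remainder, and it makes transparent that the three ``Stein'' bounds $O(\nu/\sigma),O(\nu/\sigma^2),O(\nu/\sigma^3)$ are successive Hermite-tensor moments. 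The paper's version has the mild advantage of not invoking third derivatives explicitly, but at the cost of the sloppier remainder.
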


Intuitively, the first property states that if the original function $F$ is gradient and Hessian Lipschitz, the smoothed version of the perturbed function $f$ is also gradient and Hessian Lipschitz (note that this is of course not true for the perturbed function $f$); the second property shows that the gradient and Hessian of $\tilde{f}_\sigma$ is point-wise close to the gradient and Hessian of the original function $F$. We will prove the four points (1 and 2, gradient and Hessian) of the lemma one by one, in Sections \ref{subsub:gradlipschitz} to \ref{subsub:hessiandiff}.

In the proof, we frequently require the following lemma (see e.g. \citet{zhang2017hitting}) that gives alternative expressions for the gradient and Hessian of a smoothed function.

\begin{lemma}[Gaussian smoothing identities \citep{zhang2017hitting}]\label{identity_1}
	$\tilde{f}_\sigma$ has gradient and Hessian:
	
	\[\grad \tilde{f}_\sigma(\x) = \E_\z[\frac{\z}{\sigma^2}f(\x+\z)], \quad \hess \tilde{f}_\sigma(\x) = \E_\z[\frac{\z\z\trans - \sigma^2\I}{\sigma^4}f(\x+\z)].\]
	\begin{proof}
	Using the density function of a multivariate Gaussian, we may compute the gradient of the smoothed function as follows:
		\begin{align*}
		\grad \tilde{f}_\sigma(\x) &= \fracpar{}{\x} \frac{1}{(2\pi\sigma^2)^{d/2}} \int f(\x+\z) e^{-\norm{\z}^2/2\sigma^2} d\z 
		=\frac{1}{(2\pi\sigma^2)^{d/2}} \int \fracpar{}{\x}  f(\x+\z) e^{-\norm{\z}^2/2\sigma^2} d\z \\
		&=\frac{1}{(2\pi\sigma^2)^{d/2}} \int \fracpar{}{\x}  f(\z) e^{-\norm{\z - \x}^2/2\sigma^2} d\z 
		=\frac{1}{(2\pi\sigma^2)^{d/2}} \int f(\z') \frac{\x - \z}{\sigma^2}e^{-\norm{\z - \x}^2/2\sigma^2} d\z \\
		&=\frac{1}{(2\pi\sigma^2)^{d/2}} \int f(\z +\x) \frac{\z}{\sigma^2}e^{-\norm{-\z}^2/2\sigma^2} d\z 
		=\E_\z[\frac{\z}{\sigma^2}f(\x+\z)],
		\end{align*}
	and similarly, we may compute the Hessian of the smoothed function:
		\begin{align*}
		\hess \tilde{f}_\sigma(\x) &=  \fracpar{}{\x} \frac{1}{(2\pi\sigma^2)^{d/2}} \int \frac{\z}{\sigma^2} f(\x+\z) e^{-\norm{\z}^2/2\sigma^2} d\z \\
		&=\frac{1}{(2\pi\sigma^2)^{d/2}} \int \fracpar{}{\x}  \frac{\z-\x}{\sigma^2} f(\z) e^{-\norm{\z-\x}^2/2\sigma^2} d\z \\
		&=\frac{1}{(2\pi\sigma^2)^{d/2}} \int f(\z) ( \frac{(\z-\x)(\z-\x)\trans}{\sigma^4}  e^{-\norm{\z-\x}^2/2\sigma^2} -  \frac{ \I}{\sigma^2}  e^{-\norm{\z-\x}^2/2\sigma^2}  ) d\z \\
		&=\frac{1}{(2\pi\sigma^2)^{d/2}} \int f(\z +\x) ( \frac{\z\z\trans - \sigma^2\I }{\sigma^4} ) e^{-\norm{\z}^2/2\sigma^2} d\z 
		=\E_\z[\frac{\z\z\trans - \sigma^2\I}{\sigma^4}f(\x+\z)].
		\end{align*}
	\end{proof}
\end{lemma}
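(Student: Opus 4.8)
\textbf{Proof proposal for Lemma \ref{identity_1}.}

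The plan is to realize $\tilde f_\sigma$ as a convolution of $f$ against a Gaussian density, shift the dependence on $\x$ entirely onto the (infinitely smooth) Gaussian kernel via a change of variables, and then differentiate under the integral sign. First I would write, using the density of $\mathcal{N}(0,\sigma^2\I)$ and noting that $|f|\le B+\nu$ is bounded (by Assumption \ref{assump}, since $f$ is $\nu$-close to the $B$-bounded $F$) so that the integral is finite,
\[
\tilde f_\sigma(\x) = \frac{1}{(2\pi\sigma^2)^{d/2}}\int_{\R^d} f(\x+\z)\,e^{-\norm{\z}^2/(2\sigma^2)}\,d\z = \frac{1}{(2\pi\sigma^2)^{d/2}}\int_{\R^d} f(\w)\,e^{-\norm{\w-\x}^2/(2\sigma^2)}\,d\w,
\]
where the second equality is the substitution $\w=\x+\z$, which has unit Jacobian. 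In this form the integrand depends on $\x$ only through the exponential factor, which is $C^\infty$ in $\x$.

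Next I would compute the derivatives of the kernel and differentiate under the integral. Since $\grad_\x e^{-\norm{\w-\x}^2/(2\sigma^2)} = \tfrac{\w-\x}{\sigma^2} e^{-\norm{\w-\x}^2/(2\sigma^2)}$, this gives $\grad \tilde f_\sigma(\x) = \tfrac{1}{(2\pi\sigma^2)^{d/2}}\int f(\w)\tfrac{\w-\x}{\sigma^2} e^{-\norm{\w-\x}^2/(2\sigma^2)}\,d\w$, and substituting $\z=\w-\x$ back recovers $\grad \tilde f_\sigma(\x)=\E_\z[\tfrac{\z}{\sigma^2}f(\x+\z)]$. For the Hessian I differentiate once more; by the product rule, $\grad_\x\!\left[\tfrac{\w-\x}{\sigma^2}e^{-\norm{\w-\x}^2/(2\sigma^2)}\right] = \left(\tfrac{(\w-\x)(\w-\x)\trans}{\sigma^4}-\tfrac{\I}{\sigma^2}\right)e^{-\norm{\w-\x}^2/(2\sigma^2)}$, so that $\hess \tilde f_\sigma(\x) = \tfrac{1}{(2\pi\sigma^2)^{d/2}}\int f(\w)\left(\tfrac{(\w-\x)(\w-\x)\trans}{\sigma^4}-\tfrac{\I}{\sigma^2}\right)e^{-\norm{\w-\x}^2/(2\sigma^2)}\,d\w$, and the same substitution yields $\hess \tilde f_\sigma(\x)=\E_\z[\tfrac{\z\z\trans-\sigma^2\I}{\sigma^4}f(\x+\z)]$, as claimed.

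The only step that needs genuine care — and the main obstacle — is justifying the interchange of differentiation and integration. The hard part will be to exhibit an $\x$-independent integrable dominating function; here Assumption \ref{assump} does the work, since $|f|\le B+\nu$ everywhere implies that on any compact ball around a fixed $\x_0$ the integrand and each of its first two partial $\x$-derivatives are bounded in absolute value by $(B+\nu)$ times a fixed polynomial in $\norm{\w}$ times a fixed Gaussian in $\w$, which is integrable and independent of $\x$. Invoking the standard Leibniz rule under this uniform domination legitimizes differentiating under the integral twice (indeed arbitrarily often), simultaneously showing $\tilde f_\sigma\in C^2$ (in fact $C^\infty$) and validating both displayed identities; everything else is routine manipulation of the Gaussian density.
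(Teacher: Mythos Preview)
Your proposal is correct and follows essentially the same approach as the paper: a change of variables to shift the $\x$-dependence onto the Gaussian kernel, followed by explicit differentiation of that kernel and a reverse substitution. The only difference is that you are more careful than the paper in justifying the interchange of differentiation and integration via a dominated-convergence argument using the boundedness of $f$; the paper's proof simply performs the interchange without comment.
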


\subsubsection{Gradient Lipschitz}\label{subsub:gradlipschitz}
We bound the gradient Lipschitz constant of $\tilde{f}_\sigma$ in the following lemma.

\begin{lemma}[Gradient Lipschitz of $\tilde{f}_\sigma$]\label{grad_lip}
	$\norm{\hess \tilde{f}_\sigma(\x) } \leq  O(\ell +\frac{\nu}{\sigma^2})$.

\end{lemma}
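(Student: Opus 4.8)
The plan is to split the smoothed function into a smoothed population part plus a smoothed error part, and bound the two Hessians separately. Using linearity of Gaussian smoothing, set $h \defeq f - F$, so that $\norm{h}_\infty \le \nu$ by Assumption \ref{assump}, and write $\tilde{f}_\sigma = \tilde{F}_\sigma + \tilde{h}_\sigma$, hence $\hess \tilde{f}_\sigma(\x) = \hess \tilde{F}_\sigma(\x) + \hess \tilde{h}_\sigma(\x)$. It then suffices to show $\norm{\hess \tilde{F}_\sigma(\x)} \le \ell$ and $\norm{\hess \tilde{h}_\sigma(\x)} \le O(\nu/\sigma^2)$.

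For the population term, $\rho$-Hessian Lipschitzness of $F$ forces $F$ to be $C^2$, and $\ell$-gradient Lipschitzness then gives $-\ell\I \preceq \hess F \preceq \ell\I$ everywhere. Differentiating under the integral sign (justified since $\hess F$ is bounded and continuous) yields $\hess \tilde{F}_\sigma(\x) = \E_\z \hess F(\x+\z)$, and averaging preserves the two-sided bound, so $\norm{\hess \tilde{F}_\sigma(\x)} \le \ell$. For the error term I would invoke the smoothing identity of Lemma \ref{identity_1}, $\hess \tilde{h}_\sigma(\x) = \E_\z\!\left[\frac{\z\z\trans - \sigma^2\I}{\sigma^4}\, h(\x+\z)\right]$. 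The naive estimate $\norm{\hess \tilde{h}_\sigma(\x)} \le \frac{\nu}{\sigma^4}\,\E_\z\norm{\z\z\trans - \sigma^2\I}$ is too lossy — it gives $O(\nu d/\sigma^2)$. Instead, fix a unit vector $\u$ and use
\[
\u\trans \hess \tilde{h}_\sigma(\x)\,\u = \frac{1}{\sigma^4}\,\E_\z\!\left[\big((\u\trans\z)^2 - \sigma^2\big)\, h(\x+\z)\right],
\]
so that $\lvert\u\trans \hess \tilde{h}_\sigma(\x)\,\u\rvert \le \frac{\nu}{\sigma^4}\,\E_\z\lvert(\u\trans\z)^2 - \sigma^2\rvert \le \frac{\nu}{\sigma^4}\sqrt{\Var_{\z}\big((\u\trans\z)^2\big)}$. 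Since $\u\trans\z \sim \mathcal{N}(0,\sigma^2)$ we have $\Var_{\z}\big((\u\trans\z)^2\big) = 2\sigma^4$, giving $\lvert\u\trans \hess \tilde{h}_\sigma(\x)\,\u\rvert \le \sqrt{2}\,\nu/\sigma^2$ for all unit $\u$; as $\hess \tilde{h}_\sigma(\x)$ is symmetric, this bounds its spectral norm. Combining the two pieces yields $\norm{\hess \tilde{f}_\sigma(\x)} \le \ell + \sqrt{2}\,\nu/\sigma^2 = O(\ell + \nu/\sigma^2)$.

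The main obstacle is precisely the sharpness of the error-term bound: one must reduce to the scalar random variable $(\u\trans\z)^2 - \sigma^2$ \emph{before} applying $\norm{h}_\infty \le \nu$, exploiting $\E_\z[\z\z\trans - \sigma^2\I] = 0$ so that this one-dimensional quantity is mean-zero with standard deviation only $O(\sigma^2)$. Pulling the norm inside the expectation too early (bounding $\norm{\z\z\trans - \sigma^2\I}$ directly) costs an extra factor of $d$ and produces the wrong dependence, which would be useless for the downstream choice $\sigma = \sqrt{\epsilon/(\rho d)}$ in Theorem \ref{thm:upperbound_informal}. Everything else is a routine Gaussian moment computation.
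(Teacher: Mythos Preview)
Your proof is correct and follows the same overall decomposition as the paper: write $\tilde{f}_\sigma = \tilde{F}_\sigma + \tilde{h}_\sigma$ with $h = f-F$, bound $\norm{\hess\tilde{F}_\sigma(\x)}\le\ell$ by passing the Hessian inside the expectation, and control $\norm{\hess\tilde{h}_\sigma(\x)}$ via the identity of Lemma~\ref{identity_1}. The only difference is in how the error Hessian is bounded. The paper splits $\z\z\trans - \sigma^2\I$ into the PSD piece $\z\z\trans$ and the scalar piece $\sigma^2\I$, then invokes the general fact (Lemma~\ref{small_lemma:psd}) that for a random PSD matrix $A$ and scalar $\lambda$ one has $\norm{\E[A\lambda]}\le\norm{\E[A|\lambda|]}$, yielding $\frac{\nu}{\sigma^4}\norm{\E_\z[\z\z\trans]}+\frac{\nu}{\sigma^2}=2\nu/\sigma^2$. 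You instead reduce directly to the scalar quadratic form $\u\trans(\z\z\trans-\sigma^2\I)\u=(\u\trans\z)^2-\sigma^2$ and bound its first absolute moment by its standard deviation $\sqrt{2}\sigma^2$, giving $\sqrt{2}\,\nu/\sigma^2$. Both tricks sidestep the naive $O(d)$ loss you warned about; yours is slightly sharper in the constant and more self-contained, while the paper's route isolates the PSD inequality as a reusable lemma.
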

	\begin{proof}
	For a twice-differentiable function, its gradient Lipshitz constant is also the upper bound on the spectral norm of its Hessian.
		\begin{align*}
		\norm{\hess \tilde{f}_\sigma(\x) } &=\norm{\hess \tilde{F}_\sigma(\x) + \hess \tilde{f}_\sigma(\x) - \hess \tilde{F}_\sigma(\x)} \\ 
		&\leq \norm{\hess \tilde{F}_\sigma(\x)} +\norm{ \hess \tilde{f}_\sigma(\x) - \hess \tilde{F}_\sigma(\x)} \\
		&= \norm{\hess \E_\z [F(\x+\z)]} + \norm{\E_z[\frac{\z\z\trans-\sigma^2\I}{\sigma^4}(f-F)(\x+\z)]}\\ 
		&\leq \E_\z \norm{\hess  [F(\x+\z)]} + \frac{1}{\sigma^4} \norm{\E_z[\z\z\trans(f-F)(\x+\z)]} + \frac{1}{\sigma^2} \norm{\E_z[(f-F)(\x+\z)\I]} \\ 
		&\leq \ell + \frac{1}{\sigma^4} \norm{\E_z[\z\z\trans|(f-F)(\x+\z)|]} + \frac{1}{\sigma^2} \norm{\E_z[|(f-F)(\x+\z)|\I]} \\ 
		&= \ell + \frac{\nu}{\sigma^4} \norm{\E_z[\z\z\trans} + \frac{\nu}{\sigma^2} = \ell +\frac{2\nu}{\sigma^2}
		\end{align*}
		The last inequality follows from Lemma  \ref{small_lemma:psd}.
	\end{proof}

\subsubsection{Hessian Lipschitz}

We bound the Hessian Lipschitz constant of $\tilde{f}_\sigma$ in the following lemma.
\begin{lemma}[Hessian Lipschitz of $\tilde{f}_\sigma$]\label{hess_lip}
	$\norm{\hess \tilde{f}_\sigma(\x) - \hess \tilde{f}_\sigma(\y) }\leq O(\rho+\frac{\nu }{\sigma^3} ) \norm{\x-\y}$.

\end{lemma}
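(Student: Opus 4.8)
The plan is to split $\tilde{f}_\sigma = \tilde{F}_\sigma + \tilde{g}_\sigma$, where $g \defeq f - F$ has $\norm{g}_\infty \le \nu$ and $\tilde{g}_\sigma(\x) = \E_\z[g(\x+\z)]$, and to bound the Hessian-Lipschitz constant of each piece. The $\tilde{F}_\sigma$ part is immediate: since $F$ is $\ell$-gradient and $\rho$-Hessian Lipschitz, $\hess F$ exists everywhere with $\norm{\hess F}\le \ell$, so dominated convergence permits differentiating twice under the integral, giving $\hess\tilde{F}_\sigma(\x) = \E_\z[\hess F(\x+\z)]$, whence $\norm{\hess\tilde{F}_\sigma(\x) - \hess\tilde{F}_\sigma(\y)} \le \E_\z\norm{\hess F(\x+\z) - \hess F(\y+\z)} \le \rho\norm{\x-\y}$. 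All the work is in showing $\tilde{g}_\sigma$ is $O(\nu/\sigma^3)$-Hessian Lipschitz although $g$ is merely bounded.

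For the $\tilde{g}_\sigma$ part I would use the Hessian identity of Lemma~\ref{identity_1} and the substitution $\u = \x + \z$ to write $\hess\tilde{g}_\sigma(\x) = \int g(\u)\, N(\u - \x)\, d\u$ with the matrix-valued kernel $N(\v) \defeq \sigma^{-4}(\v\v\trans - \sigma^2\I)\,\phi_\sigma(\v)$, $\phi_\sigma$ being the $\mathcal{N}(0,\sigma^2\I)$ density. Then $\hess\tilde{g}_\sigma(\x) - \hess\tilde{g}_\sigma(\y) = \int g(\u)[N(\u-\x) - N(\u-\y)]\,d\u$, and, $N$ being smooth, the fundamental theorem of calculus along the segment from $\y$ to $\x$ rewrites the bracket as $-\int_0^1 (\grad N)(\u - \y - t(\x-\y))[\x-\y]\,dt$. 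Plugging this in, testing against unit vectors $\vect{a},\vect{b}$, using $\norm{g}_\infty \le \nu$, and invoking translation invariance of Lebesgue measure reduces the claim (after taking a supremum over unit $\vect{a},\vect{b}$) to: for all unit vectors $\vect{a}, \vect{b}, \e$,
\[
\int \bigl| \partial_{\e}\!\left(\vect{a}\trans N(\v)\,\vect{b}\right) \bigr| \, d\v \;=\; O(1/\sigma^3),
\]
which then yields $\norm{\hess\tilde{g}_\sigma(\x) - \hess\tilde{g}_\sigma(\y)} \le O(\nu/\sigma^3)\,\norm{\x-\y}$.

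The crux — and the one place where dimension could intrude — is this scalar integral. The naive route, bounding $\norm{\grad N(\v)}$ in spectral norm pointwise and then integrating, costs a factor $\E_\z\norm{\z}^2 \asymp d\sigma^2$ and hence a spurious $d$; the remedy, exactly as Lemma~\ref{grad_lip} uses Lemma~\ref{small_lemma:psd} in place of a triangle inequality inside an expectation, is to test against the fixed unit vectors \emph{before} integrating. A short computation then shows $\partial_{\e}(\vect{a}\trans N(\v)\vect{b})$ is $\sigma^{-4}\phi_\sigma(\v)$ times a polynomial of degree $\le 3$ in the scalar projections $\v\!\cdot\!\vect{a}$, $\v\!\cdot\!\vect{b}$, $\v\!\cdot\!\e$ (the degree-$3$ monomial additionally carrying a $\sigma^{-2}$); integrating against $\phi_\sigma$ and applying Hölder bounds each monomial by a product of absolute moments of $\mathcal{N}(0,\sigma^2)$ marginals — dimension-free, e.g. $\E|(\z\!\cdot\!\vect{a})(\z\!\cdot\!\vect{b})(\z\!\cdot\!\e)| \le \E|\mathcal{N}(0,\sigma^2)|^3 = O(\sigma^3)$ — so every term is $O(1/\sigma^3)$. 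Summing the finitely many terms and combining with the $\tilde{F}_\sigma$ bound gives the stated $O(\rho + \nu/\sigma^3)$ Hessian-Lipschitz constant. The only tedium I anticipate is expanding $\grad N$; the entire content is the "project first, then take one-dimensional moments" step that prevents any dependence on $d$.
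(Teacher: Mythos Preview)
Your proposal is correct and follows the same overall strategy as the paper: split $\tilde{f}_\sigma = \tilde{F}_\sigma + \tilde{g}_\sigma$, handle $\tilde{F}_\sigma$ via the $\rho$-Hessian-Lipschitzness of $F$ (the paper's Lemma~\ref{small_lemma:hess_Fs}), and for $\tilde{g}_\sigma$ use the smoothing identity of Lemma~\ref{identity_1} together with the ``project onto fixed unit vectors before integrating'' trick to obtain dimension-free moment bounds. The one substantive difference is in how the kernel difference is controlled: the paper (Lemma~\ref{small_lemma:hess_diff_diff}) symmetrizes around the midpoint $\frac{\x+\y}{2}$ and Taylor-expands the kernel to first order in $\Delta = \frac{\y-\x}{2}$, which leaves an $O(\norm{\x-\y}^2)$ remainder in its final bound, whereas your exact fundamental-theorem-of-calculus argument along the segment avoids this remainder and delivers a clean Lipschitz inequality. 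The four terms you identify after expanding $\partial_{\e}(\vect{a}\trans N(\v)\vect{b})$ are precisely the four quantities bounded in the paper's Lemma~\ref{small_lemma:norm_exp}, so the moment computations coincide; your route is just a slightly tighter packaging of the same idea.
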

	\begin{proof}
	By triangle inequality:
		\begin{align*}
		&\norm{\hess \tilde{f}_\sigma(\x) - \hess \tilde{f}_\sigma(\y) } \\
		= &\norm{\hess \tilde{f}_\sigma(\x) - \hess \tilde{F}_\sigma(\x)  - \hess \tilde{f}_\sigma(\y) + \hess \tilde{F}_\sigma(\y) +\hess \tilde{F}_\sigma(\x) - \hess \tilde{F}_\sigma(\y)  } 	\\
		\leq &\norm{\hess \tilde{f}_\sigma(\x) - \hess \tilde{F}_\sigma(\x)  - (\hess \tilde{f}_\sigma(\y) - \hess \tilde{F}_\sigma(\y))} +  \norm{\hess \tilde{F}_\sigma(\x) - \hess \tilde{F}_\sigma(\y)  }  \\
		\leq &O(\frac{\nu }{\sigma^3})\norm{\x-\y} +   O(\rho)\norm{ \x-\y} + O(\norm{\x-\y}^2) \text{ }
		\end{align*}
		The last inequality follows from Lemma \ref{small_lemma:hess_Fs} and \ref{small_lemma:hess_diff_diff}.
	\end{proof}

\subsubsection{Gradient Difference}
We bound the difference between the gradients of smoothed function $\tilde{f}_\sigma(\x) $ and those of the true objective $F$.

\begin{lemma}[Gradient Difference]\label{grad_diff}
	$\norm{\grad \tilde{f}_\sigma(\x) - \grad F(\x)} \leq O(\frac{\nu}{\sigma} + \rho d \sigma^2)$.
	
\end{lemma}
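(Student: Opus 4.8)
The plan is to insert the smoothed population loss $\tilde F_\sigma(\x) = \E_\z F(\x+\z)$ as an intermediary and split via the triangle inequality:
\[
\norm{\grad \tilde f_\sigma(\x) - \grad F(\x)} \le \norm{\grad \tilde f_\sigma(\x) - \grad \tilde F_\sigma(\x)} + \norm{\grad \tilde F_\sigma(\x) - \grad F(\x)},
\]
and to show the first term is $O(\nu/\sigma)$ (controlled by the pointwise closeness) and the second is $O(\rho d\sigma^2)$ (controlled by the Hessian-Lipschitz smoothness of $F$).

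For the first term, I would use the gradient identity of Lemma~\ref{identity_1} and linearity of expectation to write $\grad \tilde f_\sigma(\x) - \grad \tilde F_\sigma(\x) = \E_\z[\frac{\z}{\sigma^2}(f-F)(\x+\z)]$. The naive bound $\frac{\nu}{\sigma^2}\E\norm{\z}$ would cost a spurious factor of $\sqrt d$, so instead I would bound the norm by testing against an arbitrary unit vector $\u$: $\u\trans(\grad \tilde f_\sigma(\x) - \grad \tilde F_\sigma(\x)) = \E_\z[\frac{\u\trans\z}{\sigma^2}(f-F)(\x+\z)]$, and since $\u\trans\z \sim \mathcal N(0,\sigma^2)$ is a one-dimensional Gaussian, this is at most $\frac{\nu}{\sigma^2}\E|\u\trans\z| = \frac{\nu}{\sigma^2}\cdot\sigma\sqrt{2/\pi} = O(\nu/\sigma)$, uniformly in $\u$. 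Taking the supremum over $\u$ gives the claimed bound on the first term.

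For the second term, note $\grad \tilde F_\sigma(\x) = \E_\z[\grad F(\x+\z)]$ (differentiation under the expectation is justified since $F$ is gradient-Lipschitz), so the term equals $\norm{\E_\z[\grad F(\x+\z) - \grad F(\x)]}$. I would Taylor-expand with the integral-form remainder, $\grad F(\x+\z) - \grad F(\x) - \hess F(\x)\z = \int_0^1(\hess F(\x+t\z) - \hess F(\x))\z\,dt$, whose norm is at most $\frac{\rho}{2}\norm{\z}^2$ by $\rho$-Hessian Lipschitzness. Since $\E_\z[\z] = 0$, the linear term $\hess F(\x)\E[\z]$ vanishes, leaving $\norm{\E_\z[\grad F(\x+\z)] - \grad F(\x)} \le \frac{\rho}{2}\E_\z\norm{\z}^2 = \frac{\rho d\sigma^2}{2} = O(\rho d\sigma^2)$. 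Summing the two bounds yields the lemma.

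The only subtle point—and the step I would flag as the main obstacle—is obtaining $\nu/\sigma$ rather than $\nu\sqrt d/\sigma$ in the first term; everything else is a routine smoothing/Taylor computation. The projection-onto-a-unit-vector argument, which replaces $\E\norm{\z}$ by the one-dimensional moment $\E|\u\trans\z|$, is what removes the dimension factor and is essential for the final $\nu \le O(\sqrt{\epsilon^3/\rho}/d)$ tolerance after the choice $\sigma = \sqrt{\epsilon/(\rho d)}$.
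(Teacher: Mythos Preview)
Your proposal is correct and matches the paper's proof essentially line for line: the paper also splits through $\tilde F_\sigma$, bounds $\norm{\grad \tilde f_\sigma - \grad \tilde F_\sigma}$ by the same projection-onto-a-unit-vector argument (their Lemma~\ref{small_lemma:zfz}) to get $\sqrt{2/\pi}\,\nu/\sigma$, and bounds $\norm{\grad \tilde F_\sigma - \grad F}$ by the same integral-remainder Taylor expansion with $\E[\z]=0$ killing the linear term (their Lemma~\ref{small_lemma:grad_diff_FsF_rho}). You have also correctly identified the one nontrivial step, namely that testing against a unit direction replaces $\E\norm{\z}$ by $\E|\u\trans\z|$ and avoids the spurious $\sqrt d$.
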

\begin{proof}
	By triangle inequality:
	\begin{equation*}
	\norm{\grad \tilde{f}_\sigma(\x) - \grad F(\x)} \leq \norm{\grad \tilde{f}_\sigma(\x) - \grad \tilde{F}_\sigma(\x)} + \norm{\grad \tilde{F}_\sigma(\x) - \grad F(\x)}.
	\end{equation*}
	Then the result follows from Lemma \ref{small_lemma:grad_diff_fsFs} and \ref{small_lemma:grad_diff_FsF_rho} 
\end{proof}

\subsubsection{Hessian Difference}
\label{subsub:hessiandiff}
We bound the difference between the Hessian of smoothed function $\tilde{f}_\sigma(\x) $ and that of the true objective $F$.

\begin{lemma}[Hessian Difference]\label{hess_diff}
	$\norm{\hess \tilde{f}_\sigma(\x) - \hess F(\x)} \leq O(\rho\sqrt{d}\sigma + \frac{\nu}{\sigma^2})$.

\end{lemma}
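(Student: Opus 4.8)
The plan is to split the difference $\hess \tilde f_\sigma(\x) - \hess F(\x)$ into two pieces via the triangle inequality,
\[
\norm{\hess \tilde f_\sigma(\x) - \hess F(\x)} \le \norm{\hess \tilde f_\sigma(\x) - \hess \tilde F_\sigma(\x)} + \norm{\hess \tilde F_\sigma(\x) - \hess F(\x)},
\]
exactly mirroring the structure of the gradient-difference proof (Lemma~\ref{grad_diff}). The second term only involves the smooth function $F$: since $F$ is $\rho$-Hessian Lipschitz, $\hess \tilde F_\sigma(\x) = \E_\z[\hess F(\x+\z)]$, so $\norm{\hess \tilde F_\sigma(\x) - \hess F(\x)} \le \E_\z \norm{\hess F(\x+\z) - \hess F(\x)} \le \rho\, \E_\z\norm{\z} \le \rho\sqrt{d}\,\sigma$, using $\E\norm{\z} \le \sqrt{\E\norm{\z}^2} = \sqrt{d}\,\sigma$. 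This produces the $\rho\sqrt d\sigma$ term and I would expect this to already be packaged as one of the ``small lemmas'' (in the spirit of Lemma~\ref{small_lemma:hess_Fs}).

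For the first term, I would use the Gaussian smoothing identity (Lemma~\ref{identity_1}) applied to the difference $g := f - F$, which satisfies $\norm{g}_\infty \le \nu$:
\[
\hess \tilde f_\sigma(\x) - \hess \tilde F_\sigma(\x) = \E_\z\!\left[\frac{\z\z\trans - \sigma^2\I}{\sigma^4}\, g(\x+\z)\right].
\]
The naive bound $\frac{1}{\sigma^4}\E\norm{\z\z\trans - \sigma^2\I}\cdot\nu$ is too weak because $\E\norm{\z\z\trans - \sigma^2\I} \sim \sigma^2 d$, giving only $\nu d/\sigma^2$. To get the claimed $\nu/\sigma^2$ (dimension-free), the key trick is to exploit that $g(\x+\z)$ can be replaced by $g(\x+\z) - c$ for any constant $c$, since $\E_\z[\z\z\trans - \sigma^2\I] = 0$; in particular one can center the bound and then control $\norm{\E_\z[(\z\z\trans - \sigma^2\I) g(\x+\z)]}$ by testing against an arbitrary unit vector $\u$ and writing it as $\E_\z[((\u\trans\z)^2 - \sigma^2) g(\x+\z)]$, then using Cauchy–Schwarz: $|\E[((\u\trans\z)^2-\sigma^2) g]| \le \nu\, \E|(\u\trans\z)^2 - \sigma^2| \le \nu\sqrt{\E[((\u\trans\z)^2-\sigma^2)^2]} = \nu\sqrt{2}\,\sigma^2$, since $(\u\trans\z)^2/\sigma^2$ is chi-squared with one degree of freedom and variance $2$. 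Dividing by $\sigma^4$ gives $O(\nu/\sigma^2)$, uniformly over $\u$, hence the spectral-norm bound. This step is presumably the content of a technical lemma in Section~\ref{app:A_technical_lems}.

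The main obstacle — really the only nontrivial point — is this dimension-free control of the Hessian-difference term: one has to recognize that bounding the spectral norm of a matrix-valued expectation should be done by reducing to a scalar quadratic form $(\u\trans\z)^2 - \sigma^2$ before applying the crude $\norm{g}_\infty \le \nu$ bound, rather than bounding $\norm{\z\z\trans - \sigma^2\I}$ first. The reduction works because for each fixed direction $\u$ the relevant random variable has $\Theta(\sigma^2)$ fluctuations regardless of $d$, even though the full matrix $\z\z\trans - \sigma^2\I$ has operator norm $\Theta(\sigma^2 d)$. Once both terms are in hand, adding them yields $\norm{\hess \tilde f_\sigma(\x) - \hess F(\x)} \le O(\rho\sqrt d\,\sigma + \nu/\sigma^2)$, completing the proof.
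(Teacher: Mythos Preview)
Your proposal is correct and uses the same triangle-inequality decomposition as the paper: the $\norm{\hess\tilde F_\sigma(\x) - \hess F(\x)}$ term is bounded by $\rho\sqrt{d}\,\sigma$ via Hessian Lipschitzness and $\E_\z\norm{\z}\le\sqrt{d}\,\sigma$, exactly as you outline.

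For the $\norm{\hess\tilde f_\sigma(\x) - \hess\tilde F_\sigma(\x)}$ term, the paper takes a slightly different technical route than your quadratic-form argument. Rather than testing against a unit vector $\u$ and computing the variance of $(\u\trans\z)^2/\sigma^2$, the paper (this computation actually lives in the proof of Lemma~\ref{grad_lip} and is simply quoted here) splits $\z\z\trans - \sigma^2\I$ additively, applies the PSD inequality $\norm{\E[A\lambda]} \le \norm{\E[A|\lambda|]}$ of Lemma~\ref{small_lemma:psd} to the $\z\z\trans$ part, and then observes that $\norm{\E_\z[\z\z\trans]} = \norm{\sigma^2\I} = \sigma^2$, yielding $2\nu/\sigma^2$. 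Both arguments encode the same insight you identify --- take the spectral norm \emph{after} the expectation, not before --- and both produce the dimension-free constant; yours via a one-dimensional chi-squared variance, the paper's via the spectral norm of the covariance matrix.
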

	\begin{proof} By triangle inequality:
		\begin{align*}
		\norm{\hess \tilde{f}_\sigma(\x) - \hess F(\x)} &\leq \norm{\hess \tilde{F}_\sigma(\x) - \hess F(\x)}  +  \norm{\hess \tilde{f}_\sigma(\x) - \hess \tilde{F}_\sigma(\x)}  \\
		&\leq \E_\z\norm{\hess F(\x +\z) - \hess F(\x)} + \frac{2\nu}{\sigma^2} \\
		&\leq \E_\z\norm{\rho \z} +\frac{2\nu}{\sigma^2}
		\leq \rho\sqrt{d}\sigma  +\frac{2\nu}{\sigma^2}
		\end{align*}
		
		The first inequality follows exactly from the proof of lemma \ref{grad_lip}. The second equality follows from the definition of Hessian Lipschitz. The third inequality follows from $\E_\z\norm{\rho \z}\leq  \rho \sqrt{\E[\norm{\z}^2]} $.
	\end{proof}

\subsection{Properties of the stochastic gradient} 
\label{app:proof_stoc}

We prove the properties of the stochastic gradient, $\g(\x;\z)$, as stated in Lemma \ref{lem:stocgrad}, restated as follows. Intuitively this lemma shows that the stochastic gradient is well-behaved and can be used in the standard algorithms.

\begin{lemma}[Property of stochastic gradient]\label{lem:stocgrad}
	Let $\g(\x;\z) = \z[f(\x+\z)- f(\x)]/\sigma^2$, where $\z \sim  \mathcal{N}(0,\sigma^2 \I)$. Then $\E_\z \g(\x; \z) = \grad \tilde{f}_\sigma(\x)$,
	and $\g(\x; \z)$ is sub-Gaussian with parameter $\frac{B}{\sigma}$.
\end{lemma}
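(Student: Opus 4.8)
The plan is to dispatch the two claims separately; both follow from $\z$ being a centered Gaussian together with the boundedness in Assumption~\ref{assump}, and crucially neither uses any smoothness (or even continuity) of $f$.

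\emph{Unbiasedness.} By linearity of expectation,
\[
\E_\z\,\g(\x;\z) = \frac{1}{\sigma^2}\E_\z\big[\z\,f(\x+\z)\big] - \frac{1}{\sigma^2}\big(\E_\z\z\big)\,f(\x).
\]
Since $\z\sim\mathcal N(0,\sigma^2\I)$ has mean $\boldsymbol 0$, the second term vanishes, and the first equals $\grad\tilde f_\sigma(\x)$ by the Gaussian smoothing identity (Lemma~\ref{identity_1}; this is exactly the expression already recorded in Lemma~\ref{lem:gensgd}). Hence $\E_\z\g(\x;\z)=\grad\tilde f_\sigma(\x)$. In particular the subtraction of the base-point value $f(\x)$ is a pure variance-reduction device: it is a mean-zero summand and leaves the expectation untouched.

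\emph{Sub-Gaussian tail.} I will interpret ``sub-Gaussian with parameter $s$'' for the vector $\g$ in the per-direction sense: for every unit vector $\u$, the scalar $\inner{\u}{\g(\x;\z)}$, centered, is sub-Gaussian with parameter $O(s)$ (so that, as needed in Theorem~\ref{thm:psgd_guar}, no $\sqrt d$ enters). Fix a unit $\u$ and write $\inner{\u}{\g(\x;\z)} = h(\z)\,\inner{\u}{\z}$ with $h(\z)\defeq (f(\x+\z)-f(\x))/\sigma^2$. First, $|h(\z)|\le 2\norm{f}_\infty/\sigma^2 \le 2(B+\nu)/\sigma^2 =: M = O(B/\sigma^2)$, because $F$ is $B$-bounded and $\norm{f-F}_\infty\le\nu$. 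Second, $\inner{\u}{\z}\sim\mathcal N(0,\sigma^2)$. Therefore $|\inner{\u}{\g(\x;\z)}| \le M\,|\inner{\u}{\z}|$ pointwise, so for every $t\ge 0$,
\[
\P\big(|\inner{\u}{\g(\x;\z)}| \ge t\big) \;\le\; \P\big(|\inner{\u}{\z}| \ge t/M\big) \;\le\; 2\exp\!\Big(-\tfrac{t^2}{2M^2\sigma^2}\Big),
\]
i.e.\ a sub-Gaussian tail with parameter $O(B/\sigma)$; passing to the centered variable $\inner{\u}{\g}-\inner{\u}{\E\g}$ only changes the constant, since $\E\g=\grad\tilde f_\sigma(\x)$ is finite. (If an MGF statement is preferred, bound $\exp(\lambda h(\z)\inner{\u}{\z}) \le \exp(|\lambda|M|\inner{\u}{\z}|) \le \exp(\lambda M\inner{\u}{\z}) + \exp(-\lambda M\inner{\u}{\z})$, take expectations, and use $\E\exp(t\inner{\u}{\z})=\exp(t^2\sigma^2/2)$ to get $\E\exp(\lambda\inner{\u}{\g}) \le 2\exp(\lambda^2 M^2\sigma^2/2)$; the factor $2$ and the centering are then absorbed into a universal constant in the standard way.)

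\emph{Main obstacle.} The one non-mechanical point is the sub-Gaussian bound: the bounded multiplier $h(\z)$ and the Gaussian linear form $\inner{\u}{\z}$ are \emph{correlated}, so the MGF does not factor and independence is unavailable. The remedy above---either the deterministic domination $|\inner{\u}{\g}|\le M|\inner{\u}{\z}|$ fed into a Gaussian tail bound, or the elementary inequality $e^{x}\le e^{x}+e^{-x}$ applied before integrating---sidesteps this and is the heart of the estimate; everything else is bookkeeping of constants.
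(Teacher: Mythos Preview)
Your proof is correct and follows essentially the same route as the paper: unbiasedness via Lemma~\ref{identity_1} plus $\E\z=0$, and the sub-Gaussian tail via the pointwise domination $|\inner{\u}{\g(\x;\z)}|\le M|\inner{\u}{\z}|$ with $\inner{\u}{\z}$ Gaussian. If anything you are more careful than the paper about constants (tracking the $2(B+\nu)$ rather than writing $B$, and explicitly addressing centering), but the argument is the same.
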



%
%
	\begin{proof}
		The first part follows from Lemma \ref{identity_1}.
		Given any $\u \in \R^d$, by assumption \ref{assump} ($f$ is $B$-bounded),
		$$|\inner{\u}{\g(\x;\z)}| = |(f(\x+\z)- f(\x))||\inner{\u}{\frac{\z}{\sigma^2 }}| \leq \frac{B\norm{u}}{\sigma} |\inner{\frac{\u}{\norm{u}}}{\frac{\z}{\sigma }}|.
		$$
		Note that $\inner{\frac{\u}{\norm{\u}}}{\frac{\z}{\sigma }}\sim  \mathcal{N}(0,1)$. Thus, for $X \sim \mathcal{N}(0, \frac{B\norm{u}}{\sigma})$,  $$\P(|\inner{\u}{\g(\x;\z)}| > s) \leq \P(|X| > s). $$ This shows that $\g$ is sub-Gaussian with parameter  $\frac{B}{\sigma}$.
		
	\end{proof}

\subsection{Proof of Theorem \ref{thm:upperbound_informal}: SOSP of $\tilde{f}_\sigma$ are also SOSP of $F$}\label{app:proof_upper}



Using the properties proven in Lemma~\ref{lem:stocgrad}, we can apply Theorem~\ref{thm:psgd_guar} to find an $\tilde{\epsilon}$-SOSP for $\tilde{f}_\sigma$ for any $\tilde{\epsilon}$. The running time of the algorithm is polynomial as long as $\tilde{\epsilon}$ depends polynomially on the relevant parameters. Now we will show that every $\tilde{\epsilon}$-SOSP of $\tilde{f}_\sigma$ is an $O(\epsilon)$-SOSP of $F$ when $\epsilon'$ is small enough.

More precisely, we use Lemma \ref{lem:smoothing} to show that any $\frac{\epsilon}{\sqrt{d}}$-SOSP of $\tilde{f}_\sigma(\x)$ is also an $O(\epsilon)$-SOSP of $F$.

\begin{lemma}[SOSP of $\tilde{f}_\sigma(\x)$ and SOSP of $F(\x)$]\label{lem:upper_sosp}  Suppose $\x^*$ satisfies $$\norm{\grad \tilde{f}_\sigma(\x^*) } \leq \tilde{\epsilon}\text{ and } \mineval(\hess \tilde{f}_\sigma(\x^*)) \geq - \sqrt{\tilde{\rho}  \tilde{\epsilon}},$$ 
	where $\tilde{\rho} = \rho + \frac{\nu}{\sigma^3}$ and $\tilde{\epsilon} = \epsilon/\sqrt{d}$.
	Then there exists constants $c_1, c_2$ such that $$\sigma \leq c_1\sqrt{\frac{\epsilon}{\rho d} },~ \nu \leq  c_2\sqrt{\frac{\epsilon^{3}}{\rho d^{2}}}.$$
	implies $\x^*$ is an $O(\epsilon)$-SOSP of $F$.
	
\end{lemma}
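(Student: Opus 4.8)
The plan is to derive the two defining inequalities of an $O(\epsilon)$-SOSP of $F$ at $\x^*$ by pairing the hypothesized near-stationarity of $\x^*$ for $\tilde{f}_\sigma$ with the gradient/Hessian closeness of $\tilde{f}_\sigma$ and $F$ from part (2) of Lemma~\ref{lem:smoothing}, and then checking that the prescribed $\sigma = \Theta(\sqrt{\epsilon/(\rho d)})$ together with $\nu \le c_2\sqrt{\epsilon^3/(\rho d^2)}$ makes every error term collapse to the right order. For the first-order condition I would write, via the triangle inequality and Lemma~\ref{lem:smoothing}(2),
\[
\norm{\grad F(\x^*)} \le \norm{\grad \tilde{f}_\sigma(\x^*)} + \norm{\grad \tilde{f}_\sigma(\x^*) - \grad F(\x^*)} \le \tilde{\epsilon} + O\!\left(\rho d\sigma^2 + \tfrac{\nu}{\sigma}\right),
\]
and then substitute $\tilde{\epsilon} = \epsilon/\sqrt d$, $\rho d\sigma^2 = \Theta(\epsilon)$, and $\nu/\sigma \le c_2\sqrt{\epsilon^3/(\rho d^2)}\cdot\Theta(\sqrt{\rho d/\epsilon}) = \Theta(c_2)\,\epsilon/\sqrt d$, so that the bound is $O(\epsilon)$ once $c_2$ is a small enough absolute constant (using $d\ge 1$).

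For the second-order condition I would first control the effective Hessian-Lipschitz constant $\tilde{\rho} = \rho + \nu/\sigma^3$ that appears in the hypothesis: since $\sigma^3 = \Theta((\epsilon/(\rho d))^{3/2})$, the bound on $\nu$ gives $\nu/\sigma^3 \le \Theta(c_2)\,\rho\sqrt d$, hence $\tilde{\rho} = O(\rho\sqrt d)$ and therefore $\sqrt{\tilde{\rho}\,\tilde{\epsilon}} = \sqrt{O(\rho\sqrt d)\cdot \epsilon/\sqrt d} = O(\sqrt{\rho\epsilon})$. Then, using the eigenvalue perturbation inequality $\lambda_{\min}(\hess F(\x^*)) \ge \lambda_{\min}(\hess \tilde{f}_\sigma(\x^*)) - \norm{\hess \tilde{f}_\sigma(\x^*) - \hess F(\x^*)}$ together with Lemma~\ref{lem:smoothing}(2),
\[
\lambda_{\min}(\hess F(\x^*)) \ge -\sqrt{\tilde{\rho}\,\tilde{\epsilon}} - O\!\left(\rho\sqrt d\,\sigma + \tfrac{\nu}{\sigma^2}\right).
\]
Plugging in $\rho\sqrt d\,\sigma = \Theta(\sqrt{\rho\epsilon})$ and $\nu/\sigma^2 \le c_2\sqrt{\epsilon^3/(\rho d^2)}\cdot\Theta(\rho d/\epsilon) = \Theta(c_2)\sqrt{\rho\epsilon}$ yields $\lambda_{\min}(\hess F(\x^*)) \ge -O(\sqrt{\rho\epsilon})$. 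Since $F$ is $\rho$-Hessian Lipschitz, writing $-O(\sqrt{\rho\epsilon}) = -\sqrt{\rho\cdot O(\epsilon)}$ shows $\x^*$ meets both requirements of Definition~\ref{def:SOSP} at scale $O(\epsilon)$; choosing the final constant to be the max of the gradient constant and the square of the eigenvalue constant reconciles the two, and $c_1$ (fixing $\sigma$ in the admissible window) and $c_2$ are read off from the implicit constants of Lemma~\ref{lem:smoothing}.

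The main obstacle, and essentially the only content, is the two-sided balancing of the smoothing radius $\sigma$. Part (2) of Lemma~\ref{lem:smoothing} forces a tension: $\sigma$ must be large enough that the local-averaging error terms $\nu/\sigma$, $\nu/\sigma^2$, $\nu/\sigma^3$ are negligible, yet small enough that the bias terms $\rho d\sigma^2$ and $\rho\sqrt d\,\sigma$ stay $O(\epsilon)$ and $O(\sqrt{\rho\epsilon})$ respectively. The choice $\sigma = \Theta(\sqrt{\epsilon/(\rho d)})$ is exactly where the two pairs of terms meet, and substituting this $\sigma$ back into the $\nu$-terms is precisely what pins down the maximal tolerable perturbation $\nu = \Theta(\sqrt{\epsilon^3/(\rho d^2)}) = \Theta(\sqrt{\epsilon^3/\rho}\cdot 1/d)$, matching the hypothesis of Theorem~\ref{thm:upperbound_informal}. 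A minor point worth stating explicitly is that the target accuracy for $\tilde{f}_\sigma$ is only $\tilde{\epsilon} = \epsilon/\sqrt d$ rather than $\epsilon$: that extra $1/\sqrt d$ slack is exactly what absorbs the dimension-dependent factors $\rho d\sigma^2$ and $\rho\sqrt d\,\sigma$ once $\sigma$ is plugged in, so asking for an $\tfrac{\epsilon}{\sqrt d}$-SOSP of $\tilde{f}_\sigma$ (which Theorem~\ref{thm:psgd_guar} still delivers in polynomial time) is not wasteful.
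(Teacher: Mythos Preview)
Your proposal is correct and follows essentially the same route as the paper: both bound $\norm{\grad F(\x^*)}$ by the triangle inequality using Lemma~\ref{lem:smoothing}(2), bound $\lambda_{\min}(\hess F(\x^*))$ via Weyl's inequality and Lemma~\ref{lem:smoothing}(2), and then balance the three error pairs $(\rho d\sigma^2,\nu/\sigma)$, $(\rho\sqrt d\,\sigma,\nu/\sigma^2)$, $(\rho,\nu/\sigma^3)$ at the choice $\sigma=\Theta(\sqrt{\epsilon/(\rho d)})$. The only cosmetic difference is that the paper writes down the three sufficient inequalities first and then reads off the constraints on $\sigma,\nu$, whereas you fix $\sigma,\nu$ and verify the inequalities; the content is identical.
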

\begin{proof}
	By applying Lemma \ref{lem:smoothing} and Weyl's inequality, we have that the following inequalities hold up to a constant factor:	
	\begin{align*}
		\norm{\grad F (\x^*)} &\leq \rho d \sigma^2 + \frac{\nu}{\sigma} + \tilde{\epsilon}\\
		\mineval(\hess F(\x^*)) &\geq \mineval(\hess \tilde{f}_\sigma(\x^*)) + \mineval(\hess F(\x^*)-\hess \tilde{f}_\sigma(\x^*)) \\
		&\geq - \sqrt{ (\rho + \frac{\nu}{\sigma^3} )\tilde{\epsilon}} - \norm{\hess \tilde{f}_\sigma(\x) - \hess F(\x)} \\
		&= - \sqrt{\frac{( \rho + \frac{\nu}{\sigma^3})}{\sqrt{d}}\epsilon} -( \rho\sqrt{d}\sigma + \frac{\nu}{\sigma^2} )
	\end{align*}
	Suppose we want any $\tilde{\epsilon}$-SOSP of $\tilde{f}_\sigma(\x)$ to be a $O(\epsilon)$-SOSP of $F$. Then satisfying the following inequalities is sufficient (up to a constant factor):
	\begin{align}
		\rho\sqrt{d}\sigma + \frac{\nu}{\sigma^2} &\leq  \sqrt{\rho \epsilon} \label{ineq1}\\
		\rho d \sigma^2+  \frac{\nu}{\sigma} &\leq \epsilon\label{ineq2} \\
		\rho + \frac{\nu}{\sigma^3} &\leq \rho \sqrt{d} \label{ineq3}
	\end{align}
	We know Eq.(\ref{ineq1}), (\ref{ineq2}) $\implies \sigma \leq \frac{\sqrt{\rho \epsilon}}{\rho \sqrt{d}} = \sqrt{\frac{\epsilon}{\rho d} }$ and $\sigma \leq \sqrt{\frac{\epsilon}{\rho d} } $. 
	
	\noindent Also Eq. (\ref{ineq1}), (\ref{ineq2}) $\implies \nu \leq \sigma \epsilon\leq \sqrt{\frac{\epsilon^3}{\rho d} }$ and $\nu \leq  \sqrt{\rho \epsilon} \sigma^2 \leq \sqrt{\rho \epsilon} \frac{\epsilon}{\rho d} = \sqrt{\frac{\epsilon^{3}}{\rho d^2}}$.
	
	\noindent Finally Eq.(\ref{ineq3}) $\implies \nu \leq \rho \sqrt{d} \sigma^3 \leq \frac{\epsilon^{1.5}}{\rho^{0.5}d}$.

	\noindent Thus the following choice of $\sigma$ and $\nu$ ensures that $\x^*$ is an $O(\epsilon)$-SOSP of $F$: $$\sigma \leq c_1 \sqrt{\frac{\epsilon}{\rho d} },~ \nu \leq c_2 \sqrt{\frac{\epsilon^{3}}{\rho d^{2}}},$$
	where $c_1, c_2$ are universal constants.
\end{proof}

\begin{proof}[Proof of Theorem \ref{thm:upperbound_informal}]
Applying  Theorem~\ref{thm:psgd_guar} on $\tilde{f}_\sigma(\x)$ guarantees finding an $c\frac{\epsilon}{\sqrt{d}}$-SOSP of $\tilde{f}_\sigma(\x)$ in number of queries polynomial in all the problem parameters. By Lemma \ref{lem:upper_sosp}, for some universal constant $c$, this is also an $\epsilon$-SOSP of $F$. This proves Theorem~\ref{thm:upperbound_informal}.
\end{proof}

\subsection{Technical Lemmas} \label{app:A_technical_lems}

In this section, we collect and prove the technical lemmas used in the proofs of the above.

\begin{lemma}\label{small_lemma:psd}
	Let $\lambda$ be a real-valued random variable and $A$ be a random PSD matrix that can depend on $\lambda$. Denote the matrix spectral norm as $\norm{\cdot}$. Then $\norm{\E [A\lambda]} \leq \norm{\E[A|\lambda|]}$.
	
\end{lemma}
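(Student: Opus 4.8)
\textbf{Proof proposal for Lemma~\ref{small_lemma:psd}.}

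The plan is to reduce the matrix inequality to a scalar statement by testing against an arbitrary unit vector. First I would fix an arbitrary unit vector $\u \in \R^d$ and consider the scalar quantity $\u\trans \E[A\lambda] \u = \E[\lambda \cdot \u\trans A \u]$, where the last equality uses linearity of expectation and the fact that the quadratic form is linear in the matrix argument. Since $A$ is (almost surely) PSD, the scalar random variable $a \defeq \u\trans A \u$ is nonnegative. Hence $|\lambda \cdot a| = |\lambda| \cdot a$ pointwise, and by the triangle inequality for expectation (i.e.\ Jensen, $|\E[X]| \le \E[|X|]$) we get $|\E[\lambda a]| \le \E[|\lambda| a] = \u\trans \E[A|\lambda|] \u$. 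Note that $\E[A|\lambda|]$ is itself PSD (expectation of PSD matrices), so $\u\trans \E[A|\lambda|]\u \le \norm{\E[A|\lambda|]}$.

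Next I would take the supremum over unit vectors $\u$. For the right-hand side there is nothing to do: the bound $\u\trans \E[A|\lambda|]\u \le \norm{\E[A|\lambda|]}$ already holds uniformly. For the left-hand side, I need to recover the spectral norm of $\E[A\lambda]$ from the quantities $\u\trans \E[A\lambda]\u$. Here one should be slightly careful: $\E[A\lambda]$ is symmetric (as an expectation of symmetric matrices — each $A$ is PSD hence symmetric, and $\lambda$ is a scalar), so its spectral norm equals $\max_{\norm{\u}=1} |\u\trans \E[A\lambda] \u|$, which is the standard variational characterization of the spectral norm for symmetric matrices. Combining, $\norm{\E[A\lambda]} = \max_{\norm{\u}=1}|\u\trans\E[A\lambda]\u| \le \max_{\norm{\u}=1} \u\trans \E[A|\lambda|]\u = \norm{\E[A|\lambda|]}$, which is the claim.

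The only genuine subtlety — the ``main obstacle,'' though it is mild — is the justification that for a symmetric matrix $M$ one has $\norm{M} = \max_{\norm{\u}=1}|\u\trans M \u|$; this is exactly where symmetry of $\E[A\lambda]$ is used, and it is why the hypothesis that $A$ is PSD (not merely that $A\lambda$ has bounded norm) matters. Everything else is routine: linearity of expectation, nonnegativity of $\u\trans A\u$ for PSD $A$, and the scalar inequality $|\E X| \le \E|X|$. I would also remark in passing that no integrability hypotheses need to be stated explicitly since the lemma is applied with $\lambda = (f-F)(\x+\z)$ bounded by $\nu$ and $A = \z\z\trans$ having finite expectation, so all expectations in sight are finite.
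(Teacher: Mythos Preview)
Your proof is correct and follows essentially the same approach as the paper: reduce to scalar quadratic forms $\u\trans(\cdot)\u$ and exploit that $\u\trans A\u \ge 0$ for PSD $A$. The only cosmetic difference is that the paper bounds $\x\trans\E[A\lambda]\x$ by splitting into the events $\{\lambda\ge 0\}$ and $\{\lambda<0\}$, whereas you invoke the equivalent scalar inequality $|\E X|\le \E|X|$; your version is in fact slightly cleaner, since you explicitly handle the absolute value and the passage from quadratic forms back to the spectral norm via symmetry of $\E[A\lambda]$.
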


\begin{proof}
	For any $\x \in \R^d$, 
	\begin{align*}
	\x\trans\E[A\lambda]\x &=\x\trans\E[A\lambda | \lambda \geq 0] \x\cdot \P(\lambda \geq 0) + \x\trans\E[A\lambda | \lambda < 0] \x\cdot \P(\lambda < 0) \\
	&\leq \x\trans\E[A\lambda | \lambda \geq 0] \x\cdot \P(\lambda \geq 0) - \x\trans\E[A\lambda | \lambda < 0] \x\cdot \P(\lambda < 0) \\
	&=\x\trans\E[A|\lambda|]\x 
	\end{align*}
\end{proof}

The following two technical lemmas bound the Hessian Lipschitz constants of $\tilde{F}_\sigma$ and $(\tilde{f}_\sigma-\tilde{F}_\sigma)$ respectively.

\begin{lemma}\label{small_lemma:hess_Fs}
	$\norm{\hess \tilde{F}_\sigma(\x) - \hess \tilde{F}_\sigma(\y)  } \leq \rho\norm{\x-\y}$.
	
\end{lemma}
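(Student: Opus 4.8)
The plan is to push the Hessian operator through the Gaussian expectation and then apply the Hessian-Lipschitz property of $F$ pointwise. First I would observe that since $F$ is $\ell$-gradient Lipschitz and $\rho$-Hessian Lipschitz, $F$ is twice differentiable with $\norm{\hess F(\cdot)} \le \ell$ uniformly. This uniform bound lets me invoke dominated convergence to interchange expectation and differentiation, so that $\hess \tilde{F}_\sigma(\x) = \E_\z[\hess F(\x+\z)]$; alternatively, one can read this representation directly from the Gaussian smoothing identity of Lemma \ref{identity_1} (applied to $F$ in place of $f$) together with integration by parts against the Gaussian density.

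Given this representation, the estimate is immediate. I would write the difference as a single expectation,
\[
\hess \tilde{F}_\sigma(\x) - \hess \tilde{F}_\sigma(\y) = \E_\z\big[\hess F(\x+\z) - \hess F(\y+\z)\big],
\]
apply Jensen's inequality (convexity of the spectral norm) to move the norm inside the expectation, and then bound the integrand by $\rho\,\norm{(\x+\z)-(\y+\z)} = \rho\,\norm{\x-\y}$ using Definition \ref{def:HessLip}. Since this bound is uniform in $\z$, the expectation is at most $\rho\,\norm{\x-\y}$, which is the claim.

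The only genuine obstacle is justifying the interchange of $\E_\z$ and $\hess$; once that is settled, the remainder is a one-line application of Jensen's inequality and the $\rho$-Hessian Lipschitz hypothesis. I expect this step to be routine given the uniform bound $\norm{\hess F} \le \ell$, which dominates the integrand and its difference quotients.
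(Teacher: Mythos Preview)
Your proposal is correct and follows essentially the same argument as the paper: write $\hess \tilde{F}_\sigma(\x) - \hess \tilde{F}_\sigma(\y) = \E_\z[\hess F(\x+\z)-\hess F(\y+\z)]$, move the norm inside the expectation, and apply the $\rho$-Hessian Lipschitz property pointwise. The only difference is that you explicitly justify the interchange of $\hess$ and $\E_\z$ via dominated convergence, whereas the paper takes this step for granted.
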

\begin{proof}
	By the Hessian-Lipschitz property of $F$:
	\begin{align*}
	\norm{\hess \tilde{F}_\sigma(\x) - \hess \tilde{F}_\sigma(\y)  } &= \norm{\E_\z [\hess F(\x+\z)-\hess F(\y+\z)]} \\
	&\leq \E_\z \norm{\hess F(\x+\z)-\hess F(\y+\z)} \\
	&\leq \rho\norm{ \x-\y}
	\end{align*}
\end{proof}

\begin{lemma}\label{small_lemma:hess_diff_diff}
	$\norm{\hess \tilde{f}_\sigma(\x) - \hess \tilde{F}_\sigma(\x)  - (\hess \tilde{f}_\sigma(\y) - \hess \tilde{F}_\sigma(\y))} \leq O(\frac{\nu }{\sigma^3})\norm{\x-\y}+ O(\norm{\x-\y}^2)  $.
	
\end{lemma}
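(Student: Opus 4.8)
My plan is to work with $g \defeq f - F$, which satisfies $\norm{g}_\infty \le \nu$ by Assumption~\ref{assump}; by linearity of Gaussian smoothing the quantity in question is exactly $\hess \tilde g_\sigma(\x) - \hess \tilde g_\sigma(\y)$ where $\tilde g_\sigma(\x) = \E_\z[g(\x+\z)]$. Using the Hessian identity of Lemma~\ref{identity_1} I would split $\hess \tilde g_\sigma(\x) = \frac{1}{\sigma^4}\E_\z[\z\z\trans g(\x+\z)] - \frac{1}{\sigma^2}\tilde g_\sigma(\x)\I$. The scalar/identity piece $-\frac{1}{\sigma^2}\tilde g_\sigma(\x)\I$ is easy: exactly as in Lemma~\ref{lem:stocgrad} and the proof of Lemma~\ref{grad_lip}, Cauchy--Schwarz applied to $\E_\z[\inner{\z}{\u}g(\x+\z)]$ gives $\norm{\grad \tilde g_\sigma} \le \nu/\sigma$, so $\tilde g_\sigma$ is $(\nu/\sigma)$-Lipschitz and this piece contributes at most $\frac{1}{\sigma^2}\cdot\frac{\nu}{\sigma}\norm{\x-\y} = \frac{\nu}{\sigma^3}\norm{\x-\y}$, which is the term we want. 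It remains to handle the matrix piece $T \defeq \frac{1}{\sigma^4}\big(\E_\z[\z\z\trans g(\x+\z)] - \E_\z[\z\z\trans g(\y+\z)]\big)$.

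For $T$ I would move the point-dependence onto the Gaussian kernel via the substitution $\w = \x + \z$ (resp.\ $\w = \y + \z$): writing $p_\sigma$ for the $\mathcal{N}(0,\sigma^2\I)$ density, $h(\u) \defeq p_\sigma(\u)\,\u\u\trans$, and $\Delta \defeq \x - \y$, this gives $T = \frac{1}{\sigma^4}\int_{\R^d} g(\w)\,[\,h(\w-\x) - h(\w-\y)\,]\,\dd\w$. Since $\y + \Delta = \x$, the fundamental theorem of calculus along the segment gives $h(\w-\x) - h(\w-\y) = -\int_0^1 Dh(\w-\y-t\Delta)[\Delta]\,\dd t$, and a one-line computation gives $Dh(\u)[\Delta] = p_\sigma(\u)\big(\u\Delta\trans + \Delta\u\trans - \sigma^{-2}\inner{\u}{\Delta}\,\u\u\trans\big)$. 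Undoing the substitution by the change of variables $\w \mapsto \w - t\Delta$ then rewrites $T$ as $-\frac{1}{\sigma^4}\int_0^1 \E_\z\big[\,g(\y+t\Delta+\z)\,\big(\z\Delta\trans + \Delta\z\trans - \sigma^{-2}\inner{\z}{\Delta}\,\z\z\trans\big)\,\big]\,\dd t$ with $\z\sim\mathcal{N}(0,\sigma^2\I)$, reducing everything to bounding the spectral norm of this inner expectation uniformly in $t\in[0,1]$.

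This last step is where the only real subtlety lies — the main obstacle is to get a \emph{dimension-free} bound. A naive estimate such as $\norm{\z\z\trans}\le\norm{\z}^2$ would cost a factor $\E\norm{\z}^2 = \sigma^2 d$ and ruin the bound. Instead, mimicking the PSD trick of Lemma~\ref{small_lemma:psd} used in Lemma~\ref{grad_lip}, I would write the spectral norm as $\sup_{\norm{\u}=1}\E_\z[g(\cdot)\,\u\trans(\cdots)\u]$, pull out $|g|\le\nu$, and observe that $\u\trans\big(\z\Delta\trans + \Delta\z\trans - \sigma^{-2}\inner{\z}{\Delta}\z\z\trans\big)\u = 2\inner{\z}{\u}\inner{\Delta}{\u} - \sigma^{-2}\inner{\z}{\Delta}\inner{\z}{\u}^2$ is a product of at most three scalar linear forms in $\z$; each obeys $\E|\inner{\z}{\u}| \le \sigma$, $\E\inner{\z}{\u}^2 = \sigma^2$, and $\E[|\inner{\z}{\hat\Delta}|\inner{\z}{\u}^2] \le \sqrt{\E\inner{\z}{\hat\Delta}^2}\sqrt{\E\inner{\z}{\u}^4} = \sqrt{3}\,\sigma^3$ by Cauchy--Schwarz, all free of $d$. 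This gives $\norm{\E_\z[g(\cdot)(\cdots)]} = O(\nu\sigma\norm{\Delta})$, hence $\norm{T} = O(\nu/\sigma^3)\norm{\x-\y}$; adding the $\frac{\nu}{\sigma^3}\norm{\x-\y}$ from the identity piece yields the stated $O(\tfrac{\nu}{\sigma^3})\norm{\x-\y}$. The extra $O(\norm{\x-\y}^2)$ in the lemma is not needed for this route; it arises (and is harmless, being absorbed in the application in Lemma~\ref{hess_lip}) if one instead uses the cruder add-and-subtract identity $h(\w-\x)-h(\w-\y) = p_\sigma(\w-\x)\big[(\w-\x)(\w-\x)\trans - (\w-\y)(\w-\y)\trans\big] + \big[p_\sigma(\w-\x)-p_\sigma(\w-\y)\big](\w-\y)(\w-\y)\trans$, whose $\Delta\Delta\trans$ contribution integrates to $\frac{\Delta\Delta\trans}{\sigma^4}\tilde g_\sigma(\x)$ of norm $\le \nu\sigma^{-4}\norm{\x-\y}^2$. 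The only routine checks are differentiation under the integral and the Taylor/FTC remainder, both valid since every integrand is a Gaussian density times a low-degree polynomial while $g$ is merely bounded.
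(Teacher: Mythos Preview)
Your argument is correct and actually slightly tighter than the paper's. The two proofs share the same skeleton---write the quantity as an integral of $g=f-F$ against a derivative of the Gaussian kernel, then bound the resulting Gaussian moments in a dimension-free way via one-dimensional marginals---but the execution differs. The paper centers at the midpoint $(\x+\y)/2$, writes the difference as $\int g(\cdot)\,[\omega(\Delta)-\omega(-\Delta)]\,\dd\z$ with $\Delta=\tfrac{\y-\x}{2}$, and then Taylor-expands $\omega(\pm\Delta)$ to first order in $\Delta$; the odd-in-$\Delta$ terms give exactly your $\z\Delta\trans+\Delta\z\trans$ and $\sigma^{-2}\inner{\z}{\Delta}\,\z\z\trans$ pieces (plus a $\sigma^{-2}\inner{\z}{\Delta}\,\I$ piece corresponding to the identity part you split off), and the Taylor remainder is the source of the $O(\norm{\x-\y}^2)$ term in the statement. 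You instead use the exact FTC integral $h(\w-\x)-h(\w-\y)=-\int_0^1 Dh(\w-\y-t\Delta)[\Delta]\,\dd t$, which produces the same first-order terms but with no remainder, so your bound is simply $O(\nu/\sigma^3)\norm{\x-\y}$.

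One small remark: what you invoke is not really the PSD trick of Lemma~\ref{small_lemma:psd} (your matrix $\z\Delta\trans+\Delta\z\trans-\sigma^{-2}\inner{\z}{\Delta}\z\z\trans$ is not PSD), but the simpler scalar estimate $|\E[g\cdot\phi]|\le\nu\,\E|\phi|$ applied to $\phi=\u\trans M\u$, which is exactly what your computation uses and is perfectly valid. The paper packages the same moment estimates as Lemma~\ref{small_lemma:norm_exp}; your Cauchy--Schwarz on $\E[|\inner{\z}{\hat\Delta}|\inner{\z}{\u}^2]$ is an equivalent route.
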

\begin{proof}
	For brevity, denote $h = \frac{1}{(2\pi\sigma^2)^{\frac{d}{2}}} $.
	\begin{align}
	&\hess \tilde{f}_\sigma(\x) - \hess \tilde{F}_\sigma(\x)  - (\hess \tilde{f}_\sigma(\y) - \hess \tilde{F}_\sigma(\y)) \nonumber  \\
	&= \E_\z[\frac{\z\z\trans-\sigma^2\I}{\sigma^4} ((f-F)(\x+\z) - (f-F)(\y+\z) ] \nonumber \\
	&= h \left( \int \frac{\z{\z}\trans-\sigma^2\I}{\sigma^4} (f-F)(\x+\z) e^{-\frac{\norm{\z}^2}{2\sigma^2}}d\z- \int \frac{\z{\z}\trans-\sigma^2\I}{\sigma^4} (f-F)(\y+\z) e^{-\frac{\norm{\z}^2}{2\sigma^2}}d\z\right) \nonumber  \\
	&= h\int (f-F)(\z + \frac{\x+\y}{2}) \left( \omega(\Delta) - \omega(-\Delta) \right) d\z, \label{eqn:cov1}
	\end{align}
	where $\omega(\Delta) := \frac{(\z + \Delta)(\z + \Delta)\trans-\sigma^2\I}{\sigma^4}  e^{-\frac{\norm{\z+ \Delta}^2}{2\sigma^2}}  $ and $\Delta =\frac{\y - \x}{2}$. Equality (\ref{eqn:cov1}) follows from a change of variables. Now, denote $g(\z) := (f-F)(\z+ \frac{\x+\y}{2})$. 
	
	Using $\omega(\Delta) =\frac{(\z + \Delta)(\z + \Delta)\trans-\sigma^2\I}{\sigma^4}  e^{-\frac{\norm{\Delta}^2 + 2\inner{\Delta}{\z}}{2\sigma^2}}e^{-\frac{\norm{\z}^2}{2\sigma^2}} $, we have the following
	\[h\int g(\z) \left( \omega(\Delta) - \omega(-\Delta) \right) d\z 
	=\E_\z\left[g(\z)\left( \omega(\Delta)e^{\frac{\norm{\z}^2}{2\sigma^2}} -  \omega(-\Delta)e^{\frac{\norm{\z}^2}{2\sigma^2}} \right)\right]. \]
	By a Taylor expansion up to only the first order terms in $\Delta$, 
	\[\omega(\Delta)e^{\frac{\norm{\z}^2}{2\sigma^2}} = \frac{\z\z\trans + \Delta\z\trans + \z\Delta\trans - \sigma^2\I}{\sigma^4}(1+\frac{1}{\sigma^2}\inner{\Delta}{\z}).\]
	We then write the Taylor expansion of $\E_\z\left[g(\z)\left( \omega(\Delta)e^{\frac{\norm{\z}^2}{2\sigma^2}} -  \omega(-\Delta)e^{\frac{\norm{\z}^2}{2\sigma^2}} \right)\right]$ as follows.
	\begin{align}
	&\E_\z[g(\z) \cdot \frac{\z\z\trans - \sigma^2\I}{\sigma^4}\cdot \frac{2}{\sigma^2}\inner{\Delta}{\z} + g(\z)\cdot 2\cdot \frac{\Delta\z\trans + \z\Delta\trans }{\sigma^4}]\nonumber\\
	&= \E_\z[g(\z) \cdot \frac{\z\z\trans }{\sigma^4}\cdot \frac{2}{\sigma^2}\inner{\Delta}{\z}] - \E_\z[g(\z) \cdot \frac{ \sigma^2\I}{\sigma^4}\cdot \frac{2}{\sigma^2}\inner{\Delta}{\z}] + \E_\z[g(\z) \cdot2\cdot \frac{\Delta\z\trans + \z\Delta\trans }{\sigma^4}]\nonumber.
	\end{align}
	Therefore,
	\begin{align*}
	&\norm{\hess \tilde{f}_\sigma(\x) - \hess \tilde{F}_\sigma(\x)  - (\hess \tilde{f}_\sigma(\y) - \hess \tilde{F}_\sigma(\y))} \\
	\leq &\norm{\E_\z[g(\z)\cdot \frac{\z\z\trans}{\sigma^4}\cdot \frac{2}{\sigma^2}\inner{\Delta}{\z}] }+ \norm{\E_\z[g(\z)\cdot \frac{\I}{\sigma^4}\cdot 2\inner{\Delta}{\z}] } + \norm{\E_\z[g(\z)\cdot2\cdot \frac{\Delta\z\trans + \z\Delta\trans }{\sigma^4}]} + O(\norm{\Delta}^2)\\
	= &\frac{2}{\sigma^6}\norm{\E_\z[g(\z)\cdot \z\z\trans\inner{\Delta}{\z}] }+ \frac{2}{\sigma^4} \norm{\E_\z[g(\z)\inner{\Delta}{\z}\I] } + \frac{2}{\sigma^4} \norm{\E_\z[g(\z)(\Delta\z\trans + \z\Delta\trans)]} + O(\norm{\Delta}^2)\\
	\leq & O(\frac{\nu }{\sigma^3})\norm{\Delta}  + O(\norm{\Delta}^2).
	\end{align*}
	The last inequality follows from Lemma \ref{small_lemma:norm_exp}.
\end{proof}

\begin{lemma}\label{small_lemma:norm_exp}
	Given $\z \sim N(0,\sigma \I_{d\times d})$, some $\Delta \in \R^d$, $\norm{\Delta} = 1$, and $f:\R^d \to  [-1, 1]$, 
	\begin{align*}
	&1. ~\norm{\E_\z[f(\z)\cdot \z\z\trans\inner{\Delta}{\z}] } = O(\sigma^3); &\qquad 2. ~\norm{\E_\z[f(\z)\inner{\Delta}{\z}\I] } =O(\sigma);\\
	&3. ~\norm{\E_\z[f(\z)(\Delta\z\trans)]} =O(\sigma) ;&\qquad 4. ~\norm{\E_\z[f(\z)(\z\Delta\trans)]} =O(\sigma).
	\end{align*}
	
\end{lemma}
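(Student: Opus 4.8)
The plan is to prove each of the four bounds by a direct Gaussian moment computation, exploiting that $f$ is bounded by $1$ and that $\z$ has independent coordinates, each $\mathcal{N}(0,\sigma^2)$. The uniform bound $|f(\z)| \le 1$ lets us pass absolute values inside each expectation, so in every case we reduce to estimating an entrywise (or operator-norm) bound on a matrix/vector whose entries are absolute moments of Gaussians. The recurring principle is that $\E|Z_i|^k = O(\sigma^k)$ for each coordinate, and that mixed products factor across independent coordinates; this is exactly why each claimed bound scales like $\sigma$ raised to the total degree of $\z$ appearing (degree $1$ for items 2--4, degree $3$ for item 1).

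For item 2, write $\E_\z[f(\z)\inner{\Delta}{\z}\I] = \big(\E_\z[f(\z)\inner{\Delta}{\z}]\big)\I$, so the spectral norm equals $|\E_\z[f(\z)\inner{\Delta}{\z}]| \le \E_\z|\inner{\Delta}{\z}| \le \sqrt{\E\inner{\Delta}{\z}^2} = \sigma\norm{\Delta} = O(\sigma)$ by Cauchy--Schwarz (Jensen). For items 3 and 4, note $\norm{\Delta\z\trans} = \norm{\z\Delta\trans} = \norm{\Delta}\norm{\z}$, so $\norm{\E_\z[f(\z)\Delta\z\trans]} \le \E_\z\norm{\Delta}\norm{\z} = \norm{\Delta}\,\E\norm{\z} \le \norm{\Delta}\sqrt{\E\norm{\z}^2} = \norm{\Delta}\sigma\sqrt{d}$; this gives $O(\sigma\sqrt d)$, not $O(\sigma)$. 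To recover the claimed $O(\sigma)$ I would instead bound the operator norm by testing against unit vectors $\u,\v$: $\u\trans\E_\z[f(\z)\Delta\z\trans]\v = \E_\z[f(\z)\inner{\u}{\Delta}\inner{\z}{\v}]$, whose absolute value is at most $|\inner{\u}{\Delta}|\,\E_\z|\inner{\z}{\v}| \le \norm{\Delta}\cdot\sigma$ since $\inner{\z}{\v}\sim\mathcal{N}(0,\sigma^2)$; maximizing over $\u,\v$ gives exactly $O(\sigma)$.

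Item 1 is the main obstacle, since it involves a degree-three polynomial in $\z$ and the matrix $\z\z\trans$ does not have a one-dimensional range. Again I bound the operator norm via $\u\trans\E_\z[f(\z)\z\z\trans\inner{\Delta}{\z}]\u = \E_\z[f(\z)\inner{\u}{\z}^2\inner{\Delta}{\z}]$ for a unit vector $\u$; it suffices to bound $\E_\z|\inner{\u}{\z}|^2|\inner{\Delta}{\z}|$ uniformly over unit $\u,\Delta$. By Hölder's inequality this is at most $\big(\E|\inner{\u}{\z}|^3\big)^{2/3}\big(\E|\inner{\Delta}{\z}|^3\big)^{1/3}$, and each factor is a third absolute moment of a univariate $\mathcal{N}(0,\sigma^2)$, which equals $2\sqrt{2/\pi}\,\sigma^3$. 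Hence the product is $O(\sigma^3)$, uniformly, and taking the supremum over $\u$ yields $\norm{\E_\z[f(\z)\z\z\trans\inner{\Delta}{\z}]} = O(\sigma^3)$. The only subtlety to be careful about is that the operator norm of a symmetric matrix equals $\sup_{\norm{\u}=1}|\u\trans M \u|$, which applies to item 1 (the matrix is symmetric) but for items 3--4 one needs the two-sided version $\sup_{\norm{\u}=\norm{\v}=1}|\u\trans M\v|$; both are standard.
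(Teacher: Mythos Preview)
Your proof is correct and follows essentially the same approach as the paper: for each item, express the operator norm via unit test vectors, use $|f|\le 1$ to strip off $f$, and reduce to absolute moments of one-dimensional Gaussians $\inner{\u}{\z},\inner{\Delta}{\z}\sim\mathcal{N}(0,\sigma^2)$. The only cosmetic differences are that for item~1 the paper uses the elementary pointwise bound $a^2b\le a^3+b^3$ in place of your H\"older step, and for item~3 the paper argues via a single test vector (effectively using that $\E_\z[f(\z)\Delta\z\trans]=\Delta\,\E_\z[f(\z)\z]\trans$ is rank one) rather than your two-sided $\sup_{\u,\v}$; your version is arguably cleaner there.
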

\begin{proof}
	For the first inequality:
	\begin{align*}
	&\norm{\E_\z[f(\z)\cdot \z\z\trans\inner{\Delta}{\z}] } = \sup_{\v\in\R^d, \norm{\v}=1} \E [\v\trans f(\z)\cdot \z\z\trans\inner{\Delta}{\z} \v] \\
	&=  \sup_{\v\in\R^d, \norm{\v}=1} \E [ f(\z)(\v\trans \z)^2(\Delta\trans\z)] 
	\leq \sup_{\v,\Delta\in\R^d, \norm{\v}=\norm{\Delta}=1} \E [ f(\z)(\v\trans \z)^2(\Delta\trans\z)] \\
	\leq& \sup_{\v,\Delta\in\R^d, \norm{\v}=\norm{\Delta}=1}\E [(\v\trans \z)^2|\Delta\trans\z|] 
	\leq\sup_{\v,\Delta\in\R^d, \norm{\v}=\norm{\Delta}=1} \E [|\v\trans \z|^3 +|\Delta\trans\z|^3] \\
	&=  2\E [|{\v^*}\trans \z|^3] = 4\sqrt{\frac{2}{\pi}}\sigma^3.
	\end{align*}%
	For the second inequality:
	\[\norm{\E_\z[f(\z)\inner{\Delta}{\z}\I] } = |\E [X \tilde{f}(X) ]| \leq \E|X| = \sqrt{\frac{2}{\pi}} \sigma, \]
	where $X = \inner{\Delta}{\z} \sim N(0,\sigma^2)$ and $\tilde{f}(a) = \E[f(\z)|X=a] \in[-1,1] $.\newline
	For the third inequality:
	\begin{align*}\norm{\E_\z[f(\z)(\Delta\z\trans)]} &= \sup_{\v\in\R^d, \norm{\v}=1}\E_\z[ f(\z)\v\trans\Delta\z\trans\v)] \\
	&={\v^*}\trans\Delta \E_\z[ f(\z)\z\trans{\v}^*)] 
	\leq \sqrt{\frac{2}{\pi}} \sigma, 
	\end{align*}
	where the last step is correct due to the second inequality we proved.
	The proof of the fourth inequality directly follows from the third inequality.
\end{proof}

\begin{lemma}\label{small_lemma:grad_diff_fsFs}
	$\norm{\grad \tilde{f}_\sigma(\x) - \grad \tilde{F}_\sigma(\x)} \leq \sqrt{\frac{2}{\pi}}\frac{\nu }{\sigma}$.
	
\end{lemma}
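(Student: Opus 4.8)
The plan is to reduce everything to the exact gradient formula from Lemma~\ref{identity_1} and then estimate a one-dimensional Gaussian absolute moment. First, I would apply Lemma~\ref{identity_1} to both $\tilde{f}_\sigma$ and $\tilde{F}_\sigma$ (the latter being the Gaussian smoothing of $F$, defined analogously), which gives $\grad \tilde{f}_\sigma(\x) = \E_\z[\frac{\z}{\sigma^2} f(\x+\z)]$ and $\grad \tilde{F}_\sigma(\x) = \E_\z[\frac{\z}{\sigma^2} F(\x+\z)]$. Subtracting and using linearity of expectation yields the clean identity
\[
\grad \tilde{f}_\sigma(\x) - \grad \tilde{F}_\sigma(\x) = \E_\z\!\left[\frac{\z}{\sigma^2}\,(f-F)(\x+\z)\right].
\]

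Next I would bound the $\ell_2$ norm of this vector by writing it as a supremum over unit vectors: $\norm{\grad \tilde{f}_\sigma(\x) - \grad \tilde{F}_\sigma(\x)} = \sup_{\u \in \R^d,\, \norm{\u}=1} \inner{\u}{\E_\z[\frac{\z}{\sigma^2}(f-F)(\x+\z)]} = \sup_{\norm{\u}=1} \frac{1}{\sigma^2}\E_\z[\inner{\u}{\z}\,(f-F)(\x+\z)]$. Using the pointwise bound $|(f-F)(\x+\z)| \le \nu$ from Assumption~\ref{assump} together with $\inner{\u}{\z}\,(f-F)(\x+\z) \le |\inner{\u}{\z}|\cdot|(f-F)(\x+\z)|$, this is at most $\frac{\nu}{\sigma^2}\,\E_\z|\inner{\u}{\z}|$. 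Since $\u$ is a unit vector and $\z \sim \mathcal{N}(0,\sigma^2\I)$, we have $\inner{\u}{\z} \sim \mathcal{N}(0,\sigma^2)$, so $\E_\z|\inner{\u}{\z}| = \sigma\sqrt{2/\pi}$, which is independent of $\u$. Combining the pieces gives the claimed bound $\sqrt{2/\pi}\,\nu/\sigma$.

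There is essentially no hard step here; the only points requiring a line of care are the passage from the vector norm to the supremum over unit directions and the insertion of the absolute value before applying $|f-F|\le\nu$. (If one prefers to be pedantic about measurability, the same estimate follows by conditioning: set $X = \inner{\u}{\z}$ and $\tilde g(a) = \E[(f-F)(\x+\z)\mid X=a]$, note $|\tilde g|\le\nu$, and bound $\frac{1}{\sigma^2}\E[X\tilde g(X)] \le \frac{\nu}{\sigma^2}\E|X|$, exactly as in the proof of Lemma~\ref{small_lemma:norm_exp}.) This lemma, together with Lemma~\ref{small_lemma:grad_diff_FsF_rho}, then feeds into the triangle-inequality decomposition used to prove Lemma~\ref{grad_diff}.
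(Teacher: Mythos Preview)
Your proposal is correct and follows essentially the same approach as the paper: apply the Gaussian-smoothing gradient identity (Lemma~\ref{identity_1}) to $f$ and $F$, subtract, and then bound $\norm{\E_\z[\frac{\z}{\sigma^2}(f-F)(\x+\z)]}$ via the supremum over unit directions and the absolute first moment of a one-dimensional Gaussian. The paper packages this last step into a standalone Lemma~\ref{small_lemma:zfz} (whose proof is precisely the conditioning argument you sketch in your parenthetical remark), whereas you carry it out inline; the content is identical.
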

\begin{proof}
	By the Gaussian smoothing identity,
	\[
	\norm{\grad \tilde{f}_\sigma(\x) - \grad \tilde{F}_\sigma(\x)} =\norm{\E_\z[\frac{\z}{\sigma^2}(f-F)(\x-\z)]} \leq\sqrt{\frac{2}{\pi}}\frac{\nu}{\sigma}. \]
	The last inequality follows from Lemma \ref{small_lemma:zfz}.
\end{proof}

\begin{lemma}\label{small_lemma:grad_diff_FsF_rho}
	$\norm{\grad \tilde{F}_\sigma(\x) - \grad F(\x)} \leq  \rho d\sigma^2$.
	
\end{lemma}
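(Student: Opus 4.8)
The plan is to first rewrite the left-hand side using the fact that, since $F$ is differentiable (indeed $\ell$-gradient Lipschitz), we may differentiate under the expectation to get $\grad \tilde{F}_\sigma(\x) = \E_\z[\grad F(\x+\z)]$; the interchange is justified by dominated convergence, using $|F| \le B$ together with the gradient-Lipschitz bound on $\norm{\grad F}$. Then the quantity to control is simply $\norm{\E_\z[\grad F(\x+\z) - \grad F(\x)]}$.

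Next I would Taylor-expand $\grad F$ around $\x$ along the segment to $\x+\z$: using $\grad F(\x+\z) - \grad F(\x) = \int_0^1 \hess F(\x+t\z)\,\z\, dt$, split this as $\hess F(\x)\,\z + \int_0^1 \bigl(\hess F(\x+t\z) - \hess F(\x)\bigr)\z\, dt$. Taking the expectation over $\z$, the first piece vanishes: $\E_\z[\hess F(\x)\,\z] = \hess F(\x)\,\E_\z[\z] = 0$ because $\z \sim \mathcal{N}(0,\sigma^2\I)$ is zero-mean. This cancellation is the whole point — it is what upgrades the naive $O(\rho\sqrt{d}\sigma)$ bound (which one would get just from gradient-continuity of $\grad \tilde F_\sigma$) to the sharper $O(\rho d \sigma^2)$ scaling.

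It then remains to bound the expected remainder. Applying $\rho$-Hessian Lipschitzness pointwise, $\norm{(\hess F(\x+t\z) - \hess F(\x))\z} \le \rho\, t\, \norm{\z}^2$, so $\norm{\int_0^1(\cdots)\, dt} \le \tfrac{\rho}{2}\norm{\z}^2$; by Jensen's inequality, $\norm{\E_\z[\grad F(\x+\z) - \grad F(\x)]} \le \tfrac{\rho}{2}\,\E_\z\norm{\z}^2 = \tfrac{\rho}{2}\, d\sigma^2 \le \rho d\sigma^2$, using $\E_\z\norm{\z}^2 = d\sigma^2$. I do not anticipate any genuine obstacle here; the only points needing care are the justification of differentiating under the integral and tracking the constant so that the bound lands at $\rho d\sigma^2$ (the argument in fact gives $\tfrac{1}{2}\rho d\sigma^2$, comfortably inside the claimed $O(\cdot)$).
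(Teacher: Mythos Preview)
Your proposal is correct and follows essentially the same route as the paper: write $\grad \tilde{F}_\sigma(\x)-\grad F(\x)=\E_\z\!\left[\int_0^1 \hess F(\x+t\z)\,\z\,dt\right]$, split off $\hess F(\x)\,\z$ (which has zero expectation), and bound the remainder via $\rho$-Hessian Lipschitzness and $\E_\z\norm{\z}^2=d\sigma^2$. Your version is in fact slightly cleaner, keeping track of the $\tfrac{1}{2}$ from $\int_0^1 t\,dt$ and justifying the interchange of $\grad$ and $\E_\z$.
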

\begin{proof} By definition of Gaussian smoothing,
	\begin{align}
	&\norm{\grad \tilde{F}_\sigma(\x) - \grad F(\x)} \nonumber \\
	&=  \norm{\grad \E_\z[F(\x-\z)] - \grad F(\x)} 
	\leq \norm{\E_\z[\left(\int_0^1 \nabla^2 f(\x+t\z) dt\right)\z] } \label{eqn:mvt}\\
	&=  \norm{\E_\z[\left(\int_0^1 \nabla^2 f(\x) + \nabla^2 f(\x+t\z) - \nabla^2 f(\x)  dt\right)\z] } \nonumber\\
	\leq  &\norm{\E_\z[\nabla^2 f(\x) \z ] }+ \norm{\E_\z[\left(\int_0^1 \nabla^2 f(\x+t\z) - \nabla^2 f(\x)  dt\right)\z] } \nonumber\\
	\leq & \E_\z[\left(\int_0^1 \norm{\nabla^2 f(\x+t\z) - \nabla^2 f(\x) } dt\right)\norm{\z}]\nonumber \\
	\leq & \E_\z[\left(\int_0^1 \rho \norm{t\z} dt\right)\norm{\z}] \nonumber
	= \rho \norm{z}^2 \nonumber
	\leq \rho d \sigma^2. \nonumber
	\end{align}
	Inequality (\ref{eqn:mvt}) follows by applying a generalization of mean-value theorem to vector-valued functions.
\end{proof}

\begin{lemma}\label{small_lemma:zfz}
	Given $\z \sim N(0,\sigma \I_{d\times d})$ and $f:\R^d \to  [-1, 1]$, $$\norm{\E \z f(\z)} \leq \sqrt{\frac{2}{\pi}} \sigma.$$
	
\end{lemma}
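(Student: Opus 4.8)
The plan is to bound the norm of the vector $\E[\z f(\z)]$ by its variational characterization: $\norm{\E[\z f(\z)]} = \sup_{\v \in \R^d, \norm{\v} = 1} \inner{\v}{\E[\z f(\z)]}$. By linearity of expectation, the right-hand side equals $\sup_{\norm{\v}=1} \E[(\inner{\v}{\z}) f(\z)]$, so it suffices to control $\E[(\inner{\v}{\z}) f(\z)]$ for an arbitrary fixed unit vector $\v$.

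Next I would use the hypothesis $f : \R^d \to [-1,1]$ to pull the $f$ factor out in absolute value: $\E[(\inner{\v}{\z}) f(\z)] \le \E[|\inner{\v}{\z}|\,|f(\z)|] \le \E|\inner{\v}{\z}|$. The final step is to observe that for $\z \sim \mathcal{N}(0, \sigma^2 \I)$ and $\norm{\v} = 1$, the scalar $\inner{\v}{\z}$ is distributed as $\mathcal{N}(0,\sigma^2)$, whose first absolute moment (the mean of a half-normal) is exactly $\sqrt{2/\pi}\,\sigma$. Taking the supremum over $\v$ then yields $\norm{\E[\z f(\z)]} \le \sqrt{2/\pi}\,\sigma$.

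There is essentially no obstacle here; the only things to be careful about are (i) that the variational formula for the norm is applied correctly and the supremum is attained (which is fine since the unit sphere is compact and the functional is continuous, or one can just note $\sup$ suffices for an inequality), and (ii) matching the paper's normalization convention in which $\mathcal{N}(0,\sigma^2\I)$ has per-coordinate standard deviation $\sigma$, so that $\inner{\v}{\z} \sim \mathcal{N}(0,\sigma^2)$ and $\E|\inner{\v}{\z}| = \sqrt{2/\pi}\,\sigma$.
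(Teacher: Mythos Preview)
Your proposal is correct and is essentially the paper's own argument: both pass to the variational form $\sup_{\norm{\v}=1}\E[(\v\trans\z)f(\z)]$, use that $\v\trans\z\sim\mathcal{N}(0,\sigma^2)$, and bound by $\E|\v\trans\z|=\sqrt{2/\pi}\,\sigma$. The only cosmetic difference is that the paper inserts a conditioning step $\E[X\,\E[f(\z)\mid X]]$ before applying $|f|\le 1$, whereas you bound $|(\v\trans\z)f(\z)|\le|\v\trans\z|$ directly; your route is if anything slightly cleaner.
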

\begin{proof} By definition of the $2$-norm,
	\begin{align*}
	\norm{\E \z f(\z)} &= \sup_{\v\in\R^d, \norm{\v}=1} \E [\v\trans \z f(\z)] =  \E [{\v^*}\trans \z f(\z)] \\
	&= \E[\E [X f(\z) |X] ] \qquad\text{ where }X = {\v^*}\trans \z \sim N(0,\sigma^2) \\
	&=\E [X \tilde{f}(X) ] \qquad\text{ where } \tilde{f}(a) = \E[f(\z)|X=a] \in[-1,1] \\
	&\leq \E |X| = \sqrt{\frac{2}{\pi}} \sigma.
	\end{align*}
\end{proof}


\section{Overview for polynomial queries lower bound} \label{section:lower-bnd-overview}

In this section, we discuss the key ideas for proving Theorem \ref{thm:lowerbound_informal}. 
We illustrate the construction in two steps: (1) construct a hard instance $(F,f)$ contained in a $d$-dimensional ball;
(2) extend this hard instance to $\R^n$. The second step is necessary as Problem~\ref{problem} is an unconstrained problem; in nonconvex optimization the hardness of optimizing unconstrained problems and optimizing constrained problems can be very different.
For simplicity, in this section we assume $\rho,\epsilon$ are both $1$ and focus on the $d$ dependencies, to highlight the difference between polynomial queries and the information-theoretic limit. The general result involving dependency on $\epsilon$ and $\rho$ follows from a simple scaling of the hard functions.



\begin{figure}[h]
	\centering
	\begin{minipage}{.50\textwidth}
		\includegraphics[width=0.9\textwidth, trim = 0 0 0 0, clip]{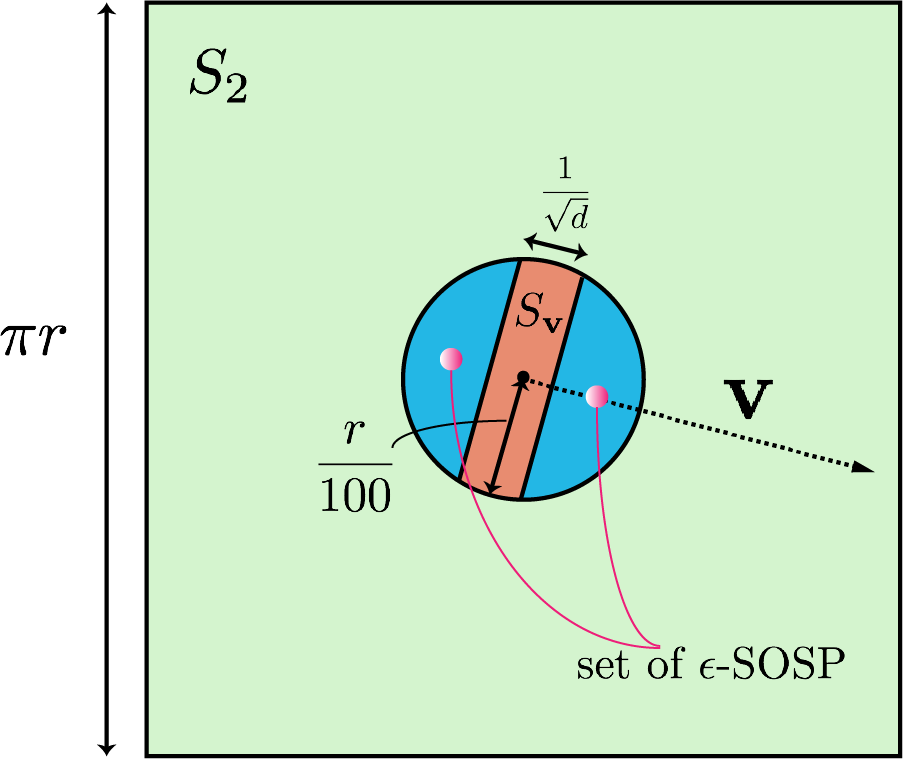}
		\caption{Key regions in lower bound} \label{fig:construction_lower_bnd}
	\end{minipage}%
	\begin{minipage}{.46\textwidth}
		\centering
		\includegraphics[width=0.9\textwidth, trim = 0 0 0 0, clip]{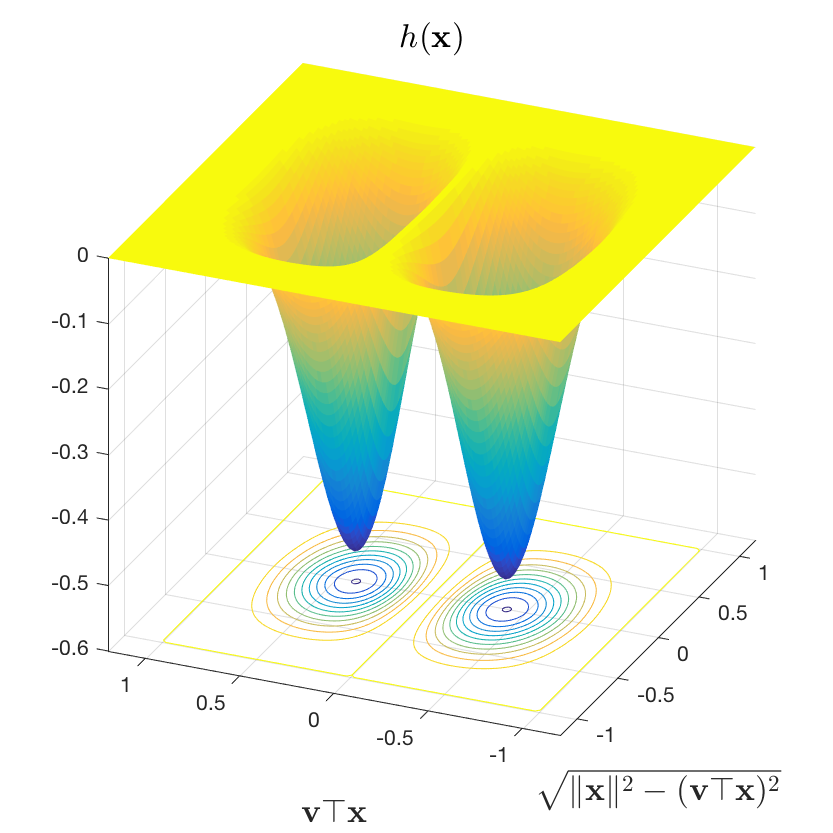}
		\caption{Landscape of $h$}\label{fig:hardfcn}
	\end{minipage}

\end{figure}

\paragraph{Constructing a lower-bound example within a ball} 

The target function $F(\x)$ we construct contains a special direction $\v$ in a $d$-dimensional ball $\mathbb{B}_r$ with radius $r$ centered at the origin. More concretely, let $F(\x) = h(\x) + \norm{\x}^2$, where $h$ (see Figure \ref{fig:hardfcn}) depends on a special direction $\v$, but is spherically symmetric in its orthogonal subspace. 
Let the direction $\v$ be sampled uniformly at random from the $d$-dimensional unit sphere. Define a region around the equator of $\mathbb{B}_r$, denoted $S_\v = \{\x | \x \in \mathbb{B}_r \text{~and~} |\v^\top \x|\le r\log d/\sqrt{d}\}$, as in Figure \ref{fig:construction_lower_bnd}.
The key ideas of this construction relying on the following three properties:
\begin{enumerate}
\item For any fixed point $\x$ in $\mathbb{B}_r$, we have $\Pr(\x \in S_{\v}) \ge 1-O(1/d^{\log d})$.
\item The $\epsilon$-SOSP of $F$ is located in a very small set $\mathbb{B}_r - S_\v$.
\item $h(\x)$ has very small function value inside $S_{\v}$, that is, $\sup_{\x \in S_{\v}} |h(\x)| \le \tilde{O}(1/d)$. 
\end{enumerate}
The first property is due to the concentration of measure in high dimensions.
The latter two properties are intuitively shown in Figure~\ref{fig:hardfcn}. 
These properties suggest a natural construction for $f$: 
\begin{equation*}
f(\x) = \begin{cases}
\norm{\x}^2 &\qquad\text{~if~} \x \in S_{\v}\\
F(\x) & \qquad\text{~otherwise~}
\end{cases}.
\end{equation*}
When $\x\in S_{\v}$, by property 3 above we know $|f(\x)-F(\x)|\le \nu = \tilde{O}(1/d)$.

To see why this construction gives a hard instance of Problem \ref{problem},  
recall that the direction $\v$ is uniformly random.
Since the direction $\v$ is unknown to the algorithm at initialization, the algorithm's first query is independent of $\v$ and thus is likely to be in region $S_{\v}$, due to property~1. The queries inside $S_{\v}$ give no information about $\v$, so any polynomial-time algorithm is likely to continue to make queries in $S_{\v}$ and eventually fail to find $\v$. On the other hand, by property~2 above, finding an $\epsilon$-SOSP of $F$ requires approximately identifying the direction of $\v$, so any polynomial-time algorithm will fail with high probability. 

\paragraph{Extending to the entire space}
To extend this construction to the entire space $\R^d$, we put the ball (the previous construction) inside a hypercube
(see Figure \ref{fig:construction_lower_bnd}) and use the hypercube to tile the entire space $\R^d$. There are two challenges in this approach: (1) The function $F$ must be smooth even at the boundaries between hypercubes; 
(2) The padding region ($S_2$ in Figure \ref{fig:construction_lower_bnd}) between the ball and the hypercube must be carefully constructed to not ruin the properties of the hard functions.

We deal with first problem by constructing a function $\bar{F}(\y)$ on $[-1, 1]^d$, ignoring the boundary condition, and then composing it with a smooth periodic function. For the second problem, we carefully construct a smooth function $h$, as shown in Figure \ref{fig:hardfcn}, to have zero function value, gradient and Hessian at the boundary of the ball and outside the ball, so that no algorithm can make use of the padding region to identify an SOSP of $F$.
Details are deferred to section \ref{app:lower_bnd} in the appendix.

%

\section{Constructing Hard Functions}\label{app:lower_bnd}

In this section, we prove Theorem \ref{thm:lowerbound_informal}, the lower bound for 
algorithms making a polynomial number of queries. We start by describing the hard function construction that is key to the lower bound.

\subsection{``Scale-free'' hard instance}

We will first present a ``scale-free'' version of the hard function, where we assume $\rho = 1$ and $ \epsilon = 1$. In section \ref{section:hard_instance_scaled}, we will show how to scale this hard function to prove Theorem~\ref{thm:lowerbound_informal}.


	Denote $\sin \x = (\sin (x_1), \cdots, \sin (x_d))$. Let $\Ind\{a\}$ denote the indicator function that takes value $1$ when event $a$ happens and $0$ otherwise. Let $\mu =300 $ . Let the function $\sF:\R^d \to \R$ be defined as follows.
	\begin{equation} 
		\sF(\x) = 
		h(\sin \x) + \norm{\sin  \x}^2, \label{def:hardfunction}
	\end{equation}
	where $h(\y) = h_1( \v\trans \y) \cdot h_2(\sqrt{\norm{\y}^2 - (\v\trans \y)^2})$, and
	\begin{align*}
		h_1(x) &=   g_1(\mu x),\quad  g_1(x)=\left(-16|x|^5+48x^4-48|x|^3+16x^2\right)\Ind\{|x| < 1\}, \\
		h_2(x) &=  g_2(\mu x),\quad g_2(x) =  \left(3x^4-8|x|^3+6x^2-1 \right)\Ind\{|x| < 1\},
	\end{align*}
and the vector $\v$ is uniformly distributed on the $d$-dimensional unit sphere.

	\begin{figure}[h]
		\centering
			\begin{tikzpicture}
			\begin{axis}[
			axis lines = left,
			xlabel = $x$,
			ylabel = {$y$},
			legend pos=south west,
			]
			\addplot [
			domain=-1:1, 
			samples=100, 
			color=red,
			]
			{-16*abs(x)^5+48*x^4-48*abs(x)^3+16*x^2};
			\addlegendentry{$g_1(x)$}
			\addplot [
			domain=-1:1, 
			samples=100, 
			color=blue,
			]
			{3*x^4-8*abs(x)^3+6*x^2-1};
			\addlegendentry{$g_2(x)$}
			
			\end{axis}
			\end{tikzpicture}
			\caption{Polynomials $g_1, g_2$}
	\end{figure}
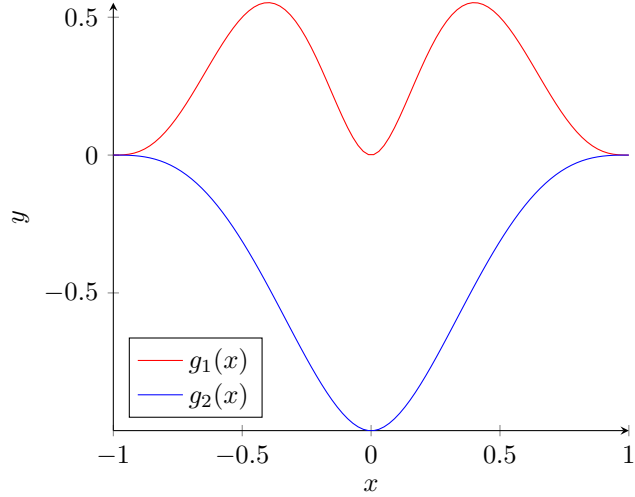


	We will state the properties of the hard instance by breaking the space into different regions:
	
	\begin{itemize}
		\item ``ball'' $\sS = \{\x \in \R^d: \norm{\x} \leq 3/\mu \}$ be the $d$-dimensional ball with radius $3/\mu$.
		\item ``hypercube'' $\sH = [-\frac{\pi}{2} ,\frac{\pi}{2} ]^d$ be the $d$-dimensional hypercube with side length $\pi $.
		\item ``band'' $\sS_\v = \{\x\in \sS: \inner{\sin\x}{\v} \leq \frac{\log d}{\sqrt{d}}  \}$
		\item ``padding'' $\sS_2 = \sH - \sS$
	\end{itemize}
	We also call the union of $\sS_2$ and $\sS_\v$ the ``non-informative'' region.
		
		Define the perturbed function, $\sf$: 
		\begin{equation}
		\sf(\x) = \begin{cases}
		\norm{\sin  \x}^2,& \x \in S_\v \\
		\sF(\x),& \x \notin S_\v.
		\end{cases} \label{eqn:perturbedhardfcn}
		\end{equation}

Our construction happens within the ball. However it is hard to fill the space using balls, so we pad the ball into a hypercube. Our construction will guarantee that any queries to the non-informative region do not reveal any information about $\v$. Intuitively the non-informative region is very large so that it is hard for the algorithm to find any point outside of the non-informative region (and learn any information about $\v$).

\begin{lemma}[Properties of scale-free hard function pair $\sF, \sf$] \label{lemma:scale_free_hard_function}
	Let $\sF, \sf, \v$ be as defined in equations \eqref{def:hardfunction}, \eqref{eqn:perturbedhardfcn}. 
		Then $\sF, \sf$ satisfies: 
			\begin{enumerate}
				\item $\sf$ in the non-informative region $\sS_2\cup \sS_\v$ is independent of $\v$.
				\item $\sup_{x\in \sS_{\v}} |\sf-\sF| \leq \tilde{O}(\frac{1}{d})$.
				\item $\sF$ has no SOSP in the non-informative region $\sS_2 \cup \sS_\v$.
				\item $\sF$ is $O(d)$-bounded, $O(1)$-Hessian Lipschitz, and $O(1)$-gradient Lipschitz.
			\end{enumerate}

\end{lemma}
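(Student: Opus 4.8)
The plan is to prove the four items of Lemma~\ref{lemma:scale_free_hard_function} essentially independently, working from the definitions of $\sF$ and $\sf$ in \eqref{def:hardfunction} and \eqref{eqn:perturbedhardfcn}. The common theme throughout is that every statement is really a one-dimensional statement about the polynomials $g_1, g_2$ (and about $\sin$), lifted to $\R^d$ via the spherically-symmetric structure $h(\y) = h_1(\v\trans\y)\,h_2(\sqrt{\norm{\y}^2 - (\v\trans\y)^2})$ and the periodic wrapping $\x \mapsto \sin\x$. So before proving anything I would first record the elementary facts: $g_1$ is $C^2$ with $g_1, g_1', g_1''$ all vanishing at $|x|=1$ and $g_1(0)=0$; $g_2$ is $C^2$ with $g_2, g_2', g_2''$ vanishing at $|x|=1$ and $g_2(0)=-1$; both are bounded by $O(1)$ on their support together with their first two derivatives (a finite check on explicit degree-$\le 5$ polynomials, and the $\Ind\{|x|<1\}$ cutoff is harmless precisely because of the triple vanishing). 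This makes $h$ a $C^2$ function that is identically zero outside the region where $\mu|\v\trans\y| < 1$ \emph{and} $\mu\sqrt{\norm{\y}^2-(\v\trans\y)^2} < 1$, i.e.\ outside a small neighborhood of the segment through the origin in direction $\v$.

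\textbf{Items 1 and 2.} For item~1, note that on $\sS_\v$ we \emph{defined} $\sf = \norm{\sin\x}^2$, which manifestly does not depend on $\v$; on the padding $\sS_2 = \sH - \sS$ we have $\norm{\x} > 3/\mu$, hence $\norm{\sin\x}$ need not be small, but I would show $h(\sin\x) = 0$ there because at least one of the two arguments of $h_1$ or $h_2$ exceeds $1/\mu$ in absolute value — this uses that $\mu = 300$ is large enough that the ball $\sS$ of radius $3/\mu$ already contains the full support of $h\circ\sin$ within the hypercube, together with $|\sin t|$ being comparable to $|t|$ for $|t|\le \pi/2$. Thus on $\sS_2$, $\sf(\x)=\sF(\x)=\norm{\sin\x}^2$, again independent of $\v$. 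For item~2, on $\sS_\v$ the difference is exactly $|h(\sin\x)|$, and since $|\v\trans\sin\x|\le \log d/\sqrt d$ on $\sS_\v$, I plug $x = \v\trans\sin\x$ into $h_1(x) = g_1(\mu x)$: because $g_1(x) = O(x^2)$ near $0$ (the lowest-order term is $16x^2$), we get $|h_1(\v\trans\sin\x)| \le O(\mu^2 (\log d/\sqrt d)^2) = \tilde O(1/d)$, while $|h_2|\le 1$; multiplying gives the claimed $\tilde O(1/d)$ bound.

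\textbf{Items 3 and 4.} For item~3 I would argue region by region. On $\sS_2$, $\sF(\x) = \norm{\sin\x}^2$, whose gradient is $2\sin\x\odot\cos\x = \sin(2\x)$; one checks this has norm bounded below by a constant on $\sS_2$ (the point is that $\sS_2$ stays away from the lattice $\pi\Z^d$ where $\sin(2\x)$ vanishes — being inside $[-\pi/2,\pi/2]^d$ but outside the small ball $\sS$ forces some coordinate of $\x$ to be bounded away from $0$, hence $\sin(2x_i)\ne 0$), so there is no stationary point, let alone an SOSP. On $\sS_\v$, I use that $\sF = h(\sin\x) + \norm{\sin\x}^2$ with $h$ tiny and essentially flat in the $\v$-direction: I would compute $\grad\sF$ and show the component along $\v$ (or along the sphere directions) is bounded away from $0$ throughout $\sS_\v$, using the quadratic-growth lower bound on $\norm{\sin\x}^2$ in directions transverse to $\v$ together with the smallness of $\grad h$; alternatively show $\lambda_{\min}(\hess\sF)$ stays above $-\sqrt{\rho\eps} = -1$ fails, i.e.\ there is a strictly negative-curvature or nonzero-gradient direction. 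The cleanest route is probably: in $\sS_\v$, either $\norm{\sin\x}$ is large enough that $\grad(\norm{\sin\x}^2)$ dominates $\grad h$, or $\x$ is near the origin where $\norm{\sin\x}^2 \approx \norm{\x}^2$ is strictly convex with Hessian $\approx 2\I$ while $\hess h$ contributes a direction of curvature $\le -c$ along $\v$ (from $g_1'' (0) = 32 > 0$... wait) — this is the delicate case and I return to it below. Finally item~4 is a smoothness bookkeeping exercise: $\norm{\sin\x}^2 \le d$ gives $O(d)$-boundedness (and $h$ contributes only $O(1)$); gradient- and Hessian-Lipschitzness of $\x\mapsto\norm{\sin\x}^2$ follow because $\sin$ and its derivatives are globally bounded by $1$, and for $\x\mapsto h(\sin\x)$ one uses the chain rule with the $O(1)$ bounds on $g_1, g_1', g_1'', g_2, g_2', g_2''$ and the extra $\mu = O(1)$ factors — all constants, so the composite is $O(1)$-gradient-Lipschitz and $O(1)$-Hessian-Lipschitz.

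\textbf{The main obstacle} is item~3 in the band $\sS_\v$, specifically near the origin: there the quadratic term $\norm{\sin\x}^2$ is a genuine local min with Hessian $\approx 2\I$, and I need $h(\sin\x)$ to carve out a negative-curvature direction (along $\v$) strong enough to beat $+2$ and push $\lambda_{\min}(\hess\sF)$ below $-\sqrt{\rho\eps}$, \emph{while} $|h|$ itself stays $\tilde O(1/d)$ on $\sS_\v$. This is exactly why $\mu = 300$ is large: $\hess(h\circ\sin)$ at $0$ scales like $\mu^2 \cdot g_1''(0)$-type terms, and I need to verify the sign and magnitude of the curvature that $h_1(\v\trans\cdot)\,h_2(\text{transverse})$ produces along $\v$ near the origin — i.e.\ that the true function $\sF$ has its only SOSP pushed out to the "informative" set $\sS - \sS_\v$, which should match the three properties flagged in the overview (Section~\ref{section:lower-bnd-overview}). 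I expect the honest version requires carefully splitting $\sS_\v$ into a near-origin piece (handled by the Hessian computation above) and an away-from-origin piece (handled by the gradient-lower-bound argument), with the crossover radius chosen as a constant multiple of $1/\mu$.
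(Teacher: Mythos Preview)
Your overall plan matches the paper's proof closely: the same region-by-region analysis for Properties~1--3 and the same chain-rule bookkeeping (the paper packages this as Lemma~\ref{lemma:h_function}) for Property~4. Two places in your sketch need repair.

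First, on $\sS_2$ (Property~3), your gradient-only argument fails near the faces of the hypercube: $\sin(2x_i)$ vanishes at $x_i = \pm\pi/2$, so as the boundary of $\sH$ is approached the gradient of $\norm{\sin\x}^2$ goes to zero. The paper therefore splits $\sS_2$ into $[-\xi,\xi]^d$ (with $\xi\approx 0.95$ the positive root of $\sin 2\xi = \xi$; there $\norm{\grad G(\x)}\ge\norm{\x}\ge 3/\mu$) and its complement (where some $|x_i|>\xi$, so $\lambda_{\min}(\hess G) = \min_i 2\cos(2x_i) < 2(\pi/4-\xi) < 0$ rules out an SOSP via the Hessian instead).

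Second, on $\sS_\v$ (Property~3), your ``wait'' is resolved by remembering that $h = h_1\cdot h_2$ and $h_2(0)=g_2(0)=-1$: the curvature of $h$ along $\v$ near the origin is $h_1''(y_1)\,h_2(z)$, and the negative $h_2$ flips the sign of $g_1''(0)=32>0$ into the large negative curvature you need. The paper's split is by the transverse coordinate $z=\sqrt{\norm{\y}^2-(\v\trans\y)^2}$: for $z<1/(2\mu)$ one has $h_2(z)\le g_2(1/2)<0$, hence $\partial^2 h/\partial y_1^2\le -10\mu^2$, overwhelming the $+2$ from $\norm{\sin\x}^2$. For $z\ge 1/(2\mu)$, the paper does \emph{not} argue that $\grad h$ is small and dominated by $\grad(\norm{\sin\x}^2)$; instead it observes that the component of $\grad h(\y)$ orthogonal to $\v$ equals $\frac{h_1(y_1)h_2'(z)}{z}\proj_{-1}(\y)$ with a \emph{nonnegative} scalar (since $g_1(x)=16x^2(1-|x|)^3\ge 0$ and $g_2'(x)=12x(1-x)^2\ge 0$), so it is aligned with the same component of $2\y$, and $\norm{\grad\sF}\ge z\cdot\min_i|\cos x_i|\gtrsim 1/(2\mu)$. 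Your ``dominates'' heuristic would not give this, as $\norm{\grad h}$ can be of order $\mu$.
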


These properties will be proved based on the properties of $h(\y)$, which we defined in \eqref{def:hardfunction} to be the product of two functions. 

	\begin{proof}
			\textbf{Property 1. }On $\sS_\v$, $\sf(\x) = \sF(\x) = \norm{\sin   \x}^2 $, which is independent of $\v$. On $\sS_2$, we argue that $h(\sin  \x) = 0$ and therefore $\sf(\x) =  \norm{\sin \x}^2 ~\forall \x \in \sS_2$. Note that on $\sS_2$, $\norm{\x} > 3/\mu$ and $(\sin x)^2 > (\frac{2x}{\pi} )^2 ~\forall |x| <  \frac{\pi}{2} $, so 
			\[\norm{\sin  \x} > \norm{\frac{2\x}{\pi }} > \frac{6}{\pi\mu} \implies \max\left\{ \v\trans \sin   \x, \sqrt{ \norm{\sin  \x}^2 - (\v\trans \sin \x)^2} \right\} > \frac{6}{\sqrt{2} \cdot\pi \mu} > \frac{1}{\mu}.\]
			
			Therefore, $h(\sin \x) = h_1(\v\trans \sin   \x)\cdot h_2(\sqrt{ \norm{\sin \x}^2 - (\v\trans \sin  \x)^2}) = 0$.
			
			\noindent
			\textbf{Property 2. } It suffices to show that for $\x \in \sS_\v$, $ | h(\sin  \x)| =  \tilde{O}( \frac{1}{d})$.
			
			For $\x \in \sS_\v$, we have $\v\trans \sin \x \in [- \frac{\log d}{\sqrt{d}},\frac{\log d}{\sqrt{d}}]$. By symmetry, we may just consider the case where $\v\trans\sin   \x  >0$.
			\begin{align*}
			|h(\sin  \x)| &\leq |h_1(\v\trans \sin  \x)| \\
			&=|\left(-16|x|^5+48x^4-48|x|^3+16x^2\right)| \text{ where $x = \mu \v\trans \sin  \x  $}\\
			&\leq C (\frac{\log d}{\sqrt{d}})^2 
			\\
			&\leq C\frac{\log^2 d}{d} 
			\end{align*}
			Here $C > 0$ is a large enough universal constant.

			\noindent
			\textbf{Property 3. }
			\textbf{(Part I.)}  We show that there are no SOSP in $\sS_2$. For $\x: \norm{\x} > 3/\mu$, we argue that either the gradient is large, due to contribution from $\norm{\sin \x}^2$, or the Hessian has large negative eigenvalue (points close to the boundary of $\sH$).
			Denote $G(\x) = \norm{\sin \x}^2 $. We may compute the gradient of $G$ as follows:
			\begin{align*}
			\frac{\partial}{\partial x_i} G(\x) &= 2\sin(x_i)\cos(x_i) = \sin(2x_i), \\
			\norm{\grad G(\x)} &= \sqrt{\sum_i (\sin(2 x_i))^2} \geq  \sqrt{\sum_i x_i^2} = \norm{\x} \text{ for all $\x \in [-\xi,\xi ]^d$},
			\end{align*}
			where $\xi \approx 0.95$ is the positive root of the equation $\sin 2x=x$. On $S_2$, $\grad \sF(\x) =  \grad G(\x)$, so $\norm{\grad \sF(\x)} > \frac{3}{\mu} =1 \times 10^{-2}$ for $\x \in S_2 \cap [-\xi ,\xi  ]^d$.
			We may also compute the Hessian of $G$:
			\begin{align*}
			\hess G(\x) &= diag(2\cos(2\x)), \\
			\lambda_{\min}(\hess G(\x)) &= \min_i\{2\cos(2x_i)\} \leq 2\cdot (\frac{\pi}{4}-x) <2\cdot (\frac{\pi}{4}-\xi)   ~\forall \x \in S_2 \setminus [-\xi ,\xi ]^d.
			\end{align*}
			
			Since $(\frac{\pi}{4}-\xi )< -0.15$, $\lambda_{\min}(\hess \sF(\x)) <-0.3$.
			
			\textbf{(Part II.)} We argue that  $\sF$ has no SOSP in $S_\v$. For $\y = \sin(\x)$, we consider two cases: (i) $z = \sqrt{\norm{\y}^2 - (\v\trans \y)^2}$ large and (ii) $z$ small.
			
			Write $g(\x) = h(\sin\x)$, and denote $\left.\grad h(\x)\right|_{\sin \x},\left.\hess h(\x)\right|_{\sin \x} $ with $\grad h(\y), \hess h(\y)$. Let $\u \circ \v$ denote the Schur product of $\u$ and $\v$. We may compute the gradient and Hessian of $g$:
			
			\[\grad g( \x) = \grad h(\y) \circ \cos( \x),\]
			\[\hess g(\x) = diag( \cos\x)) \hess h(\y)diag( \cos\x)) - \grad h(\y)  \circ \sin( \x). \]
			
			Now we change the coordinate system such that $\v = (1,0, \cdots, 0)$. $\norm{\grad h(\y)}$ and $\lambda_{\min}(\hess h(\y))$ are invariant to such a transform. 
			Under this coordinate system, $h(\y) = h_1(y_1)\cdot h_2(\sqrt{\norm{\y}^2-(y_1)^2})$

			(i): $z \geq \frac{1}{2\mu}$. We show that $\norm{\grad \sF}$ is large.
			
			Let $\proj_{-1}(\u)$ denote the projection of $\u$ onto the orthogonal component of the first standard basis vector.
			
			Since $\forall ~i \neq 1, \frac{\partial}{\partial y_i} h(\y) = h_1(y_1 ) h_2'(z)  \frac{y_i}{z }$, we have
			\begin{align*}
			\proj_{-1}(\grad h(\y)) &= \frac{h_1(y_1 ) h_2'(z)}{z} 	\proj_{-1}(\y) \text{ where } \frac{h_1(y_1 ) h_2'(z)}{z}  > 0\\
			\proj_{-1}(\grad \sF(\x)) &= \proj_{-1}(\grad g(\x) + \grad G(x)) = \proj_{-1}(\grad h(\y) \circ \cos(\x) + \y \circ \cos\x) \\
			&=\left(\frac{h_1(y_1 ) h_2'(z)}{z}	\proj_{-1}(\y) + \proj_{-1}(\y)\right)\circ \cos\x \\
			\norm{\grad \sF (\x)) }&\geq \norm{\proj_{-1}(\grad \sF(\x)) } \geq \norm{\proj_{-1}(\y) \circ \cos\x} \geq z\cdot \min_i|\cos(x_i)| \\
			& \geq  \frac{1}{2\mu}\cdot 0.999 \geq 1\times 10^{-3}~\text{ since $|x_i| \leq 3/\mu$ } 
			\end{align*}
			
			(ii): $z <  \frac{1}{2\mu} $. We show that $\hess \sF(\x)$ has large negative eigenvalue. First we compute the second derivative of $h$ in the direction of the first coordinate: \[\frac{\partial^2h}{\partial y_1^2}= h_2(z) h_1''(y_1) \leq 32\mu^2g_2(1/2) =-10\mu^2.\] 
			Now we use this to upper bound the smallest eigenvalue of $\hess \sF(\x)$.
			
			\begin{align*}
			\lambda_{\min}(\hess h(\y)) &\leq \min_i \frac{\partial^2}{\partial y_i^2} h(\y) \leq \frac{\partial^2h}{\partial y_1^2}\\
			\lambda_{\min}(\hess g(\x)) &\leq \left( \lambda_{\min}\left(  diag(\cos \x )  \hess h(\y)  diag(\cos \x) \right) + \lambda_{\max}\left(- diag(\grad h(\y) \circ \sin(\x)) \right) \right)\\
			&\leq  -0.999\cdot 10 \mu^2  + 0.01\cdot \max_{\y}(h_2(z)h_1'(y_1)+h_1(y_1)h_2'(z))~ \text{ since $|x_i| \leq 3/\mu$}\\
			&\leq -0.999\cdot 10 \mu^2  + 0.01\cdot3\mu \\
			\end{align*}
			Finally,
			\begin{align*}
			\hess G(\x) &= diag(2\cos(2\x)) \implies \norm{\hess G(\x)} \leq 2, \\
			\lambda_{\min}(\hess F(\x)) &\leq -0.999\cdot 10 \mu^2  + 0.01\cdot3\mu +2 \leq -8\times 10^5.
			\end{align*}
			
			\noindent
			\textbf{Property 4. } $O(1)$-bounded: Lemma \ref{lemma:h_function} shows that $|h(\y)| \leq 1$. $\norm{\sin \x}^2 \leq d$. Therefore $|\sF| \leq 1+d$.

			$O(1)$-gradient Lipschitz: $\norm{\hess \sF(\x)}  \leq  \norm{\hess G(\x)} +   \norm{\hess g(\x)} $. We know $\norm{\hess G(\x)} \leq 2$.		
			\begin{align*}
			\norm{\hess g(\x)} &=\norm{ diag(\cos \x )  \hess h(\y)  diag(\cos \x) - diag(\grad h(\y) \circ \sin(\x)) } \\
			&\leq \norm{diag(\cos \x ) }^2\cdot \norm{\hess h(\y)} + \norm{diag(\grad h(\y))}\cdot \norm{diag(\sin \x)}\\
			&\leq 1\cdot 68\mu^2 +3\mu\cdot 1\leq 7\times 10^6 ~\text{ using lemma \ref{lemma:h_function} } 
			\end{align*}
			
			$O(1)$-Hessian Lipschitz: First bound the Hessian Lipschitz constant of $G(\x)$.
			\[ \norm{\hess G(\x) - \hess G(\z)} \leq 2\norm{\cos(2\x) -\cos(2\z) } \leq 4 \norm{\x-\z}. \]
			
			Now we bound the Hessian Lipschitz constant of $g(\x)$. Denote $\A(\x) = diag(\cos \x ) $ and $\B(\x) = diag(\sin \x ) $.			
			\begin{align*}
			&\norm{ \A(\x_1 )  \hess h(\y_1)  \A(\x_1 )  -\A(\x_2 )  \hess h(\y_2) \A(\x_2 )  } \\
			\leq&\| \A(\x_1 )  \hess h(\y_1)  \A(\x_1 )  -  \A(\x_1 )  \hess h(\y_1)  \A(\x_2 ) \| + \| \A(\x_1 )  \hess h(\y_1)  \A(\x_2 )  -  \A(\x_1 )  \hess h(\y_2)  \A(\x_2 )\| \\
			&+	\A(\x_1 )  \hess h(\y_2)  \A(\x_2 )  - \A(\x_2 )  \hess h(\y_2) \A(\x_2 )  \| \\
			\leq & 68\mu^2 \norm{\x_1-\x_2} +1000\mu^3 \norm{\x_1-\x_2} +68\mu^2 \norm{\x_1-\x_2} \text{ from lemma \ref{lemma:h_function}}.\\
			\\
			&\norm{diag(\grad h(\y_1) )\B(\x_1)- diag(\grad h(\y_2)) \B(\x_2)} \\
			\leq & \norm{diag(\grad h(\y_1)) \B(\x_1)- diag(\grad h(\y_1) )\B(\x_2)}+ \norm{diag(\grad h(\y_1) )\B(\x_2)- diag(\grad h(\y_2)) \B(\x_2) } \\ 
			\leq & (3\mu + 68\mu^2)\norm{\x_1-\x_2} \text{ from lemma \ref{lemma:h_function}}.\\
			\\
			&\norm{\hess g(\x_1) - \hess g(\x_2) } \\
			=& (144\mu^3+ 204\mu^2+3\mu )\norm{\x_1-\x_2} .
			\end{align*}
			
			Therefore $\sF(\x)$ is $(2.8\times 10^{10})$-Hessian Lipschitz.
			
	\end{proof}

Now we need to prove smoothness properties of $h(\y)$ that are used in the previous proof. In the following lemma, we prove that $h(\y)$ as defined in equation \eqref{def:hardfunction} is bounded, Lipschitz, gradient-Lipschitz, and Hessian-Lipschitz.

\begin{lemma}[Properties of $h(\y)$] \label{lemma:h_function}
	$h(\y)$ as given in Definition \ref{def:hardfunction} is O(1)-bounded, O(1)-Lipschitz, O(1)-gradient Lipschitz, and O(1)-Hessian Lipschitz.
\end{lemma}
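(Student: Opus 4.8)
The plan is to establish each of the four properties---boundedness, Lipschitzness, gradient-Lipschitzness, and Hessian-Lipschitzness of $h(\y) = h_1(\v\trans\y)\cdot h_2(z)$ where $z = \sqrt{\norm{\y}^2 - (\v\trans\y)^2}$---by reducing everything to one-dimensional estimates on the scalar polynomials $g_1, g_2$ and their derivatives up to third order. First I would record that both $g_1$ and $g_2$ are $C^2$ functions with compact support in $[-1,1]$ (one checks that $g_1, g_1', g_1''$ and $g_2, g_2', g_2''$ all vanish at $|x| = 1$, which is exactly why the indicator does not destroy smoothness), and compute explicit absolute constant bounds $|g_i^{(k)}(x)| \le C_k$ for $k = 0,1,2$ and a Lipschitz bound on $g_i''$; since $h_i(x) = g_i(\mu x)$, the chain rule gives $\|h_i^{(k)}\|_\infty \le C_k \mu^k$ and a Hessian-Lipschitz-type constant $O(\mu^3)$. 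These are the numerical inputs ($68\mu^2$, $1000\mu^3$, $3\mu$, etc.) invoked in the proof of Lemma~\ref{lemma:scale_free_hard_function}.

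Next I would handle the composition with the geometry. Boundedness is immediate: $|h(\y)| \le \|h_1\|_\infty \|h_2\|_\infty = O(1)$. For the first and second derivatives, I would work (as in the proof above) in a rotated coordinate system where $\v = e_1$, so that $h(\y) = h_1(y_1) h_2(z)$ with $z = \|\y_{-1}\|$ (the norm of the remaining coordinates). The only subtlety is that $z \mapsto \|\y_{-1}\|$ is non-smooth at $\y_{-1} = 0$; I would argue that this is harmless because $h_2(x)$, $h_2'(x)/x$ and $h_2''(x)$ extend to smooth even functions near $0$ (indeed $g_2$ is an even polynomial in $x$ plus the constant, so $g_2(x) = p(x^2)$, making $h_2(z)$ a smooth function of $z^2 = \|\y_{-1}\|^2$, hence smooth in $\y_{-1}$). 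With that observation, $\grad h$ and $\hess h$ are genuine smooth vector/matrix fields, and one bounds their norms by the product rule, using $\|h_1^{(k)}\|_\infty, \|h_2^{(k)}\|_\infty = O(\mu^k)$ and $|y_1|, |z| \le \norm{\y}$, restricted to the relevant region (the support, where $\norm{\y} = O(1/\mu)$). This yields $\|\grad h\| = O(\mu)$ (so $O(1)$-Lipschitz) and $\|\hess h\| = O(\mu^2)$ (so $O(1)$-gradient-Lipschitz).

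For the Hessian-Lipschitz bound I would differentiate once more, or equivalently bound $\|\hess h(\y_1) - \hess h(\y_2)\|$ by a mean-value argument along the segment, controlling the third derivatives of $h_1(y_1)h_2(z)$: every term is a product of factors of the form $h_1^{(a)}(y_1)$, $h_2^{(b)}(z)$, and rational-in-$z$ prefactors arising from differentiating $z = \|\y_{-1}\|$, each bounded by $O(\mu^{a})$, $O(\mu^{b})$, and $O(1)$ respectively on the support, giving an overall $O(\mu^3) = O(1)$ bound. The main obstacle, and the only place requiring genuine care, is the non-smoothness of the map $\y \mapsto z$ at $z = 0$: one must verify that all the $1/z$ and $1/z^2$ prefactors that appear when differentiating $h_2(z)$ twice or three times are always multiplied by a corresponding power of $z$ coming from $h_2$'s Taylor expansion (equivalently, exploit that $h_2(z)$ is a polynomial in $z^2$), so that no true singularity occurs. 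Once that cancellation is made explicit, the rest is the routine product-rule bookkeeping that produces the absolute constants, and I would relegate those computations to the statement that each factor is $O(\mathrm{poly}(\mu))$ with $\mu = 300$ a fixed constant.
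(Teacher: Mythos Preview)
Your overall strategy---rotate so $\v = e_1$, split $h(\y) = h_1(y_1)h_2(\norm{\y_{-1}})$, record uniform bounds $|h_i^{(k)}| = O(\mu^k)$, and assemble $\grad h$, $\hess h$, and the Hessian-Lipschitz constant by the product rule---is exactly the paper's approach. The paper carries out precisely these computations and arrives at the constants $3\mu$, $68\mu^2$, $1000\mu^3$ that you anticipated.

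There is, however, a genuine error in how you propose to handle the singularity of $z = \norm{\y_{-1}}$ at the origin. You claim that ``$g_2$ is an even polynomial in $x$ plus the constant, so $g_2(x) = p(x^2)$,'' and hence that $h_2(\norm{\y_{-1}})$ is smooth in $\y_{-1}$. This is false: $g_2(x) = 3x^4 - 8|x|^3 + 6x^2 - 1$ contains the term $-8|x|^3$, which is even but is \emph{not} a polynomial in $x^2$. Consequently $h_2(z)$, restricted to $z \ge 0$, is the genuine cubic-containing polynomial $3\mu^4 z^4 - 8\mu^3 z^3 + 6\mu^2 z^2 - 1$, and $h_2(\norm{\w})$ has a $\norm{\w}^3$ term. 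The map $\w \mapsto \norm{\w}^3$ is only $C^2$, not $C^3$, at the origin, so your ``smoothness via $p(z^2)$'' cancellation argument does not go through.

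The paper resolves this differently: it writes out $\hess h_2(\w)$ explicitly as $24\mu^4\w\w\trans - 24\mu^3 \w\w\trans/\norm{\w} + (12\mu^4\norm{\w}^2 - 24\mu^3\norm{\w} + 12\mu^2)\I$, then computes $\nabla^3 h_2(\w)$ directly and checks that every term---including those like $\w\w\trans/\norm{\w}$ and $\norm{\w}\I$ coming from $\norm{\w}^3$---has bounded (though not continuous) third derivative, yielding $\norm{\nabla^3 h_2} \le 144\mu^3$. In other words, the Hessian of $\norm{\w}^3$ is Lipschitz even though it is not differentiable at $0$, and this must be verified by the explicit formula rather than by a smoothness-in-$z^2$ shortcut. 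Your proposal would be correct once you replace the erroneous $p(z^2)$ claim with this direct bounded-third-derivative computation.
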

	\begin{proof}
		WLOG assume $\v = (1,0, \cdots, 0)\trans$.
		Denote $u = y_1, \w = (y_2, \cdots, y_d)\trans$. Let $\otimes$ denote tensor product.
		
		Note that $|h_1'| \leq 3\mu, |h_2'| \leq 2 \mu , |h_1''| \leq 32\mu^2 , |h_2''| \leq 12\mu^2 , |h_1'''|\leq 300\mu^3, |h_2'''|\leq 48\mu^3 $. Assume $\mu > 2$.
		\begin{enumerate}
			\item 	O(1)-bounded: $|h| \leq |h_1|\cdot|h_2| \leq 1$.
			\item  O(1)-Lipschitz: $\norm{\grad h(\y)} = \sqrt{h_2(\norm{\w}) h_1'(u) + h_1(u) h_2'(\norm{\w}) } \leq 3\mu \leq O(1)$.
			\item	O(1)-gradient Lipschitz: 
			\[ \hess h(\y) = h_1(u)\hess h_2(\norm{\w}) + h_2(\norm{\w})\hess h_1(u) + \grad h_1(u) \grad h_2(\norm{\w})\trans +\grad h_2(\norm{\w}) \grad h_1(u)\trans.  \]
			$\norm{\grad h_1(u)} \leq 3\mu$. Notice that the following are also O(1):
			\[\norm{\grad h_2(\norm{\w})} = \norm{h_2'(\norm{\w}) \frac{\w}{\norm{\w}} } \leq 2\mu;\]
			\[\norm{\hess h_2(\w)} \leq \norm{h_2''(\norm{\w}) \frac{\w \w\trans}{\norm{\w}^2} }+ \norm{ h_2'(\norm{\w}) \frac{\norm{\w}^2 \I-\w\w\trans}{\norm{\w}^3}} \leq |h_2''(\norm{\w})| + |h_2'(\norm{\w})/\norm{\w}| \leq 12\mu^2 + 12 \mu.\]
			Therefore, $\norm{\hess h(\y) } \leq 24 \mu^2 + 32\mu^2 + 2\cdot2\mu \cdot  3\mu \leq 68 \mu^2$.
			\item O(1)-Hessian Lipschitz:
			We first argue that $\hess h_2(\w)$ is Lipschitz. For $\norm{\w} \geq  1/\mu$, $\hess h_2(\w) = 0$. So we consider $\norm{\w} < \mu$. We obtain the following by direct computation.
			\begin{align*}
			\hess h_2(\w) &= h_2''(\norm{\w}) \frac{\w \w\trans}{\norm{\w}^2}+h_2'(\norm{\w}) \frac{ \norm{\w}^2\I-\w\w\trans }{\norm{\w}^3} \\
			&=24 \mu^4 \w\w\trans - 24 \mu^3\frac{\w\w\trans }{\norm{\w}} + (12\mu^4\norm{\w}^2 - 24\mu^3\norm{\w} + 12\mu^2) \I \\
			\nabla^3 h_2(\w) &= 24\mu^4(\w\otimes\I + \I \otimes \w) -24\mu^3\frac{\norm{\w}^2(\w\otimes\I + \I \otimes \w) - \w\otimes\w\otimes\w }{\norm{\w}^3} \\
			&+ 24 \mu^4 \I\otimes \w - 24\mu^3 \frac{\I \otimes \w }{\norm{\w}}  \\
			\norm{\nabla^3 h_2(\w) } &\leq 48\mu^3 + 72\mu^2 + 24\mu^3+24\mu^2 \leq 144\mu^3
			\end{align*}
			
			We may easily check that indeed $\lim_{\norm{\w}\to 1}	\norm{\hess h_2(\w) }= 0$.
			
			Therefore $\hess h_2(\w)$ is $144\mu^3$-Lipschitz.			
			\begin{align*}
			\norm{h_1(u_1)\hess h_2(\norm{\w_1})-h_1(u_2)\hess h_2(\norm{\w_1})  } &\leq (144\mu^3+3\mu\cdot 24\mu^2) \norm{\y_1-\y_2}\\
			\norm{h_2(\norm{\w_1})\hess h_1(u_1)-h_2(\norm{\w_2})\hess h_1(u_2) } &\leq (32\mu^2\cdot 2\mu+ 300\mu^3  )\norm{\y_1-\y_2}\\
			\norm{\grad h_1(u_1) \grad h_2(\norm{\w_1})\trans-\grad h_1(u_2) \grad h_2(\norm{\w_2})\trans } &\leq (3\mu\cdot 24\mu^2 + 2\mu\cdot 32\mu^2)\norm{\y_1-\y_2}
			\end{align*}
			
			By triangle inequality, using the above, we obtain \[			\norm{\hess h(\y_1) - \hess h(\y_2) }\leq 1000 \mu^3 \norm{\y_1-\y_2}.\]
			
			This proves that $h(\y)$ is $1000 \mu^3$-Hessian Lipschitz.
		\end{enumerate}
		
	\end{proof}

\subsection{Scaling the Hard Instance}\label{section:hard_instance_scaled}

Now we show how to scale the function we described in order to achieve the final lower bound with correct dependencies on $\epsilon$ and $\rho$.

	Given any $\epsilon, \rho > 0$, define
	\begin{equation}
\tF(\x) = \epsilon r\sF(\frac{1}{r}\x), \tf(\x) = \epsilon r\sf(\frac{1}{r}\x), \label{def:hardfunction:scaled}
	\end{equation} 
	where $r = \sqrt{\epsilon/\rho}$ and $\sF, \sf$ are defined as in Equation~\ref{def:hardfunction}. Define the `scaled' regions:
	
	\begin{itemize}
		\item $\tS = \{\x \in \R^d: \norm{\x} \leq 3r/\mu \}$ be the $d$-dimensional ball with radius $3r/\mu$.
		\item $\tH = [-\frac{\pi}{2}r ,\frac{\pi}{2}r ]^d$ be the $d$-dimensional hypercube with side length $\pi r$.
		\item $\tS_\v = \{\x\in \tS: \inner{\sin \frac{1}{r}\x}{\v} \leq \frac{\log d}{\sqrt{d}}  \}$.
		\item $\tS_2 = \tH - \tS$.
	\end{itemize}

Defined as above, $(\tF, \tf)$ satisfies the properties stated in lemma \ref{lemma:hardfunction}, which makes it hard for any algorithm to optimize $\tF$ given only access to $\tf$. 


\begin{lemma}\label{lemma:hardfunction}
	Let $\tF, \tf, \v, \tS_2, \tS_\v$ be as defined in \ref{def:hardfunction:scaled}.
	Then for any $\epsilon, \rho > 0$, $F, f$ satisfies:
	\begin{enumerate}
		\item $\tf$ in the non-informative region $\tS_2 \cup \tS_\v$ is independent of $\v$.
		\item $\sup_{x\in \tS_{\v}} |\tf-\tF| \leq \frac{\epsilon^{1.5}}{\sqrt{\rho}d}$ up to poly-$\log{d}$ and constant factors.
		\item $\tF$ has no $O(\epsilon)$-SOSP in the non-informative region $\tS_2 \cup \tS_\v$.
		\item $\tF$ is $B$-bounded, $O(\rho)$-Hessian Lipschitz, and $O(\ell)$-gradient Lipschitz.
	\end{enumerate}
\end{lemma}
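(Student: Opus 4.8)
The plan is to reduce every claim to Lemma~\ref{lemma:scale_free_hard_function} by transporting it through the affine change of variables $\phi(\x) = \x/r$ together with the multiplicative rescaling by $\epsilon r$, where $r = \sqrt{\epsilon/\rho}$. Since $\tF = \epsilon r\,(\sF\circ\phi)$ and $\tf = \epsilon r\,(\sf\circ\phi)$, the chain rule gives the clean identities $\grad\tF(\x) = \epsilon\,\grad\sF(\x/r)$, $\hess\tF(\x) = (\epsilon/r)\,\hess\sF(\x/r)$, and $\nabla^3\tF(\x) = (\epsilon/r^2)\,\nabla^3\sF(\x/r)$ (the same identities hold with $\sf$ in place of $\sF$). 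Moreover $\phi$ is a bijection carrying each scaled region onto its scale-free counterpart: $\tS = r\sS$, $\tH = r\sH$, $\tS_\v = r\sS_\v$, $\tS_2 = r\sS_2$; in particular $\x\in\tS_\v$ iff $\langle\sin(\x/r),\v\rangle\le\log d/\sqrt{d}$ and $\x/r\in\sS$. All four properties then follow by bookkeeping the powers of $\epsilon,\rho$ produced by $\epsilon r = \epsilon^{3/2}/\sqrt{\rho}$, $\epsilon/r = \sqrt{\rho\epsilon}$, and $\epsilon/r^2 = \rho$.

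Properties 1 and 2 are essentially immediate. On $\tS_2\cup\tS_\v = r(\sS_2\cup\sS_\v)$ we have $\tf(\x) = \epsilon r\,\sf(\x/r)$ with $\x/r\in\sS_2\cup\sS_\v$, where $\sf$ is independent of $\v$ by scale-free Property 1; multiplying by the deterministic constant $\epsilon r$ preserves independence. Property 2 is a one-line rescaling: $\sup_{\x\in\tS_\v}|\tf(\x)-\tF(\x)| = \epsilon r\sup_{\y\in\sS_\v}|\sf(\y)-\sF(\y)| \le \epsilon r\cdot\tilde{O}(1/d) = \tilde{O}(\epsilon^{3/2}/(\sqrt{\rho}\,d))$, which is the claimed bound up to constant and poly-logarithmic factors.

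For Property 3 I would transport the two-case dichotomy proven in the scale-free case: for every $\y=\x/r\in\sS_2\cup\sS_\v$, either $\norm{\grad\sF(\y)}\ge c_1$ or $\lambda_{\min}(\hess\sF(\y))\le -c_2$ for universal constants $c_1,c_2>0$. The identities above then give $\norm{\grad\tF(\x)}\ge c_1\epsilon$ in the first case and $\lambda_{\min}(\hess\tF(\x)) = (\epsilon/r)\lambda_{\min}(\hess\sF(\x/r))\le -c_2\sqrt{\rho\epsilon}$ in the second, so choosing the constant hidden in ``$O(\epsilon)$-SOSP'' below $\min\{c_1,c_2^2\}$ rules out any such point in $\tS_2\cup\tS_\v$. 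For Property 4: $|\tF(\x)| = \epsilon r|\sF(\x/r)| \le \epsilon r\cdot O(d) = O(\epsilon^{3/2}d/\sqrt{\rho})$, which is $\le B$ exactly when $\epsilon\le O((B^2\rho/d^2)^{1/3})$; the gradient-Lipschitz constant is $\norm{\hess\tF}\le(\epsilon/r)\cdot O(1) = O(\sqrt{\rho\epsilon})$, which is $O(\ell)$ precisely when $\epsilon\le O(\ell^2/\rho)$; and the Hessian-Lipschitz constant is $(\epsilon/r^2)\cdot O(1) = O(\rho)$ with no restriction on $\epsilon$. Intersecting the two constraints recovers exactly the range $\epsilon\in(0,\epsilon_0]$ with $\epsilon_0 = \Theta(\min\{\ell^2/\rho,\,(B^2\rho/d^2)^{1/3}\})$ of Theorem~\ref{thm:lowerbound_informal}.

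There is no deep obstacle: all the substance is in Lemma~\ref{lemma:scale_free_hard_function}, and this scaling step is routine bookkeeping. The one point that requires care is matching the \emph{two different} slacks in the definition of an $\epsilon$-SOSP — the first-order slack must degrade like $\epsilon$ while the second-order slack must degrade like $\sqrt{\rho\epsilon}$ — and it is precisely the choice $r=\sqrt{\epsilon/\rho}$ that makes $\grad$ pick up a factor $\epsilon$ while $\hess$ picks up a factor $\epsilon/r=\sqrt{\rho\epsilon}$, so that both optimality relaxations and both regularity cutoffs ($B$-bounded and $\ell$-gradient-Lipschitz) line up simultaneously. A secondary check is that $\phi$ genuinely maps each scaled region onto its scale-free version, in particular that the $\sin(\x/r)$ inside the definition of $\tS_\v$ matches the $\sin\y$ inside $\sS_\v$, so that no property is distorted in transit.
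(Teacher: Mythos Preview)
Your proposal is correct and takes essentially the same approach as the paper: both reduce every claim to Lemma~\ref{lemma:scale_free_hard_function} via the change of variables $\x\mapsto\x/r$ and the multiplicative factor $\epsilon r$, then read off the scaling factors $\epsilon$, $\epsilon/r=\sqrt{\rho\epsilon}$, and $\epsilon/r^2=\rho$ for the gradient, Hessian, and Hessian-Lipschitz constant respectively. Your write-up is in fact more thorough than the paper's terse proof---in particular, you explicitly derive how the boundedness and gradient-Lipschitz constraints force $\epsilon_0=\Theta(\min\{\ell^2/\rho,(B^2\rho/d^2)^{1/3}\})$, which the paper's proof of this lemma leaves implicit.
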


	\begin{proof}
		This is implied by Lemma \ref{lemma:scale_free_hard_function}. To see this, notice
		
		\begin{enumerate}
			\item We have simply scaled each coordinate axis by $r$.
			\item $|\tF - \tf| = \epsilon r |\sF - \sf| = \frac{\epsilon^{1.5}}{\sqrt{\rho}} |\sF - \sf|$.
			\item $\norm{\grad \tF} = \epsilon \norm{\grad \sF} $ and $\norm{\hess \tF} = \sqrt{\rho \epsilon } \norm{\hess \sF} $. Since $\sF$ has no $1\times 10^{-12}$-SOSP in $S_2 \cup S_\v$. Taking into account the Hessian Lipschitz constant of $\sF$, $\tF$ has no $\frac{\epsilon}{10^{12}}$-SOSP in $\tS_2 \cup \tS_\v$.
			\item We must have $B > d+\frac{\epsilon^{1.5}}{\sqrt{\rho}}$. Then, $\tF$ is $B$-bounded, $(7\times 10^6)\sqrt{\rho \epsilon}$- gradient Lipschitz, and $(2.8\times 10^{10})\rho$-Hessian Lipschitz.
			
		\end{enumerate}
	\end{proof}

\subsection{Proof of the Theorem}\label{app:proof:lowerbound}

We are now ready to state the two main lemmas used to prove Theorem \ref{thm:lowerbound_informal}.

The following lemma uses the concentration of measure in higher dimensions to argue that the probability that any fixed point lies in the informative region $\tS_\v$ is very small.

\begin{lemma}[Probability of landing in informative region]\label{lemma:fixed_x}
	For any arbitrarily fixed point $\x \in \tS$, $	\Pr(\x \notin \tS_\v) \leq 2e^{-(\log d)^2/2} $.
\end{lemma}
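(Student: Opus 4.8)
The plan is to reduce this to a standard Gaussian concentration statement. Fix $\x \in \tS$; we need to bound the probability (over the random choice of $\v$, uniform on the unit sphere $S^{d-1}$) that $\x \in \tS_\v$, i.e., that $|\langle \sin(\x/r), \v\rangle| \le \log d/\sqrt d$. Equivalently, I want to show this event is unlikely, so that $\Pr(\x \notin \tS_\v)$ is small — wait, the direction of the inequality in the lemma says $\Pr(\x \notin \tS_\v) \le 2e^{-(\log d)^2/2}$, meaning it is the \emph{complement} event $|\langle \sin(\x/r),\v\rangle| > \log d/\sqrt d$ that is rare. So the real content is: for a fixed unit-ish vector $\w = \sin(\x/r)$ and random $\v$, the inner product $\langle \w, \v\rangle$ concentrates near $0$ at scale $1/\sqrt d$, and exceeds $\log d/\sqrt d$ only with probability $\le 2e^{-(\log d)^2/2}$.

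First I would record that $\norm{\w} = \norm{\sin(\x/r)} \le \sqrt d$ trivially (each coordinate of $\sin$ is bounded by $1$), so $|\langle \w,\v\rangle| \le \sqrt d\, |\langle \hat\w, \v\rangle|$ where $\hat\w = \w/\norm{\w}$ is a fixed unit vector. Hence it suffices to show $\Pr_\v(|\langle \hat\w,\v\rangle| > \log d/d) \le 2 e^{-(\log d)^2/2}$. By rotational invariance of the uniform distribution on the sphere, $\langle \hat\w,\v\rangle$ has the same distribution as $v_1$, the first coordinate of a uniform random point on $S^{d-1}$. The standard fact I would invoke (Levy's lemma / concentration of the spherical measure, or equivalently the representation $v_1 \overset{d}{=} g_1/\norm{\g}$ with $\g\sim\mathcal N(0,\I_d)$) is that $\Pr(|v_1| > t) \le 2 e^{-d t^2/2}$ for the relevant range of $t$; plugging $t = \log d/d$ — hmm, that gives exponent $-d(\log d)^2/(2d^2) = -(\log d)^2/(2d)$, which is too weak. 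Let me recheck the scaling: actually I should be more careful about $\norm{\w}$. The point $\x \in \tS$ has $\norm{\x} \le 3r/\mu$, so $\norm{\x/r} \le 3/\mu$ is small, and then $\norm{\sin(\x/r)}$ is \emph{close to} $\norm{\x/r}$, which is at most $3/\mu = O(1)$, not $\sqrt d$. So in fact $\norm{\w} = O(1)$, and $|\langle\w,\v\rangle| = O(1)\cdot|v_1|$ in distribution, and I need $\Pr(|v_1| > c\log d/\sqrt d) \le 2e^{-(\log d)^2/2}$, which follows from $\Pr(|v_1|>t)\le 2e^{-dt^2/2}$ with $t \asymp \log d/\sqrt d$ giving exponent $\asymp (\log d)^2/2$. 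Good — that is the right bound.

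So the key steps in order are: (1) observe $\norm{\sin(\x/r)} \le \norm{\x/r} \le 3/\mu$ for $\x \in \tS$ (using $|\sin u|\le |u|$ coordinatewise), hence bound $|\langle \sin(\x/r),\v\rangle|$ by $(3/\mu)|\langle \hat\w,\v\rangle|$ with $\hat\w$ a fixed unit vector; (2) use rotational invariance to replace $\langle\hat\w,\v\rangle$ by $v_1$ for $\v$ uniform on $S^{d-1}$; (3) apply the standard spherical concentration inequality $\Pr(|v_1| > t) \le 2e^{-(d-1)t^2/2}$ (provable via the Gaussian representation $v_1 = g_1/\norm{\g}$ together with a tail bound on $\norm{\g}$, or cited directly), with $t = (\mu/3)(\log d/\sqrt d) \ge \log d/\sqrt d$ up to the constant; (4) conclude the exponent is at least $(\log d)^2/2$ (absorbing the constant $\mu/3 \ge 1$ into the bound, or tracking it explicitly since $\mu = 300$ makes it only help), yielding $\Pr(\x \notin \tS_\v) = \Pr(|\langle\sin(\x/r),\v\rangle| > \log d/\sqrt d) \le 2e^{-(\log d)^2/2}$.

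The main obstacle is purely bookkeeping: getting the constant in the exponent to come out as exactly $1/2$ (rather than, say, $1/4$) requires being slightly careful about whether one uses $d$ or $d-1$ in the spherical tail bound and about the $\mu/3$ factor — but since $\mu/3 = 100 \gg 1$ the geometric factor only strengthens the bound, and the $(d-1)$-vs-$d$ discrepancy is immaterial for $d$ large, so it is safe to state the clean bound $2e^{-(\log d)^2/2}$. There is no genuine mathematical difficulty here; the lemma is a direct application of measure concentration on the sphere, and the only thing to be careful about is that $\tS$ is the \emph{small} ball of radius $3r/\mu$, which is exactly what makes $\norm{\sin(\x/r)} = O(1)$ and hence makes the bound independent of the ambient scaling $r$.
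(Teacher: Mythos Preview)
Your approach is correct and essentially identical to the paper's: bound $\norm{\sin(\x/r)}$ using $|\sin u|\le|u|$ and $\norm{\x}\le 3r/\mu$, normalize to a unit vector, and apply spherical-cap concentration $\Pr(|\langle\hat\w,\v\rangle|>t)\le 2e^{-dt^2/2}$ with $t=\log d/\sqrt d$. The only cosmetic difference is that the paper uses the looser bound $\norm{\sin(\x/r)}\le \norm{\x/r}\le 3/\mu \le 1$ (since $\mu\ge 3$), which gives the threshold $t=\log d/\sqrt d$ and hence the exponent $(\log d)^2/2$ on the nose, without your bookkeeping remark about the extra factor $\mu/3\ge 1$.
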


\begin{proof} Recall the definition of $\tS_\v$: $\tS_\v = \{\x\in \tS: \inner{\sin \frac{1}{r}\x}{\v} \leq \frac{\log d}{\sqrt{d}}  \}$.
Since $\x \in \tS$, we have $\norm{\x} \le 3r/\mu \le r$ (as $\mu \ge 3$). Therefore, by inequality $|\sin \theta| \le |\theta|$, we have:
\begin{equation*}
\norm{\sin \frac{\x}{r}}^2 = \sum_{i=1}^d |\sin \frac{x^{(i)}}{r}|^2\le \sum_{i=1}^d |\frac{x^{(i)}}{r}|^2 \le 1.
\end{equation*}
Denote unit vector $\hat{\y} = \sin \frac{\x}{r} / \norm{\sin \frac{\x}{r}}$. This gives:
	\begin{align*}
		\Pr(\x \notin \tS_\v) &= \Pr(|\inner{\sin \frac{\x}{r}}{\v}| \geq\frac{\log d }{\sqrt{d}}) 
		= \Pr(|\inner{\hat{\y}}{\v}| \geq\frac{\log d }{\sqrt{d} \norm{\sin \frac{\x}{r}}} )\\
		&\le \Pr(|\inner{\hat{\y}}{\v}| \geq\frac{\log d }{\sqrt{d}} )   \text{ ~~since~} \norm{\sin \frac{\x}{r}} \le 1\\
		&=\frac{\text{Area}(\{\u: \norm{\u} =1, |\inner{\hat{\y}}{\u}| > \frac{\log d }{\sqrt{d}} \})}{\text{Area}(\{\u:  \norm{\u} =1\})} \\
		&\leq 2e^{-(\log d)^2/2} \text{ by lemma }\ref{lemma_sa_conc_sphere} 
	\end{align*}
	This finishes the proof.
	\end{proof}

Thus we know that for a single fixed point, the probability of landing in $\tS_\v$ is less than $2(1/d)^{\log d/2}$. We note that this is smaller than $1/\poly(d)$.
The following lemma argues that even for a possibly adaptive sequence of points (of polynomial size), the probability that any of them lands in $\tS_\v$ remains small, as long as the query at each point does not reveal information about~$\tS_\v$.

\begin{lemma}[Probability of adaptive sequences landing in the informative region]\label{lemma:high_prob}
Consider a sequence of points and corresponding queries with size $T$: $\{(\x_i, q(\x_i))\}_{i=1}^T$, where the sequence can be adaptive, i.e. $\x_t$ can depend on all previous history $\{(\x_i, q(\x_i))\}_{i=1}^{t-1}$. Then as long as $q(\x_i) \perp \v ~|~ \x_i \in \tS_\v$, we have $\Pr(\exists t \le T: \x_{t} \not\in \tS_\v) \le 2Te^{-(\log d)^2/2}$.
\end{lemma}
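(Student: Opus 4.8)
The plan is to decompose the bad event according to the first step at which an iterate leaves $\tS_\v$, bound the probability of each such ``first exit'' by $2e^{-(\log d)^2/2}$, and sum over the $T$ steps. The only nontrivial point is that the iterate $\x_t$ is adaptive and hence a priori depends on $\v$ through the past responses, so Lemma~\ref{lemma:fixed_x} (which assumes the query point is independent of $\v$) cannot be applied to $\x_t$ directly; I would get around this with a coupling argument.

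First I would set up notation: write $A_t = \{\x_1,\dots,\x_t \in \tS_\v\}$ for the event that the first $t$ queries all land in the band, with $A_0$ the whole space. Conditioning on the first exit time yields the disjoint decomposition
\[
\{\exists\, t\le T:\ \x_t\notin \tS_\v\}\;=\;\bigsqcup_{t=1}^{T}\bigl(A_{t-1}\cap\{\x_t\notin \tS_\v\}\bigr),
\]
so by a union bound it suffices to show $\Pr\bigl(A_{t-1}\cap\{\x_t\notin \tS_\v\}\bigr)\le 2e^{-(\log d)^2/2}$ for each $t$.

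For the coupling, I would model the algorithm through its internal randomness $\omega$ (independent of $\v$), so that $\x_1=\phi_1(\omega)$ and $\x_t=\phi_t\bigl(\omega,\,q(\x_1),\dots,q(\x_{t-1})\bigr)$ for fixed measurable maps $\phi_t$. By the hypothesis $q(\x_i)\perp\v\mid \x_i\in\tS_\v$, the oracle restricted to the band is a fixed, $\v$-independent function $g$, so that $q(\x)=g(\x)$ whenever $\x\in\tS_\v$. Then I would define the ``$\v$-free'' trajectory $\y_1=\phi_1(\omega)$, $\y_t=\phi_t\bigl(\omega,\,g(\y_1),\dots,g(\y_{t-1})\bigr)$, which is a deterministic function of $\omega$ alone, and show by a short induction on $s$ that on the event $\{\y_1,\dots,\y_s\in\tS_\v\}$ one has $\x_i=\y_i$ for all $i\le s+1$; in particular $A_{t-1}=\{\y_1,\dots,\y_{t-1}\in\tS_\v\}$ and $\x_t=\y_t$ on $A_{t-1}$. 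Hence
\[
\Pr\bigl(A_{t-1}\cap\{\x_t\notin\tS_\v\}\bigr)\le\Pr(\y_t\notin\tS_\v)=\E_\omega\!\left[\Pr_{\v}\bigl(\y_t(\omega)\notin\tS_\v\bigr)\right]\le 2e^{-(\log d)^2/2},
\]
where the last bound applies Lemma~\ref{lemma:fixed_x} pointwise in $\omega$: once $\omega$ is fixed, $\y_t(\omega)$ is a fixed point of $\tS$ while $\v$ is still uniform on the sphere. Summing over $t=1,\dots,T$ then gives the claim.

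The main obstacle is exactly this decoupling step: one must argue carefully that, as long as the trajectory has not yet left $\tS_\v$, it is indistinguishable from the $\v$-free process $\y_t$, which is what licenses the concentration estimate of Lemma~\ref{lemma:fixed_x}. A minor technical wrinkle is ensuring every query point lies in $\tS$ so that Lemma~\ref{lemma:fixed_x} applies; I would handle this by recalling that the full non-informative region is $\tS_2\cup\tS_\v$, that $\tf$ is $\v$-independent on the padding $\tS_2$ (and periodic outside $\tH$), and running the same coupling with $\tS_\v$ replaced by $\tS_2\cup\tS_\v$ in the bookkeeping.
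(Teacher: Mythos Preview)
Your proof is correct and takes a genuinely different route from the paper. The paper works on the side of $\v$: it writes $\Pr(\forall t\le T:\x_t\in\tS_\v)$ as a telescoping product of conditional probabilities, argues that conditioned on $E=\{\forall\tau<t:\x_\tau\in\tS_\v\}$ the posterior of $\v$ is uniform on $\mathbb{S}^{d-1}\setminus\bigcup_{i<t}D_i$ (where $D_i=\{\v:|\langle\sin(\x_i/r),\v\rangle|>\log d/\sqrt{d}\}$), and obtains the exact identity $\Pr(\exists t\le T:\x_t\notin\tS_\v)=\text{Area}(\cup_{i\le T}D_i)/\text{Area}(\mathbb{S}^{d-1})$ before applying a union bound on spherical caps. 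Your coupling instead works on the side of the algorithm: you freeze the algorithm's randomness $\omega$, build the $\v$-free trajectory $\y_t$, and reduce to Lemma~\ref{lemma:fixed_x} applied to a single deterministic point. Your approach is the standard ``simulation argument'' in query lower bounds and makes the decoupling of $\x_t$ from $\v$ fully explicit; the paper's posterior-on-$\v$ computation is more geometric and yields an exact area formula for the good event, but its claim that $\v$ is conditionally uniform given $E$ implicitly relies on the same fact you prove by induction---that on $E$ the $\x_i$ (and hence the $D_i$) are functions of $\omega$ alone. Your handling of the $\tS$-membership wrinkle via the larger non-informative region $\tS_2\cup\tS_\v$ is also appropriate and matches how the paper ultimately deploys the lemma.
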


\begin{proof}
Clearly $\Pr(\exists t \le T: \x_{t} \not\in \tS_\v) = 1- \Pr(\forall t \le T: \x_{t} \in \tS_\v)$. By product rule, we have:
\begin{equation*}
\Pr(\forall t \le T: \x_{t} \in \tS_\v) = \prod_{t=1}^T \Pr(\x_{t} \in \tS_\v | \forall \tau < t: \x_{\tau} \in \tS_\v).
\end{equation*}

Denote $D_i = \{\v \in \mathbb{S}^{d-1}| \inner{\sin \frac{1}{r}\x_i}{\v} > \frac{\log d}{\sqrt{d}}\}$, where $\mathbb{S}^{d-1}$ denotes the unit sphere in $\R^d$ centered at the origin.
Clearly, $\v \not\in D_i$ is equivalent to $\x_i \in \tS_\v$.
Consider term $\Pr(\x_{t} \in \tS_\v | \forall \tau < t: \x_{\tau} \in \tS_\v)$. Conditioned on the event that $E = \{\forall \tau < t: \x_{\tau} \in \tS_\v\}$, we know $\v \in \mathbb{S}^{d-1} - \cup_{i=1}^{t-1} D_i$. On the other hand, since $q(\x_\tau) \perp \v |E$ for all $\tau < t$, therefore, conditioned on event $E$, $\v$ is uniformly distributed  over $\mathbb{S}^{d-1} - \cup_{i=1}^{t-1} D_i$, and:
\begin{equation*}
\Pr(\x_{t} \in \tS_\v | \forall \tau < t: \x_{\tau} \in \tS_\v)
=\frac{\text{Area}(\mathbb{S}^{d-1} - \cup_{i=1}^{t} D_i)}{\text{Area}(\mathbb{S}^{d-1} - \cup_{i=1}^{t-1} D_{i})}.
\end{equation*}
Thus by telescoping:
\begin{equation*}
\Pr(\forall t \le T: \x_{t} \in \tS_\v) = \prod_{t=1}^T\frac{\text{Area}(\mathbb{S}^{d-1} - \cup_{i=1}^{t} D_i)}{\text{Area}(\mathbb{S}^{d-1} - \cup_{i=1}^{t-1} D_{i})} = \frac{\text{Area}(\mathbb{S}^{d-1} - \cup_{i=1}^{T} D_i)}{\text{Area}(\mathbb{S}^{d-1})}.
\end{equation*}
This gives:
\begin{align*}
\Pr(\exists t \le T: \x_{t} \not\in \tS_\v) =& 1- \Pr(\forall t \le T: \x_{t} \in \tS_\v)= \frac{\text{Area}(\cup_{i=1}^{T} D_i)}{\text{Area}(\mathbb{S}^{d-1})}\\
\le & \sum_{i=1}^T\frac{\text{Area}(D_i)}{\text{Area}(\mathbb{S}^{d-1})} \le T \max_i\frac{\text{Area}(D_i)}{\text{Area}(\mathbb{S}^{d-1})}
= T \max_\x \Pr(\x\not\in \tS_\v) \le 2Te^{-(\log d)^2/2}.
\end{align*}
In last inequality, we used Lemma \ref{lemma:fixed_x}, which finishes the proof.
\end{proof}


	

Now we have all the ingredients to prove Theorem \ref{thm:lowerbound_informal}, restated below more formally.

\begin{theorem}[Lower bound]\label{theorem:lowerbound}
	For any $B >0, \ell >0, \rho > 0$, there exists $\epsilon_0 = \Theta(\min\{\ell^2/\rho, (B^2 \rho)^{1/3}\})$ so that for any $\epsilon \in (0, \epsilon_0]$, there exists a function pair ($F, f$) satisfying Assumption \ref{assump} with $\nu = \tilde{\Theta}(\sqrt{\epsilon^3/\rho} \cdot (1/d))$, so that any algorithm will fail, with high probability, to find SOSP of $F$ given only $o(d^{\sqrt{\log d}})$ of zero-th order queries of $f$.
	
\end{theorem}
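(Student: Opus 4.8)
The plan is to use the randomized hard pair $(\tF,\tf)$ of \eqref{def:hardfunction:scaled}, with $r=\sqrt{\epsilon/\rho}$ and the special direction $\v$ drawn uniformly from the unit sphere $\mathbb{S}^{d-1}$, and to show that any algorithm making only $o(d^{\sqrt{\log d}})$ zeroth-order queries of $\tf$ cannot locate the thin, $\v$-dependent ``informative'' region that, as it turns out, contains all the approximate stationary points of $\tF$. First I would fix $B,\ell,\rho$ and read off from Lemma~\ref{lemma:hardfunction} the admissible range of $\epsilon$: boundedness of $\tF$ (whose magnitude scales like $(\epsilon^{1.5}/\sqrt{\rho})$ times the unscaled $O(d)$ bound) and gradient-Lipschitzness (constant $\sim\sqrt{\rho\epsilon}$) cut $\epsilon$ off at $\epsilon_0 = \Theta(\min\{\ell^2/\rho,(B^2\rho/d^2)^{1/3}\})$, while Hessian-Lipschitzness is automatically $O(\rho)$; Lemma~\ref{lemma:hardfunction}(2) then pins $\nu = \sup_{\x}|\tf(\x)-\tF(\x)| = \tilde{\Theta}(\sqrt{\epsilon^3/\rho}\cdot(1/d))$, so $(\tF,\tf)$ meets Assumption~\ref{assump} with exactly the stated parameters.

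Next I would record, combining Lemma~\ref{lemma:hardfunction}(1),(3) with the periodic extension described in Section~\ref{section:lower-bnd-overview} (the $\|\sin\x\|^2$ term together with the vanishing of $h$, its gradient, and its Hessian away from the bump ensures smoothness and the absence of spurious critical points at the cell boundaries), the two facts that drive the argument: (a) every $O(\epsilon)$-SOSP of $\tF$ lies in the informative region $\R^d\setminus(\tS_2\cup\tS_\v)$, i.e.\ per cell the non-informative part carries no $O(\epsilon)$-SOSP; and (b) for any $\x$ in the non-informative region $\tS_2\cup\tS_\v$, the value $\tf(\x)$ is a fixed function of $\x$ not depending on $\v$. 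Since queries in the ``padding'' $\tS_2$ are non-informative for every $\v$, I may assume the algorithm only queries points whose cell-reduced location lies in the ball $\tS$, so that Lemma~\ref{lemma:fixed_x} applies to each such query point.

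The heart of the proof is then a two-sided concentration argument. By conditioning on the algorithm's internal coins it suffices to treat a deterministic adaptive query strategy and average over $\v$ at the end. With $T = o(d^{\sqrt{\log d}})$ queries, fact~(b) verifies the hypothesis $q(\x_i)\perp \v\mid \x_i\in\tS_\v$ of Lemma~\ref{lemma:high_prob}, which yields that with probability at least $1-2Te^{-(\log d)^2/2}=1-o(1)$ --- here using $e^{-(\log d)^2/2}=d^{-(\log d)/2}$ and $\sqrt{\log d}<(\log d)/2$ for $d$ large --- \emph{all} queries land in $\tS_\v$. On that event the whole transcript, and hence the algorithm's output $\hat{\x}$, is measurable with respect to data independent of $\v$, so $\hat{\x}\perp\v$; applying Lemma~\ref{lemma:fixed_x} to this (now $\v$-independent) output point gives $\Pr(\hat{\x}\in\tS\setminus\tS_\v)\le 2e^{-(\log d)^2/2}$, and whenever $\hat{\x}\notin\tS\setminus\tS_\v$ it lies in the non-informative region and, by fact~(a), is not an $O(\epsilon)$-SOSP of $\tF$. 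A union bound over the two bad events shows the algorithm outputs an $O(\epsilon)$-SOSP with probability $o(1)$; absorbing the universal constant by replacing $\epsilon$ with $c\epsilon$ for a suitable small constant $c$ yields the claim for $\epsilon$-SOSP and finishes the proof of Theorem~\ref{theorem:lowerbound}.

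I expect the main difficulty to lie not in the probabilistic core --- which is essentially Lemmas~\ref{lemma:fixed_x} and~\ref{lemma:high_prob} plumbed together --- but in upgrading the structural facts (a) and (b) from a single ball to all of $\R^d$: one must check that the periodic/folded extension of the single-ball construction introduces no new $O(\epsilon)$-SOSP (in particular at the hypercube boundaries, where the steepness of $\|\sin\x\|^2$ and the large negative curvature of its Hessian are what rule out stationary points) and that the padding region $\tS_2$ genuinely reveals nothing about $\v$. There is also the delicate point --- handled above via the reduction to deterministic strategies and the ``transcript independent of $\v$'' formulation --- that the reported point $\hat{\x}$ need not have been queried, so its independence from $\v$ must be argued through the conditional law of the data rather than pointwise.
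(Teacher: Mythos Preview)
Your proposal is correct and follows essentially the same approach as the paper: instantiate the hard pair $(\tF,\tf)$ via Lemma~\ref{lemma:hardfunction}, use periodicity to reduce from $\R^d$ to a single cell, and combine Lemmas~\ref{lemma:fixed_x} and~\ref{lemma:high_prob} to show that all queries land in the non-informative band $\tS_\v$ with high probability. Your treatment is in fact more complete than the paper's in one place: the paper only explicitly shows the \emph{queries} stay in $\tS_\v$ and leaves implicit why this forces the \emph{output} to miss every $\epsilon$-SOSP, whereas you argue that on this event the transcript (hence $\hat{\x}$) is independent of $\v$ and then apply Lemma~\ref{lemma:fixed_x} once more to the reported point.
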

\begin{proof}
	Take $(\tF, \tf)$ to be as defined in Definition \ref{def:hardfunction:scaled}.
	The proof proceeds by first showing that no SOSP can be found in a constrained set $\tS$, and then using a reduction argument. The key step of the proof involves the following two claims:
	\begin{enumerate}
		\item First we claim that any algorithm $\alg$ making $o(d^{\sqrt{\log d}})$ function-value queries of $f$ to find $\epsilon$-SOSP of $F$ in $\tS$, only queries points in $\tS_\v$, w.h.p., and fails to output $\epsilon$-SOSP of $F$.
		\item Next suppose if there exists $\alg$ making $o(d^{\sqrt{\log d}})$ function-value queries of $f$ that finds $\epsilon$-SOSP of $F$ in $\R^d$ w.h.p. Then this algorithm also finds $\epsilon$-SOSP of $F$ on $\tS$ w.h.p., which is a contradiction.
	\end{enumerate}
	
	Proof of claim 1:
	
	Note that because $\norm{\x_t} \leq r/100$, $\norm{\sin \frac{1}{r}\x_t} \leq \norm{\x_t}/r \leq 1/100 $.
	
	Let $\v$ be an arbitrary unit vector. Suppose a possibly randomized algorithm $\alg$ queries points in $\tS$, $\{X_t\}_{t=1}^T$. Let $\F_t$ denote $\sigma(f(X_1), \cdots, f(X_t))$. Let $X_t \sim \alg(t | \F_{t-1})$.
	
	For any $i$, on the event that $X_i \in \tS_\v$, we have that $f(X_i) = 0$, as established in Lemma \ref{lemma:hardfunction}. Therefore it is trivially true that $f(X_i)$ is independent of $\v$ conditioned on $\{X_i \in \tS_\v \}$.

%
%
%
	By Lemma \ref{lemma:high_prob},
	\begin{align*}
	\Pr(X_t \in \tS_\v \forall  t \leq T) &\geq 1 - 2T e^{-(\log d)^2/2} \\
	&\geq 1 - e^{-(\log d )^2/4}\text{ for all $d$ large enough since } T = o(d^{\sqrt{\log d }}). \\
	\end{align*}
	
	Proof of claim 2:
	
	Since $f, F$ are periodic over $d$-dimensional hypercubes of side length $\pi r$, finding $\epsilon$-SOSP of $F$ on $\R$ implies finding $\epsilon$-SOSP of $F$ in $S$. Given claim 1, any algorithm making only $o(d^{\sqrt{\log d}})$ queries will fail to find $\epsilon$-SOSP of $F$ in $\R^d$ w.h.p.
\end{proof}

For completeness, we now state the classical result showing that most of the surface area of a sphere lies close to the equator; it was used in the proof of Lemma \ref{lemma:fixed_x}.

\begin{lemma}[Surface area concentration for sphere]\label{lemma_sa_conc_sphere}
	Let $S^{d-1} = \{x\in\R^d: \norm{x}_2=1\}$ denote the Euclidean sphere in $\R^d$. For $\eps > 0$, let $C(\eps)$ denote the spherical cap of height $\eps$ above the origin. Then $$\frac{\text{Area}(C(\eps))}{\text{Area}(S^{d-1})} \leq e^{-d\eps^2/2}.$$

\end{lemma}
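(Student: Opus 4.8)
The plan is to reduce the area ratio to a tail probability for a single coordinate of a uniformly random point on the sphere, and then prove the required bound via a sub-Gaussian moment generating function estimate combined with a Chernoff argument. By rotational invariance we may assume $C(\eps) = \{\x \in S^{d-1} : x_1 \ge \eps\}$, so that $\text{Area}(C(\eps))/\text{Area}(S^{d-1}) = \Pr(X_1 \ge \eps)$, where $X$ is distributed uniformly on $S^{d-1}$. It therefore suffices to show $\Pr(X_1 \ge \eps) \le e^{-d\eps^2/2}$.

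First I would compute the moments of $X_1$. Representing $X = \g/\norm{\g}$ with $\g \sim \mathcal{N}(\mathbf{0}, \I_d)$, and using the classical fact that the direction $\g/\norm{\g}$ is independent of the magnitude $\norm{\g}$, we have $g_1 = \norm{\g}\, X_1$ as a product of independent factors (here $g_1$ is the first coordinate of $\g$), hence $\E[g_1^{2k}] = \E[\norm{\g}^{2k}]\,\E[X_1^{2k}]$. Since $\E[g_1^{2k}] = (2k-1)!!$ and $\norm{\g}^2 \sim \chi^2_d$ satisfies $\E[\norm{\g}^{2k}] = d(d+2)\cdots(d+2k-2) \ge d^k$, while the odd moments of $X_1$ vanish by symmetry, this gives $\E[X_1^{2k}] \le (2k-1)!!/d^k$. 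Summing the Taylor series and using $(2k-1)!! = (2k)!/(2^k k!)$,
\[
\E[e^{\lambda X_1}] = \sum_{k \ge 0} \frac{\lambda^{2k}}{(2k)!}\,\E[X_1^{2k}] \le \sum_{k \ge 0} \frac{1}{k!}\Big(\frac{\lambda^2}{2d}\Big)^k = e^{\lambda^2/(2d)} \qquad \text{for all } \lambda \in \R .
\]
A Chernoff bound then yields $\Pr(X_1 \ge \eps) \le e^{-\lambda \eps}\,\E[e^{\lambda X_1}] \le e^{-\lambda\eps + \lambda^2/(2d)}$ for every $\lambda \ge 0$, and the choice $\lambda = d\eps$ gives exactly $\Pr(X_1 \ge \eps) \le e^{-d\eps^2/2}$, completing the argument.

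The only subtle point is obtaining the dimension constant in the exponent to be exactly $d$: this is why the argument routes through the exact $\chi^2_d$ moments $d(d+2)\cdots(d+2k-2) \ge d^k$ rather than a L\'evy-lemma or log-Sobolev argument on $S^{d-1}$, which would lose a constant (the Ricci curvature of the unit $(d-1)$-sphere is $d-2$) and pick up a polynomial prefactor. An entirely elementary alternative is to write $\Pr(X_1 \ge \eps) = \int_\eps^1 (1-t^2)^{(d-3)/2}\,dt \,\big/\, \int_{-1}^1 (1-t^2)^{(d-3)/2}\,dt$ and bound both integrals using $1-u \le e^{-u}$; this is quicker to state but needs a short case split, since the resulting prefactor is harmless only when $\eps\sqrt{d}$ is bounded below, the complementary range $d\eps^2 \le 2\ln 2$ being covered by the trivial estimate $\Pr(X_1 \ge \eps) \le \tfrac12 \le e^{-d\eps^2/2}$. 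I would present the moment generating function computation as the main line, since it produces the stated constant cleanly.
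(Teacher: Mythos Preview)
Your proof is correct, but it takes a genuinely different route from the paper's argument. You reduce the area ratio to $\Pr(X_1 \ge \eps)$ for $X$ uniform on $S^{d-1}$ and then prove a sharp sub-Gaussian bound $\E[e^{\lambda X_1}] \le e^{\lambda^2/(2d)}$ via exact moments of a coordinate, using the Gaussian representation and $\chi^2_d$ moment formula; a Chernoff bound with $\lambda = d\eps$ then gives the stated inequality with the exact constant $d$ in the exponent. All steps are valid, and the key inequality $d(d+2)\cdots(d+2k-2) \ge d^k$ is what makes the constant come out as $d$ rather than $d-1$ or $d-2$.

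The paper instead gives a three-line geometric proof: it observes that $\text{Area}(C(\eps))/\text{Area}(S^{d-1}) = \text{Vol}(D)/\text{Vol}(B^d)$, where $D$ is the solid cone through $C(\eps)$ with apex at the origin, notes by Pythagoras that $D \subset \sqrt{1-\eps^2}\, B^d$, and concludes $(1-\eps^2)^{d/2} \le e^{-d\eps^2/2}$. This is considerably shorter and avoids any moment calculation or series manipulation. Your approach, while longer, has the side benefit of establishing that each coordinate of a uniform sphere point is $1/\sqrt{d}$-sub-Gaussian, a fact useful independently; the paper's cone-in-ball trick is more of a one-off but is the slicker proof of this particular lemma.
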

	\begin{proof}
		Let $D$ be the spherical cone subtended at one end by $C(\eps)$ and let $B^d$ denote the unit Euclidean ball in $\R^d$. By Pythagoras' Theorem, we can enclose $D$ in a sphere of radius $\sqrt{1-\eps^2}$. By elementary calculus, \[\frac{\text{Area}(C(\eps))}{\text{Area}(S^{d-1})} =\frac{\text{Volume}(D)}{\text{Volume}(B^d)} \leq \frac{\text{Volume}(\sqrt{1-\eps^2}B^d)}{\text{Volume}(B^d)} \leq (1-\eps^2)^{d/2} \leq e^{-d\eps^2/2}.\]
	\end{proof}


\section{Information-theoretic Limits}\label{app:exp}


In this section, we prove upper and lower bounds for algorithms that may run in exponential time. This establishes the information-theoretic limit for problem \ref{problem}. Compared to the previous (polynomial time) setting, now the dependency on dimension $d$ is removed.

\subsection{Exponential Time Algorithm to Remove Dimension Dependency}

\begin{algorithm}[t]
	\caption{Exponential Time Algorithm}\label{algo:Exponential}
	\begin{algorithmic}
		\renewcommand{\algorithmicrequire}{\textbf{Input: }}
		\renewcommand{\algorithmicensure}{\textbf{Output: }}
		\REQUIRE function value oracle for $f$, hyperparameter $\epsilon'$
		\STATE Construct (1) $\{\x_t\}_{t=1}^N$, an $O(\epsilon/\ell)$-cover in the euclidean metric of the ball of radius $O(B/\epsilon)$ in $\R^d$ centered at the origin (lemma~\ref{lem:eps_cover}); (2) $\{\v_i\}_{i=1}^V$, an $O(\epsilon/\ell)$-cover in the euclidean metric of the ball of radius $O(\epsilon)$ in $\R^d$ centered at the origin (lemma~\ref{lem:eps_cover}); (3) $\{H_j\}_{j=1}^P$, an $O(\epsilon/\ell)$-cover in the $L_\infty$ metric of the ball of radius $O(\ell)$ in $\R^{d\times d}$ centered at the origin (lemma~\ref{lem:mat_eps_cover}); (4) $\mathcal{Z}$, an $O(\epsilon')$ cover in the euclidean metric of the unit sphere $\mathbb{S}^{d-1}$ in $\R^d$ (lemma~\ref{lem:eps_cover})
		\FOR{$t = 0, 1, \ldots, N$}
		\FOR{$i = 0, 1, \ldots, V$ }
		\FOR{$j = 0, 1, \ldots, P$}
		\IF{$|f(\x) + \v_i \trans (\y - \x) + \frac{1}{2} (\y-\x)\trans H_j (\y-\x) - f(\y) | \le O(\rho r^3 + \nu)\nn \quad \forall \y = \x + r \z, \quad \z \in \mathcal{Z} \nn$}
		\IF{$\norm{\v_i} \leq O(\epsilon) \text{ and } \lambda_{\min}(H_j) \geq -O(\sqrt{\rho \epsilon} )$ } 
			\STATE \textbf{return} $\x_t$
		\ENDIF
		\ENDIF
		\ENDFOR
		\ENDFOR
		\ENDFOR
		
	\end{algorithmic}
\end{algorithm}

We first restate our upper bound, first stated in Theorem \ref{thm:exp_upperbound}, below.

\begin{theorem}\label{thm:exp_upperbound2}
	There exists an algorithm so that if the function pair ($F, f$) satisfies Assumption \ref{assump} with $\nu \le O(\sqrt{\epsilon^3/\rho})$ and $\ell > \sqrt{\rho \epsilon}$, then the algorithm will find an $\epsilon$-second-order stationary point of $F$ with an exponential number of queries.
\end{theorem}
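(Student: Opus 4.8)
The plan is to verify the exhaustive search of Algorithm~\ref{algo:Exponential}. Conceptually it does two things. It sweeps a candidate $\x_t$ over a fine $O(\epsilon/\ell)$-cover of a ball that is guaranteed to contain an approximate SOSP of $F$; and at each $\x_t$ it tries every pair $(\v_i,H_j)$ from fine covers of a small ball (candidate gradients) and of the norm-ball of radius $O(\ell)$ in $\R^{d\times d}$ (candidate Hessians), declaring $(\v_i,H_j)$ a valid local quadratic model of $F$ at $\x_t$ iff the induced quadratic matches the observed values $f(\x_t+r\z)$, $\z\in\mathcal{Z}$, on an $O(\epsilon')$-net of the sphere, up to the slack $O(\rho r^3+\nu)$; it outputs $\x_t$ once a valid model additionally satisfies $\norm{\v_i}\le O(\epsilon)$ and $\lambda_{\min}(H_j)\ge-O(\sqrt{\rho\epsilon})$. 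Each of the four covers has cardinality exponential in $d$ (the cover-counting lemmas, Lemma~\ref{lem:eps_cover} and Lemma~\ref{lem:mat_eps_cover}), so the query count is finite, exponential in $d$, and --- crucially --- has \emph{no} dependence on $\nu$, which is the content of the theorem. I would run the whole procedure with $\epsilon$ replaced by $\epsilon/C$ for a suitable absolute constant $C$, so that the $O(\epsilon)$-SOSP it certifies is a genuine $\epsilon$-SOSP.

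First, existence in a bounded region. Because $F$ is $B$-bounded and $\ell$-gradient Lipschitz, $\norm{\grad F(\x)}\le 2\sqrt{\ell B}$ everywhere; running the saddle-escaping routine of Theorem~\ref{thm:basic} on $F$ from the origin produces an $(\epsilon/C)$-SOSP $\x^\star$ of $F$ in $\poly(d,B,\ell,\rho,1/\epsilon)$ steps, and a Cauchy--Schwarz bound on the total displacement keeps all iterates inside $\mathbb{B}_0(O(B/\epsilon))$, the region swept by $\{\x_t\}$. Let $\x_t$ be the cover point with $\norm{\x_t-\x^\star}\le O(\epsilon/\ell)$; then $\norm{\grad F(\x_t)}\le\norm{\grad F(\x^\star)}+\ell\cdot O(\epsilon/\ell)=O(\epsilon)$ and, by $\rho$-Hessian Lipschitzness together with $\ell>\sqrt{\rho\epsilon}$ (so $\rho\epsilon/\ell\le\sqrt{\rho\epsilon}$), $\lambda_{\min}(\hess F(\x_t))\ge\lambda_{\min}(\hess F(\x^\star))-\rho\cdot O(\epsilon/\ell)\ge-O(\sqrt{\rho\epsilon})$. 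Thus $\x_t$ is \emph{itself} an $O(\epsilon)$-SOSP of $F$; the algorithm only has to recognize it.

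Second, the core step: a local quadratic-identification lemma for the probing radius $r=\Theta(\sqrt{\epsilon/\rho})$. A third-order Taylor expansion with $\rho$-Hessian Lipschitzness gives, for $\norm{\y-\x}=r$,
\begin{equation*}
\left| F(\y)-\left(F(\x)+\grad F(\x)\trans(\y-\x)+\tfrac12(\y-\x)\trans\hess F(\x)(\y-\x)\right)\right|\le\tfrac{\rho}{6}r^3.
\end{equation*}
Adding $\norm{F-f}_\infty\le\nu$ at $\x$ and at $\y$, the nearest cover representatives $(\v_i,H_j)$ of $(\grad F(\x),\hess F(\x))$ pass the consistency check --- their discretization errors enter multiplied only by $r$ or by $r^2\poly(d)$ and are dominated by the slack once $\epsilon$ is small and $\ell>\sqrt{\rho\epsilon}$. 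In particular, at the $\x_t$ from the previous paragraph this exhibits a valid model with $\norm{\v_i}\le O(\epsilon)$ and $\lambda_{\min}(H_j)\ge-O(\sqrt{\rho\epsilon})$, so $\x_t$ is accepted (completeness). For soundness, if $(\v_i,H_j)$ passes the check at some $\x_t$, then the quadratic $q_1(\y)=\v_i\trans(\y-\x_t)+\tfrac12(\y-\x_t)\trans H_j(\y-\x_t)$ and the true local quadratic $q_2$ of $F$ at $\x_t$ agree to within $O(\rho r^3+\nu)$ on $\{r\z:\z\in\mathcal{Z}\}$; taking $\mathcal{Z}$ fine enough (this fixes $\epsilon'$), a standard interpolation argument for quadratic forms restricted to a sphere forces $\norm{\v_i-\grad F(\x_t)}=O((\rho r^3+\nu)/r)$ and $\norm{H_j-\hess F(\x_t)}=O((\rho r^3+\nu)/r^2)$. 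With $r=\Theta(\sqrt{\epsilon/\rho})$ and $\nu\le O(\sqrt{\epsilon^3/\rho})$ these are $O(\epsilon)$ and $O(\sqrt{\rho\epsilon})$, so any $\x_t$ returned is an $O(\epsilon)$-SOSP of $F$ in the sense of Definition~\ref{def:SOSP}.

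The main obstacle is the error-budget bookkeeping in this last step. The slack $O(\rho r^3+\nu)$ must be simultaneously much smaller than both the minimal gradient signal $\sim\epsilon r$ and the minimal curvature signal $\sim\sqrt{\rho\epsilon}\,r^2$ the $\epsilon$-SOSP test must resolve at radius $r$; a radius making both hold exists \emph{iff} $\nu\le O(\sqrt{\epsilon^3/\rho})$, which is exactly how the hypothesis on $\nu$ gets consumed, while $\ell>\sqrt{\rho\epsilon}$ is what makes the $O(\epsilon/\ell)$-discretization of the gradient/Hessian covers (and the entrywise-versus-spectral-norm gap for the latter) absorbable. Bounding the four cover cardinalities, the Taylor remainder, and $\norm{\grad F}\le2\sqrt{\ell B}$ are all routine.
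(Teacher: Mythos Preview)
Your proposal is correct and follows essentially the same approach as the paper: sweep an $O(\epsilon/\ell)$-cover of $\mathbb{B}_0(O(B/\epsilon))$, at each point test candidate $(\v_i,H_j)$ pairs via the quadratic-consistency check at radius $r=\Theta(\sqrt{\epsilon/\rho})$, and use the Taylor-remainder plus $\nu$ slack to argue both completeness and soundness. Your soundness step (recovering $\norm{\v_i-\grad F(\x_t)}=O((\rho r^3+\nu)/r)$ and $\norm{H_j-\hess F(\x_t)}=O((\rho r^3+\nu)/r^2)$ from agreement on a spherical net) is exactly the paper's Lemma~\ref{lem:feas_sol_is_close}, and your observation that the two error budgets are simultaneously satisfiable iff $\nu\le O(\sqrt{\epsilon^3/\rho})$ is the heart of the argument.

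The one noticeable difference is how you establish that some cover point is an $O(\epsilon)$-SOSP. The paper does this via a self-contained descent-path argument (Lemma~\ref{lem:sosp_exists_in_ball}): from any start, alternating gradient steps and negative-curvature steps decreases $F$ by $\Omega(\epsilon)$ per unit distance traveled, so $B$-boundedness forces an $\epsilon$-SOSP within distance $O(B/\epsilon)$. You instead invoke Theorem~\ref{thm:basic} as a thought experiment and bound the displacement by Cauchy--Schwarz over the iterates. Both are valid; the paper's argument is a bit more self-contained since Theorem~\ref{thm:basic} is a black-box statement about several different algorithms and does not itself assert the $O(B/\epsilon)$ confinement you need, whereas the direct descent argument gives it immediately.
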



The algorithm is based on a procedure to estimate the gradient and Hessian at point $x$. This procedure will be applied to a exponential-sized covering of a compact space to find an SOSP. 

Let $\mathcal{Z}$ be a $\epsilon'$ covering for unit sphere $\mathbb{S}^{d-1}$, where $\mathcal{Z}$ is symmetric (i.e. if $\z \in \mathcal{Z}$ then $-\z \in \mathcal{Z}$). It is easy to verify that such covering can be efficiently constructed with $|\mathcal{Z}| \le O((1/\epsilon')^d)$ (Lemma \ref{lem:eps_cover}). Then, for each point in the cover, we solve following feasibility problem:

\begin{align}
\text{find} &\quad  \g, \H \label{prob:feasible}\\
\text{s.t.} &\quad |f(\x) + \g \trans (\y - \x) + \frac{1}{2} (\y-\x)\trans \H (\y-\x) - f(\y) | \le O(\rho r^3 + \nu)\nn \\
& \quad \forall \y = \x + r \z, \quad \z \in \mathcal{Z} \nn,
\end{align}
where $r$ is scalar in the order of $O(\sqrt{\epsilon/\rho})$.

We will first show that any solution of this problem will give good estimates of the gradient and Hessian of $F$.

\begin{lemma}\label{lem:feas_sol_is_close}
Any solution $(\g, \H)$to the above feasibility problem, Eq.\eqref{prob:feasible}, gives
$$\norm{\g - \grad F(\x)} \le O(\epsilon) \text{~~and~~} \norm{\H -\hess F(\x)} \le O(\sqrt{\rho\epsilon})$$
\end{lemma}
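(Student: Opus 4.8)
The plan is to show that the true pair $(\grad F(\x),\hess F(\x))$ is itself a feasible point of \eqref{prob:feasible}, and then argue that \emph{any} two feasible pairs must be close. For the first part I would use the $\rho$-Hessian Lipschitz property of $F$, which gives the cubic Taylor remainder bound
$$\left| F(\y) - F(\x) - \grad F(\x)\trans(\y-\x) - \tfrac12 (\y-\x)\trans \hess F(\x)(\y-\x)\right| \le \tfrac{\rho}{6}\norm{\y-\x}^3,$$
and then combine it with $\norm{f-F}_\infty \le \nu$ to obtain, for every $\y = \x + r\z$ with $\z\in\mathcal{Z}$,
$$\left| f(\x) + \grad F(\x)\trans(\y-\x) + \tfrac12(\y-\x)\trans\hess F(\x)(\y-\x) - f(\y)\right| \le \tfrac{\rho}{6}r^3 + 2\nu = O(\rho r^3 + \nu),$$
so the constraint is met by $(\grad F(\x),\hess F(\x))$ provided the $O(\cdot)$ on the right-hand side of \eqref{prob:feasible} is taken large enough to absorb these constants.

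Next, for any feasible $(\g,\H)$ I would subtract the constraint at $(\g,\H)$ from the constraint at $(\grad F(\x),\hess F(\x))$. Writing $\Delta_g = \g - \grad F(\x)$ and $\Delta_H = \H - \hess F(\x)$ (which we may take symmetric, since only the symmetric part of $\H$ enters the quadratic form), and using $\y-\x = r\z$, the triangle inequality yields
$$\left| r\, \Delta_g\trans \z + \tfrac{r^2}{2}\z\trans\Delta_H\z\right| \le O(\rho r^3 + \nu)\qquad \forall\, \z\in\mathcal{Z}.$$
Here I exploit that $\mathcal{Z}$ is symmetric: adding the inequalities at $\z$ and $-\z$ cancels the linear term and gives $|\z\trans\Delta_H\z| \le O(\rho r + \nu/r^2)$, while subtracting them cancels the quadratic term and gives $|\Delta_g\trans\z| \le O(\rho r^2 + \nu/r)$, both for all $\z\in\mathcal{Z}$.

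Finally I would pass from the net $\mathcal{Z}$ to the whole unit sphere. Since $\mathcal{Z}$ is an $\epsilon'$-cover, for any unit vector $\u$ there is $\z\in\mathcal{Z}$ with $\norm{\u-\z}\le\epsilon'$, so $|\Delta_g\trans\u| \le |\Delta_g\trans\z| + \norm{\Delta_g}\epsilon'$ and $|\u\trans\Delta_H\u - \z\trans\Delta_H\z| \le 2\epsilon'\norm{\Delta_H}$; taking suprema over $\u$ and choosing $\epsilon'$ below an absolute constant (e.g. $\epsilon'\le 1/4$) gives $\norm{\Delta_g} \le O(\rho r^2 + \nu/r)$ and $\norm{\Delta_H}\le O(\rho r + \nu/r^2)$. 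Plugging in $r = \Theta(\sqrt{\epsilon/\rho})$ — so that $\rho r^2 = \Theta(\epsilon)$, $\rho r = \Theta(\sqrt{\rho\epsilon})$, and $\rho r^3 = \Theta(\sqrt{\epsilon^3/\rho})$ — together with the hypothesis $\nu \le O(\sqrt{\epsilon^3/\rho}) = O(\rho r^3)$ (hence $\nu/r = O(\epsilon)$ and $\nu/r^2 = O(\sqrt{\rho\epsilon})$) yields exactly $\norm{\g - \grad F(\x)} \le O(\epsilon)$ and $\norm{\H - \hess F(\x)}\le O(\sqrt{\rho\epsilon})$. The argument is essentially routine; the only points that need care are using the symmetry of $\mathcal{Z}$ to \emph{decouple} the gradient and Hessian estimates (rather than being left with a coupled bound), and the net-to-sphere step, which is what forces $\epsilon'$ to be at most an absolute constant — and this costs only a constant factor.
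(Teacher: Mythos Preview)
Your proposal is correct and follows essentially the same approach as the paper: use the Hessian-Lipschitz Taylor bound together with $\norm{f-F}_\infty\le\nu$ to reduce the constraint to $|r\,\Delta_g\trans\z + \tfrac{r^2}{2}\z\trans\Delta_H\z|\le O(\rho r^3+\nu)$, exploit the symmetry of $\mathcal{Z}$ at $\pm\z$ to decouple the linear and quadratic parts, pass from the net to the sphere via the $\epsilon'$-cover (the paper records this as the $\frac{1}{1-\epsilon'}$ and $\frac{1}{1-2\epsilon'}$ factors), and finally set $r=\Theta(\sqrt{\epsilon/\rho})$ with $\nu\le O(\sqrt{\epsilon^3/\rho})$. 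Your write-up is in fact more explicit than the paper's sketch, including the remark that only the symmetric part of $\H$ matters.
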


\begin{proof}
When we have $\norm{f-F}_{\infty} \le \nu$, above feasibility problem is equivalent to solve following:
\begin{align*}
\text{find} &\quad  \g, \H\\
\text{s.t.} &\quad |F(\x) + \g \trans (\y - \x) + \frac{1}{2} (\y-\x)\trans \H (\y-\x) - F(\y) | \le O(\rho r^3 + \nu)\\
& \quad \forall \y = \x + r \z, \quad \z \in \mathcal{Z}.
\end{align*}
Due to the Hessian-Lipschitz property, we have $|F(\y) - F(\x) - \grad F(\x)\trans (\y-\x) - (\y-\x)\trans \hess F(\x) (\y - \x)| \le \frac{1}{6} \rho r^3$,
this means above feasibility problem is also equivalent to:
\begin{align*}
\text{find} &\quad  \g, \H\\
\text{s.t.} &\quad |(\g - \grad F(\x) \trans (\y - \x) + \frac{1}{2} (\y-\x)\trans (\H-\hess F(\x)) (\y-\x) | \le O(\rho r^3 + \nu)\\
& \quad \forall \y = \x + r \z, \quad \z \in \mathcal{Z}.
\end{align*}
Picking $\y - \x = \pm r\z$, by triangular inequality and the fact that $\mathcal{Z}$ is an $\epsilon'$-covering of $\mathbb{S}^{d-1}$, it is not hard to verify:
\begin{align*}
&\norm{\g - \grad F(\x)} \le O\left(\frac{1}{1-\epsilon'}(\rho r^2 + \frac{\nu}{r})\right) \\
&\norm{\H - \hess F(\x)} \le O\left(\frac{1}{1-2\epsilon'}(\rho r + \frac{\nu}{r^2})\right).
\end{align*}
Given $\nu \le \frac{1}{c}\sqrt{\frac{\epsilon^3}{\rho}}$ for large enough constant $c$, and picking $r = c' \sqrt{\frac{\epsilon}{\rho}}$ with proper constant $c'$,
we prove the lemma.
\end{proof}

We then argue that \eqref{prob:feasible} always has a solution.

\begin{lemma}\label{lem:feas_sol}
	Consider the metric $\norm{\cdot}: \R^{d} \times \R^{d\times d} \to \R$, where $\norm{(\g, H)} = \sqrt{\norm{\g}^2 + \norm{H}^2}$.
Then $(\grad F(\x), \hess F(\x))$ and a $O(\epsilon/\ell)$-neighborhood around it with respect to the $\norm{\cdot}$ metric are the solutions to above feasibility problem.
\end{lemma}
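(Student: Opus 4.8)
The plan is to verify directly that $(\grad F(\x),\hess F(\x))$ is feasible for \eqref{prob:feasible}, and then that nothing within an $O(\epsilon/\ell)$-ball of it (in the stated metric) can violate feasibility either. For the first part I would fix an arbitrary $\z\in\mathcal{Z}$ and set $\y=\x+r\z$, so that $\norm{\y-\x}=r$ (the points of $\mathcal{Z}$ are unit vectors). Plugging $(\g,\H)=(\grad F(\x),\hess F(\x))$ into the constraint and using $\norm{f-F}_\infty\le\nu$ from Assumption~\ref{assump} to swap $f(\x),f(\y)$ for $F(\x),F(\y)$ at a cost of $2\nu$, the constraint expression is at most
\[
\Bigl|F(\y)-F(\x)-\grad F(\x)\trans(\y-\x)-\tfrac12(\y-\x)\trans\hess F(\x)(\y-\x)\Bigr|+2\nu .
\]
By $\rho$-Hessian-Lipschitzness of $F$ (Definition~\ref{def:HessLip}), the standard third-order Taylor remainder bound makes the first term at most $\tfrac{\rho}{6}\norm{\y-\x}^3=\tfrac{\rho}{6}r^3$, so the whole expression is $O(\rho r^3+\nu)$, i.e.\ it meets the right-hand side of \eqref{prob:feasible} for a suitable absolute constant. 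Since this holds for every $\z\in\mathcal{Z}$, the pair $(\grad F(\x),\hess F(\x))$ is feasible.

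Next I would take any $(\g,\H)$ with $\norm{(\g,\H)-(\grad F(\x),\hess F(\x))}\le c_0\epsilon/\ell$, so that $\norm{\g-\grad F(\x)}$ and $\norm{\H-\hess F(\x)}$ are both at most $c_0\epsilon/\ell$. By the triangle inequality the constraint expression at $(\g,\H)$ exceeds the one at $(\grad F(\x),\hess F(\x))$ by at most
\[
\norm{\g-\grad F(\x)}\,\norm{\y-\x}+\tfrac12\norm{\H-\hess F(\x)}\,\norm{\y-\x}^2\le \frac{c_0\epsilon}{\ell}\Bigl(r+\tfrac12 r^2\Bigr),
\]
and, using $r=\Theta(\sqrt{\epsilon/\rho})$ together with the hypothesis $\ell>\sqrt{\rho\epsilon}$ of Theorem~\ref{thm:exp_upperbound2} (so $\epsilon/\ell=O(r)$), this extra term is of the same order $O(\rho r^3+\nu)$ as the slack left over from the first step. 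Choosing $c_0$ small enough (and enlarging the absolute constant on the right of \eqref{prob:feasible} to match) therefore keeps $(\g,\H)$ feasible, which is exactly the assertion of the lemma.

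Chaining this with Lemma~\ref{lem:feas_sol_is_close} is what makes the exponential-time algorithm correct: at each queried $\x$ the covers $\{\v_i\},\{H_j\}$ contain a representative within the mesh $O(\epsilon/\ell)$ of $(\grad F(\x),\hess F(\x))$ — the covers are large enough to contain the true derivatives, since $\norm{\hess F(\x)}\le\ell$ — and by this lemma such a representative is feasible, hence a valid estimate of $\grad F(\x),\hess F(\x)$. I expect the only real difficulty to be the constant bookkeeping in the second step: pinning down how small $c_0$ must be and checking that the perturbations $\norm{\y-\x}\cdot\norm{\g-\grad F(\x)}$ and $\norm{\y-\x}^2\cdot\norm{\H-\hess F(\x)}$ genuinely fit inside the $O(\rho r^3+\nu)$ budget under the precise relationship among $r$, $\epsilon$, $\rho$ and $\ell$ fixed in the algorithm. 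Everything else is a one-line Taylor estimate plus the triangle inequality.
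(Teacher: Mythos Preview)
Your proposal is correct and follows essentially the same approach as the paper. The paper's own proof is only two sentences: it asserts that $(\grad F(\x),\hess F(\x))$ is ``clearly'' feasible and that the neighborhood claim follows ``due to Hessian Lipschitz and gradient Lipschitz properties of $F$.'' You have unpacked exactly these two assertions into the Taylor-remainder bound and the triangle-inequality perturbation argument, which is precisely the intended reasoning; your version is simply more explicit about the constant bookkeeping (in particular the use of $\ell>\sqrt{\rho\epsilon}$ to control $\epsilon/\ell$ against $r$) than the paper's sketch.
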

\begin{proof}
$(\grad F(\x), \hess F(\x))$ is clearly one solution to the feasibility problem Eq.\eqref{prob:feasible}. Then, this lemma is true due to Hessian Lipschitz and gradient Lipschitz properties of $F$. 
\end{proof}

Now, since the algorithm can do an exhaustive search over a compact space, we just need to prove that there is an $\epsilon$-SOSP within a bounded distance.

\begin{lemma}\label{lem:sosp_exists_in_ball}
Suppose function $f$ is $B$-bounded, then inside any ball of radius $B/\epsilon$, there must exist a $O(\epsilon/\ell)$-ball full of $2\epsilon$-SOSP.
\end{lemma}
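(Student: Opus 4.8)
The plan is to split the claim into two parts: (i) \emph{every} ball of radius $O(B/\epsilon)$ contains an $\epsilon$-SOSP of $F$; and (ii) around every $\epsilon$-SOSP of $F$ there is a ball of radius $\Omega(\epsilon/\ell)$ consisting entirely of $2\epsilon$-SOSP; composing the two gives the statement. (Note that $\norm{F-f}_\infty \le \nu$ means $f$ being $B$-bounded forces $F$ to be $O(B)$-bounded, so the regularity of Assumption~\ref{assump} applies to $F$; I use the Lipschitz constants of $F$ freely.) Part (ii) is immediate: if $\norm{\grad F(\x^*)} \le \epsilon$ and $\lambda_{\min}(\hess F(\x^*)) \ge -\sqrt{\rho\epsilon}$, then for every $\x$ with $\norm{\x - \x^*} \le \delta$ we have $\norm{\grad F(\x)} \le \epsilon + \ell\delta$ by gradient-Lipschitzness and, by Weyl's inequality and Hessian-Lipschitzness, $\lambda_{\min}(\hess F(\x)) \ge -\sqrt{\rho\epsilon} - \rho\delta$. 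Taking $\delta$ a small constant times $\epsilon/\ell$ and using $\ell \ge \sqrt{\rho\epsilon}$ (so $\rho\delta = O(\rho\epsilon/\ell) = O(\sqrt{\rho\epsilon})$) makes these bounds at most $2\epsilon$ and at least $-\sqrt{2\rho\epsilon}$, so the whole $\delta$-ball is filled with $2\epsilon$-SOSP.

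For Part (i) I would run an idealized descent process from an arbitrary center $\x_0$. While the current iterate $\x_t$ is not an $\epsilon$-SOSP, it is either (a) $\norm{\grad F(\x_t)} > \epsilon$, and then I take a gradient step $\x_{t+1} = \x_t - \frac{1}{\ell}\grad F(\x_t)$; or (b) $\lambda_{\min}(\hess F(\x_t)) < -\sqrt{\rho\epsilon}$, and then I take a \emph{fixed-length} negative-curvature step $\x_{t+1} = \x_t + \sqrt{\epsilon/\rho}\,\e_t$ along a unit bottom eigenvector $\e_t$ of $\hess F(\x_t)$, with its sign chosen so that $\grad F(\x_t)\trans\e_t \le 0$. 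The descent lemma gives $F(\x_t) - F(\x_{t+1}) \ge \frac{1}{2\ell}\norm{\grad F(\x_t)}^2 > \frac{\epsilon^2}{2\ell}$ in case (a), and a $\rho$-Hessian-Lipschitz second-order Taylor expansion gives $F(\x_t) - F(\x_{t+1}) \ge \frac{\epsilon^{3/2}}{3\sqrt\rho}$ in case (b). Since $F$ is $O(B)$-bounded, the process stops after at most $N_1 = O(B\ell/\epsilon^2)$ steps of type (a) and $N_2 = O(B\sqrt\rho/\epsilon^{3/2})$ steps of type (b), ending at an $\epsilon$-SOSP $\x^*$. (Alternatively one could invoke Theorem~\ref{thm:basic} for existence, but the explicit iteration is what lets me control the location of $\x^*$.)

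It then remains to bound the total path length $\sum_t \norm{\x_{t+1} - \x_t}$ so as to place $\x^*$ within distance $O(B/\epsilon)$ of $\x_0$. The type-(b) steps each have length exactly $\sqrt{\epsilon/\rho}$, contributing $N_2\sqrt{\epsilon/\rho} = O(B/\epsilon)$. For the type-(a) steps, the descent lemma gives $\norm{\x_{t+1}-\x_t}^2 = \frac{1}{\ell^2}\norm{\grad F(\x_t)}^2 \le \frac{2}{\ell}\bigl(F(\x_t)-F(\x_{t+1})\bigr)$; summing over the at most $N_1$ such steps (all decrements are positive) and applying Cauchy--Schwarz yields $\sum \norm{\x_{t+1}-\x_t} \le \sqrt{N_1 \cdot \tfrac{2}{\ell}\bigl(F(\x_0)-\inf F\bigr)} = \sqrt{O(B\ell/\epsilon^2)\cdot O(B/\ell)} = O(B/\epsilon)$. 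Hence $\norm{\x^* - \x_0} = O(B/\epsilon)$; since $\x_0$ was arbitrary, any ball of radius $O(B/\epsilon)$ contains an $\epsilon$-SOSP, and enlarging the radius by the negligible $O(\epsilon/\ell)$ keeps the entire ball of $2\epsilon$-SOSP from Part (ii) inside it.

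The step I expect to require the most care is precisely this distance accounting: one must use a constant-order step length $\sqrt{\epsilon/\rho}$ for the negative-curvature steps --- the ``natural'' choice $|\lambda_{\min}(\hess F(\x_t))|/\rho$ can be as large as $\ell/\rho$, which would break the bound when $\ell \gg \sqrt{\rho\epsilon}$ --- and one must convert the bounded \emph{total function decrease} into a bounded \emph{total path length} via Cauchy--Schwarz rather than a per-step length bound (which would only give a path length growing like $N_1$). Everything else is a routine combination of the descent lemma, the Hessian-Lipschitz Taylor inequality, and Weyl's inequality.
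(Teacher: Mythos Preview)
Your proof is correct and follows the same two-part structure as the paper: run an idealized descent process (gradient or negative-curvature step) from an arbitrary $\x_0$ to locate an $\epsilon$-SOSP within distance $O(B/\epsilon)$, then use gradient- and Hessian-Lipschitzness (with $\ell \ge \sqrt{\rho\epsilon}$) to fill an $O(\epsilon/\ell)$-ball around it with $2\epsilon$-SOSP. The only difference is the gradient step: the paper moves a \emph{fixed distance} $\epsilon/\ell$ along the normalized gradient, giving a direct ``$\Omega(\epsilon)$ decrease per unit path length'' bound and hence total path $O(B/\epsilon)$ without further work, whereas you take the standard $-\tfrac{1}{\ell}\grad F(\x_t)$ step and recover the same bound via Cauchy--Schwarz; both are fine, and your care about the negative-curvature step length (using $\sqrt{\epsilon/\rho}$ rather than $|\lambda_{\min}|/\rho$) is exactly what the paper also does.
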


\begin{proof}
We can define a search path $\{\x_t\}$ to find a $\epsilon$-SOSP. Starting from an arbitrary point $\x_0$. (1) If the current point $\x_t$ satisfies
$\norm{\g} \ge \epsilon$, then following gradient direction with step-size $\epsilon/\ell$ decreases the function value by at least $\Omega(\norm{\g}\epsilon/\ell)$;
(2) If the current point $\x_t$ has negative curvature $\gamma \le -\sqrt{\rho\epsilon}$, moving along direction of negative curvature with step-size $\sqrt{\epsilon/\rho}$ decreases the function value by at least $\Omega(\gamma \epsilon/\rho)$.

In both cases, we decrease the function value on average by $\Omega(\epsilon)$ per step. That is in a ball of radius $B/\epsilon$ around $\x_0$, there must be a $\epsilon$-SOSP. and in a $O(\epsilon/\ell)$-ball around this $\epsilon$-SOSP are all $2\epsilon$-SOSP due to the gradient and Hessian Lipschitz properties of $F$.
\end{proof}

Combining all these lemmas we are now ready to prove the main theorem of this section:

\begin{proof}[Proof of Theorem \ref{thm:exp_upperbound}]
We show that Algorithm \ref{algo:Exponential} is guaranteed to succeed within a number of function value queries of $f$ that is exponential in all problem parameters.
First, by Lemma \ref{lem:sosp_exists_in_ball}, we know that at least one of $\{\x_t\}_{t=1}^N$ must be an $O(\epsilon)$-SOSP of $F$. It suffices to show that for any $\x$ that is an $O(\epsilon)$-SOSP, Algorithm \ref{algo:Exponential}'s subroutine will successfully return $\x$, that is, it must find a solution $\g, \H$, to the feasibility problem \ref{prob:feasible} that satisfies $\norm{\g} \leq O(\epsilon) \text{ and } \lambda_{\min}(\H) \geq -O(\sqrt{\rho \epsilon} )$. 

If $\x$ satisfies $\norm{\grad F(\x)} \leq O(\epsilon)$, then by lemma \ref{lem:feas_sol_is_close}, all solutions to the feasibility problem \ref{prob:feasible} at $\x$ must satisfy $\norm{\g} \leq O(\epsilon)$ and we must have $\norm{\H}_\infty \leq \ell$ (implied by $\ell$-gradient Lipschitz). Therefore, by Lemma \ref{lem:feas_sol}, we can guarantee that at least one of $\{\v_i,H_j\}_{i=1,j=1}^{i=V, j=P}$ will be in a solution to the feasibility problem.

Next, notice that because all the covers in Algorithm \ref{algo:Exponential} have size at most $O((d/\epsilon)^{d^2})$ must terminate in $O(e^{d^2\log \frac{d}{\epsilon}})$ steps.
\end{proof}

In the following two lemmas, we provide simple methods for constructing an $\epsilon$-cover for a ball (as well as a sphere), and for matrices with bounded spectral norm.

\begin{lemma}[Construction of $\epsilon$-cover for ball and sphere]\label{lem:eps_cover}
	For a ball in $\R^d$ of radius $R$ centered at the origin, the set of points $C = \{\x \in \R^d: \forall i, \x_i = j\cdot\frac{\epsilon}{\sqrt{d}}, j \in \Z, -\frac{R\sqrt{d}}{\epsilon}-1 \leq j \leq \frac{R\sqrt{d}}{\epsilon} +1 \}$ is an $\epsilon$-cover of the ball, of size $O((R\sqrt{d}/\epsilon)^d)$. Consequently, it is also an $\epsilon$-cover for the sphere of radius $R$ centered at the origin.
\end{lemma}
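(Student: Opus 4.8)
The plan is a standard coordinate-wise rounding argument. Given any point $\x$ in the ball of radius $R$ centered at the origin, I would define a candidate cover point $\y \in C$ by rounding each coordinate of $\x$ to the nearest integer multiple of $\epsilon/\sqrt{d}$: set $y_i = j_i \cdot \epsilon/\sqrt{d}$, where $j_i \in \Z$ is the nearest integer to $x_i\sqrt{d}/\epsilon$. The two things to verify are (i) that $\y$ actually belongs to $C$, i.e. that each index $j_i$ lies in the declared range, and (ii) that $\norm{\x - \y} \le \epsilon$.

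For (i): since $|x_i| \le \norm{\x} \le R$, we have $|x_i \sqrt{d}/\epsilon| \le R\sqrt{d}/\epsilon$, and rounding to the nearest integer moves this quantity by at most $1/2$, so $|j_i| \le R\sqrt{d}/\epsilon + 1$; this is precisely why the index range in the definition of $C$ is padded by $\pm 1$ rather than being exactly $\pm R\sqrt{d}/\epsilon$. For (ii): by construction $|x_i - y_i| \le \tfrac12\,\epsilon/\sqrt{d}$ for every $i$, hence $\norm{\x-\y}^2 = \sum_{i=1}^d (x_i-y_i)^2 \le d \cdot \epsilon^2/(4d) = \epsilon^2/4$, so $\norm{\x-\y}\le \epsilon/2 \le \epsilon$. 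This shows $C$ is an $\epsilon$-cover (in fact an $\epsilon/2$-cover) of the ball.

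The cardinality bound is an immediate count: each coordinate ranges over at most $2(R\sqrt{d}/\epsilon + 1) + 1$ admissible values of $j$, which is $O(R\sqrt{d}/\epsilon)$ whenever $R\sqrt{d}/\epsilon \ge 1$ (and an absolute constant otherwise), so taking the product over the $d$ coordinates gives $|C| \le O((R\sqrt{d}/\epsilon)^d)$. For the sphere claim, I would simply observe that the sphere of radius $R$ is contained in the ball of radius $R$, so the same set $C$ is an $\epsilon$-cover of the sphere with no further argument.

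The main obstacle: there is essentially none of substance — this is a routine covering-number estimate. The only points requiring a moment of care are checking that the rounded index stays inside the stated range (which the $\pm 1$ padding handles) and writing the cardinality bound with a clean constant; everything else is elementary and fits in a few lines.
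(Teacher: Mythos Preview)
Your proposal is correct and follows essentially the same approach as the paper: coordinate-wise rounding to the grid followed by the Pythagorean bound $\norm{\x-\y}^2 \le d\cdot(\epsilon/\sqrt{d})^2 = \epsilon^2$. In fact you are more careful than the paper, which does not explicitly verify that the rounded index stays within the declared range and does not spell out the cardinality count; your version also gets the slightly sharper $\epsilon/2$ bound by rounding to the nearest integer rather than an arbitrary one.
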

\begin{proof}
	For any point $\y$ in the ball, we can find $\x \in C$ such that $|\y_i - \x_i|\leq \epsilon/\sqrt{d}$ for each $i\in [d]$. By the Pythagorean theorem, this implies $\norm{\y-\x} \leq \epsilon$.
\end{proof}

\begin{lemma}[Construction of $\epsilon$-cover for matrices with $\ell$-bounded spectral norm]\label{lem:mat_eps_cover}
	Let $\mathcal{M} = \{A\in \R^{d\times d}:\norm{A} \leq \ell \}$ denote the set of $d$ by $d$ matrices with $\ell$-bounded spectral norm. Then the set of points $C = \{M \in \R^{d\times d}: \forall i,k, M_{i,k} = j\cdot\frac{\epsilon}{d}, j \in \Z, -\frac{\ell d }{\epsilon}-1 \leq j \leq \frac{\ell d}{\epsilon} +1 \}$ is an $\epsilon$-cover for $\mathcal{M} $, of size $O((\ell d/\epsilon)^{d^2})$
\end{lemma}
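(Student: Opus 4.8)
The plan is to follow the template of the proof of Lemma~\ref{lem:eps_cover}: reduce the matrix covering problem to coordinate-wise rounding, and then upgrade the resulting entrywise error bound to a spectral-norm error bound by passing through the Frobenius norm (which is where the grid spacing $\epsilon/d$, rather than $\epsilon/\sqrt d$, pays off).

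First I would note that every $A\in\mathcal M$ has all entries bounded by $\ell$ in absolute value: for each $i,k$, $|A_{i,k}| = |\e_i\trans A\e_k| \le \norm{A} \le \ell$, so $A$ lies in the box $[-\ell,\ell]^{d\times d}$ and it suffices to cover this box. For each entry, set $j_{i,k} = \mathrm{round}(A_{i,k} d/\epsilon)$, the nearest integer to $A_{i,k}d/\epsilon$. Then $|j_{i,k}| \le \ell d/\epsilon + \tfrac12 \le \ell d/\epsilon + 1$, so $j_{i,k}$ is an admissible index, and $|A_{i,k} - j_{i,k}\epsilon/d| \le \epsilon/(2d)$. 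Taking $M\in C$ with entries $M_{i,k} = j_{i,k}\epsilon/d$, the error matrix $E = A - M$ has $|E_{i,k}| \le \epsilon/(2d)$ for all $i,k$, hence
\[
\norm{A-M} \le \fnorm{E} = \sqrt{\textstyle\sum_{i,k} E_{i,k}^2} \le \sqrt{d^2\cdot\frac{\epsilon^2}{4d^2}} = \frac{\epsilon}{2} \le \epsilon,
\]
where I used the standard bound $\norm{\cdot}\le\fnorm{\cdot}$ for the spectral norm. Since $\norm{A-M}\le\epsilon$ also implies $\max_{i,k}|A_{i,k}-M_{i,k}|\le\epsilon$, the set $C$ is an $\epsilon$-cover of $\mathcal M$ both in spectral norm and in the entrywise $L_\infty$ metric used in Algorithm~\ref{algo:Exponential}.

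For the cardinality bound, each of the $d^2$ entries has at most $2(\ell d/\epsilon + 1) + 1 = O(\ell d/\epsilon)$ admissible values of $j$, so $|C| \le O\big((\ell d/\epsilon)^{d^2}\big)$. There is no genuine obstacle in this lemma; the only point worth flagging is that the grid resolution $\epsilon/d$ is chosen precisely so that an entrywise-$(\epsilon/d)$ perturbation of a $d\times d$ matrix has Frobenius norm at most $\epsilon$, which is what lets the cover control the spectral norm of $\H - \hess F(\x)$ needed in Lemmas~\ref{lem:feas_sol_is_close} and~\ref{lem:feas_sol}.
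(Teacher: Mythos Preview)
Your proof is correct and follows essentially the same approach as the paper: round each entry to the nearest grid point and then bound the spectral-norm error by the Frobenius norm. Your version is slightly more detailed (verifying that $|A_{i,k}|\le\ell$ so the rounded index is admissible, and noting the $L_\infty$ cover property used in Algorithm~\ref{algo:Exponential}), but the core argument is the same.
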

\begin{proof}
	For any matrix $M$ in $\mathcal{M} $, we can find $N \in C$ such that $|N_{i,k} - M_{i,k}| \leq \epsilon/d$ for each $i, k \in [d]$. Since the Frobenius norm dominates the spectral norm, we have $\norm{N-M} \leq \norm{N-M}_F \leq \epsilon$.
\end{proof}

\subsection{Information-theoretic Lower bound}

To prove the lower bound for an arbitrary number of queries, we base our hard function pair on our construction in definition \ref{def:hardfunction}, except $\tf$ now coincides with $\tF$ only outside the sphere $S$. With this construction, no algorithm can do better than random guessing within $S$, since $f$ is completely independent of $\v$.

\begin{theorem}[Information-theoretic lower bound]
	For $\tf,\tF$ defined as follows:
	\[\tF(\x) = \epsilon r\sF(\frac{1}{r}\x), \tf(\x) = \begin{cases}
	\epsilon r\norm{\sin \frac{1}{r} \x}^2  &, \x \in S \\
	\tF(\x) &, \x \notin S \end{cases}\]
	where $\sF$ is as defined in definition \ref{def:hardfunction}.
	Then we have $\sup_\x|\tF(\x) - \tf(\x)| \leq O(\frac{\epsilon^{1.5}}{\sqrt{\rho}}) $ and no algorithm can output SOSP of $F$ with probability more than a constant.
	
\end{theorem}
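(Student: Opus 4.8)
The plan is to combine a crude bound on the perturbation size with an information-theoretic ``no information about $\v$'' argument, reusing the properties of the scaled hard instance $(\tF,\tf)$ from Lemmas~\ref{lemma:scale_free_hard_function} and \ref{lemma:hardfunction}.

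\textbf{Step 1 (the gap $\nu$).} Inside $S$ we have $\tF(\x)-\tf(\x) = \epsilon r\, h(\sin\tfrac1r\x)$, and $|h|\le 1$ by Lemma~\ref{lemma:h_function}, so $|\tF(\x)-\tf(\x)|\le \epsilon r = \epsilon^{1.5}/\sqrt{\rho}$ since $r=\sqrt{\epsilon/\rho}$; outside $S$ the two functions coincide. Hence $\sup_\x|\tF-\tf| = O(\epsilon^{1.5}/\sqrt{\rho})$, and together with Property~4 of Lemma~\ref{lemma:hardfunction} (boundedness and $O(\ell)$-gradient / $O(\rho)$-Hessian Lipschitzness of $\tF$) the pair $(\tF,\tf)$ satisfies Assumption~\ref{assump} with $\nu=O(\epsilon^{1.5}/\sqrt{\rho})$. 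The $d$-dependence of the polynomial-queries construction disappears here precisely because $\tf$ is now allowed to differ from $\tF$ on the \emph{whole} ball $S$, where $|h|$ is only $O(1)$, rather than just on the band $S_\v$ where it is $\tilde O(1/d)$.

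\textbf{Step 2 ($\tf$ reveals nothing about $\v$).} On $S$, $\tf(\x)=\epsilon r\|\sin\tfrac1r\x\|^2$ by definition, which is independent of $\v$. On the padding region $\tS_2=\tH\setminus\tS$ (and, by periodicity of $\sin$, on all of $\R^d$ outside $S$), the computation in the proof of Property~1 of Lemma~\ref{lemma:scale_free_hard_function} shows that $\|\sin\tfrac1r\x\|$ is large enough that $\max\{\v\trans\sin\tfrac1r\x,\ \sqrt{\|\sin\tfrac1r\x\|^2-(\v\trans\sin\tfrac1r\x)^2}\}>1/\mu$, so $h(\sin\tfrac1r\x)=0$ and therefore $\tf(\x)=\tF(\x)=\epsilon r\|\sin\tfrac1r\x\|^2$, again independent of $\v$. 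Thus $\tf$ is identically equal to the $\v$-free function $\epsilon r\|\sin\tfrac1r\x\|^2$ everywhere outside $S$, and overall $\tf$ does not depend on $\v$ at all. Consequently the entire transcript of any (adaptive, possibly randomized, \emph{unbounded}-query) algorithm that only accesses the zero-th order oracle for $\tf$ — and hence its final output $\hat\x$ — is independent of $\v$.

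\textbf{Step 3 (reduction to random guessing).} By Property~3 of Lemma~\ref{lemma:hardfunction}, $\tF$ has no $O(\epsilon)$-SOSP in $\tS_2\cup\tS_\v$; since $\tF$ is periodic we may reduce $\hat\x$ into one fundamental cell (as in the proof of Theorem~\ref{theorem:lowerbound}), and there $\hat\x$ can be an $\epsilon$-SOSP of $\tF$ only if it lies in $\tS\setminus\tS_\v$. Because $\hat\x\perp\v$, we condition on $\hat\x$ and invoke Lemma~\ref{lemma:fixed_x}: for any fixed $\x\in\tS$, $\Pr_\v(\x\notin\tS_\v)\le 2e^{-(\log d)^2/2}$. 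Taking expectation over $\hat\x$, the success probability of outputting an $\epsilon$-SOSP of $\tF$ (in the sense of Definition~\ref{def:SOSP}, with the $O(\rho)$ Hessian-Lipschitz constant of $\tF$) is at most $2e^{-(\log d)^2/2}=o(1)$, hence below any fixed positive constant once $d$ is large enough.

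\textbf{Main obstacle.} The one genuinely delicate point is Step~2 — checking that the ``padding'' part of the landscape is $\v$-free so that $\tf$ leaks no information — but this is already contained in Lemma~\ref{lemma:scale_free_hard_function}; the remainder is a Fubini argument plus surface-area concentration on the sphere (Lemma~\ref{lemma_sa_conc_sphere}). A minor care point is tracking the periodicity when $\hat\x\in\R^d$ is arbitrary, which is handled exactly as in the proof of Theorem~\ref{theorem:lowerbound}.
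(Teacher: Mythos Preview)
Your proposal is correct and follows essentially the same route as the paper's proof: bound $|\tF-\tf|\le \epsilon r$ on $S$ (and zero outside), observe that $\tf$ is globally $\v$-independent because $h$ vanishes outside $S$, and then upper-bound the success probability by $\Pr_\v(\hat\x\notin \tS_\v)\le 2e^{-(\log d)^2/2}$ via Lemma~\ref{lemma:fixed_x}. Your write-up is more explicit than the paper's (in particular your Step~2 spells out the $\tS_2$ case and the periodicity reduction), but the argument is the same.
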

\begin{proof}
	$\sup_\x|\tF(\x) - \tf(\x)| = \sup_{\x \in S}|\tF(\x) - \tf(\x)| \leq \epsilon r$.
	Any solution output by any algorithm must be independent of $\v$ with probability $1$, since $h = 0$ outside of $S$. Suppose the algorithm $\alg$ outputs $\x$. Then $\Pr(\x \text{ is $\epsilon$-SOSP of $\tF$}) \leq \Pr(\x \notin S_\v) \leq 2 e^{-(\log d)^2/2}$. The upper bound on probability of success does not depend on the number of iterations. Therefore, no algorithm can output SOSP of $F$ with probability more than a constant.
\end{proof}


\section{Extension: Gradients pointwise close}\label{app:extension_grad}

In this section, we present an extension of our results to the problem of optimizing an unknown smooth function $F$ (population risk) when given only a gradient vector field $\g: \R^d \to \R^d$ that is pointwise close to the gradient $\grad F$. In other words, we now consider the analogous problem but for a first-order oracle. Indeed, in some applications including the optimization of deep neural networks, it might be possible to have a good estimate of the gradient of the population risk. A natural question is, what is the error in the gradient oracle that we can tolerate to obtain optimization guarantees for the true function $F$? More precisely, we work with the following assumption.

\begin{assumption} \label{assump2} Assume that the function pair ($F:\R^d\to\R, f:\R^d\to\R$) satisfies the following properties:
	\begin{enumerate}
		\item $F$ is $\ell$-gradient Lipschitz and $\rho$-Hessian Lipschitz. 
		\item $f$ is $L$-Lipschitz and differentiable, and $\grad f, \grad F$ are $\gnu$-pointwise close; i.e., $\|\grad f - \grad F \|_\infty \le \gnu$.
	\end{enumerate}
\end{assumption}

We henceforth refer to $\gnu$ as the \emph{gradient error}. As we explained in Section~\ref{sec:prelim}, our goal is to find second-order stationary points of $F$ given only function value access to $\g$. More precisely:

\begin{prob}\label{problem2}
	Given function pair ($F, f$) that satisfies Assumption \ref{assump}, find an $\epsilon$-second-order stationary point of $F$ with only access to function values of $\g = \grad f$.
\end{prob}
%
%
 		%

We provide an algorithm, Algorithm \ref{algo:FPSGD}, that solves Problem \ref{problem2} for gradient error $\gnu \leq O(\epsilon/\sqrt{d})$. Like Algorithm \ref{algo:PSGD}, Algorithm \ref{algo:FPSGD} is also a variant of SGD whose stochastic gradient oracle, $\g(\x+ \z)$ where $\z \sim \mathcal{N}(0,\sigma^2 \I)$, is derived from Gaussian smoothing.

\begin{algorithm}[t]
	\caption{First order Perturbed Stochastic Gradient Descent (FPSGD)}\label{algo:FPSGD}
	\begin{algorithmic}
		\renewcommand{\algorithmicrequire}{\textbf{Input: }}
		\renewcommand{\algorithmicensure}{\textbf{Output: }}
		\REQUIRE $\x_0$, learning rate $\eta$, noise radius $r$, mini-batch size $m$.
		\FOR{$t = 0, 1, \ldots, $}
		\STATE sample $(\z^{(1)}_t, \cdots, \z^{(m)}_t) \sim \mathcal{N}(0,\sigma^2 \I)$
		\STATE $\g_t(\x_t)  \leftarrow  \sum_{i=1}^m \g(\x_t+\z^{(i)}_t)$
		\STATE $\x_{t+1} \leftarrow \x_t - \eta (\g_t(\x_t) + \xi_t), \qquad \xi_t \text{~uniformly~} \sim \mathbb{B}_0(r)$
		\ENDFOR
		\STATE \textbf{return} $\x_T$
	\end{algorithmic}
\end{algorithm}

 	\begin{theorem}[Rates for Algorithm \ref{algo:FPSGD}]\label{thm:gradients_pw_close}
 			Given that the function pair ($F, f$) satisfies Assumption \ref{assump2} with $\gnu \le O(\epsilon/\sqrt{d})$, then for any $\delta >0$, with smoothing parameter $\sigma=\Theta(\sqrt{\epsilon/(\rho d)})$, learning rate $\eta=1/\ell$, perturbation $r = \tilde{\Theta}(\epsilon)$ and large mini-batch size $m=\poly(d, B, \ell,\rho,1/\epsilon, \log(1/\delta))$, FPSGD will find an $\epsilon$-second-order stationary point of $F$ with probability $1-\delta$, in $\poly(d, B, \ell,\rho,1/\epsilon, \log (1/\delta))$  number of queries.
 		\end{theorem}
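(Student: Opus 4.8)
The plan is to run the analysis of Theorem~\ref{thm:upperbound_informal} essentially verbatim, with the zeroth-order gradient estimator replaced by the first-order one, and with the Gaussian-smoothing estimates re-derived under the stronger hypothesis of Assumption~\ref{assump2}. Write $\tilde{f}_\sigma(\x)=\E_{\z\sim\mathcal{N}(0,\sigma^2\I)}f(\x+\z)$ and $\tilde{F}_\sigma(\x)=\E_{\z}F(\x+\z)$. Differentiating under the integral (legitimate since $\grad f$ is bounded by $L$) gives $\E_\z \grad f(\x+\z)=\grad\tilde{f}_\sigma(\x)$, so Algorithm~\ref{algo:FPSGD} is exactly perturbed SGD on $\tilde{f}_\sigma$ with an unbiased stochastic gradient. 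Moreover, since $f$ is $L$-Lipschitz the estimator $\g(\x+\z)$ is supported in a ball of radius $L$, hence sub-Gaussian with parameter $O(L)$; this is the analogue of Lemma~\ref{lem:stocgrad}, and it needs no boundedness of $f$ itself. Combined with the regularity of $\tilde{f}_\sigma$ established below, Theorem~\ref{thm:psgd_guar} then applies to $\tilde{f}_\sigma$.

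The key technical step is the gradient-error analogue of Lemma~\ref{lem:smoothing}: under Assumption~\ref{assump2},
\begin{enumerate}
\item $\tilde{f}_\sigma$ is $O(\ell+\gnu/\sigma)$-gradient Lipschitz and $O(\rho+\gnu/\sigma^2)$-Hessian Lipschitz;
\item $\norm{\grad\tilde{f}_\sigma(\x)-\grad F(\x)}\le O(\rho d\sigma^2+\gnu)$ and $\norm{\hess\tilde{f}_\sigma(\x)-\hess F(\x)}\le O(\rho\sqrt{d}\sigma+\gnu/\sigma)$.
\end{enumerate}
These are uniformly one power of $1/\sigma$ better in the error term than the bounds of Lemma~\ref{lem:smoothing}, because $\grad f-\grad F$ is a first-order rather than a zeroth-order quantity; this is precisely what turns the $\nu=O(\epsilon^{1.5}/d)$ tolerance of Theorem~\ref{thm:upperbound_informal} into $\gnu=O(\epsilon/\sqrt{d})$. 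For the gradient difference I would split $\grad\tilde{f}_\sigma-\grad F=(\grad\tilde{f}_\sigma-\grad\tilde{F}_\sigma)+(\grad\tilde{F}_\sigma-\grad F)$: the first term equals $\E_\z[(\grad f-\grad F)(\x+\z)]$, of norm $\le\gnu$; the second is $O(\rho d\sigma^2)$ by Lemma~\ref{small_lemma:grad_diff_FsF_rho}. For the Hessian bounds I would use the vector-field smoothing identity $\hess\tilde{f}_\sigma(\x)=\E_\z[\z\,(\grad f(\x+\z))^{\top}/\sigma^2]$ (obtained from Lemma~\ref{identity_1} applied coordinatewise to $\grad f$; crucially it uses only $\grad f$, not $\hess f$). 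This gives $\hess\tilde{f}_\sigma-\hess\tilde{F}_\sigma=\E_\z[\z\,((\grad f-\grad F)(\x+\z))^{\top}/\sigma^2]$, whose spectral norm is $O(\gnu/\sigma)$ by Cauchy--Schwarz against $\z/\sigma^2$ in each pair of directions; adding $\norm{\hess\tilde{F}_\sigma}\le\ell$ and $\norm{\hess\tilde{F}_\sigma(\x)-\hess F(\x)}\le\rho\sqrt{d}\sigma$ yields the gradient-Lipschitz constant and the Hessian difference. The Hessian-Lipschitz constant follows from the first-order Taylor-expansion / change-of-variables argument of Lemma~\ref{small_lemma:hess_diff_diff}, now carried out for the vector field $\grad f-\grad F$: this contributes the $O(\gnu/\sigma^2)\norm{\x-\y}$ term, while the $\tilde{F}_\sigma$ part contributes $O(\rho)$.

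With these in hand, the endgame mirrors Lemma~\ref{lem:upper_sosp}. Apply Theorem~\ref{thm:psgd_guar} to $\tilde{f}_\sigma$ with target accuracy $\tilde\epsilon=c\,\epsilon/\sqrt{d}$, gradient-Lipschitz $O(\ell)$ (valid for $\epsilon$ small enough that $\sqrt{\rho\epsilon}=O(\ell)$) and Hessian-Lipschitz $\tilde\rho=O(\rho+\gnu/\sigma^2)$, producing a $\tilde\epsilon$-SOSP $\x^\star$ of $\tilde{f}_\sigma$ in $\poly(d,B,\ell,\rho,1/\epsilon,\log(1/\delta))$ queries. Then by Weyl's inequality and part~(2), $\norm{\grad F(\x^\star)}\le\tilde\epsilon+O(\rho d\sigma^2+\gnu)$ and $\lambda_{\min}(\hess F(\x^\star))\ge-\sqrt{\tilde\rho\tilde\epsilon}-O(\rho\sqrt{d}\sigma+\gnu/\sigma)$. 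Setting $\sigma=\Theta(\sqrt{\epsilon/(\rho d)})$ makes $\rho d\sigma^2=\Theta(\epsilon)$ and $\rho\sqrt{d}\sigma=\Theta(\sqrt{\rho\epsilon})$, and one checks that $\gnu$, $\gnu/\sigma$, and $\gnu/\sigma^2$ are $O(\epsilon)$, $O(\sqrt{\rho\epsilon})$, and $O(\rho\sqrt{d})$ respectively exactly when $\gnu\le O(\epsilon/\sqrt{d})$; in particular $\tilde\rho=O(\rho\sqrt{d})$, so $\sqrt{\tilde\rho\tilde\epsilon}=O(\sqrt{\rho\epsilon})$. Hence $\x^\star$ is an $O(\epsilon)$-SOSP of $F$, and rescaling $\epsilon$ by the universal constant completes the proof. (As in Theorem~\ref{thm:upperbound_informal}, $B$-boundedness of $F$ — inherited from Assumption~\ref{assump} — is used to bound the number of descent steps, which is the source of the $B$ in the stated rate.)

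I expect the principal obstacle to be part~(1) of the smoothing lemma, specifically the Hessian-Lipschitz estimate: adapting the argument of Lemma~\ref{small_lemma:hess_diff_diff} from a scalar perturbation $f-F$ to the vector-field perturbation $\grad f-\grad F$ requires care with the tensor bookkeeping in the Taylor expansion, and one must verify throughout that only first differentiability of $f$ is ever invoked, so that all manipulations of $\hess\tilde{f}_\sigma$ pass through the identity $\hess\tilde{f}_\sigma(\x)=\E_\z[\z(\grad f(\x+\z))^{\top}/\sigma^2]$ rather than through $\hess f$. The remaining pieces are direct transcriptions of the zeroth-order analysis.
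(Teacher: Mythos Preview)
Your proposal is correct and follows essentially the same route as the paper: the paper proves exactly the smoothing lemma you state (with the same four bounds and the same $1/\sigma$ improvement over Lemma~\ref{lem:smoothing}), uses the same identity $\hess\tilde{f}_\sigma(\x)=\E_\z[\z\,\grad f(\x+\z)^{\top}/\sigma^2]$ to avoid ever touching $\hess f$, establishes the sub-Gaussian property of $\g(\x+\z)$ from $L$-Lipschitzness, and then closes with the analogue of Lemma~\ref{lem:upper_sosp} via Weyl's inequality and the choice $\sigma=\Theta(\sqrt{\epsilon/(\rho d)})$. The Hessian-Lipschitz estimate is indeed handled by the vector-field version of the change-of-variables / first-order Taylor argument you anticipate.
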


Note that Algorithm \ref{algo:FPSGD} doesn't require oracle access to $f$, only to $\g$. We also observe the tolerance on $\tilde{\nu}$ is much better compared to Theorem \ref{thm:upperbound_informal}, as noisy gradient information is available here while only noisy function value is avaliable in Theorem \ref{thm:upperbound_informal}. The proof of this theorem can be found in Appendix~\ref{app:gradients_pw_close}.

\section{Proof of Extension: Gradients pointwise close}\label{app:gradients_pw_close}

This section proceeds similarly as in section \ref{app:upper} with the exception that all the results are now in terms of the gradient error, $\gnu$. First, we present the gradient and Hessian smoothing identities (\ref{eqn:smoothed_g} and \ref{eqn:smoothed_g_hess}) that we use extensively in the proofs. In section \ref{app:grad_lemsmoothing}, we present and prove the key lemma on the properties of the smoothed function $\tilde{f}_\sigma(\x)$. Next, in section \ref{app:g_stoc_grad}, we prove the properties of the stochastic gradient $\g(\x+\z)$. Then, using these lemmas, in section \ref{app:proof_grad_pw_close} we prove a main theorem about the guarantees of FPSGD (Theorem~\ref{thm:gradients_pw_close}). For clarity, we defer all technical lemmas and their proofs to section \ref{app:g_tech_lems}.

Recall the definition of the gradient smoothing of a function given in Definition \ref{def:smoothed_f}. In this section we will consider a smoothed version of the (possibly erroneous) gradient oracle, defined as follows.
\begin{equation}\label{eqn:smoothed_g}
\grad \tilde{f}_\sigma(\x) = \E_\z \grad f(\x+ \z).
\end{equation}

Note that indeed $\grad \tilde{f}_\sigma(\x) = \grad \E_\z f(\x+\z)$. We can also write down following identity for the Hessian of the smoothed function.
\begin{equation}\label{eqn:smoothed_g_hess}
\hess \tilde{f}_\sigma(\x) = \E_\z [ \frac{\z}{\sigma^2} \grad f (\x+\z)\trans]
\end{equation}

The proof is a simple calculation.

\begin{proof}[Proof of Equation \ref{eqn:smoothed_g_hess}]
	We proceed by exchanging the order of differentiation. The last equality follows from applying lemma \ref{identity_1} to the function $\fracpar{}{x_j}  f (\x+\z) $
		\[ \fracpar{}{x_i \partial x_j} \tilde{f}_\sigma(\x) = \fracpar{}{x_i} \fracpar{}{x_j} \E_\z [f (\x+\z)] 
		=\fracpar{}{x_i} \E_\z [ \fracpar{}{x_j}  f (\x+\z)] 
		=\E_\z [ \frac{z_i}{\sigma^2} \fracpar{}{x_j}  f (\x+\z)] \]
\end{proof}

\subsection{Properties of the Gaussian smoothing}\label{app:grad_lemsmoothing}

In this section, we show the properties of smoothed function $\grad \tilde{f}_\sigma(\x) $.
\begin{lemma}[Property of smoothing]\label{lem:grad_smoothing}
	Assume function pair ($F, f$) satisfies Assumption \ref{assump2}, and let $\grad \tilde{f}_\sigma(\x)$ be as given in equation \ref{eqn:smoothed_g}. Then, the following holds
	\begin{enumerate}
		\item $\tilde{f}_\sigma(\x)$ is $O(\ell +\frac{\tilde{\nu}}{\sigma})$-gradient Lipschitz and $O(\rho+\frac{\tilde{\nu}}{\sigma^2})$-Hessian Lipschitz.
		\item 
		$\norm{\grad \tilde{f}_\sigma(\x) - \grad F(\x)} \leq  O(\rho d \sigma^2+ \tilde{\nu})$ and
		$\norm{\hess \tilde{f}_\sigma(\x) - \hess F(\x)} \leq O(\rho\sqrt{d}\sigma + \frac{\tilde{\nu}}{\sigma})$.
	\end{enumerate}
\end{lemma}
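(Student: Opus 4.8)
The plan is to mirror the proof of Lemma~\ref{lem:smoothing2} in Section~\ref{app:upper}, replacing the scalar perturbation $f-F$ by the vector field $w(\x)\defeq\grad f(\x)-\grad F(\x)$, which Assumption~\ref{assump2} controls as $\sup_\x\norm{w(\x)}\le\tilde{\nu}$ (and, since $f$ is $L$-Lipschitz, $\grad f$ is bounded, so every smoothed quantity below is well-defined and differentiation under the integral is justified). As before I would introduce the smoothed population gradient $\grad\tilde F_\sigma(\x)=\E_\z\grad F(\x+\z)$, with $\hess\tilde F_\sigma(\x)=\E_\z\hess F(\x+\z)$, and split each quantity into a ``population'' part depending only on $F,\tilde F_\sigma$ and a ``difference'' part involving $w$. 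The population parts are literally the ones already bounded in Section~\ref{app:upper}, so I would quote Lemma~\ref{small_lemma:hess_Fs} ($\norm{\hess\tilde F_\sigma(\x)-\hess\tilde F_\sigma(\y)}\le\rho\norm{\x-\y}$), Lemma~\ref{small_lemma:grad_diff_FsF_rho} ($\norm{\grad\tilde F_\sigma(\x)-\grad F(\x)}\le\rho d\sigma^2$), and the computation in the proof of Lemma~\ref{hess_diff} ($\norm{\hess\tilde F_\sigma(\x)-\hess F(\x)}\le\rho\sqrt d\sigma$) verbatim. The only new identity needed is the Hessian smoothing formula \eqref{eqn:smoothed_g_hess}, from which $\hess\tilde f_\sigma(\x)-\hess\tilde F_\sigma(\x)=\E_\z[\tfrac{\z}{\sigma^2}\,w(\x+\z)\trans]$; crucially this carries one fewer factor of $\z$ than its function-value analogue $\E_\z[\tfrac{\z\z\trans-\sigma^2\I}{\sigma^4}(f-F)(\x+\z)]$, and that missing power of $\sigma^{-1}$ is exactly why the error terms here are $\tilde{\nu}/\sigma$ and $\tilde{\nu}/\sigma^2$ rather than $\nu/\sigma^2$ and $\nu/\sigma^3$.

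Three of the four bounds then follow quickly. Since $\tilde f_\sigma$ is twice differentiable, its gradient-Lipschitz constant equals $\sup_\x\norm{\hess\tilde f_\sigma(\x)}$, and $\norm{\hess\tilde f_\sigma(\x)}\le\norm{\hess\tilde F_\sigma(\x)}+\norm{\hess\tilde f_\sigma(\x)-\hess\tilde F_\sigma(\x)}\le\ell+\tilde{\nu}/\sigma$, where the second term is bounded by the elementary estimate
\[
\big|\u\trans\E_\z[\tfrac{\z}{\sigma^2}w(\x+\z)\trans]\v\big|=\tfrac1{\sigma^2}\big|\E_\z[(\u\trans\z)(w(\x+\z)\trans\v)]\big|\le\tfrac1{\sigma^2}\sqrt{\E(\u\trans\z)^2}\cdot\tilde{\nu}=\tfrac{\tilde{\nu}}{\sigma}
\]
for unit $\u,\v$ (Cauchy--Schwarz, together with $\E(\u\trans\z)^2=\sigma^2$ and $\E(w(\x+\z)\trans\v)^2\le\tilde{\nu}^2$). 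The same inequality bounds the ``difference'' half of the Hessian-difference claim, so $\norm{\hess\tilde f_\sigma(\x)-\hess F(\x)}\le\rho\sqrt d\sigma+\tilde{\nu}/\sigma$. For the gradient-difference claim I would write $\grad\tilde f_\sigma(\x)-\grad F(\x)=\E_\z w(\x+\z)+(\grad\tilde F_\sigma(\x)-\grad F(\x))$; the first term has norm $\le\E_\z\norm{w(\x+\z)}\le\tilde{\nu}$ and the second is $O(\rho d\sigma^2)$ by Lemma~\ref{small_lemma:grad_diff_FsF_rho}.

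The main obstacle is the Hessian-Lipschitz bound for the difference $\hess\tilde f_\sigma-\hess\tilde F_\sigma$, exactly as Lemma~\ref{small_lemma:hess_diff_diff} was the crux in the function-value case. The subtlety is that $w$ is only bounded, not Lipschitz ($f$ is merely assumed differentiable, so $\grad f$ need not be Lipschitz), hence I cannot differentiate $w$ directly; instead the increment from $\x$ to $\y$ must be shifted onto the smooth Gaussian kernel via a change of variables, as in the derivation of \eqref{eqn:cov1}. This gives
\[
(\hess\tilde f_\sigma-\hess\tilde F_\sigma)(\x)-(\hess\tilde f_\sigma-\hess\tilde F_\sigma)(\y)=h\int(\omega(\Delta)-\omega(-\Delta))\,g(\z)\trans\,d\z,
\]
where $h=(2\pi\sigma^2)^{-d/2}$, $g(\z)=w(\z+\tfrac{\x+\y}{2})$ with $\norm{g(\z)}\le\tilde{\nu}$, $\Delta=\tfrac{\y-\x}{2}$, and $\omega(\Delta)=\tfrac{\z+\Delta}{\sigma^2}e^{-\norm{\z+\Delta}^2/2\sigma^2}$. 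A first-order Taylor expansion in $\Delta$ yields $\omega(\Delta)e^{\norm{\z}^2/2\sigma^2}-\omega(-\Delta)e^{\norm{\z}^2/2\sigma^2}=\tfrac{2\Delta}{\sigma^2}-\tfrac{2\inner{\Delta}{\z}}{\sigma^4}\z+O(\norm{\Delta}^2)$, so the expression above equals $\tfrac{2}{\sigma^2}(\E_\z g(\z))\Delta\trans-\tfrac{2}{\sigma^4}\E_\z[\inner{\Delta}{\z}\,\z\,g(\z)\trans]+O(\norm{\Delta}^2)$. The first term has spectral norm $\le\tfrac{2\tilde{\nu}}{\sigma^2}\norm{\Delta}$ since $\norm{\E_\z g(\z)}\le\tilde{\nu}$; for the second, $\u\trans\E_\z[\inner{\Delta}{\z}\z g(\z)\trans]\v\le\tilde{\nu}\,\E_\z\!\big[|\inner{\Delta}{\z}|\,|\u\trans\z|\big]\le\tilde{\nu}\sqrt{\E\inner{\Delta}{\z}^2\cdot\E(\u\trans\z)^2}=\tilde{\nu}\sigma^2\norm{\Delta}$, so it also contributes $\le\tfrac{2\tilde{\nu}}{\sigma^2}\norm{\Delta}$ (this is the vector-valued counterpart of Lemma~\ref{small_lemma:norm_exp}, and notably incurs no factor of $d$). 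Adding the population contribution $\rho\norm{\x-\y}$ from Lemma~\ref{small_lemma:hess_Fs} and the $O(\norm{\x-\y}^2)$ remainder (handled exactly as in Lemma~\ref{small_lemma:hess_diff_diff}) gives the claimed $O(\rho+\tilde{\nu}/\sigma^2)$-Hessian-Lipschitz bound. The genuinely new work is thus confined to this Taylor-plus-moment computation and the vector-valued moment estimate; everything else is a direct transcription of Section~\ref{app:upper}.
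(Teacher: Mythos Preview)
Your proposal is correct and follows essentially the same approach as the paper: the paper packages your two inline Cauchy--Schwarz moment bounds as Lemma~\ref{small_lemma:1st_order_smoothing} and your change-of-variables-plus-Taylor computation for the Hessian-Lipschitz of the difference as Lemma~\ref{lem:tech_hess}, but the decomposition, the identities used, and the estimates are identical. One cosmetic slip: the first-order term after Taylor expansion is $\tfrac{2}{\sigma^2}\Delta\,(\E_\z g(\z))\trans$, not $(\E_\z g(\z))\Delta\trans$, though of course the spectral norm is the same.
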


We will prove the 4 claims of the lemma one by one, in the following 4 sub-subsections.

\subsubsection{Gradient Lipschitz}

We bound the gradient Lipschitz constant of $\tilde{f}_\sigma$ in the following lemma.
\begin{lemma}[Gradient Lipschitz of $\tilde{f}_\sigma$ under gradient closeness]\label{grad_lip_1st_order}
	$\norm{\hess \tilde{f}_\sigma(\x) } \leq O(\ell +\frac{\tilde{\nu}}{\sigma})$.
	\begin{proof}
		By triangle inequality, 
		\begin{align*}
		\norm{\hess \tilde{f}_\sigma(\x) } &= \norm{\hess \tilde{F}_\sigma(\x) + \hess \tilde{f}_\sigma(\x) - \hess \tilde{F}_\sigma(\x)} \\ 
		&\leq \norm{\hess \tilde{F}_\sigma(\x)} +\norm{ \hess \tilde{f}_\sigma(\x) - \hess \tilde{F}_\sigma(\x)} \\
		&\leq \ell + \norm{\E_\z [ \frac{\z}{\sigma^2} (\grad f - \grad F ) (\x+\z)\trans]} \\
		&\leq O (\ell + \frac{\tilde{\nu}}{\sigma} )
		\end{align*}
		The last inequality follows from Lemma  \ref{small_lemma:1st_order_smoothing}.
	\end{proof}
\end{lemma}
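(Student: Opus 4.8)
The plan is to use the elementary fact that, for a twice-differentiable function, the gradient-Lipschitz constant equals the uniform bound on the spectral norm of its Hessian; so it suffices to establish $\norm{\hess \tilde{f}_\sigma(\x)} \le O(\ell + \tilde{\nu}/\sigma)$ for every $\x$. (Smoothing a Lipschitz, differentiable $f$ by a Gaussian produces a $C^\infty$ function, so $\tilde{f}_\sigma$ is certainly twice differentiable.) Following the template already used for Lemma \ref{grad_lip}, introduce the proof-only object $\tilde{F}_\sigma(\x) = \E_\z F(\x+\z)$ and split via the triangle inequality:
\[
\norm{\hess \tilde{f}_\sigma(\x)} \;\le\; \norm{\hess \tilde{F}_\sigma(\x)} \;+\; \norm{\hess \tilde{f}_\sigma(\x) - \hess \tilde{F}_\sigma(\x)}.
\]

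For the first term, since $F$ is $C^2$ with $\norm{\hess F}\le\ell$ everywhere (equivalent to $\ell$-gradient Lipschitzness), differentiating under the integral gives $\hess \tilde{F}_\sigma(\x) = \E_\z \hess F(\x+\z)$, hence $\norm{\hess \tilde{F}_\sigma(\x)} \le \E_\z \norm{\hess F(\x+\z)} \le \ell$ by Jensen's inequality. For the second term, I would apply the gradient-form Hessian-smoothing identity (Equation \ref{eqn:smoothed_g_hess}) to the differentiable map $f-F$, obtaining
\[
\hess \tilde{f}_\sigma(\x) - \hess \tilde{F}_\sigma(\x) \;=\; \E_\z\!\left[\frac{\z}{\sigma^2}\,(\grad f - \grad F)(\x+\z)\trans\right],
\]
and estimate the spectral norm of this matrix by pairing with unit test vectors. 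For unit $\u,\w$, since $\u\trans(\z\,\v\trans)\w = (\u\trans\z)(\v\trans\w)$ and $\norm{(\grad f - \grad F)(\cdot)}\le\tilde{\nu}$ pointwise by Assumption \ref{assump2},
\[
\u\trans\!\left(\E_\z\!\left[\tfrac{\z}{\sigma^2}(\grad f-\grad F)(\x+\z)\trans\right]\right)\!\w \;\le\; \frac{1}{\sigma^2}\,\E_\z\!\big[\,|\u\trans\z|\cdot\norm{(\grad f-\grad F)(\x+\z)}\,\big] \;\le\; \frac{\tilde{\nu}}{\sigma^2}\,\E_\z|\u\trans\z|,
\]
and since $\u\trans\z\sim\mathcal{N}(0,\sigma^2)$ we have $\E_\z|\u\trans\z| = \sqrt{2/\pi}\,\sigma$; taking the supremum over $\u,\w$ gives $\norm{\hess \tilde{f}_\sigma(\x) - \hess \tilde{F}_\sigma(\x)} \le \sqrt{2/\pi}\,\tilde{\nu}/\sigma$. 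This last step is the vector-field analogue of Lemma \ref{small_lemma:zfz}. Combining the two bounds yields $\norm{\hess \tilde{f}_\sigma(\x)} \le \ell + \sqrt{2/\pi}\,\tilde{\nu}/\sigma = O(\ell + \tilde{\nu}/\sigma)$.

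The one point that needs care --- and the step I would flag as the crux --- is obtaining the \emph{dimension-free} rate $\tilde{\nu}/\sigma$ for the difference term. The naive bound $\norm{\E_\z[\frac{\z}{\sigma^2}\v(\x+\z)\trans]}\le \E_\z\frac{\norm{\z}}{\sigma^2}\norm{\v(\x+\z)} \le \frac{\tilde{\nu}}{\sigma^2}\E_\z\norm{\z} = \frac{\sqrt{d}\,\tilde{\nu}}{\sigma}$ loses an extra factor of $\sqrt{d}$; the test-vector computation above avoids this because each projection $\u\trans\z$ is only a one-dimensional Gaussian, with $\E|\u\trans\z|=\Theta(\sigma)$ rather than $\Theta(\sqrt{d}\,\sigma)$. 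Everything else --- the legitimacy of differentiating under the expectation (which holds since $f$ is $L$-Lipschitz and the Gaussian density decays rapidly) and the bookkeeping of constants --- is routine.
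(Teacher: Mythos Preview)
Your proof is correct and follows essentially the same approach as the paper: the same triangle-inequality split into $\norm{\hess \tilde{F}_\sigma}$ and $\norm{\hess(\tilde{f}_\sigma-\tilde{F}_\sigma)}$, the same use of the identity \eqref{eqn:smoothed_g_hess} for the difference term, and the same test-vector estimate to get the dimension-free $\tilde{\nu}/\sigma$ rate. The only cosmetic difference is that the paper packages the final spectral-norm bound as Lemma~\ref{small_lemma:1st_order_smoothing} (using Cauchy--Schwarz, yielding constant $1$), whereas you compute it directly via $\E|\u\trans\z|=\sqrt{2/\pi}\,\sigma$, obtaining the slightly sharper constant $\sqrt{2/\pi}$.
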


\subsubsection{Hessian Lipschitz}

We bound the Hessian Lipschitz constant of $\tilde{f}_\sigma$ in the following lemma.
\begin{lemma}[Hessian Lipschitz of $\tilde{f}_\sigma$ under gradient closeness]\label{hess_lip_1st_order}
	\[\norm{\hess \tilde{f}_\sigma(\x) - \hess \tilde{f}_\sigma(\y) }\leq O(\rho+\frac{\tilde{\nu}}{\sigma^2} ) \norm{\x-\y}.\]
	\end{lemma}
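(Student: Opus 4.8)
The plan is to imitate the zeroth‑order argument (Lemma~\ref{hess_lip} together with its technical companion Lemma~\ref{small_lemma:hess_diff_diff}): peel off the smooth population term and control the remaining ``error'' term by transferring the spatial increment onto the Gaussian kernel rather than onto the gradient error, which is only bounded and not Lipschitz. Concretely, set $D(\x) \defeq \hess \tilde{f}_\sigma(\x) - \hess \tilde{F}_\sigma(\x)$ and use
\[
\norm{\hess \tilde{f}_\sigma(\x) - \hess \tilde{f}_\sigma(\y)} \le \norm{D(\x) - D(\y)} + \norm{\hess \tilde{F}_\sigma(\x) - \hess \tilde{F}_\sigma(\y)}.
\]
The second term is at most $\rho\norm{\x-\y}$ by exactly the computation in Lemma~\ref{small_lemma:hess_Fs}, since $\hess \tilde{F}_\sigma(\x) = \E_\z \hess F(\x+\z)$ and $F$ is $\rho$‑Hessian Lipschitz. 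Everything therefore reduces to showing that $D$ is $O(\tilde{\nu}/\sigma^2)$‑Lipschitz.

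\textbf{Controlling $D$.} First I would rewrite $D$ via the Hessian smoothing identity \eqref{eqn:smoothed_g_hess}, applied to both $f$ and $F$, and then change variables $\z\mapsto\z-\x$:
\[
D(\x) = \E_{\z \sim \mathcal{N}(0,\sigma^2\I)}\Big[\tfrac{\z}{\sigma^2}(\grad f - \grad F)(\x+\z)\trans\Big] = \tfrac{1}{(2\pi\sigma^2)^{d/2}}\int \tfrac{\z - \x}{\sigma^2}(\grad f - \grad F)(\z)\trans e^{-\norm{\z-\x}^2/2\sigma^2}\,\dd\z .
\]
Write $\phi \defeq \grad f - \grad F$; Assumption~\ref{assump2} gives $\norm{\phi(\cdot)} \le \tilde{\nu}$ everywhere, and the Gaussian kernel is smooth with all derivatives integrable, so I can differentiate under the integral sign in $\x$. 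Differentiating the explicit $\x$‑dependence of the kernel and changing variables back yields, for any unit vector $\v$,
\[
\inner{\grad_\x D(\x)}{\v} = \E_\z\Big[\Big(-\tfrac{\v}{\sigma^2} + \tfrac{\inner{\z}{\v}\,\z}{\sigma^4}\Big)\phi(\x+\z)\trans\Big].
\]

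\textbf{The Gaussian moment estimate.} It remains to bound the spectral norm of this matrix by $O(\tilde{\nu}/\sigma^2)$, uniformly in $\x$ and $\v$. The rank‑one term contributes $\sigma^{-2}\norm{\E_\z[\v\,\phi(\x+\z)\trans]} = \sigma^{-2}\norm{\E_\z \phi(\x+\z)} \le \tilde{\nu}/\sigma^2$. For the other term, testing against unit vectors $\u,\w$, using $|\inner{\phi(\x+\z)}{\w}|\le \tilde{\nu}$, Cauchy--Schwarz and $\E[\inner{\z}{\v}^2]=\E[\inner{\z}{\u}^2]=\sigma^2$, one gets $\sigma^{-4}\norm{\E_\z[\inner{\z}{\v}\,\z\,\phi(\x+\z)\trans]} \le \tilde{\nu}\sigma^{-4}\sqrt{\E[\inner{\z}{\v}^2]\,\E[\inner{\z}{\u}^2]} = \tilde{\nu}/\sigma^2$; I would record these two bounds as a small technical lemma, the first‑order analogue of Lemma~\ref{small_lemma:norm_exp}. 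Hence $\norm{\inner{\grad_\x D(\x)}{\v}} \le 2\tilde{\nu}/\sigma^2$ for all $\x$ and unit $\v$, so integrating along the segment from $\y$ to $\x$ gives $\norm{D(\x)-D(\y)}\le (2\tilde{\nu}/\sigma^2)\norm{\x-\y}$, and combining with the population term yields $\norm{\hess \tilde{f}_\sigma(\x)-\hess \tilde{f}_\sigma(\y)} \le (\rho + 2\tilde{\nu}/\sigma^2)\norm{\x-\y} = O(\rho+\tilde{\nu}/\sigma^2)\norm{\x-\y}$.

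\textbf{Main obstacle.} The only genuinely non‑routine point is that $\phi=\grad f - \grad F$ carries no regularity beyond boundedness: since $f$ is merely Lipschitz and differentiable, $\grad f$ may be discontinuous, so the na\"ive bound $\norm{D(\x)-D(\y)} \le \E_\z\norm{\sigma^{-2}\z\,(\phi(\x+\z)-\phi(\y+\z))\trans}$ is useless. The fix --- differentiating under the integral so the increment lands on the smooth, rapidly decaying Gaussian kernel, then pulling out the $1/\sigma^2$ via second moments of a Gaussian --- is where all the content is; the rest is the triangle inequality and the already‑established regularity of $\tilde{F}_\sigma$. As an alternative to the segment integration one could follow Lemma~\ref{small_lemma:hess_diff_diff} verbatim (recenter the integral at $(\x+\y)/2$, Taylor‑expand the kernel to first order in $\Delta=(\y-\x)/2$, bound the linear term by $O(\tilde{\nu}/\sigma^2)\norm{\Delta}$ and absorb the $O(\norm{\Delta}^2)$ remainder using the global bound $\norm{\hess \tilde{f}_\sigma}\le O(\ell+\tilde{\nu}/\sigma)$ from Lemma~\ref{grad_lip_1st_order} for large $\norm{\x-\y}$), but the segment version is cleaner because it avoids that remainder.
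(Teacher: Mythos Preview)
Your proposal is correct and structurally identical to the paper's proof: the same triangle-inequality split $D(\x)-D(\y)$ plus $\hess\tilde F_\sigma(\x)-\hess\tilde F_\sigma(\y)$, the same use of Lemma~\ref{small_lemma:hess_Fs} for the population piece, and the same Gaussian moment estimates for the error piece (your ``small technical lemma'' is precisely the paper's Lemma~\ref{small_lemma:1st_order_smoothing}). The only genuine difference is in how $D(\x)-D(\y)$ is controlled. The paper (Lemma~\ref{lem:tech_hess}) recenters the integral at $(\x+\y)/2$ and Taylor-expands the kernel to first order in $\Delta=(\y-\x)/2$, obtaining $O(\tilde\nu/\sigma^2)\norm{\x-\y}+O(\norm{\x-\y}^2)$ and then silently discarding the quadratic remainder; you instead differentiate under the integral, bound $\norm{\partial_\v D(\x)}\le 2\tilde\nu/\sigma^2$ uniformly, and integrate along the segment, which gives the pure linear bound with no remainder. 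Your route is slightly cleaner for exactly the reason you identify: it sidesteps the $O(\norm{\x-\y}^2)$ term that the paper's argument leaves hanging (and which would otherwise need the global bound from Lemma~\ref{grad_lip_1st_order} to absorb for large increments).
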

	\begin{proof}
		By triangle inequality:
		\begin{align*}
		&\norm{\hess \tilde{f}_\sigma(\x) - \hess \tilde{f}_\sigma(\y) } \\
		= &\norm{\hess \tilde{f}_\sigma(\x) - \hess \tilde{F}_\sigma(\x)  - \hess \tilde{f}_\sigma(\y) + \hess \tilde{F}_\sigma(\y) +\hess \tilde{F}_\sigma(\x) - \hess \tilde{F}_\sigma(\y)  } 	\\
		\leq &\norm{\hess \tilde{f}_\sigma(\x) - \hess \tilde{F}_\sigma(\x)  - (\hess \tilde{f}_\sigma(\y) - \hess \tilde{F}_\sigma(\y))} +  \norm{\hess \tilde{F}_\sigma(\x) - \hess \tilde{F}_\sigma(\y)  }  \\
		= &O(\frac{\tilde{\nu}}{\sigma^2})\norm{\x-\y} +   O(\rho)\norm{ \x-\y} + O(\norm{\x-\y}^2) 
		\end{align*}
		The last inequality follows from Lemmas \ref{small_lemma:hess_Fs} and \ref{lem:tech_hess}.
		\end{proof}

\subsubsection{Gradient Difference}
We bound the difference between the gradients of smoothed function $\tilde{f}_\sigma(\x) $ and those of the true objective $F$.

\begin{lemma}[Gradient Difference under gradient closeness]\label{grad_diff_1st_order}
	$\norm{\grad \tilde{f}_\sigma(\x) - \grad F(\x)} \leq O(\rho d \sigma^2+ \tilde{\nu})$.
	\begin{proof}
		By triangle inequality:
		\begin{align}
		\norm{\grad \tilde{f}_\sigma(\x) - \grad F(\x)} &\leq \norm{\grad \tilde{f}_\sigma(\x) - \grad \tilde{F}_\sigma(\x)} + \norm{\grad \tilde{F}_\sigma(\x) - \grad F(\x)} \nonumber \\
		&\leq \norm{\E_\z[(\grad f - \grad F)(\x+\z)]} + O(\rho d \sigma^2) \label{eqn:grad_diff_1st_order} \\
		&\leq O(\tilde{\nu} +  \rho d \sigma^2 )\nonumber
		\end{align}
	 The inequality at \eqref{eqn:grad_diff_1st_order}	follows from Lemma \ref{small_lemma:grad_diff_FsF_rho}.
	\end{proof}
\end{lemma}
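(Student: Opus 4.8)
The plan is to prove Lemma~\ref{grad_diff_1st_order} by inserting the smoothed population gradient $\grad \tilde{F}_\sigma(\x) = \E_{\z\sim\mathcal{N}(0,\sigma^2\I)}[\grad F(\x+\z)]$ and splitting the error with the triangle inequality:
\[
\norm{\grad \tilde{f}_\sigma(\x) - \grad F(\x)} \;\le\; \underbrace{\norm{\grad \tilde{f}_\sigma(\x) - \grad \tilde{F}_\sigma(\x)}}_{\text{(I) effect of the gradient error}} \;+\; \underbrace{\norm{\grad \tilde{F}_\sigma(\x) - \grad F(\x)}}_{\text{(II) effect of smoothing}}.
\]
Both pieces are already available among the tools developed earlier in the excerpt, so the argument is short.

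For term~(I), I would use linearity of Gaussian convolution together with the smoothing identity~\eqref{eqn:smoothed_g}: since $\grad \tilde{f}_\sigma(\x) = \E_\z[\grad f(\x+\z)]$ and $\grad \tilde{F}_\sigma(\x) = \E_\z[\grad F(\x+\z)]$, we get $\grad \tilde{f}_\sigma(\x) - \grad \tilde{F}_\sigma(\x) = \E_\z[(\grad f - \grad F)(\x+\z)]$. Jensen's inequality (convexity of the norm) then gives $\norm{\grad \tilde{f}_\sigma(\x) - \grad \tilde{F}_\sigma(\x)} \le \E_\z\norm{(\grad f - \grad F)(\x+\z)} \le \tilde\nu$ by the pointwise gradient-closeness part of Assumption~\ref{assump2}. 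It is worth contrasting this one-line bound with the zeroth-order analogue (Lemma~\ref{small_lemma:grad_diff_fsFs}), where only function values are $\nu$-close and one pays an extra $1/\sigma$ factor; having $\grad f$ directly close to $\grad F$ is precisely why the tolerance on $\tilde\nu$ improves.

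For term~(II), the point $\x$ is fixed and this is exactly Lemma~\ref{small_lemma:grad_diff_FsF_rho} from Appendix~\ref{app:upper}: $\norm{\grad \tilde{F}_\sigma(\x) - \grad F(\x)} \le \rho d\sigma^2$. The mechanism there (which I would simply cite) is a mean-value / Taylor expansion of $\grad F$ around $\x$ using $\rho$-Hessian Lipschitzness of $F$: the linear term $\E_\z[\hess F(\x)\z]$ vanishes because $\E_\z[\z]=0$, and the quadratic remainder is bounded by $\rho\,\E_\z\norm{\z}^2 = \rho d\sigma^2$. Adding (I) and (II) gives $\norm{\grad \tilde{f}_\sigma(\x) - \grad F(\x)} \le O(\tilde\nu + \rho d\sigma^2)$, as claimed.

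Since every ingredient is off the shelf, there is no real obstacle here; the only subtlety is bookkeeping — noticing that the Gaussian symmetry cancellation in~(II) is what keeps the $\sigma$-dependence quadratic rather than linear, and being careful about the $\ell_2$ (rather than coordinatewise) reading of $\norm{\grad f - \grad F}_\infty \le \tilde\nu$ in Assumption~\ref{assump2}, which is what makes term~(I) genuinely of order $\tilde\nu$. I would close the subsection by remarking that the same three-step template — triangle inequality through $\hess \tilde{F}_\sigma$, control the Hessian gap $\hess \tilde{f}_\sigma(\x) - \hess \tilde{F}_\sigma(\x) = \E_\z[\tfrac{\z}{\sigma^2}(\grad f - \grad F)(\x+\z)\trans]$ via identity~\eqref{eqn:smoothed_g_hess} and Assumption~\ref{assump2}, and bound $\hess \tilde{F}_\sigma - \hess F$ by $\rho\,\E\norm{\z} \le \rho\sqrt d\,\sigma$ via Hessian Lipschitzness — yields the companion estimate $\norm{\hess \tilde{f}_\sigma(\x) - \hess F(\x)} \le O(\rho\sqrt d\,\sigma + \tilde\nu/\sigma)$, completing Lemma~\ref{lem:grad_smoothing}.
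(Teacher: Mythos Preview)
Your proposal is correct and follows essentially the same route as the paper: split via the triangle inequality through $\grad \tilde{F}_\sigma(\x)$, bound term~(II) by citing Lemma~\ref{small_lemma:grad_diff_FsF_rho}, and bound term~(I) by writing it as $\norm{\E_\z[(\grad f-\grad F)(\x+\z)]}\le\tilde\nu$. Your additional commentary (the explicit use of Jensen, the contrast with the zeroth-order case, and the preview of the Hessian analogue) goes a bit beyond the paper's terse write-up but is all accurate.
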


\subsubsection{Hessian Difference}

We bound the difference between the Hessian of smoothed function $\tilde{f}_\sigma(\x) $ and that of the true objective $F$.

\begin{lemma}[Hessian Difference under gradient closeness]\label{hess_diff_1st_order}
	$\norm{\hess \tilde{f}_\sigma(\x) - \hess F(\x)} \leq O(\rho\sqrt{d}\sigma + \frac{\tilde{\nu}}{\sigma})$
	\begin{proof}
		By triangle inequality:
		\begin{align*}
		\norm{\hess \tilde{f}_\sigma(\x) - \hess F(\x)} &\leq \norm{\hess \tilde{F}_\sigma(\x) - \hess F(\x)}  +  \norm{\hess \tilde{f}_\sigma(\x) - \hess \tilde{F}_\sigma(\x)}  \\
		&\leq O(\rho\sqrt{d}\sigma + \frac{\tilde{\nu}}{\sigma} )
		\end{align*}
		
		The last inequality follows from Lemma \ref{hess_diff} and  \ref{grad_lip_1st_order}.
	\end{proof}
\end{lemma}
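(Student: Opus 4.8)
The plan is to interpose the smoothed population Hessian $\hess\tilde F_\sigma$ and split via the triangle inequality,
\[
\norm{\hess\tilde f_\sigma(\x)-\hess F(\x)}\le \norm{\hess\tilde F_\sigma(\x)-\hess F(\x)}+\norm{\hess\tilde f_\sigma(\x)-\hess\tilde F_\sigma(\x)},
\]
and bound the two pieces separately: the first captures the error incurred by smoothing an already-smooth Hessian and should give the $\rho\sqrt d\,\sigma$ term, while the second captures the effect of the gradient error $\gnu$ and should give the $\gnu/\sigma$ term.

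For the first piece I would write $\hess\tilde F_\sigma(\x)=\E_\z\hess F(\x+\z)$, pull the spectral norm inside the expectation by Jensen, and invoke the $\rho$-Hessian-Lipschitz property of $F$ together with $\E_\z\norm{\z}\le\sqrt{\E_\z\norm{\z}^2}=\sqrt d\,\sigma$ to obtain $\E_\z\norm{\hess F(\x+\z)-\hess F(\x)}\le\rho\sqrt d\,\sigma$. This is exactly the estimate already recorded inside the proof of Lemma~\ref{hess_diff}, so I can simply cite it rather than redo the computation.

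For the second piece the key tool is the first-order Hessian smoothing identity~\eqref{eqn:smoothed_g_hess}, $\hess\tilde f_\sigma(\x)=\E_\z[\frac{\z}{\sigma^2}\grad f(\x+\z)\trans]$, which applied to $f$ and to $F$ and subtracted gives $\hess\tilde f_\sigma(\x)-\hess\tilde F_\sigma(\x)=\E_\z[\frac{\z}{\sigma^2}(\grad f-\grad F)(\x+\z)\trans]$. To bound the spectral norm of this matrix I would test against fixed unit vectors $\u,\w$: $\u\trans(\cdot)\w=\E_\z[\frac{\inner{\u}{\z}}{\sigma^2}\inner{(\grad f-\grad F)(\x+\z)}{\w}]$, then use $|\inner{(\grad f-\grad F)(\x+\z)}{\w}|\le\norm{(\grad f-\grad F)(\x+\z)}\le\gnu$ from Assumption~\ref{assump2}, and $\E_\z|\inner{\u}{\z}|=\sqrt{2/\pi}\,\sigma$ since $\inner{\u}{\z}\sim\mathcal N(0,\sigma^2)$. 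This yields a bound of order $\gnu/\sigma$, which is precisely the content of the technical lemma already invoked in the proof of Lemma~\ref{grad_lip_1st_order}, so I can reuse it directly.

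Combining the two pieces gives $\norm{\hess\tilde f_\sigma(\x)-\hess F(\x)}=O(\rho\sqrt d\,\sigma+\gnu/\sigma)$, as claimed. I do not anticipate a genuine obstacle here; the only mildly delicate point is the matrix-norm estimate in the second piece, where $\grad f-\grad F$ may depend on $\z$ in a worst-case manner, so one must avoid trying to factor the expectation and instead pair against fixed test vectors as above.
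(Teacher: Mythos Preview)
Your proposal is correct and follows essentially the same route as the paper: the same triangle-inequality split through $\hess\tilde F_\sigma$, the same use of the $\rho\sqrt d\,\sigma$ bound from the proof of Lemma~\ref{hess_diff} for the first term, and the same $\gnu/\sigma$ bound via the identity~\eqref{eqn:smoothed_g_hess} and the technical estimate used in Lemma~\ref{grad_lip_1st_order} for the second term. The only cosmetic difference is that you bound $\E_\z[\tfrac{1}{\sigma^2}\inner{\u}{\z}\inner{(\grad f-\grad F)}{\w}]$ directly via $\E_\z|\inner{\u}{\z}|$, whereas the paper's Lemma~\ref{small_lemma:1st_order_smoothing} uses Cauchy--Schwarz; both give the same order.
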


\subsection{Properties of the stochastic gradient}\label{app:g_stoc_grad}

\begin{lemma}[Stochastic gradient $\g(\x;\z)$]\label{lem:grad_stoc_grad}
	Let $\g(\x;\z) = \grad f(\x+\z)$, $\z \sim N(0,\sigma \I)$. Then  $\E_\z \g(\x; \z) = \grad \tilde{f}_\sigma(\x)$ and  $\g(\x; \z) $ is sub-Gaussian with parameter $L$.
	\begin{proof}
		For the first claim we simply compute:
		$$\E_\z \g(\x; \z) = \E_\z \grad f(\x+\z) = \grad \E_\z[f(\x+\z)] =\grad \tilde{f}_\sigma(\x).$$
		For the second claim, since function $f$ is L-Lipschitz, we know $\norm{\g(\x, \z)} = \norm{\grad f(\x + \z)} \le L$. 
		This implies that $\g(\x;\z) $ is sub-Gaussian with parameter $L$.		
	\end{proof}
\end{lemma}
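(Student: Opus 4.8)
The plan is to establish the two assertions separately; each follows by unwinding a definition and invoking one half of Assumption~\ref{assump2} (that $f$ is differentiable and $L$-Lipschitz), with essentially no work beyond that.

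First, for unbiasedness, recall that in the gradient-closeness setting the smoothed oracle is \emph{defined} in Eq.~\eqref{eqn:smoothed_g} as $\grad\tilde f_\sigma(\x)=\E_\z\grad f(\x+\z)$, and that the paper has already observed this equals $\grad\E_\z f(\x+\z)$, i.e.\ the gradient of the smoothing in the sense of Definition~\ref{def:smoothed_f}. I would make the interchange of $\E_\z$ and $\grad$ rigorous exactly as in the proof of Lemma~\ref{identity_1}: write $\E_\z\grad f(\x+\z)$ as an integral of $\grad f(\x+\z)$ against the density $p_\sigma(\z)\propto e^{-\norm{\z}^2/2\sigma^2}$, differentiate under the integral sign (legitimate because $\norm{\grad f}\le L$ everywhere by the $L$-Lipschitz assumption, so the integrand and its $\x$-derivative are dominated by $p_\sigma$-integrable functions), and change variables $\z\mapsto\z-\x$. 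Since $\g(\x;\z)=\grad f(\x+\z)$ by definition, this gives $\E_\z\g(\x;\z)=\grad\tilde f_\sigma(\x)$.

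Second, for the sub-Gaussian tail, I would use only that $f$ is $L$-Lipschitz and differentiable, so $\norm{\grad f(\y)}\le L$ for every $\y$; hence the stochastic gradient is \emph{deterministically} bounded, $\norm{\g(\x;\z)}=\norm{\grad f(\x+\z)}\le L$ for every realization of $\z$. Then for any fixed $\u$ with $\norm{\u}=1$ we have $|\inner{\u}{\g(\x;\z)}|\le L$ almost surely, so $\inner{\u}{\g(\x;\z)}$ is a bounded, hence sub-Gaussian, random variable with parameter $L$ (in the same sense used in Lemma~\ref{lem:stocgrad}); taking the supremum over unit $\u$ gives the vector-valued statement. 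This is precisely the hypothesis required to apply Theorem~\ref{thm:psgd_guar} to $\tilde f_\sigma$ with sub-Gaussian parameter $L$.

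The main ``difficulty'' is that there is none: the lemma is a bookkeeping step connecting the smoothing identity to the hypotheses of the generic PSGD guarantee. The single point worth a word of care is the differentiation-under-the-integral step above, which is justified because global Lipschitzness forces $\grad f$ to be bounded $p_\sigma$-a.e.; Rademacher's theorem ensures $f$ is differentiable Lebesgue-a.e., which suffices for the dominated-convergence argument. Everything else is immediate from the definitions of $\g$, $\tilde f_\sigma$, and sub-Gaussianity.
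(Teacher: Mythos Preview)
Your proposal is correct and follows essentially the same approach as the paper: compute $\E_\z\grad f(\x+\z)=\grad\E_\z f(\x+\z)=\grad\tilde f_\sigma(\x)$ for unbiasedness, and use $L$-Lipschitzness to get $\norm{\grad f(\x+\z)}\le L$ deterministically, hence sub-Gaussian with parameter $L$. The only difference is that you supply more rigor (dominated convergence, Rademacher) for the interchange of expectation and gradient, which the paper's proof simply asserts in one line.
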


\subsection{Proof of Theorem  \ref{thm:gradients_pw_close}}\label{app:proof_grad_pw_close}

Using the properties proved in Lemma~\ref{lem:grad_stoc_grad}, we can apply Theorem~\ref{thm:psgd_guar} to find an $\epsilon$-SOSP for $\tilde{f}_\sigma$. 

We now use lemma  \ref{lem:grad_smoothing} to prove that any $\frac{\epsilon}{\sqrt{d}}$-SOSP of $\tilde{f}_\sigma(\x)$ is also an $O(\epsilon)$-SOSP of $F$.

\begin{lemma}[SOSP of $\tilde{f}_\sigma(\x)$ and SOSP of $F(\x)$]\label{lem:grad_SOSP_close}
	Suppose $\x^*$ satisfies $$\norm{\grad \tilde{f}_\sigma(\x^*) } \leq \tilde{\epsilon}\text{ and } \mineval(\hess \tilde{f}_\sigma(\x^*)) \geq - \sqrt{\tilde{\rho} \tilde{\epsilon}},$$ 
	where $\tilde{\rho} = \rho + \frac{\gnu}{\sigma^2}$ and $\tilde{\epsilon} = \epsilon/\sqrt{d}$.
	Then there exists constants $c_1, c_2$ such that $$\sigma \leq c_1\sqrt{\frac{\epsilon}{\rho d} },~ \gnu \leq c_2 \frac{\epsilon}{\sqrt{d}}$$
	implies $\x^*$ is an $O(\epsilon)$-SOSP of $F$.
\end{lemma}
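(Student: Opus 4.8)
The plan is to follow the proof of Lemma~\ref{lem:upper_sosp} essentially verbatim, substituting the gradient-oracle smoothing estimates of Lemma~\ref{lem:grad_smoothing} for the zeroth-order estimates of Lemma~\ref{lem:smoothing}. First I would use part~(2) of Lemma~\ref{lem:grad_smoothing}, together with Weyl's inequality, to control the gradient and the smallest Hessian eigenvalue of $F$ at $\x^*$ in terms of those of $\tilde f_\sigma$:
\[
\norm{\grad F(\x^*)} \le \tilde\epsilon + O\!\left(\rho d\sigma^2 + \gnu\right), \qquad
\mineval(\hess F(\x^*)) \ge -\sqrt{\tilde\rho\,\tilde\epsilon} - O\!\left(\rho\sqrt d\,\sigma + \frac{\gnu}{\sigma}\right),
\]
where $\tilde\rho = \rho + \gnu/\sigma^2$ is the Hessian-Lipschitz constant of $\tilde f_\sigma$ furnished by part~(1) of the same lemma, and $\tilde\epsilon = \epsilon/\sqrt d$.

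The next step is to read off conditions on $\sigma$ and $\gnu$ guaranteeing that the right-hand sides above certify $\x^*$ as an $O(\epsilon)$-SOSP of $F$. Up to absolute constants it suffices that the gradient-error term be $O(\epsilon)$, that the Hessian-error term be $O(\sqrt{\rho\epsilon})$, and that $\tilde\rho \le \rho\sqrt d$ so that $\sqrt{\tilde\rho\,\tilde\epsilon} = \sqrt{\tilde\rho\,\epsilon/\sqrt d} \le \sqrt{\rho\epsilon}$. Concretely this amounts to the three inequalities
\begin{align*}
\rho\sqrt d\,\sigma + \frac{\gnu}{\sigma} &\le \sqrt{\rho\epsilon},\\
\rho d\,\sigma^2 + \gnu &\le \epsilon,\\
\rho + \frac{\gnu}{\sigma^2} &\le \rho\sqrt d .
\end{align*}
The first two force $\sigma \le \sqrt{\epsilon/(\rho d)}$; plugging this choice back in, the first gives $\gnu \le \sqrt{\rho\epsilon}\,\sigma \le \epsilon/\sqrt d$, the second gives the weaker bound $\gnu \le \epsilon$, and the third gives $\gnu \le \rho\sqrt d\,\sigma^2 \le \epsilon/\sqrt d$. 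Hence taking $\sigma \le c_1\sqrt{\epsilon/(\rho d)}$ and $\gnu \le c_2\,\epsilon/\sqrt d$ for suitable universal constants $c_1, c_2$ suffices, which is exactly the claim.

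I do not expect a genuine obstacle here: the argument is a routine rerun of Lemma~\ref{lem:upper_sosp} once Lemma~\ref{lem:grad_smoothing} is available, and the only point requiring care is that the \emph{effective} Hessian-Lipschitz constant $\tilde\rho$ appearing under the square root in the second-order condition not be allowed to grow — which is the sole purpose of the third inequality above. The reason the tolerance improves from $O(\epsilon^{1.5}/d)$ (Theorem~\ref{thm:upperbound_informal}) to $O(\epsilon/\sqrt d)$ is simply that in part~(2) of Lemma~\ref{lem:grad_smoothing} the error $\gnu$ enters the gradient difference with no $1/\sigma$ factor and the Hessian difference with only a $1/\sigma$ factor, rather than $1/\sigma$ and $1/\sigma^2$ respectively as in the zeroth-order case; carrying these improved powers of $\sigma$ through the same system of inequalities is the real content of the proof.
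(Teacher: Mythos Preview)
Your proposal is correct and follows essentially the same approach as the paper's own proof: apply part~(2) of Lemma~\ref{lem:grad_smoothing} together with Weyl's inequality to bound $\norm{\grad F(\x^*)}$ and $\mineval(\hess F(\x^*))$, reduce to the same three sufficient inequalities, and solve them to obtain $\sigma \le c_1\sqrt{\epsilon/(\rho d)}$ and $\gnu \le c_2\,\epsilon/\sqrt d$. The paper's argument is line-for-line the one you outline.
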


\begin{proof}
	By Lemma \ref{lem:grad_smoothing} and Weyl's inequality, we have that the following inequalities hold up to a constant factor:
	\begin{align*}
	\norm{\grad F (\x^*)} &\leq \rho d \sigma^2 + \gnu +\tilde{\epsilon}\\
	\mineval(\hess F(\x^*)) &\geq \mineval(\hess \tilde{f}_\sigma(\x^*)) + \mineval(\hess F(\x^*)-\hess \tilde{f}_\sigma(\x^*))  \text{ (Weyl's theorem)} \\
	&\geq - \sqrt{ (\rho + \frac{\gnu}{\sigma^2} )\tilde{\epsilon}} - \norm{\hess \tilde{f}_\sigma(\x) - \hess F(\x)} \\
	&= - \sqrt{\frac{( \rho + \frac{\gnu}{\sigma^2})}{\sqrt{d}} \epsilon} -( \rho\sqrt{d}\sigma + \frac{\gnu}{\sigma} )
	\end{align*}
	Suppose we want any $\tilde{\epsilon}$-SOSP of $\tilde{f}_\sigma(\x)$ to be a $O(\epsilon)$-SOSP of $F$. Then the following is sufficient (up to a constant factor):
	\begin{align}
	\rho\sqrt{d}\sigma + \frac{\gnu}{\sigma} &\leq  \sqrt{\rho \epsilon} \label{g_ineq1}\\
	\rho d \sigma^2+  \gnu&\leq \epsilon\label{g_ineq2} \\
	\rho + \frac{\gnu}{\sigma^2} &\leq \rho \sqrt{d} \label{g_ineq3}
	\end{align}

	We know Eq.(\ref{g_ineq1}), (\ref{g_ineq2}) $\implies \sigma \leq \frac{\sqrt{\rho \epsilon}}{\rho \sqrt{d}} = \sqrt{\frac{\epsilon}{\rho d} }$ and $\sigma \leq \sqrt{\frac{\epsilon}{\rho d} } $. 
	
	\noindent Also Eq. (\ref{g_ineq1}), (\ref{g_ineq2}) $\implies \gnu \leq  \epsilon$ and $\gnu \leq  \sqrt{\rho \epsilon} \sigma \leq \sqrt{\rho \epsilon} \sqrt{\frac{\epsilon}{\rho d} } = \frac{\epsilon}{\sqrt{d}}$.
	
	\noindent Finally Eq.(\ref{g_ineq3}) $\implies \gnu \leq \rho \sqrt{d}  \sigma^2 \leq \frac{\epsilon}{\sqrt{d} }$.

	\noindent Thus the following choices ensures $\x^*$ is an $O(\epsilon)$-SOSP of $F$: $$\sigma \leq \sqrt{\frac{\epsilon}{\rho d} },~ \gnu \leq \frac{\epsilon}{\sqrt{d} }.$$
\end{proof}

\begin{proof}[Proof of Theorem \ref{thm:gradients_pw_close}]
	Applying  Theorem~\ref{thm:psgd_guar} on $\tilde{f}_\sigma(\x)$ guarantees finding an $c\frac{\epsilon}{\sqrt{d}}$-SOSP of $\tilde{f}_\sigma(\x)$ in number of queries polynomial in all the problem parameters. By Lemma \ref{lem:grad_SOSP_close}, for some universal constant $c$, this is also an $\epsilon$-SOSP of $F$. This proves Theorem~\ref{thm:gradients_pw_close}.
\end{proof}

\subsection{Technical lemmas}\label{app:g_tech_lems}

In this section, we collect and prove the technical lemmas used in section \ref{app:gradients_pw_close}.

\begin{lemma}\label{small_lemma:1st_order_smoothing}
	Let $\z \sim N(0, \sigma^2\I)$, $g: \R^d \to \R^d$, and $\exists a \in \R^+$ s.t. $\norm{g(\x)} \leq a \forall \x \in \R^d$. Let $\Delta \in \R^d$ be fixed. Then,
	\begin{align}
	\norm{\E_\z[\z g(\z)\trans]} \leq \sigma a; \label{eqn1}\\
	\norm{\E_\z[\z\inner{\z}{\Delta} g(\z)\trans]} \leq  \frac{a}{\sigma^2}. \label{eqn2}
	\end{align}
	\begin{proof}
		\begin{align*}
		(\ref{eqn1}): \norm{\E_z[\z g(\z)\trans]} &= \sup_{\v \in \R^d, \norm{\v} = 1} \v\trans (\E_z[\z g(\z)\trans]) \v \\
		&= \E_\z[{\v^*}\trans\z g(\z)\trans \v] \\
		&\leq \sqrt{\E_\z[({\v^*}\trans\z)^2]\E[(g(\z)\trans \v^*)^2]} \\
		&\leq \sqrt{\sigma^2 a^4 } \text{ since } {\v^*}\trans\z \sim N(0,1) \\
		(\ref{eqn2}): \norm{\E_\z[\z\inner{\z}{\Delta} g(\z)\trans]}  &= \sup_{\v \in \R^d, \norm{\v} = 1}  \v\trans\E_\z[\z\inner{\z}{\Delta} g(\z)\trans]\v \\
		&= \E_\z[\inner{\v^*}{\z}\inner{\z}{\Delta} \inner{g(\z)}{\v^*}]\\
		&\leq a \E_\z[|\inner{\v^*}{\z}\inner{\z}{\Delta}|]\\
		&\leq a \sqrt{\E_\z[\inner{\v^*}{\z}^2]\E_\z[\inner{\z}{\Delta}^2] }\\
		&\leq a\norm{\Delta} \sigma^2.
		\end{align*}
	\end{proof}
\end{lemma}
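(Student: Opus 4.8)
The plan is to reduce both inequalities to elementary second-moment computations via the variational characterization of the spectral norm: for any matrix $M \in \R^{d\times d}$ one has $\norm{M} = \sup_{\norm{\u}=\norm{\v}=1} \u\trans M \v$, so it suffices to bound $\u\trans \E_\z[\,\cdot\,] \v$ uniformly over unit vectors $\u,\v$. Moving $\u,\v$ inside the expectation turns each quantity into a scalar expectation, which I would then control by Cauchy--Schwarz. The two ingredients used repeatedly are: (i) for any fixed unit vector $\u$, the scalar $\u\trans\z$ is a centered one-dimensional Gaussian of variance $\sigma^2$, so $\E_\z[(\u\trans\z)^2] = \sigma^2$ (and likewise $\E_\z[\inner{\z}{\Delta}^2] = \sigma^2\norm{\Delta}^2$); and (ii) the pointwise bound $\norm{g(\z)}\le a$, which gives $|g(\z)\trans\v|\le a$ for every unit vector $\v$.

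For \eqref{eqn1}, fix maximizing unit vectors $\u,\v$ and write $\u\trans\E_\z[\z g(\z)\trans]\v = \E_\z[(\u\trans\z)(g(\z)\trans\v)]$. Cauchy--Schwarz bounds this by $\sqrt{\E_\z[(\u\trans\z)^2]}\cdot\sqrt{\E_\z[(g(\z)\trans\v)^2]}$, and applying (i) and (ii) gives $\le \sigma\cdot a$, as desired. For \eqref{eqn2}, I would first use (ii) to pull the bounded factor out, $|g(\z)\trans\v|\le a$, leaving $a\cdot\E_\z[\,|\u\trans\z|\,|\inner{\z}{\Delta}|\,]$; a second Cauchy--Schwarz applied to the two Gaussian linear functionals $\u\trans\z$ and $\inner{\z}{\Delta}$, combined with their second moments from (i), yields the stated bound (the power of $\sigma$ and the $\norm{\Delta}$ factor coming directly out of those Gaussian second moments).

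I do not expect a genuine obstacle: this is a routine moment estimate, and the only point needing care is bookkeeping the powers of $\sigma$ correctly, recalling that $\z$ has covariance $\sigma^2\I$ so each one-dimensional projection carries a factor $\sigma$ (not $1$), and choosing which factor to bound by $a$ versus which to keep for the Cauchy--Schwarz step so that no cross terms remain. No concentration, boundedness of $F$, or smoothness of $f$ is needed here --- only boundedness of $g$ and Gaussianity of $\z$.
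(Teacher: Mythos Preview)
Your approach is essentially identical to the paper's: reduce the spectral norm to a scalar expectation via the variational characterization, then apply Cauchy--Schwarz together with the Gaussian second moments $\E[(\u^\top\z)^2]=\sigma^2$ and the pointwise bound $|g(\z)^\top\v|\le a$. Your version is in fact slightly more careful: the paper writes $\norm{M}=\sup_{\norm{\v}=1}\v^\top M\v$, which for the non-symmetric matrix $\E_\z[\z\,g(\z)^\top]$ only controls the numerical radius rather than the spectral norm, whereas your bilinear form $\sup_{\norm{\u}=\norm{\v}=1}\u^\top M\v$ is the correct characterization; both arguments arrive at the same final bound $a\,\sigma^2\norm{\Delta}$ for \eqref{eqn2} (the $a/\sigma^2$ in the displayed statement is a typo, as the paper's own proof shows).
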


\begin{lemma}\label{lem:tech_hess}
	$\norm{\hess \tilde{f}_\sigma(\x) - \hess \tilde{F}_\sigma(\x)  - (\hess \tilde{f}_\sigma(\y) - \hess \tilde{F}_\sigma(\y)) } \leq O(\frac{\tilde{\nu} }{\sigma^2} ) \norm{\x-\y} + O(\norm{\x-\y}^2) $
\end{lemma}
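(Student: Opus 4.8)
The plan is to mimic the proof of Lemma~\ref{small_lemma:hess_diff_diff}, but to start from the \emph{first-order} Hessian identity \eqref{eqn:smoothed_g_hess} rather than the zeroth-order Gaussian smoothing identity; this is exactly where the improvement from $\tilde\nu/\sigma^3$ to $\tilde\nu/\sigma^2$ comes from, since the identity \eqref{eqn:smoothed_g_hess} divides by $\sigma^2$ instead of $\sigma^4$.

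First I would set $\phi \defeq \grad f - \grad F$, so that $\norm{\phi(\cdot)}\le\gnu$ everywhere by Assumption~\ref{assump2}, and by \eqref{eqn:smoothed_g_hess},
\[\hess \tilde{f}_\sigma(\x) - \hess \tilde{F}_\sigma(\x) = \frac{1}{\sigma^2}\E_\z[\z\,\phi(\x+\z)\trans].\]
Writing out the Gaussian density and performing the same change of variables centered at $\frac{\x+\y}{2}$ as in the derivation of \eqref{eqn:cov1}, with $\Delta \defeq \frac{\y-\x}{2}$ and $g(\z)\defeq\phi(\z+\frac{\x+\y}{2})$, I would obtain
\[\big(\hess \tilde{f}_\sigma(\x) - \hess \tilde{F}_\sigma(\x)\big) - \big(\hess \tilde{f}_\sigma(\y) - \hess \tilde{F}_\sigma(\y)\big) = \frac{1}{\sigma^2}\E_\z\!\left[\left(\psi(\Delta) - \psi(-\Delta)\right) g(\z)\trans\right],\]
where $\psi(\Delta)\defeq(\z+\Delta)\,e^{-(\norm{\Delta}^2+2\inner{\Delta}{\z})/2\sigma^2}$ and the $e^{-\norm{\z}^2/2\sigma^2}$ factor is absorbed into the Gaussian measure (so the expectation is over $\z\sim\mathcal{N}(0,\sigma^2\I)$).

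Next I would Taylor-expand $\psi$ in $\Delta$, keeping only the linear terms and lumping everything of order $\norm{\Delta}^2$ (with $\sigma$-dependent constants) into a remainder, giving $\psi(\Delta) = \z + \Delta - \frac{\inner{\Delta}{\z}}{\sigma^2}\z + O(\norm{\Delta}^2)$ and hence $\psi(\Delta)-\psi(-\Delta) = 2\Delta - \frac{2\inner{\Delta}{\z}}{\sigma^2}\z + O(\norm{\Delta}^2)$. Substituting and applying the triangle inequality,
\[\norm{\big(\hess \tilde{f}_\sigma(\x) - \hess \tilde{F}_\sigma(\x)\big) - \big(\hess \tilde{f}_\sigma(\y) - \hess \tilde{F}_\sigma(\y)\big)} \le \frac{2}{\sigma^2}\norm{\E_\z[\Delta\,g(\z)\trans]} + \frac{2}{\sigma^4}\norm{\E_\z[\inner{\Delta}{\z}\,\z\,g(\z)\trans]} + O(\norm{\Delta}^2).\]
The first term equals $\frac{2}{\sigma^2}\norm{\Delta}\,\norm{\E_\z g(\z)} \le \frac{2\gnu}{\sigma^2}\norm{\Delta}$, and the second is at most $\frac{2}{\sigma^4}\cdot\gnu\norm{\Delta}\sigma^2 = \frac{2\gnu}{\sigma^2}\norm{\Delta}$ by Lemma~\ref{small_lemma:1st_order_smoothing} (using the bound $a\norm{\Delta}\sigma^2$ on $\norm{\E_\z[\z\inner{\z}{\Delta}g(\z)\trans]}$ that its proof actually establishes). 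Since $\norm{\Delta} = \tfrac12\norm{\x-\y}$, this yields the claimed $O(\gnu/\sigma^2)\norm{\x-\y} + O(\norm{\x-\y}^2)$.

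The main obstacle is making the $O(\norm{\Delta}^2)$ remainder rigorous: I must check that the quadratic and higher-order terms of the Taylor expansion of $\psi$, after multiplication by $g(\z)\trans$ and integration against the Gaussian, really are bounded by a constant times $\norm{\Delta}^2$ — this follows from boundedness of $g$ together with finiteness of the relevant low-order Gaussian moments, but the bookkeeping of $\sigma$-dependent constants requires the same care as in Lemma~\ref{small_lemma:hess_diff_diff}. A minor point is that the stated inequality \eqref{eqn2} in Lemma~\ref{small_lemma:1st_order_smoothing} appears mis-stated (off by a power of $\sigma$) relative to what its proof shows; I would invoke the version from the proof.
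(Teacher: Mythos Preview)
Your proposal is correct and follows essentially the same route as the paper's proof: both start from the first-order Hessian identity \eqref{eqn:smoothed_g_hess}, perform the same change of variables centered at $\tfrac{\x+\y}{2}$, Taylor-expand the resulting kernel to first order in $\Delta$, and bound the two surviving terms via Lemma~\ref{small_lemma:1st_order_smoothing}. You also correctly spot that the displayed inequality \eqref{eqn2} in Lemma~\ref{small_lemma:1st_order_smoothing} is mis-stated relative to what its proof establishes, and that the paper itself uses the version from the proof (namely $\norm{\E_\z[\z\inner{\z}{\Delta}g(\z)\trans]}\le a\norm{\Delta}\sigma^2$).
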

\begin{proof}
	For brevity, denote $h = \frac{1}{(2\pi\sigma^2)^{\frac{d}{2}}} $.  We have:
		\begin{align}
		&\hess \tilde{f}_\sigma(\x) - \hess \tilde{F}_\sigma(\x)  - (\hess \tilde{f}_\sigma(\y) - \hess \tilde{F}_\sigma(\y)) \nonumber\\
		&= \E_\z[\frac{\z}{\sigma^2} ((\grad f - \grad F)(\x+\z) - (\grad f - \grad F)(\y+\z))\trans] \nonumber\\
		&= h \left( \int \frac{\z}{\sigma^2} (\grad f - \grad F)(\x+\z)\trans e^{-\frac{\norm{\z}^2}{2\sigma^2}}d\z - \int \frac{\z}{\sigma^2} (\grad f - \grad F)(\y+\z)\trans e^{-\frac{\norm{\z}^2}{2\sigma^2}}d\z \right) \nonumber \\
		&= h \left( \int \left((\z+\Delta) e^{-\frac{\norm{\z+\Delta}^2}{2\sigma^2}}-(\z-\Delta)  e^{-\frac{\norm{\z-\Delta}^2}{2\sigma^2}} \right) (\grad f - \grad F)(\z + \frac{\x+\y}{2}) \trans d\z \right), \label{eqn:lemma_tech_hess}
		\end{align}
		where $\Delta = \frac{\y-\x}{2}$. The last equality follows from a change of variables. Now denote $\g(\z):=(\grad f - \grad F)(\z + \frac{\x+\y}{2}) $. By a Taylor expansion up to only the first order terms in $\Delta$, we have
		\begin{align*}
	(\ref{eqn:lemma_tech_hess}) - O(\norm{\Delta})^2&= h ( \int  ((\z+\Delta) (1-\frac{\inner{\z}{\Delta} }{\sigma^2})-(\z-\Delta)  (1+\frac{\inner{\z}{\Delta} }{\sigma^2})) g(\z)\trans e^{-\frac{\norm{\z}^2}{2\sigma^2}}d\z \\
		&= 2h( \int  (\Delta -\z\frac{\inner{\z}{\Delta} }{\sigma^2} ) g(\z)\trans e^{-\frac{\norm{\z}^2}{2\sigma^2}}d\z \\
		&= 2\E_\z [ (\Delta -\z\frac{\inner{\z}{\Delta} }{\sigma^2} ) g(\z)\trans ]. \\
		\end{align*}
		Therefore,
		\begin{align*}
		&\norm{\hess \tilde{f}_\sigma(\x) - \hess \tilde{F}_\sigma(\x)  - (\hess \tilde{f}_\sigma(\y) - \hess \tilde{F}_\sigma(\y)) }\\
		\leq &\frac{2}{\sigma^2} \norm{\E_\z [ (\Delta -\frac{\inner{\z}{\Delta} }{\sigma^2} ) g(\z)\trans }+O(\norm{\Delta}^2) \\
		\leq &\frac{2}{\sigma^2} \norm{\E_\z [ \Delta  g(\z)\trans ]}+\frac{2}{\sigma^4}\norm{\E_\z [ \inner{\z}{\Delta}  g(\z)\trans] }+ O(\norm{\Delta}^2) \\
		\leq& \frac{2}{\sigma^2} \tilde{\nu} \norm{\Delta} +  \frac{2}{\sigma^2}\tilde{\nu} \norm{\Delta}+ O(\norm{\Delta}^2).
		\end{align*}
		The last inequality follows from Lemma \ref{small_lemma:1st_order_smoothing}.

\end{proof}

\section{Proof of Learning ReLU Unit}\label{app:relu}
In this section we analyze the population loss of the simple example of a single ReLU unit.

Recall our assumption that $\norm{\w^\star} = 1$ and that the data distribution is $\x \sim \mathcal{N} (0, \I)$; thus,  
\begin{equation*}
y_i = \relu(\x_i\trans \w^\star) + \zeta_i, \qquad \zeta_i \sim \mathcal{N}(0, 1).
\end{equation*}
We use the squared loss as the loss function, hence writing the empirical loss as:
\begin{equation*}
\emploss(\w) = \frac{1}{2n}\sum_{i=1}^n (y_i - \relu(\x_i\trans\w))^2.
\end{equation*}

The main tool we use is a closed-form formula for the kernel function defined by ReLU gates.

\begin{lemma}\citep{cho2009kernel} For fixed $\u, \v$, if $\x \sim \mathcal{N} (0, \I)$, then
\begin{equation*}
\E ~\relu(\x\trans \u)\cdot \relu(\x\trans \v) = \frac{1}{2\pi} \norm{\u}\norm{\v}[\sin\theta + (\pi - \theta) \cos \theta],
\end{equation*}
where $\theta$ is the angel between $\u$ and $\v$ satisfying $\cos\theta = \u\trans \v / (\norm{\u}\norm{\v})$.
\end{lemma}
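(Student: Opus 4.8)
The plan is to reduce the $d$-dimensional Gaussian expectation to a two-dimensional one and then evaluate it in polar coordinates. First I would exploit the positive homogeneity of ReLU: $\relu(\x\trans\u) = \norm{\u}\,\relu(\x\trans\hat\u)$ with $\hat\u \defeq \u/\norm{\u}$, and similarly for $\v$, so it suffices to prove the identity for unit vectors and then multiply the result by $\norm{\u}\norm{\v}$. For unit vectors, set $a \defeq \x\trans\hat\u$ and $b \defeq \x\trans\hat\v$; these are jointly Gaussian with zero mean, unit variances, and covariance $\E[ab] = \hat\u\trans\hat\v = \cos\theta$. Since $\E[\relu(a)\relu(b)] = \E[ab\,\Ind\{a>0,\ b>0\}]$ depends only on the joint law of $(a,b)$, the computation is now genuinely two-dimensional. (The degenerate case $\hat\u=\pm\hat\v$, i.e. $\theta\in\{0,\pi\}$, I would handle separately by direct inspection, where both sides are immediate; alternatively it follows by continuity in $\theta$.)

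Next I would introduce an explicit representation of $(a,b)$: take $z_1,z_2$ i.i.d. standard normal and set $a = z_1$, $b = (\cos\theta)\,z_1 + (\sin\theta)\,z_2$, which reproduces the correct covariance structure. Passing to polar coordinates $z_1 = \rho\cos\psi$, $z_2 = \rho\sin\psi$ gives $a = \rho\cos\psi$ and $b = \rho(\cos\theta\cos\psi + \sin\theta\sin\psi) = \rho\cos(\psi-\theta)$. Hence the event $\{a>0,\ b>0\}$ becomes $\{\cos\psi>0\}\cap\{\cos(\psi-\theta)>0\}$, which for $\theta\in(0,\pi)$ is the arc $\psi\in(\theta-\tfrac{\pi}{2},\tfrac{\pi}{2})$, of angular length exactly $\pi-\theta$. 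The expectation then factorizes as the radial integral $\tfrac{1}{2\pi}\int_0^\infty \rho^3 e^{-\rho^2/2}\,d\rho = \tfrac{1}{\pi}$ times the angular integral $\int_{\theta-\pi/2}^{\pi/2}\cos\psi\cos(\psi-\theta)\,d\psi$.

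Finally I would evaluate the angular integral via the product-to-sum identity $\cos\psi\cos(\psi-\theta) = \tfrac12\cos\theta + \tfrac12\cos(2\psi-\theta)$. The constant term contributes $\tfrac12(\pi-\theta)\cos\theta$, and the oscillatory term contributes $\tfrac14\bigl[\sin(2\psi-\theta)\bigr]_{\psi=\theta-\pi/2}^{\psi=\pi/2} = \tfrac14(\sin\theta-(-\sin\theta)) = \tfrac12\sin\theta$. Assembling, $\E[\relu(a)\relu(b)] = \tfrac{1}{\pi}\cdot\tfrac12\bigl[(\pi-\theta)\cos\theta + \sin\theta\bigr] = \tfrac{1}{2\pi}\bigl[\sin\theta + (\pi-\theta)\cos\theta\bigr]$, and multiplying through by $\norm{\u}\norm{\v}$ gives the stated formula. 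The only points requiring care are the correct identification of the angular window (so that its length is $\pi-\theta$, which is where the angle $\theta$ enters) and the handling of the degenerate parallel/antiparallel case; the remaining steps are routine one-variable calculus.
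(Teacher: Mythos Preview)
Your argument is correct. The reduction via positive homogeneity of $\relu$ to unit vectors, the identification of $(a,b)=(\x\trans\hat\u,\x\trans\hat\v)$ as a bivariate Gaussian with correlation $\cos\theta$, the polar-coordinate factorization, and the trigonometric evaluation of the angular integral all check out; in particular the radial integral $\int_0^\infty \rho^3 e^{-\rho^2/2}\,d\rho=2$ and the angular window $(\theta-\tfrac{\pi}{2},\tfrac{\pi}{2})$ of length $\pi-\theta$ are right, and the degenerate cases $\theta\in\{0,\pi\}$ are indeed covered by continuity.

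As for comparison: the paper does not supply its own proof of this lemma at all---it is simply quoted from \citet{cho2009kernel} and then used to write down the closed-form population risk. Your derivation is essentially the classical computation behind the arc-cosine kernel in that reference, so there is no methodological divergence to speak of; you have just filled in what the paper leaves as a citation.
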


Then, the population loss has the following analytical form:
\begin{equation*}
\poploss(\w) = \frac{1}{4} \norm{\w}^2  + \frac{5}{4} -  \frac{1}{2\pi}\norm{\w}[\sin \theta + (\pi - \theta) \cos \theta],
\end{equation*}
and so does the gradient ($\hat{\w}$ is the unit vector along $\w$ direction):
\begin{equation*}
\grad \poploss(\w) = \frac{1}{2} (\w - \w^\star) + \frac{1}{2\pi}(\theta\w^\star - \hat{\w} \sin\theta).
\end{equation*}

\subsection{Properties of Population Loss}

We first prove the properties of the population loss, which were stated in Lemma~\ref{lem:ReLU} and we also restate the lemma below.
Let $\neibor = \{\w| \w\trans \w^\star \ge \frac{1}{\sqrt{d}}\} \cap  \{\w| \norm{\w} \le 2\}$.

\begin{lemma} \label{lem:ReLU2}
	The population and empirical risk $\poploss, \emploss$ of learning a ReLU unit problem satisfies:
	\begin{enumerate}
		\item If $\w_0 \in \neibor$, then runing ZPSGD (Algorithm \ref{algo:PSGD}) gives $\w_t \in \neibor$ for all $t$ with high probability.
		\item Inside $\neibor$, $\poploss$ is $O(1)$-bounded, $O(\sqrt{d})$-gradient Lipschitz, and $O(d)$-Hessian Lipschitz.
		\item $\sup_{\w \in \neibor} | \emploss(\w) -\poploss(\w)| \le \tilde{O}(\sqrt{d/n})$ w.h.p.
		\item Inside $\neibor$, $\poploss$ is nonconvex function, $\w^\star$ is the only SOSP of $\poploss(\w)$.
	\end{enumerate}
\end{lemma}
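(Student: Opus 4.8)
Throughout I plan to use the closed forms for $\poploss$ and $\grad\poploss$ recorded above together with the decomposition $\w = a\w^\star + \projT\w$, where $a = \w\trans\w^\star$ and $\projT$ projects onto the orthogonal complement of $\w^\star$, so that $\cos\theta = a/\norm\w$ and $\norm{\projT\w} = \norm\w\sin\theta$. The one geometric fact about $\neibor$ I use everywhere is that on $\neibor$ we have $1/\sqrt d \le a \le \norm\w \le 2$; in particular $\norm\w$ is bounded away from $0$ (so we stay in the region where $\poploss$ is smooth) and $\cos\theta \ge 1/(2\sqrt d)$, so $\theta$ is bounded away from $\pi$. For Part~2, boundedness on $\neibor$ is immediate ($\norm\w\le 2$, $\theta\in[0,\pi]$). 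For the gradient- and Hessian-Lipschitz bounds I would differentiate $\grad\poploss$ once and twice more and bound each resulting term: since $\theta=\arccos(a/\norm\w)$ and $\hat\w = \w/\norm\w$, the derivatives of $\theta$ and of $\hat\w$ carry factors $1/\norm\w$, giving $\norm{\hess\poploss} = O(1/\norm\w) = O(\sqrt d)$ and $\norm{\nabla^3\poploss} = O(1/\norm\w^2) = O(d)$ on $\neibor$. The only delicate point is that the formally singular unit-vector terms $\projT\w/\norm{\projT\w}$ enter $\grad\poploss$ and $\hess\poploss$ only inside combinations proportional to $\norm{\projT\w}$, so that $\theta\to0$ is harmless --- this is precisely the statement that $\poploss$ is smooth away from $\bm 0$.

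For Part~4, I set $\grad\poploss(\w)=\bm 0$ and split it into the components along and orthogonal to $\w^\star$. The orthogonal component equals $\projT\w\,\bigl(\tfrac12 - \tfrac{\sin\theta}{2\pi\norm\w}\bigr)$, so either $\projT\w=\bm 0$ (hence $\theta=0$, using $a>0$) or $\norm\w = \sin\theta/\pi$. Substituting the latter relation into the $\w^\star$-component equation and simplifying collapses it to $\theta=\pi$, which is incompatible with $a\ge 1/\sqrt d>0$; hence every stationary point in $\neibor$ satisfies $\theta=0$, and then the $\w^\star$-component equation forces $\norm\w=1$, i.e.\ $\w=\w^\star$. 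A short computation gives $\hess\poploss(\w^\star)=\tfrac12\I\succ 0$, so $\w^\star$ is a genuine (non-degenerate) SOSP, and being the unique stationary point it is the unique SOSP. For nonconvexity of $\poploss$ on $\neibor$ I would exhibit the explicit point $\w_0 = \tfrac{1}{\sqrt d}\w^\star + \tfrac{1}{\sqrt{3d}}\u$ for a unit vector $\u\perp\w^\star$ (which lies in $\neibor$ once $d$ is above an absolute constant), restrict $\poploss$ to the affine line $s\mapsto\tfrac{1}{\sqrt d}\w^\star + s\u$, and compute $\u\trans\hess\poploss(\w_0)\u = \tfrac12 - \Theta(\sqrt d) < 0$.

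Part~3 is a standard covering (uniform-convergence) argument. For a fixed $\w\in\neibor$, the summands $\tfrac12(y_i-\relu(\x_i\trans\w))^2 - \poploss(\w)$ are sub-exponential with an $O(1)$ parameter, because $y_i-\relu(\x_i\trans\w) = \zeta_i + [\relu(\x_i\trans\w^\star)-\relu(\x_i\trans\w)]$ and the $1$-Lipschitzness of $\relu$ bounds the bracket by $|\x_i\trans(\w^\star-\w)|$, an $O(1)$-subgaussian scalar since $\norm{\w^\star-\w}\le 3$; a Bernstein bound then gives $|\emploss(\w)-\poploss(\w)|\le\tilde O(1/\sqrt n)$ with high probability. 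Taking a $\gamma$-net of $\neibor\subset\mathbb{B}_0(2)$ of size $(6/\gamma)^d$ and union-bounding contributes $\sqrt{d\log(1/\gamma)/n}$; off-net points are handled by noting that $\poploss$ is $O(\sqrt d)$-Lipschitz and, with high probability on the sample (via $\max_i\norm{\x_i}\le\tilde O(\sqrt d)$), $\emploss$ is $\tilde O(d)$-Lipschitz on $\neibor$, so the choice $\gamma=1/(dn)$ yields the claimed bound $\tilde O(\sqrt{d/n})$.

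For Part~1, I plan to prove the invariant $\w_t\in\neibor \Rightarrow \w_{t+1}\in\neibor$ by a drift argument on each face of $\partial\neibor$. A direct computation with the closed form gives $\grad\poploss(\w)\trans\w \ge 1-\tfrac1\pi$ whenever $\norm\w=2$ and $\w\in\neibor$, and $\grad\poploss(\w)\trans\w^\star \le \tfrac{1}{2\sqrt d}-\tfrac14$ whenever $\w\trans\w^\star=1/\sqrt d$ and $\w\in\neibor$; thus, for $d$ above an absolute constant, the true negative gradient points strictly into $\neibor$ near each face, with an absolute margin, while $\norm{\grad\poploss}=O(1)$ on $\neibor$. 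By Lemma~\ref{lem:smoothing} (with $\sigma=\Theta(\sqrt{\epsilon/(\rho d)})$ and $\rho=O(d)$) the smoothed gradient $\grad\tilde f_\sigma$ differs from $\grad F$ by $O(\epsilon)$, and by Lemma~\ref{lem:stocgrad} a mini-batch of size $m=\poly(d,B,\ell,\rho,1/\epsilon,\log(T/\delta))$ makes the stochastic gradient concentrate within $O(\epsilon)$ of $\grad\tilde f_\sigma$; together with $\norm{\xi_t}\le r=\tilde\Theta(\epsilon)$ and $\eta=1/\ell=O(1/\sqrt d)$, these errors are dominated by the drift margin once $\epsilon$ is below an absolute constant. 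Splitting into ``$\w_t$ far from $\partial\neibor$'' (where a single $O(\eta)$-sized step cannot leave $\neibor$) and ``$\w_t$ near a face'' (where the inward drift wins the first-order expansion of $\norm{\w_{t+1}}^2$, resp.\ of $\w_{t+1}\trans\w^\star$), and union-bounding the concentration events over the $T=\poly$ iterations, completes the induction. I expect Part~1 to be the main obstacle: one must pin down the inward-drift inequalities on $\partial\neibor$ with an absolute margin and then verify that this margin genuinely dominates the combined smoothing bias, mini-batch fluctuation and injected perturbation for the hyperparameters of Theorem~\ref{thm:upperbound_informal}; by contrast, the landscape uniqueness in Part~4, though the conceptual heart of the lemma, drops out cleanly once the gradient is split along and orthogonal to $\w^\star$.
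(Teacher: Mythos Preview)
Your proposal is correct, and Parts~2 and~3 match the paper's arguments essentially line for line. For Parts~1 and~4, however, you take a genuinely different route from the paper.

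The paper's organizing device is the one-point convexity inequality
\[
\langle -\grad\poploss(\w),\;\w^\star-\w\rangle \;\ge\; \tfrac{1}{10}\norm{\w-\w^\star}^2 \qquad \text{for all }\w\in\neibor,
\]
proved directly from the closed form of $\grad\poploss$. This single inequality does double duty: it instantly gives Part~4's uniqueness (any stationary point in $\neibor$ must satisfy $\w=\w^\star$), and it drives Part~1 by showing $\norm{\w_{t+1}-\w^\star}^2 \le \norm{\w_t-\w^\star}^2$ (up to $O(\eta\epsilon+\eta^2)$ error terms) whenever $\w_t$ is outside a small ball around $\w^\star$, so the iterates contract toward $\w^\star$ rather than toward the boundary. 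For nonconvexity the paper restricts $\poploss$ to the line $\w(t)=\tfrac15(\w^\star+t\e)$ and checks the second derivative is negative at a specific $t$.

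Your approach is more direct and more modular: you solve $\grad\poploss=\bm 0$ explicitly by decomposing along and orthogonal to $\w^\star$ (the orthogonal equation indeed collapses to $\norm\w=\sin\theta/\pi$, and substituting into the $\w^\star$-component cancels the $\sin\theta\cos\theta$ terms to force $\theta=\pi$, contradicting $a>0$), and you handle Part~1 by a face-by-face inward-drift argument on $\partial\neibor$. Your two drift inequalities are correct as stated, and by continuity they persist with absolute margin in a strip near each face, so the split into ``far from the boundary'' versus ``near a face'' goes through. What the paper's one-point-convexity buys is unification: one inequality simultaneously settles uniqueness and confinement, and the Lyapunov function $\norm{\w_t-\w^\star}^2$ is arguably cleaner than tracking two separate boundary constraints. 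What your approach buys is that each claim is checked by a self-contained computation, without needing to discover the one-point-convex structure; in particular your direct solution of the stationarity equation is more informative than the paper's indirect argument.
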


To prove these four claims, we require following lemmas.

The first important property we use is that the gradient of population loss $l$ has the one-point convex property inside $\neibor$, stated as follows:

\begin{lemma} \label{lem:onepointconvex_RELU}
Inside $\neibor$, we have:
\begin{align*}
\langle - \grad \poploss(\w), \w^\star - \w \rangle \ge \frac{1}{10}\norm{\w - \w^\star}^2.
\end{align*}
\end{lemma}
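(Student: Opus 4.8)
The plan is to substitute the closed-form gradient $\grad\poploss(\w) = \tfrac12(\w - \w^\star) + \tfrac1{2\pi}(\theta\w^\star - \hat\w\sin\theta)$ into the inner product, peel off the ``linear'' part $\tfrac12(\w-\w^\star)$, which already contributes the full $\tfrac12\norm{\w-\w^\star}^2$, and control the remaining ``angular'' part by Cauchy--Schwarz. This gives
\begin{align*}
\inner{-\grad\poploss(\w)}{\w^\star - \w} &= \tfrac12\norm{\w^\star-\w}^2 - \tfrac1{2\pi}\inner{\theta\w^\star-\hat\w\sin\theta}{\w^\star-\w} \\
&\ge \tfrac12\norm{\w^\star-\w}^2 - \tfrac1{2\pi}\norm{\theta\w^\star-\hat\w\sin\theta}\,\norm{\w^\star-\w},
\end{align*}
so it suffices to show $\norm{\theta\w^\star - \hat\w\sin\theta} \le c\,\norm{\w^\star-\w}$ for a small enough constant $c$ (any $c < \tfrac{4\pi}{5}$ yields the claimed $\tfrac1{10}$; I will in fact get a constant close to $0.2$).

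First I would record that the only property of $\neibor$ used is that $\w\trans\w^\star > 0$, hence $\cos\theta>0$ and $\theta\in[0,\pi/2)$. Since $\norm{\w^\star}=\norm{\hat\w}=1$ and $\inner{\w^\star}{\hat\w}=\cos\theta$, expanding the square gives $\norm{\theta\w^\star - \hat\w\sin\theta}^2 = \theta^2 - \theta\sin 2\theta + \sin^2\theta$. On $[0,\pi/2)$ we have $\theta\sin 2\theta\ge 0$ (as $2\theta\in[0,\pi)$) and, by Jordan's inequality, $\theta\le\tfrac{\pi}{2}\sin\theta$, so
\[
\norm{\theta\w^\star - \hat\w\sin\theta}^2 \le \theta^2 + \sin^2\theta \le \Bigl(\tfrac{\pi^2}{4}+1\Bigr)\sin^2\theta.
\]
On the other hand, writing $\w$ in terms of its components along and orthogonal to $\w^\star$ and minimizing over $\norm{\w}$, one has $\norm{\w^\star-\w}^2 = \norm{\w}^2 - 2\norm{\w}\cos\theta + 1 \ge \sin^2\theta$. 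Combining these two bounds gives $\norm{\theta\w^\star-\hat\w\sin\theta}\le\sqrt{\pi^2/4+1}\,\norm{\w^\star-\w}$, and substituting into the first display yields $\inner{-\grad\poploss(\w)}{\w^\star-\w}\ge\bigl(\tfrac12 - \tfrac{\sqrt{\pi^2/4+1}}{2\pi}\bigr)\norm{\w^\star-\w}^2 \ge \tfrac1{10}\norm{\w^\star-\w}^2$, since $\tfrac{\sqrt{\pi^2/4+1}}{2\pi}<0.3$.

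I do not expect a genuine obstacle: the computation is elementary once the stated gradient formula is available. The two points to be careful about are (i) getting the identity $\norm{\theta\w^\star - \hat\w\sin\theta}^2 = \theta^2 - \theta\sin 2\theta + \sin^2\theta$ correct, and (ii) noting that the restriction to $\neibor$ enters precisely to guarantee $\theta<\pi/2$, which is what makes both $\theta\sin 2\theta\ge 0$ and Jordan's inequality available—outside this regime the angular term need not be dominated by $\norm{\w^\star-\w}$. If one prefers to avoid Cauchy--Schwarz, the inner product $\inner{\theta\w^\star-\hat\w\sin\theta}{\w^\star-\w}$ can instead be expanded directly in the two-dimensional plane spanned by $\w$ and $\w^\star$, using $\theta-\sin\theta\cos\theta\le\tfrac23\theta^3$ and $1-\cos\theta\le\tfrac12\theta^2$; this gives the same conclusion with a sharper constant.
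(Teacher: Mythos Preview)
Your proposal is correct and follows essentially the same route as the paper: substitute the closed-form gradient, separate the $\tfrac12\norm{\w-\w^\star}^2$ term, bound the residual $\tfrac{1}{2\pi}(\theta\w^\star-\hat{\w}\sin\theta)$ via Cauchy--Schwarz, and close with $\sin\theta\le\norm{\w-\w^\star}$ together with a $\theta\lesssim\sin\theta$ inequality on $[0,\pi/2)$. The only cosmetic difference is that the paper decomposes $\theta\w^\star-\hat{\w}\sin\theta=\w^\star(\theta-\sin\theta)+(\w^\star-\hat{\w})\sin\theta$ and bounds each piece (using $\theta\le 2\sin\theta$ and $\norm{\w^\star-\hat{\w}}\le\sqrt{2}$), whereas you compute $\norm{\theta\w^\star-\hat{\w}\sin\theta}$ directly and invoke Jordan's inequality; your variant is slightly tidier and even yields a marginally better constant ($\approx 0.20$ versus the paper's $\approx 0.12$).
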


\begin{proof}
Note that inside $\neibor$, we have the angle $\theta \in [0, \pi/2)$.
Also, let $\mathfrak{W}_\theta = \{\w| \angle(\w, \w^\star) = \theta\}$, then for $\theta \in [0, \pi/2)$:
\begin{align*}
\min_{\w \in \mathfrak{W}_\theta} \norm{\w - \w^\star} = \sin \theta.
\end{align*}
On the other hand, note that $\theta \le 2\sin\theta$ holds true for $\theta \in [0, \pi/2)$; thus we have:
\begin{align*}
\langle - \grad \poploss(\w), \w - \w^\star \rangle
= & \langle \frac{1}{2} (\w - \w^\star) + \frac{1}{2\pi}(\theta\w^\star - \hat{\w} \sin\theta), \w - \w^\star \rangle \\
= & \frac{1}{2}\norm{\w - \w^\star}^2 + \frac{1}{2\pi}\langle[\w^\star(\theta - \sin\theta) + (\w^\star - \hat{\w})\sin\theta], \w - \w^\star\rangle\\
\ge & \frac{1}{2}\norm{\w - \w^\star}^2
- \frac{1}{2\pi} (\sin\theta + \sqrt{2}\sin\theta) \norm{\w - \w^\star}\\
\ge & (\frac{1}{2} - \frac{1 + \sqrt{2}}{2\pi}) \norm{\w - \w^\star}^2 \ge \frac{1}{10}\norm{\w - \w^\star}^2,
\end{align*}
where the second last inequality used the fact that $\sin \theta \le \norm{\w - \w^\star}$ for all $\w \in \neibor$.
\end{proof}

One-point convexity guarantees that ZPSGD stays in the region $\neibor$ with high probability.
\begin{lemma}\label{lem:stay_RELU}
ZPSGD (Algorithm \ref{algo:PSGD}) with proper hyperparameters will stay in $\neibor$ with high probability.
\end{lemma}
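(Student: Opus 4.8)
The plan is to prove that $\neibor$ is \emph{forward-invariant} for the ZPSGD iteration with high probability, by combining three ingredients: (i) the search direction $\g_t+\xi_t$ is, with high probability, within $O(\epsilon)$ of the true population gradient $\grad\poploss(\w_t)$ at every iterate; (ii) the true gradient $\grad\poploss$ points strictly into $\neibor$ along the entire boundary $\partial\neibor$, with a \emph{uniform} margin; and (iii) each ZPSGD step is short ($O(1/\sqrt d)$), so the only way to leave $\neibor$ would be a step near the boundary, which (ii) rules out once $\epsilon$ is below an absolute threshold.

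First I would reduce everything to the true gradient field. By Lemma~\ref{lem:smoothing}(2) with $\rho=O(d)$ (Lemma~\ref{lem:ReLU}), the prescribed $\sigma=\Theta(\sqrt{\epsilon/(\rho d)})$, and $\nu=\tilde O(\sqrt{d/n})\le\tilde O(\sqrt{\epsilon^3/\rho}/d)$ (valid since $n\ge\tilde\Omega(d^4/\epsilon^3)$), we get $\|\grad\tilde f_\sigma(\w)-\grad\poploss(\w)\|\le O(\epsilon)$ on $\neibor$. By Lemma~\ref{lem:stocgrad} the single-query gradient estimate is sub-Gaussian with parameter $B/\sigma$, so a mini-batch of size $m=\poly(d,B,\ell,\rho,1/\epsilon,\log(T/\delta))$ concentrates to within $O(\epsilon)$; together with $\|\xi_t\|\le r=\tilde\Theta(\epsilon)$ and a union bound over the $T=\poly$ iterations, there is an event $\mathcal E$ with $\P(\mathcal E)\ge 1-\delta$ on which $\g_t+\xi_t=\grad\poploss(\w_t)+\e_t$ with $\|\e_t\|\le O(\epsilon)$ for all $t\le T$. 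I would also record that on $\neibor$, using the closed form $\grad\poploss(\w)=\tfrac12(\w-\w^\star)+\tfrac1{2\pi}(\theta\w^\star-\hat\w\sin\theta)$ and $\sin\theta\le\|\w-\w^\star\|$, one has $\|\grad\poploss(\w)\|\le\|\w-\w^\star\|\le 3=O(1)$; since $\eta=1/\ell$ with the (effective) gradient-Lipschitz constant $\ell=O(\sqrt d)$ on $\neibor$, every step obeys $\|\w_{t+1}-\w_t\|\le\eta(\|\g_t\|+r)=O(1/\sqrt d)$ on $\mathcal E$.

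Next I would establish the boundary drift from the same closed form. On the conic face $\{\w\trans\w^\star=1/\sqrt d\}$, a short computation gives $\langle-\grad\poploss(\w),\w^\star\rangle=\tfrac12(1-\w\trans\w^\star)-\tfrac1{2\pi}(\theta-\sin\theta\cos\theta)\ge\tfrac14-\tfrac1{2\sqrt d}\ge\tfrac18$ for $d$ large, since $\theta\in[0,\pi/2]$ there; so $-\grad\poploss$ has a positive component along the inward normal $\w^\star$. On the spherical face $\{\|\w\|=2\}$, with $\w\trans\w^\star=2\cos\theta$ one gets $\langle\grad\poploss(\w),\w\rangle=2-\cos\theta+\tfrac1\pi(\theta\cos\theta-\sin\theta)\ge 2-1-\tfrac1\pi>0$ (using $\theta\cos\theta-\sin\theta\in[-1,0]$), so $-\grad\poploss$ has a negative radial component, again pointing inward, with margin $\ge\tfrac12$. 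I would additionally invoke one-point convexity (Lemma~\ref{lem:onepointconvex_RELU}) as an auxiliary fact: it shows $\|\w_t-\w^\star\|$ is non-increasing up to the $O(\epsilon)$ noise, keeping all iterates in a bounded set. Thus there is a universal $c>0$ with $\langle-\grad\poploss(\w),n(\w)\rangle\ge c$ for every boundary point $\w$, where $n(\w)$ is the inward unit normal of the active face.

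Finally I would close the induction on $\mathcal E$. Suppose $\w_t\in\neibor$. If $\mathrm{dist}(\w_t,\partial\neibor)>C/\sqrt d$ for an appropriate constant $C$, the step is too short to exit $\neibor$. If $\w_t$ lies within $C/\sqrt d$ of a face, write $\w_{t+1}-\w_t=-\eta(\grad\poploss(\w_t)+\e_t)$: the component along that face's inward normal is at least $\eta(c-O(\epsilon)-r)>0$ once $\epsilon$ is below an absolute threshold, so $\w_{t+1}$ stays on the feasible side of that face; and because the inward margin $c$ is uniform over all of $\partial\neibor$—including the corner where both faces are active—the remaining $O(\eta)$ tangential displacement cannot carry the iterate across a different face, using the convexity of $\neibor$. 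Hence $\w_{t+1}\in\neibor$, and by induction $\w_t\in\neibor$ for all $t\le T$ on $\mathcal E$, i.e., with probability $\ge 1-\delta$. The main obstacle is precisely this last step at the corner of $\neibor$: one must verify that the simultaneous activation of the two constraints does not erode the uniform inward margin and that a short, partly-tangential step cannot migrate the iterate around the boundary onto the infeasible side; a secondary point is checking that the universal constant $c$ dominates all the $O(\epsilon)$ and $r=\tilde\Theta(\epsilon)$ error terms, which only constrains $\epsilon$ by an absolute constant, consistent with the regime of Theorem~\ref{thm:ReLU}.
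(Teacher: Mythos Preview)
Your proposal is correct but follows a genuinely different strategy from the paper. The paper's proof is a short Lyapunov argument that uses one-point convexity (Lemma~\ref{lem:onepointconvex_RELU}) as the \emph{primary} tool: expanding $\norm{\w_{t+1}-\w^\star}^2$, invoking concentration of the mini-batch gradient, and applying $\langle -\grad\poploss(\w),\w^\star-\w\rangle\ge\tfrac1{10}\norm{\w-\w^\star}^2$ gives $\norm{\w_{t+1}-\w^\star}^2\le\norm{\w_t-\w^\star}^2$ whenever $\w_t\in\neibor\setminus\{\norm{\w-\w^\star}\le 1/10\}$; a separate observation handles the inner ball $\{\norm{\w-\w^\star}\le 1/10\}$, from which a single short step cannot reach $\neibor^c$. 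You instead carry out a direct boundary analysis, computing face by face that $-\grad\poploss$ has a uniformly positive inward-normal component on $\partial\neibor$, and combining this with the $O(1/\sqrt d)$ step length in an induction. Your route is more explicit about the geometry of $\neibor$ (it does not rely on any relation between the level sets of $\norm{\w-\w^\star}$ and the shape of $\neibor$, which the paper leaves implicit), at the price of more casework. Two minor remarks: your invocation of one-point convexity as an ``auxiliary'' boundedness fact is redundant once you have the boundary drifts, and the corner case you flag as the main obstacle is actually benign---since your two inward-normal inequalities hold simultaneously there and both constraints are convex, a single step satisfies both at once.
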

\begin{proof}
We prove this by two steps:
\begin{enumerate}
\item The algorithm always moves towards $\x^\star$ in the region $\neibor - \{\norm{\w - \w^\star} \le 1/10\}$.
\item The algorithm will not jump from $\{\norm{\w} \le 1/10\}$ to $\neibor^c$ in one step.
\end{enumerate}
The second step is rather straightforward since the function $\ell(\w)$ is Lipschitz, and the learning rate is small.
The first step is due to the large minibatch size and the concentration properties of sub-Gaussian random variables:
\begin{align*}
&\norm{\w_{t+1} -\w^\star}^2 = \norm{\w_t - \eta (\g_t(\x_t) + \xi_t)  -\w^\star}^2\\
\le &\norm{\w_t - \w^\star}^2 - \eta \langle \grad f_\sigma(\x_t), \w_t -\w^\star \rangle + \eta \norm{\zeta_t}\norm{\w_t - \w^\star}
+ \eta^2 \E \norm{\g_t(\x_t) + \xi_t}^2\\
\le &\norm{\w_t - \w^\star}^2  - \frac{\eta}{10}\norm{\w_t - \w^\star}^2 + \eta\epsilon\norm{\w_t - \w^\star} + \eta^2 \E \norm{\g_t(\x_t) + \xi_t}^2\\
\le &\norm{\w_t - \w^\star}^2  - (\frac{\eta}{100} - \eta\epsilon - O(\eta^2)) \norm{\w_t - \w^\star}
\le 0
\end{align*}
The last step is true when we pick a learning rate that is small enough (although we pick $\eta = 1/\ell$, this is still fine because a $\ell$-gradient Lipschitz function is clearly also a $10\ell$-gradient Lipschitz function) and $\epsilon$ is small.
\end{proof}

\begin{lemma}\label{lem:nonconvex_RELU}
Let $\w(t) = \frac{1}{5} (\w^\star + t \e)$ where $\e$ is any direction so that $\e\trans \w^\star = 0$
\begin{equation*}
\poploss(\w(t)) =\frac{t^2}{100} - \frac{t}{10\pi} + \frac{1}{10\pi} \tan^{-1}(t) + const,
\end{equation*}
which is nonconvex in domain $t \in [0, 1]$. Therefore $f(\w(t))$ is nonconvex along this line segment inside $\neibor$.
\end{lemma}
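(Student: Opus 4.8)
The plan is a one-variable substitution into the closed form for the population loss, followed by a second-derivative test; throughout I take $\norm{\w^\star}=1$ and $\norm{\e}=1$ (as the word ``direction'' indicates), with $\e\trans\w^\star=0$. First I would reduce $\poploss$ along the segment to a function of $t$ alone. For $\w(t)=\tfrac15(\w^\star+t\e)$, orthogonality gives $\norm{\w(t)}^2=\tfrac{1}{25}(1+t^2)$ and $\w(t)\trans\w^\star=\tfrac15$, so the angle $\theta=\theta(t)$ between $\w(t)$ and $\w^\star$ satisfies $\cos\theta=1/\sqrt{1+t^2}$, and hence for $t\ge 0$, $\sin\theta=t/\sqrt{1+t^2}$ and $\theta=\tan^{-1}t$. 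Substituting into $\poploss(\w)=\tfrac14\norm{\w}^2+\tfrac54-\tfrac{1}{2\pi}\norm{\w}\bigl[\sin\theta+(\pi-\theta)\cos\theta\bigr]$, the factor $\norm{\w(t)}=\tfrac15\sqrt{1+t^2}$ cancels the $\sqrt{1+t^2}$ in the denominators: $\tfrac{1}{2\pi}\norm{\w(t)}\sin\theta=\tfrac{t}{10\pi}$ and $\tfrac{1}{2\pi}\norm{\w(t)}(\pi-\theta)\cos\theta=\tfrac{1}{10}-\tfrac{1}{10\pi}\tan^{-1}t$, while $\tfrac14\norm{\w(t)}^2=\tfrac{1+t^2}{100}$. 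Collecting the $t$-independent terms into a constant yields exactly the claimed identity $\poploss(\w(t))=\tfrac{t^2}{100}-\tfrac{t}{10\pi}+\tfrac{1}{10\pi}\tan^{-1}t+\mathrm{const}$.

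Next I would establish nonconvexity by differentiating twice. Writing $g(t)$ for the right-hand side, $g'(t)=\tfrac{t}{50}-\tfrac{1}{10\pi}+\tfrac{1}{10\pi(1+t^2)}$ and $g''(t)=\tfrac{1}{50}-\tfrac{t}{5\pi(1+t^2)^2}$. Evaluating at $t=\tfrac12$ gives $g''(\tfrac12)=\tfrac{1}{50}-\tfrac{8}{125\pi}<0$, so $g$ is strictly concave in a neighborhood of $t=\tfrac12$ and therefore not convex on $[0,1]$. To conclude the stated corollary I would check that the whole segment sits in $\neibor$: $\w(t)\trans\w^\star=\tfrac15\ge d^{-1/2}$ (for $d\ge 25$) and $\norm{\w(t)}=\tfrac15\sqrt{1+t^2}\le\tfrac{\sqrt2}{5}<2$, so $\poploss$ is nonconvex along a line segment contained in $\neibor$.

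The only step needing care is the sign check: $g''$ is \emph{positive} at both endpoints ($g''(0)=\tfrac1{50}$ and $g''(1)=\tfrac1{50}-\tfrac1{20\pi}>0$), so convexity does not fail at the endpoints, and one must evaluate $g''$ at an interior point such as $t=\tfrac12$ to witness the concavity. Everything else is routine arithmetic using the ReLU kernel identity quoted above; in particular no estimate is lost---the identity for $\poploss(\w(t))$ is exact.
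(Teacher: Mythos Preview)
Your proof is correct and follows essentially the same route as the paper: substitute $\theta=\tan^{-1}t$ into the closed-form population loss to obtain the stated one-variable expression, then exhibit a point where $g''<0$. The paper evaluates $g''$ at $t=0.6$ rather than your $t=\tfrac12$, and does not spell out the $d\ge 25$ caveat for membership in $\neibor$; otherwise the arguments coincide.
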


\begin{proof}
Note that in above setup, $\tan \theta = t$, so the population loss can be calculated as:
\begin{align*}
\poploss(\w(t)) =& \frac{1}{100}\norm{\w}^2 - \frac{1}{2\pi}[\frac{t}{5}  - \tan^{-1}(t)\cdot \frac{1}{5}] + const \\
=& \frac{t^2}{100} - \frac{t}{10\pi} + \frac{1}{10\pi} \tan^{-1}(t) + const
\end{align*}
It's easy to show $\w(t)\in \neibor$ for all $t\in [0, 1]$ and if $g(t) = \poploss(\w(t))$, then $g''(0.6) <0$ and thus the function is nonconvex.
\end{proof}

Next, we show that the empirical risk and the population risk are close by a covering argument.
\begin{lemma} \label{lem:functiondif_RELU}
For sample size $n \ge d$, with high probability, we have:
\begin{equation*}
\sup_{\w \in \neibor} | \emploss(\w) -\poploss(\w)| \le \tilde{O}\left(\sqrt{\frac{d}{n}}\right).
\end{equation*}
\end{lemma}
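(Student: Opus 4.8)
\textbf{Proof proposal for Lemma \ref{lem:functiondif_RELU}.}

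The plan is to bound $\sup_{\w \in \neibor} |\emploss(\w) - \poploss(\w)|$ by a standard uniform-convergence argument: control the deviation at a single fixed $\w$ via a concentration inequality, then extend to all of $\neibor$ via an $\epsilon$-net together with a Lipschitz-in-$\w$ estimate of the per-sample loss. First I would fix $\w \in \neibor$ and write $\emploss(\w) - \poploss(\w) = \frac{1}{2n}\sum_{i=1}^n \big[(y_i - \relu(\x_i\trans\w))^2 - \E(y_i - \relu(\x_i\trans\w))^2\big]$. Since $y_i = \relu(\x_i\trans\w^\star) + \zeta_i$ with $\x_i \sim \mathcal{N}(0,\I)$ and $\zeta_i \sim \mathcal{N}(0,1)$, each summand is a (centered) polynomial of degree at most $2$ in the Gaussian vector $(\x_i, \zeta_i)$ — more precisely, using $|\relu(a) - \relu(b)| \le |a - b|$ and $\norm{\w} \le 2$, the summand is dominated by a sub-exponential random variable with parameter $O(1)$ (the relevant quantities are $\norm{\x_i}^2$, $(\x_i\trans\w)^2$, $(\x_i\trans\w^\star)^2$, $\zeta_i^2$, and cross terms, all sub-exponential). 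A Bernstein-type inequality for sums of independent sub-exponential variables then gives, for each fixed $\w$, $\Pr\big(|\emploss(\w) - \poploss(\w)| \ge t\big) \le 2\exp(-c\, n \min\{t^2, t\})$ for a universal constant $c$; in the regime $t \le 1$ this is $2\exp(-cnt^2)$.

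Next I would set up the covering argument. Inside $\neibor$ we have $\norm{\w} \le 2$, so $\neibor$ admits an $\alpha$-net $N_\alpha$ of size $|N_\alpha| \le (C/\alpha)^d$ for a universal constant $C$ (Lemma \ref{lem:eps_cover} gives an explicit such net). For the Lipschitz bound I would show that $\w \mapsto \emploss(\w) - \poploss(\w)$ is $L$-Lipschitz on $\neibor$ with $L = \tilde{O}(\sqrt{d})$ with high probability: the gradient of $(y_i - \relu(\x_i\trans\w))^2$ in $\w$ has norm $O(|y_i - \relu(\x_i\trans\w)|\cdot\norm{\x_i}) = O((1 + |\zeta_i| + \norm{\x_i})\norm{\x_i})$, whose average over $i$ concentrates at $O(d)$... actually more carefully, $\E[\norm{\x_i}^2] = d$ and the empirical average of these quantities is $O(d)$ with high probability, so the empirical risk is $O(d)$-Lipschitz on the bounded region, and the population risk is smooth with gradient Lipschitz $O(\sqrt d)$ by part 2 of Lemma \ref{lem:ReLU2}; either way, $L \le \poly(d)$ suffices, since it enters only logarithmically. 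Taking $\alpha = 1/(nL)$, a union bound over $N_\alpha$ with $t = \tilde\Theta(\sqrt{d/n})$ gives $\Pr(\exists \w \in N_\alpha : |\emploss(\w) - \poploss(\w)| \ge t) \le (CnL)^d \cdot 2\exp(-cnt^2)$, which is $\le \delta$ once $t = \Theta\big(\sqrt{(d\log(nL/\delta))/n}\big) = \tilde{O}(\sqrt{d/n})$. Finally, for arbitrary $\w \in \neibor$, pick the nearest net point $\w' \in N_\alpha$; then $|(\emploss - \poploss)(\w)| \le |(\emploss - \poploss)(\w')| + 2L\alpha \le t + 2/n \le \tilde O(\sqrt{d/n})$, completing the proof.

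The main obstacle is the single-point concentration bound: verifying that the per-sample loss $(y_i - \relu(\x_i\trans\w))^2$, after centering, is genuinely sub-exponential with an $O(1)$ parameter uniformly over $\w \in \neibor$, and keeping track of the constants so that the Bernstein exponent comes out as $\exp(-cnt^2)$ in the relevant range $t = \tilde O(\sqrt{d/n}) \le 1$. This requires carefully expanding the square into the noise term $\zeta_i^2$ (chi-squared, sub-exponential), the cross term $2\zeta_i(\relu(\x_i\trans\w^\star) - \relu(\x_i\trans\w))$ (product of a Gaussian with a Lipschitz-in-$\x_i$ function, hence sub-exponential after bounding $|\relu(\x_i\trans\w^\star) - \relu(\x_i\trans\w)| \le |\x_i\trans(\w-\w^\star)| = O(\norm{\x_i})$), and the term $(\relu(\x_i\trans\w^\star) - \relu(\x_i\trans\w))^2 = O((\x_i\trans(\w - \w^\star))^2)$ (sub-exponential, being the square of a Gaussian). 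The net cardinality and Lipschitz steps are routine once this is in place, since they only cost logarithmic factors absorbed into the $\tilde O(\cdot)$.
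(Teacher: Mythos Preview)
Your proposal is correct and is essentially the same argument as the paper's: single-point sub-exponential (Bernstein) concentration for $(y_i - \relu(\x_i^\top\w))^2$, a union bound over an $\epsilon$-net of $\neibor$ of size $(C/\epsilon)^d$, and a Lipschitz bound (with constant $O(d)$ on the empirical side, $O(1)$ on the population side) to pass from the net to all of $\neibor$. The paper organizes the triangle inequality as three separate terms $T_1,T_2,T_3$ rather than bundling the Lipschitz step into a single $2L\alpha$, but the content and the resulting $\tilde O(\sqrt{d/n})$ bound are identical.
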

\begin{proof}
Let $\{\w^{j}\}_{j=1}^J$ be a $\epsilon$-covering of $\neibor$. By triangular inequality:
\begin{align*}
\sup_{\w \in \neibor} | \emploss(\w) -\poploss(\w)|
\le \underbrace{\sup_{\w \in \neibor} | \emploss(\w) -\emploss(\w^j)|}_{T_1}
+ \underbrace{\sup_{j \in J} | \emploss(\w^j) -\poploss(\w^j)|}_{T_2}
+ \underbrace{\sup_{\w \in \neibor} | \poploss(\w^j) -\poploss(\w)|}_{T_3},
\end{align*}
where $\w^j$ is the closest point in the cover to $\w$. Clearly, the $\epsilon$-net of $\neibor$ requires fewer points than the $\epsilon$-net of $\{ \w| \norm{\w} \le 2\}$. By the standard covering number argument, we have $\log N_\epsilon = O(d \log \frac{1}{\epsilon})$. We proceed to bound each term individually.

\textbf{Term $T_2$:} For a fixed $j$, we know $\emploss(\w^j) = \frac{1}{n}\sum_{i=1}^n (y_i - \relu(\x_i\trans\w^j))^2$, 
where $y_i - \relu(\x_i\trans\w^j)$ is sub-Gaussian with parameter $O(1)$, thus $(y_i - \relu(\x_i\trans\w^j))^2$ is sub-Exponential with parameter $O(1)$. We have the concentration inequality:
\begin{equation*}
\P(| \emploss(\w^j) -\poploss(\w^j)| \ge t) \le e^{O(\frac{nt^2}{1 + t})}.
\end{equation*}
By union bound, we have:
\begin{equation*}
\P(\sup_{j\in J} | \emploss(\w^j) -\poploss(\w^j)| \ge t) \le N_{\epsilon} e^{O(\frac{nt^2}{1 + t})}.
\end{equation*}
That is, with $n \ge d$, and probability $1-\delta$, we have:
\begin{equation*}
\sup_{j\in J} | \emploss(\w^j) -\poploss(\w^j)|
\le \sqrt{\frac{1}{n} (\log \frac{1}{\delta} + d\log\frac{1}{\epsilon} )}.
\end{equation*}

\textbf{Term $T_3$:} Since the population loss is $O(1)$-Lipschitz in $\neibor$, we have:
\begin{equation*}
\sup_{\w \in \neibor} | \poploss(\w^j) -\poploss(\w)|
 \le  L |\w^j - \w| \le O(\epsilon).
\end{equation*}

\textbf{Term $T_1$:} Note that for a fixed pair $(\x_i, \y_i)$, the function $g_i(\w) = (y_i - \relu(\x_i\trans\w))^2$ is $O(\norm{\zeta_i}\norm{\x_i} + \norm{\x_i}^2)$-Lipschitz. Therefore,
\begin{align*}
\sup_{\w \in \neibor} | \emploss(\w^j) -\emploss(\w)|
 \le & O(1) \cdot \frac{1}{n}\sum_i\left[\norm{\zeta_i}\norm{\x_i} + \norm{\x_i}^2\right] |\w^j - \w| \\
 \le &O(\epsilon) \cdot \frac{1}{n}\sum_i\left[\norm{\zeta_i}\norm{\x_i} + \norm{\x_i}^2\right].
\end{align*}
With high probability, $\frac{1}{n}\sum_i\left[\norm{\zeta_i}\norm{\x_i} + \norm{\x_i}^2\right]$ concentrates around its mean, $O(d)$.

In summary, we have:
\begin{equation*}
\sup_{\w \in \neibor} | \emploss(\w) -\poploss(\w)| \le \sqrt{\frac{1}{n} (\log \frac{1}{\delta} + d\log\frac{1}{\epsilon} )}
+ O(\epsilon) + O(\epsilon d).
\end{equation*}
By picking $\epsilon$ (for the $\epsilon$-covering) small enough, we finish the proof.
\end{proof}

~

Finally we prove the smoothness of population risk in $\neibor$, we have $1/\sqrt{d} \le \norm{\w}\le 2$.
\begin{lemma}
For population loss $\poploss(\w) = \frac{1}{4} \norm{\w}^2  + \frac{5}{4} -  \frac{1}{2\pi}\norm{\w}[\sin \theta + (\pi - \theta) \cos \theta]$, its gradient and Hessian are equal to:
\begin{align*}
\grad \poploss(\w) =& \frac{1}{2} (\w - \w^\star) + \frac{1}{2\pi}(\theta\w^\star - \hat{\w} \sin\theta), \\
\hess \poploss(\w) =& 
\begin{cases}
\frac{1}{2} \I & \mbox{~if~} \theta = 0 \\
\frac{1}{2} \I -\frac{\sin\theta}{2\pi\norm{\w}}(\I + \hat{\u}\hat{\u}\trans - \hat{\w}\hat{\w}\trans)
& \mbox{~otherwise}
\end{cases},
\end{align*}
where $\hat{\w}$ is the unit vector along the $\w$ direction, and $\hat{\u}$ is the unit vector along the $\w^\star - \hat{\w}\cos \theta$ direction.
\end{lemma}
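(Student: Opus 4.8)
The plan is to obtain both formulas by straightforward differentiation, viewing $\poploss$ as a function of the two scalar quantities $\norm{\w}$ and $\theta$ through the given closed form. First I would record the elementary derivatives. We have $\grad\norm{\w}=\hat{\w}$ and $\grad\hat{\w}=\frac{1}{\norm{\w}}(\I-\hat{\w}\hat{\w}\trans)$. Differentiating $\cos\theta=\w\trans\w^\star/\norm{\w}$ gives $\grad\cos\theta=\frac{1}{\norm{\w}}(\w^\star-\cos\theta\,\hat{\w})$; since $\norm{\w^\star}=1$, the vector $\w^\star-\cos\theta\,\hat{\w}$ is exactly the component of $\w^\star$ orthogonal to $\w$, so it has norm $\sin\theta$ and, by definition of $\hat{\u}$, equals $\sin\theta\,\hat{\u}$. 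Thus $\w^\star=\cos\theta\,\hat{\w}+\sin\theta\,\hat{\u}$, $\grad\cos\theta=\frac{\sin\theta}{\norm{\w}}\hat{\u}$, and hence for $\theta\in(0,\pi)$ we get $\grad\theta=-\hat{\u}/\norm{\w}$.

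For the gradient, write $\phi(\theta)=\sin\theta+(\pi-\theta)\cos\theta$, so that $\poploss=\tfrac14\norm{\w}^2+\tfrac54-\tfrac{1}{2\pi}\norm{\w}\phi(\theta)$, and compute $\phi'(\theta)=-(\pi-\theta)\sin\theta$. The product/chain rule then gives
\[
\grad\poploss=\tfrac12\w-\tfrac{1}{2\pi}\bigl(\phi(\theta)\hat{\w}+\norm{\w}\phi'(\theta)\grad\theta\bigr)
=\tfrac12\w-\tfrac{1}{2\pi}\bigl(\phi(\theta)\hat{\w}+(\pi-\theta)\sin\theta\,\hat{\u}\bigr).
\]
Substituting $\sin\theta\,\hat{\u}=\w^\star-\cos\theta\,\hat{\w}$, the $\cos\theta\,\hat{\w}$ contributions cancel against the $(\pi-\theta)\cos\theta$ term inside $\phi(\theta)$, leaving $\grad\poploss=\tfrac12(\w-\w^\star)+\tfrac{1}{2\pi}(\theta\w^\star-\sin\theta\,\hat{\w})$, which is the claimed expression.

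For the Hessian on the region $\theta\in(0,\pi)$, I would differentiate this expression for $\grad\poploss$ term by term, tracking the outer-product transposes carefully: $\grad[\theta\w^\star]=\w^\star(\grad\theta)\trans=-\tfrac{1}{\norm{\w}}\w^\star\hat{\u}\trans$, and, using $\grad\sin\theta=\cos\theta\,\grad\theta$,
\[
\grad[\sin\theta\,\hat{\w}]=\hat{\w}(\grad\sin\theta)\trans+\sin\theta\,\grad\hat{\w}
=-\tfrac{\cos\theta}{\norm{\w}}\hat{\w}\hat{\u}\trans+\tfrac{\sin\theta}{\norm{\w}}(\I-\hat{\w}\hat{\w}\trans).
\]
Collecting terms, $\grad[\theta\w^\star]-\grad[\sin\theta\,\hat{\w}]=-\tfrac{1}{\norm{\w}}(\w^\star-\cos\theta\,\hat{\w})\hat{\u}\trans-\tfrac{\sin\theta}{\norm{\w}}(\I-\hat{\w}\hat{\w}\trans)$, and applying $\w^\star-\cos\theta\,\hat{\w}=\sin\theta\,\hat{\u}$ once more collapses the first term to $-\tfrac{\sin\theta}{\norm{\w}}\hat{\u}\hat{\u}\trans$, yielding $\hess\poploss=\tfrac12\I-\tfrac{\sin\theta}{2\pi\norm{\w}}(\I+\hat{\u}\hat{\u}\trans-\hat{\w}\hat{\w}\trans)$. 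Finally, the case $\theta=0$ (where $\hat{\u}$ is undefined) must be handled separately: for $\w_0=t_0\w^\star$ with $t_0>0$, parametrize a nearby point as $\w=(t_0+s)\w^\star+\v$ with $\v\perp\w^\star$, note $\sin\theta=\norm{\v}/\norm{\w}$ and $\cos\theta=(t_0+s)/\norm{\w}$, and a short Taylor expansion shows $\theta\w^\star-\sin\theta\,\hat{\w}=O(\theta^2)=O(\norm{\w-\w_0}^2)$; hence this term contributes nothing to the Jacobian of $\grad\poploss$ at $\w_0$ and $\hess\poploss(\w_0)=\tfrac12\I$. (Equivalently, since $\|\tfrac{\sin\theta}{2\pi\norm{\w}}(\I+\hat{\u}\hat{\u}\trans-\hat{\w}\hat{\w}\trans)\|\le\tfrac{\sin\theta}{\pi\norm{\w}}\to 0$, the formula extends continuously to $\tfrac12\I$ there.)

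I expect the main obstacle to be the bookkeeping in the Hessian step — keeping the outer-product transposes correct when differentiating the vector-valued maps $\w\mapsto\theta(\w)\w^\star$ and $\w\mapsto\sin\theta(\w)\hat{\w}(\w)$, and repeatedly using the geometric identity $\w^\star=\cos\theta\,\hat{\w}+\sin\theta\,\hat{\u}$ to get the cancellations — together with the care needed at $\theta=0$, where one must \emph{verify} (not merely assume) that the stated matrix is the genuine second derivative of $\poploss$.
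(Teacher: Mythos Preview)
Your proposal is correct and follows essentially the same route as the paper: compute $\grad\theta$ from $\cos\theta=\w\trans\w^\star/\norm{\w}$, apply the chain rule to get the gradient, differentiate that for the Hessian using the identity $\w^\star-\cos\theta\,\hat{\w}=\sin\theta\,\hat{\u}$, and treat $\theta=0$ separately. The only noticeable difference is at $\theta=0$: the paper verifies the Hessian there by computing directional-derivative limits along $\w^\star$ and along directions orthogonal to $\w^\star$, whereas you use a Taylor/continuity argument to show the correction term $\theta\w^\star-\sin\theta\,\hat{\w}$ is $O(\norm{\w-\w_0}^2)$ --- a slightly cleaner variant of the same idea.
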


\begin{proof} Note $\norm{\w^\star} = 1$.
Let $z(\w, \w^\star) = \frac{\w\trans \w^\star}{\norm{\w}}$, we have:
\begin{equation*}
\grad_\w z(\w, \w^\star) = \frac{\w^\star \norm{\w} - (\w\trans \w^\star) \hat{\w} }{\norm{\w}^2}
= \frac{(\w^\star - \hat{\w} \cos \theta)}{\norm{\w}}.
\end{equation*}
Since $\cos \theta = z(\w, \w^\star)$, we obtain:
\begin{equation*}
-\sin \theta \cdot \grad \theta = \frac{(\w^\star - \hat{\w} \cos \theta)}{\norm{\w}}.
\end{equation*}
This gives:
\begin{align*}
\grad \poploss(\w) =& \frac{1}{2} \w
- \frac{1}{2\pi} \hat{\w}[\sin\theta + (\pi -\theta)\cos\theta]
- \frac{1}{2\pi} \norm{\w}[\cos \theta -\cos \theta- (\pi - \theta) \sin \theta] \grad \theta \\
= &\frac{1}{2} \w - \frac{1}{2\pi} \hat{\w}[\sin\theta + (\pi -\theta)\cos\theta]
+ \frac{1}{2\pi} \norm{\w} (\pi - \theta) \sin \theta \cdot \grad \theta \\
= &\frac{1}{2} \w - \frac{1}{2\pi} \hat{\w}[\sin\theta + (\pi -\theta)\cos\theta]
- \frac{1}{2\pi}  (\pi - \theta) (\w^\star - \hat{\w} \cos \theta) \\
= & \frac{1}{2}(\w - \w^\star) + \frac{1}{2\pi}(\theta \w^\star - \hat{\w} \sin\theta)
\end{align*}

Therefore, the Hessian (when $\theta \neq 0$):
\begin{align*}
\hess \poploss(\w) =& \grad [\frac{1}{2}(\w - \w^\star) + \frac{1}{2\pi}(\theta \w^\star - \hat{\w} \sin\theta)] \\
=& \frac{1}{2} \I  + \frac{1}{2\pi}[\grad \theta \cdot (\w^\star - \hat{\w}\cos \theta) \trans]
-\frac{\sin\theta}{2\pi\norm{\w}}(\I - \hat{\w}\hat{\w}\trans)\\
=& \frac{1}{2} \I -\frac{\sin\theta}{2\pi\norm{\w}}(\I + \hat{\u}\hat{\u}\trans - \hat{\w}\hat{\w}\trans),
\end{align*}
where $\hat{\u}$ is the unit vector along $\w^\star - \hat{\w}\cos \theta$ direction. 

And for $\theta = 0$, Hessian $\hess \poploss(\w) = \frac{1}{2}\I$. We prove this by taking the limit.
For $\hat{\v} = \w^\star$
\begin{align*}
\hess  \poploss(\w) \cdot \hat{\v}
= & \lim_{\epsilon \rightarrow 0} \frac{\grad \poploss(\w + \epsilon\hat{\v})
- \grad \poploss(\w)}{\epsilon}
= \frac{1}{2}\hat{\v}.
\end{align*}
For any $\hat{\v} \perp \w^\star$, the angle $\theta$ between $\w + \epsilon \hat{v}$ and $\w^\star$ is $\Theta(\frac{\epsilon}{\norm{\w}})$ up to first order in $\epsilon$, we have:
\begin{align*}
\hess  \poploss(\w) \cdot \hat{\v}
= & \lim_{\epsilon \rightarrow 0} \frac{\grad \poploss(\w + \epsilon\hat{\v})
- \grad \poploss(\w)}{\epsilon} \\
= &\frac{1}{2}\hat{\v} + \frac{1}{2\pi} 
 \lim_{\epsilon \rightarrow 0} \frac{\epsilon \w^\star - (\w^\star + \Theta(\frac{\epsilon}{\norm{\w}})\hat{\v})\cdot \Theta(\frac{\epsilon}{\norm{\w}}) + o(\epsilon)}{\epsilon}
 = \frac{1}{2} \hat{\v}.
\end{align*}
This finishes the proof.
\end{proof}

\begin{lemma} \label{lem:smoothness_RELU}
The population loss function $\poploss$ is $O(1)$-bounded, $O(1)$-Lipschitz, $O(\sqrt{d})$-gradient Lipschitz, and $O(d)$-Hessian Lipschitz. 
\end{lemma}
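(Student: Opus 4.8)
The plan is to read everything off the closed-form expressions for $\poploss$, $\grad\poploss$ and $\hess\poploss$ derived just above, exploiting only the geometry of $\neibor$: since $\norm{\w^\star}=1$ and $\w\trans\w^\star\ge 1/\sqrt d$ we get $1/\sqrt d\le\w\trans\w^\star\le\norm{\w}\le 2$, hence $1/\sqrt d\le\norm{\w}\le 2$ and the angle $\theta=\angle(\w,\w^\star)$ lies in $[0,\pi/2)$; moreover $\neibor$ is convex, so the mean value theorem applies along any segment joining two of its points (such a segment stays in $\{\norm{\w}\ge 1/\sqrt d\}$). Boundedness is immediate: $\tfrac14\norm{\w}^2\le1$ and $\norm{\w}\,|\sin\theta+(\pi-\theta)\cos\theta|\le 2(1+\pi)$, so $|\poploss(\w)|=O(1)$. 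The $O(1)$-Lipschitz bound is $\norm{\grad\poploss(\w)}\le\tfrac12(\norm{\w}+1)+\tfrac1{2\pi}(\theta+\sin\theta)=O(1)$. The $O(\sqrt d)$-gradient-Lipschitz bound follows straight from the Hessian formula, using $\norm{\I+\hat\u\hat\u\trans-\hat\w\hat\w\trans}\le3$ and $\tfrac{\sin\theta}{\norm{\w}}\le\sqrt d$: thus $\norm{\hess\poploss(\w)}\le\tfrac12+\tfrac{3}{2\pi}\sqrt d=O(\sqrt d)$.

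The only substantive claim is that $\w\mapsto\hess\poploss(\w)$ is $O(d)$-Lipschitz on $\neibor$, i.e.\ $\poploss$ is $O(d)$-Hessian Lipschitz. First I would rewrite the Hessian so that it depends on $\w$ alone. Writing $\vect{p}=\vect{p}(\w)=(\I-\hat\w\hat\w\trans)\w^\star=\w^\star-\tfrac{(\w\trans\w^\star)\w}{\norm{\w}^2}$ for the component of $\w^\star$ orthogonal to $\w$ (so $\norm{\vect p}=\sin\theta$ and $\hat\u=\vect p/\norm{\vect p}$), the Hessian formula becomes
\begin{equation*}
\hess\poploss(\w)=\frac12\I-\frac1{2\pi}\Bigl[\frac{\norm{\vect p}}{\norm{\w}}\bigl(\I-\hat\w\hat\w\trans\bigr)+\frac{1}{\norm{\w}\,\norm{\vect p}}\vect p\vect p\trans\Bigr],
\end{equation*}
which is also the correct value at $\theta=0$ (the bracket vanishes), removing the case distinction. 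Then I would use a short toolkit of elementary estimates on $\neibor$, each a one-line mean-value-theorem bound: $\w\mapsto\hat\w$ is $\sqrt d$-Lipschitz and $1$-bounded; $\w\mapsto\hat\w\hat\w\trans$ is $2\sqrt d$-Lipschitz and $1$-bounded; $\w\mapsto1/\norm{\w}$ is $d$-Lipschitz and $\sqrt d$-bounded; $\w\mapsto\vect p(\w)$ is $O(\sqrt d)$-Lipschitz and $1$-bounded, hence so is $\w\mapsto\norm{\vect p(\w)}$. Combining these via the product rule ``if $g$ is $L_g$-Lipschitz and $M_g$-bounded and $H$ is $L_H$-Lipschitz and $M_H$-bounded then $gH$ is $(L_gM_H+M_gL_H)$-Lipschitz'': the scalar $\norm{\vect p}/\norm{\w}$ is $O(d)$-Lipschitz and $\sqrt d$-bounded, so the first bracket term is $O(d)$-Lipschitz; and $\tfrac1{\norm{\w}}\cdot\tfrac{\vect p\vect p\trans}{\norm{\vect p}}$ is $O(d)$-Lipschitz provided $\vect p\mapsto\vect p\vect p\trans/\norm{\vect p}$ is Lipschitz. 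Adding up gives $\hess\poploss$ $O(d)$-Lipschitz.

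The hard part, and the reason for rewriting the Hessian in terms of $\w$ rather than patching the $\theta=0$ branch by hand, is showing that $\vect p\mapsto\vect p\vect p\trans/\norm{\vect p}$ is Lipschitz through the degenerate point $\vect p=\bm{0}$. I would establish the explicit bound
\begin{equation*}
\left\|\frac{\vect p_1\vect p_1\trans}{\norm{\vect p_1}}-\frac{\vect p_2\vect p_2\trans}{\norm{\vect p_2}}\right\|\le 3\,\norm{\vect p_1-\vect p_2},
\end{equation*}
(with the left side read as $\bm{0}$ when some $\vect p_i$ vanishes) by writing $\vect p_i=r_i\u_i$ with $\u_i$ unit and $r_1\le r_2$, so the left side equals $\norm{r_1\u_1\u_1\trans-r_2\u_2\u_2\trans}\le 2r_1\norm{\u_1-\u_2}+|r_1-r_2|$, and then using the identity $\norm{\vect p_1-\vect p_2}^2=(r_1-r_2)^2+r_1r_2\norm{\u_1-\u_2}^2$, which yields both $|r_1-r_2|\le\norm{\vect p_1-\vect p_2}$ and $r_1\norm{\u_1-\u_2}\le\sqrt{r_1r_2}\,\norm{\u_1-\u_2}\le\norm{\vect p_1-\vect p_2}$. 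Composing this $3$-Lipschitz map with the $O(\sqrt d)$-Lipschitz map $\w\mapsto\vect p(\w)$ supplies the missing ingredient above. Everything else is routine calculus, and one checks that the factors of $d$ enter only through $1/\norm{\w}\le\sqrt d$ and $1/\norm{\w}^2\le d$.
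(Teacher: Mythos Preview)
Your argument is correct. The boundedness, Lipschitz and gradient-Lipschitz parts match the paper's (which simply calls them ``very straightforward given the formula of gradient and Hessian''), but for the Hessian-Lipschitz bound you take a genuinely different route. The paper bounds the directional derivative of $\hess\poploss$: it notes that a perturbation $\w\mapsto\w+\epsilon\hat\v$ changes the angle $\theta$ by at most $O(\epsilon/\norm{\w})$, and then asserts $\norm{\hess\poploss(\w+\epsilon\hat\v)-\hess\poploss(\w)}\le O(\epsilon/\norm{\w}^2)+o(\epsilon)$, giving the $O(d)$ bound from $\norm{\w}\ge 1/\sqrt d$. You instead rewrite the Hessian in terms of $\vect p(\w)=(\I-\hat\w\hat\w\trans)\w^\star$, which removes the case distinction at $\theta=0$, and then track Lipschitz constants of each factor via a product-rule bookkeeping. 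Your explicit lemma that $\vect p\mapsto\vect p\vect p\trans/\norm{\vect p}$ is $3$-Lipschitz through the origin is the key step that makes the argument go through at $\theta=0$; the paper's sketch does not spell out how the $\hat\u$ term behaves there. Your approach is longer but more explicit, and has the advantage of making precise exactly where the two powers of $1/\norm{\w}$ (hence the factor $d$) enter.
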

\begin{proof}
The bounded, Lipschitz, and gradient Lipschitz are all very straightforward given the formula of gradient and Hessian.
We will focus on proving Hessian Lipschitz. Equivalently, we show upper bounds on following quantity:

\begin{align*}
\lim_{\epsilon \rightarrow 0} \frac{\norm{\hess\poploss(\w + \epsilon\hat{\v})
- \hess \poploss(\w)}}{\epsilon}.
\end{align*}

Note that the change in $\theta$ is at most $O(\frac{\epsilon}{\norm{\w}})$, we have:
\begin{align*}
\norm{\hess\poploss(\w + \epsilon\hat{\v}) - \hess \poploss(\w)}
\le 
O(\frac{\epsilon}{\norm{\w}^2})
+ o(\epsilon).
\end{align*}
This gives:
\begin{align*}
\lim_{\epsilon \rightarrow 0} \frac{\norm{\hess\poploss(\w + \epsilon\hat{\v})
- \hess \poploss(\w)}}{\epsilon}
\le O(\frac{1}{\norm{\w}^2}) \le O(d),
\end{align*}
which finishes the proof.

\end{proof}

\begin{proof}[Proof of Lemma \ref{lem:ReLU}] For four claims in Lemma \ref{lem:ReLU},
claim 1 follows from Lemma \ref{lem:stay_RELU};
claim 2 follows from Lemma \ref{lem:smoothness_RELU};
claim 3 follows from Lemma \ref{lem:functiondif_RELU};
claim 4 follows from Lemma~\ref{lem:nonconvex_RELU} and Lemma~\ref{lem:onepointconvex_RELU}.
\end{proof}

\subsection{Proof of Theorem \ref{thm:ReLU}}
\begin{proof}
	The sample complexity $ \tilde{O}(d^4/\epsilon^3)$ can be directly computed from Lemma~\ref{lem:ReLU} and Theorem~\ref{thm:upperbound_informal}. 
\end{proof}


\newcommand{\ugrad}{\mathscr{G}}
\newcommand{\ufun}{\mathscr{F}}
\newcommand{\uspace}{\mathscr{S}}
\newcommand{\utime}{\mathscr{T}}
\newcommand{\cXs}{\mathcal{X}^{\xi}_{\text{stuck}}}
\newcommand{\la}{\langle}
\newcommand{\ra}{\rangle}
\newcommand{\ball}{\mathbb{B}}
\newcommand{\modify}[1]{#1 '}
\newcommand{\h}{\bm{h}}

\section{Proof of Stochastic gradient descent}\label{app:sgd}

Here for completeness we give the result for perturbed stochastic gradient descent, which is a adaptation of results in \cite{jin17escape} and will be formally presented in \citet{jin2018sgd}. 


Given stochastic gradient oracle $\g$, where $\E \g(\x; \theta) = \grad f(\x)$, and 

\begin{assumption} \label{assumption_SGD} function $f$ satisfies following property:
\begin{itemize}
\item $f(\cdot)$ is $\ell$-gradient Lipschitz and $\rho$-Hessian Lipschitz.
\item For any $\x \in \R^d$, $\g(\x;\theta)$ has sub-Gaussian tail with parameter $\sigma/\sqrt{d}$.
\end{itemize}
\end{assumption}

\begin{algorithm}[t]
\caption{Perturbed Stochastic Gradient Descent with Minibatch}\label{algo:Mini_PSGD}
\begin{algorithmic}
\renewcommand{\algorithmicrequire}{\textbf{Input: }}
\renewcommand{\algorithmicensure}{\textbf{Output: }}
\REQUIRE $\x_0$, learning rate $\eta$, noise radius $r$.
\FOR{$t = 0, 1, \ldots, $}
\STATE sample $\{\theta^{(1)}_t, \cdots \theta^{(m)}_t\} \sim \mathcal{D}$
\STATE $\g_t(\x_t) \leftarrow \sum_{i=1}^m \g (\x_t;\theta^{(i)}_t) /m $
\STATE $\x_{t+1} \leftarrow \x_t - \eta (\g_t(\x_t)+ \xi_t), \qquad \xi_t \text{~uniformly~} \sim B_0(r)$
\ENDFOR
\STATE \textbf{return} $\x_T$
\end{algorithmic}
\end{algorithm}

\begin{theorem}\label{thm:main_PSGD}
If function $f(\cdot)$ satisfies Assumption \ref{assumption_SGD}, then for any $\delta > 0$, with learning rate $\eta=1/\ell$, perturbation $r = \tilde{\Theta}(\epsilon)$ and large mini-batch size $m=\poly(d, B, \ell,\rho, \sigma, 1/\epsilon, \log(1/\delta))$, PSGD (Algorithm \ref{algo:Mini_PSGD}) will find $\epsilon$-second-order stationary point of $F$ with $1-\delta$ probability in following number of stochastic gradient queries:
\begin{equation*}
\tilde{O}\left(\frac{\ell \Delta_f }{\epsilon^2} \cdot m\right).
\end{equation*}
\end{theorem}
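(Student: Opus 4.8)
The plan is to follow the template of the perturbed gradient descent analysis of \citet{jin17escape} (as will be formalized in \citet{jin2018sgd}), the only genuinely new ingredient being the reduction of the minibatch stochastic setting to an ``inexact-gradient'' deterministic setting via concentration. Since $\g(\x;\theta)$ is sub-Gaussian with parameter $\sigma/\sqrt{d}$, a standard vector concentration bound (Hoeffding-type, or a covering-net argument over $\mathbb{S}^{d-1}$) shows that for any fixed $\x$ the minibatch average $\g_t(\x)=\frac1m\sum_{i=1}^m \g(\x;\theta^{(i)}_t)$ satisfies $\norm{\g_t(\x)-\grad f(\x)}\le \tilde{O}(\sigma/\sqrt{m})$ with probability $1-\delta'$. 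Union-bounding over the $\poly$ many iterations the algorithm runs, we may henceforth analyze the recursion $\x_{t+1}=\x_t-\eta(\grad f(\x_t)+\zeta_t+\xi_t)$, where $\zeta_t$ is an adversarial-but-bounded error with $\norm{\zeta_t}\le \tilde{O}(\sigma/\sqrt{m})$ and $\xi_t$ is the injected uniform perturbation from $\mathbb{B}_0(r)$. Choosing $m=\poly(d,B,\ell,\rho,\sigma,1/\epsilon,\log(1/\delta))$ large enough makes $\norm{\zeta_t}$ smaller than any prescribed inverse polynomial in $\epsilon$, so it is lower-order in every inequality below.

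Next, the large-gradient case: by the $\ell$-gradient-Lipschitz descent lemma, if $\norm{\grad f(\x_t)}\ge \epsilon$ then a single step with $\eta=1/\ell$ decreases $f$ by $\Omega(\epsilon^2/\ell)$, up to contributions from $\xi_t$ (order $r^2\ell=\tilde O(\epsilon^2)$ but controllable by the $\tilde\Theta$ constant) and $\zeta_t$ (negligible by the choice of $m$). Hence as long as the current iterate has gradient of norm at least $\epsilon$, the function value drops by $\Omega(\epsilon^2/\ell)$ per step.

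The heart of the argument, and the main obstacle, is the saddle-escape step: when $\norm{\grad f(\x_t)}<\epsilon$ but $\lambda_{\min}(\hess f(\x_t))\le -\sqrt{\rho\epsilon}$, we run the perturbed phase for $\mathscr{T}=\tilde\Theta(\ell/\sqrt{\rho\epsilon})$ steps and invoke the ``thin stuck-region'' coupling lemma of \citet{jin17escape}: among two runs coupled through their perturbations but resolved oppositely along the most-negative-curvature direction $\e$, at least one leaves an $\tilde\Theta(\cdot)$-neighborhood of $\x_t$, and by the improve-or-localize inequality $f$ then decreases by $\Omega(\sqrt{\epsilon^3/\rho})$ over those $\mathscr{T}$ steps with probability $\ge 1-\delta'$. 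The new subtlety is that the accumulated stochastic error $\sum_{s\le \mathscr{T}}\eta\norm{\zeta_s}$ must not overwhelm the exponential-in-$\mathscr{T}$ amplification along $\e$ that drives the escape; since there are only $\mathscr{T}=\poly$ steps and each $\norm{\zeta_s}$ is polynomially small, taking $m$ polynomially large keeps this accumulated error below the escape width, after which the deterministic coupling computation goes through essentially verbatim. I expect this step — verifying that the stuck set remains thin in the presence of the accumulated minibatch noise — to be where essentially all the care goes.

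Finally, termination and counting: $f$ is $B$-bounded, so $\Delta_f=f(\x_0)-\inf f<\infty$; each large-gradient step contributes $\Omega(\epsilon^2/\ell)$ decrease and each escape episode contributes $\Omega(\sqrt{\epsilon^3/\rho})$ decrease spread over $\tilde O(\ell/\sqrt{\rho\epsilon})$ steps, so the total number of iterations before no further progress is possible — i.e. before the current point is an $\epsilon$-SOSP of $f$ — is $\tilde O(\ell\Delta_f/\epsilon^2)$. Multiplying by the minibatch size $m$ gives the claimed query bound $\tilde O\!\big((\ell\Delta_f/\epsilon^2)\cdot m\big)$, and a union bound over the polynomially many episodes and concentration events keeps the overall failure probability below $\delta$.
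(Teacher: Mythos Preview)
Your proposal is correct and follows essentially the same approach as the paper: minibatch concentration (Lemma~\ref{lem:concentration}) reduces to an inexact-gradient analysis, the large-gradient descent step (Lemma~\ref{lem:large_grad_GD}) and the saddle-escape episode via the thin-stuck-region coupling (Lemmas~\ref{lem:locality} and~\ref{lem:width}) are exactly as you outline, and the final counting matches. The only refinement to note is that in the coupling step the accumulated noise is itself amplified by the $(\I-\eta\H)^{t-\tau}$ factors, so the paper compares $\max_\tau\norm{\h_\tau}/\gamma$ to $\norm{\w_0}$ rather than bounding a raw sum $\sum_s\eta\norm{\zeta_s}$; your conclusion that polynomially large $m$ suffices is unaffected.
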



In order to prove this theorem, let
\begin{equation} \label{eq:para_choice}
\eta = \frac{1}{\ell}, \qquad
\utime = \frac{\chi c}{\eta \sqrt{\rho\epsilon}},
\quad
\ufun = \sqrt{\frac{\epsilon^3}{\rho}}\chi^{-3}c^{-5},
\quad
r = \epsilon \chi^{-3}c^{-6},
\quad m=\poly(d, B, \ell,\rho, \sigma, 1/\epsilon, \log(1/\delta)),
\end{equation}
where $c$ is some large constant and $\chi = \max\{1, \log \frac{d \ell \Delta_f}{\rho\epsilon\delta}\}$

\begin{lemma}\label{lem:concentration} for any $\lambda > 0, \delta>0$, if minibatch size $m \ge \frac{2\lambda^2\sigma^2}{\epsilon^2} \log \frac{d}{\delta}$, then for a fixed $\x$, with probability $1-\delta$, we have:
\begin{equation*}
\norm{\grad f(\x) - \frac{1}{m}\sum_{i=1}^m \g (\x;\theta^{(i)}) } \le \frac{\epsilon}{\lambda}.
\end{equation*}
\end{lemma}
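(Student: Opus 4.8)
The claim is the standard concentration bound for the empirical mean of i.i.d.\ sub-Gaussian vectors, and the plan is to reduce it to a scalar bound coordinate by coordinate. First I would set $\Y_i \defeq \g(\x;\theta^{(i)}) - \grad f(\x)$ for $i=1,\dots,m$; by unbiasedness ($\E\g(\x;\theta)=\grad f(\x)$) these are i.i.d.\ and mean zero, and by the sub-Gaussian hypothesis of Assumption~\ref{assumption_SGD} (after at most a recentering that changes the parameter by an absolute constant), for every unit vector $\u$ the scalar $\inner{\u}{\Y_i}$ is mean-zero sub-Gaussian with parameter $O(\sigma/\sqrt{d})$. In particular, taking $\u$ to be a standard basis vector, each coordinate $Y_i^{(j)}$, $j\in[d]$, is a mean-zero sub-Gaussian scalar with parameter $O(\sigma/\sqrt d)$.

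Next I would fix $j\in[d]$ and control the coordinate average $\bar Y^{(j)}\defeq\frac1m\sum_{i=1}^m Y_i^{(j)}$. By tensorization of sub-Gaussianity, a sum of $m$ independent mean-zero sub-Gaussians, each with parameter $O(\sigma/\sqrt d)$, divided by $m$, is sub-Gaussian with parameter $O(\sigma/\sqrt{md})$, so it obeys the Hoeffding-type tail bound
\begin{equation*}
\Pr\!\left(|\bar Y^{(j)}|\ge t\right)\le 2\exp\!\left(-\frac{m d\, t^2}{2\sigma^2}\right).
\end{equation*}
A union bound over the $d$ coordinates together with the elementary inequality $\norm{\v}\le\sqrt d\,\max_j|v_j|$ then give that $\norm{\frac1m\sum_{i=1}^m\Y_i}\le \sqrt d\cdot t$ holds except with probability at most $2d\exp(-m d t^2/(2\sigma^2))$. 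Choosing $t=\epsilon/(\lambda\sqrt d)$ makes the left-hand side exactly $\epsilon/\lambda$ and the failure probability equal to $2d\exp(-m\epsilon^2/(2\lambda^2\sigma^2))$; the hypothesis $m\ge\frac{2\lambda^2\sigma^2}{\epsilon^2}\log\frac{d}{\delta}$ (equivalently $\log\frac{2d}{\delta}$, the factor of $2$ being absorbed into the constants) forces this to be at most $\delta$. Since $\frac1m\sum_i\Y_i=\frac1m\sum_i\g(\x;\theta^{(i)})-\grad f(\x)$, this is exactly the stated bound.

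There is essentially no hard step here; the only things to be careful about are the precise convention for ``sub-Gaussian with parameter $\sigma/\sqrt d$'' (one-dimensional projections versus the norm, centered versus uncentered) and the bookkeeping of absolute constants in the exponent, neither of which affects the form of the statement. One could alternatively replace the coordinatewise union bound by a covering argument over the unit sphere or by a vector Bernstein inequality; the coordinatewise route is cleanest and directly produces the $\log d$ factor appearing in the minibatch-size requirement used elsewhere in the analysis.
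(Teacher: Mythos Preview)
Your argument is correct and is exactly the standard route one would expect: coordinatewise sub-Gaussian concentration for the centered stochastic gradients (each coordinate having parameter $O(\sigma/\sqrt{d})$ by Assumption~\ref{assumption_SGD}), a union bound over the $d$ coordinates, and then the crude conversion $\norm{\v}\le\sqrt{d}\max_j|v_j|$. The factor-of-$2$ slack you flag between $\log(d/\delta)$ and $\log(2d/\delta)$ is real but, as you say, immaterial at the level of constants used throughout.

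There is nothing to compare against in the paper: the lemma is stated without proof and followed only by the one-line remark that a large enough minibatch makes the stochastic-gradient noise polynomially small. Your write-up is therefore strictly more detailed than what the paper provides, and the approach is the natural one implicitly intended.
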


This lemma means, when mini-batch size is large enough, we can make noise in the stochastic gradient descent polynomially small.

\begin{lemma}\label{lem:large_grad_GD}
Consider the setting of Theorem \ref{thm:main_PSGD}, if $\norm{\grad f(\x_t)} \ge \epsilon$, 
then by running Algorithm \ref{algo:PSGD}, with probability $1- \delta$, we have $f(\x_{t+1}) - f(\x_t) \le -\eta \epsilon^2/4$.
\end{lemma}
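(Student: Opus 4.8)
The plan is to establish the claimed one-step decrease by the classical ``descent lemma'' argument for gradient-Lipschitz functions, while carefully tracking both the mini-batch stochasticity and the injected perturbation $\xi_t$. First I would use the $\ell$-gradient Lipschitz property together with the update rule $\x_{t+1} - \x_t = -\eta(\g_t(\x_t) + \xi_t)$ to write
\[
f(\x_{t+1}) \le f(\x_t) - \eta \inner{\grad f(\x_t)}{\g_t(\x_t) + \xi_t} + \frac{\ell \eta^2}{2}\norm{\g_t(\x_t) + \xi_t}^2 .
\]

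Next I would decompose the batch gradient as $\g_t(\x_t) = \grad f(\x_t) + \zeta_t$, where $\zeta_t$ is the mini-batch error. Conditioning on $\x_t$ (which is measurable with respect to the history, while the fresh samples $\theta^{(i)}_t$ used at step $t$ are independent of that history), Lemma~\ref{lem:concentration} gives $\norm{\zeta_t} \le \epsilon/\lambda$ with probability $1-\delta$ provided $m \ge 2\lambda^2\sigma^2\epsilon^{-2}\log(d/\delta)$; I would take $\lambda$ to be a sufficiently large absolute constant, which keeps $m$ within $\poly(d,B,\ell,\rho,\sigma,1/\epsilon,\log(1/\delta))$ as in \eqref{eq:para_choice}. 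Combined with the deterministic bound $\norm{\xi_t} \le r = \epsilon\chi^{-3}c^{-6}$, the aggregate deviation $\e_t \defeq \zeta_t + \xi_t$ then satisfies $\norm{\e_t} \le \epsilon/\lambda + r \le \epsilon/\sqrt{2}$ once $c$ and $\lambda$ are large enough.

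The decrease then follows from a cancellation specific to the choice $\eta = 1/\ell$. Substituting $\g_t(\x_t) + \xi_t = \grad f(\x_t) + \e_t$ and using $\ell\eta^2/2 = \eta/2$, the two linear terms $-\eta\inner{\grad f(\x_t)}{\e_t}$ and $+\eta\inner{\grad f(\x_t)}{\e_t}$ cancel, leaving
\[
f(\x_{t+1}) \le f(\x_t) - \frac{\eta}{2}\norm{\grad f(\x_t)}^2 + \frac{\eta}{2}\norm{\e_t}^2 \le f(\x_t) - \frac{\eta}{2}\epsilon^2 + \frac{\eta}{4}\epsilon^2 = f(\x_t) - \frac{\eta\epsilon^2}{4},
\]
where the first inequality uses $\norm{\grad f(\x_t)} \ge \epsilon$ and the second uses $\norm{\e_t}^2 \le \epsilon^2/2$. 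This is exactly the claim, and the only randomness invoked is the batch sampling at step $t$, so the failure probability is the $\delta$ from Lemma~\ref{lem:concentration}.

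The argument is short; the one place requiring care is the middle step: verifying that driving the stochastic-gradient error below a constant fraction of $\epsilon$ needs only a polynomial mini-batch size, and that applying Lemma~\ref{lem:concentration}---stated for a fixed point---to the random iterate $\x_t$ is legitimate via conditioning. Everything else is a direct computation.
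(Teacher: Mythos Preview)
Your proposal is correct and follows essentially the same approach as the paper: apply the descent lemma, decompose the step into the true gradient plus an error $\e_t = (\g_t - \grad f(\x_t)) + \xi_t$, bound the error via Lemma~\ref{lem:concentration} and the perturbation radius $r$, and conclude. The only difference is that you exploit the exact cancellation of the cross term $\inner{\grad f(\x_t)}{\e_t}$ when $\eta = 1/\ell$, whereas the paper bounds it by Cauchy--Schwarz and therefore needs the tighter $\norm{\e_t}\le \epsilon/10$; your version is slightly cleaner but otherwise identical in spirit.
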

\begin{proof}
By gradient Lipschitz, and the fact $\norm{\xi_t} \le \epsilon/20$ and with minibatch size $m$ large enough, with high probability we have 
$\norm{\grad f(\x_t) - \g_t} \le \epsilon/20$. Let $\zeta_t = \g_t - \grad f(\x_t) + \xi_t$, by triangle inequality, 
we have $\norm{\zeta_t} \le \epsilon/10$ and update equation $\x_{t+1} = \x_t - \eta(\grad f(\x_t) + \zeta_t)$:
\begin{align*}
f(\x_{t+1}) \le& f(\x_t) + \la \grad f(\x_t), \x_{t+1} - \x_t \ra + \frac{\ell}{2}\norm{\x_{t+1} - \x_t}^2 \\
\le& f(\x_t) - \eta\norm{\grad f(\x_t)}^2 + \eta \norm{\grad f(\x_t)}\norm{\zeta_t} + \frac{\eta^2 \ell}{2}\left[\norm{\grad f(\x_t)}^2 + 
2\norm{\grad f(\x_t)}\norm{\zeta_t} + \norm{\zeta_t}^2\right]\\
\le& f(\x_t) - \eta \norm{\grad f(\x_t)}\left[\frac{1}{2}\norm{\grad f(\x_t)} - 2\norm{\zeta_t}\right] + \frac{\eta}{2}\norm{\zeta_t}^2
\le f(\x_t) - \eta \epsilon^2/4
\end{align*}
\end{proof}

\begin{lemma}\label{lem:neg_curve_GD}
Consider the setting of Theorem \ref{thm:main_PSGD}, if $\norm{\grad f(\x_t)} \le \epsilon$
and $\lambda_{\min}(\hess f(\x_t)) \le -\sqrt{\rho\epsilon}$, 
then by running Algorithm \ref{algo:PSGD}, with probability $1-\delta$, we have $f(\x_{t+\utime}) - f(\x_t) \le -\ufun$.
\end{lemma}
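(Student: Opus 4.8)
The plan is to reduce the stochastic setting to the deterministic perturbed-gradient-descent analysis near a strict saddle point, in the style of \citet{jin17escape}, using the large mini-batch size to render the stochastic-gradient error negligible over the relevant time window. \textbf{Step 1 (taming the noise).} I would first apply Lemma~\ref{lem:concentration} at each of the iterates $\x_t,\dots,\x_{t+\utime}$ and take a union bound over these $\utime$ steps: with $m$ polynomially large as in \eqref{eq:para_choice}, with probability at least $1-\delta/2$ the deviation $\zeta_s \defeq \g_s(\x_s)-\grad f(\x_s)$ satisfies $\norm{\zeta_s}\le\varsigma$ for every $s$ in the window, where $\varsigma$ is a prescribed inverse polynomial (we will only need $\varsigma$ below a fixed power of $\sqrt{\rho\epsilon}\,r/\sqrt d$). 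Conditioning on this event the update reads $\x_{s+1}=\x_s-\eta(\grad f(\x_s)+\zeta_s+\xi_s)$, i.e.\ perturbed gradient descent with a negligible extra adversarial drift $\zeta_s$.

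\textbf{Step 2 (improve or localize).} Combining the $\ell$-smoothness descent inequality with the smallness of $\norm{\zeta_s}$ and $\norm{\xi_s}$ (as in the proof of Lemma~\ref{lem:large_grad_GD}) and telescoping gives $\sum_{s=t}^{t+\utime-1}\norm{\x_{s+1}-\x_s}^2 \le O(\eta)\bigl(f(\x_t)-f(\x_{t+\utime})\bigr)+O(\utime\eta^2\varsigma^2)$. Hence, if the conclusion fails, i.e.\ $f(\x_{t+\utime})-f(\x_t)>-\ufun$, then the entire trajectory stays inside a ball $\mathbb{B}_0(\mathscr R)$ around $\x_t$ with $\mathscr R=\tilde{\Theta}(\chi^{-1}\sqrt{\epsilon/\rho})$; by $\rho$-Hessian Lipschitzness, $\norm{\hess f(\cdot)-\hess f(\x_t)}\le\rho\mathscr R$ throughout this ball, a small fraction of $\gamma\defeq-\lambda_{\min}(\hess f(\x_t))\ge\sqrt{\rho\epsilon}$.

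\textbf{Step 3 (coupling and the stuck region).} The core step is a two-point coupling. Let $\e_1$ be a unit eigenvector of $\hess f(\x_t)$ with eigenvalue $-\gamma$, and run two copies of the algorithm with identical randomness (same $\xi_s$, same mini-batch draws) whose initial perturbations differ by $\nu_0\e_1$, with $\nu_0=\tilde{\Theta}(\delta r/\sqrt d)$. Suppose for contradiction both copies stay in $\mathbb{B}_0(\mathscr R)$ for all $\utime$ steps. Subtracting the two updates and Taylor-expanding $\grad f$ about $\x_t$, the difference $\hat\x_s\defeq\x^{(1)}_s-\x^{(0)}_s$ obeys $\hat\x_{s+1}=(\I-\eta\,\hess f(\x_t))\hat\x_s+\eta\Delta_s$ with $\norm{\Delta_s}\le\rho\mathscr R\norm{\hat\x_s}+2\varsigma$ (the Hessian-Lipschitz remainder plus the mismatch of the tiny stochastic-gradient errors at the two points). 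Projecting onto $\e_1$, the factor $(1+\eta\gamma)$ drives geometric growth; an induction as in \citet{jin17escape} shows the $\e_1$-component dominates the orthogonal components and the error terms, so $\norm{\hat\x_s}$ grows like $(1+\eta\gamma)^s$ and exceeds $2\mathscr R$ within $\utime$ steps --- a contradiction. Thus, for any fixed perturbation coordinates orthogonal to $\e_1$, the set of $\e_1$-coordinates for which the run is still ``stuck'' after $\utime$ steps has length at most $\nu_0$, so the stuck region $\cXs\subseteq\mathbb{B}_0(r)$ satisfies $\Pr[\xi_t\in\cXs]\le(\nu_0/r)\cdot O(\sqrt d)\le\delta/2$.

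A union bound over the noise event of Step 1 and the event $\{\xi_t\in\cXs\}$ then yields $f(\x_{t+\utime})-f(\x_t)\le-\ufun$ with probability at least $1-\delta$, which is the claim (with the parameters as in \eqref{eq:para_choice}). I expect the main obstacle to be the bookkeeping in Step 3: one must keep the Hessian-Lipschitz remainder $\rho\mathscr R\norm{\hat\x_s}$ and the stochastic-gradient mismatch $\varsigma$ simultaneously subdominant to the geometric growth $(1+\eta\gamma)^s$ for as long as $\hat\x_s$ remains in $\mathbb{B}_0(\mathscr R)$. This is precisely what pins down the polylog powers of $\chi$ in $\utime,\ufun,r$ in \eqref{eq:para_choice} and what forces $m$ to be polynomially large --- the place where the SGD analysis genuinely departs from the deterministic perturbed-GD argument.
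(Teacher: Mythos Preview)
Your proposal is correct and follows the paper's strategy closely: concentrate the mini-batch noise, invoke an improve-or-localize bound, run a two-point coupling along the minimum-eigenvector direction to show the stuck region has small width, and finish with the volume ratio. The one notable difference is in how the coupling is set up: you share \emph{all} randomness (mini-batches and later $\xi_s$) between the two copies, whereas the paper shares only the later $\xi_s$ and draws \emph{independent} mini-batches for the two sequences, defining $\cXs$ probabilistically and using a $\sqrt{\delta}$ product-rule argument to extract the width bound. Your version is a bit more direct and closer to the deterministic argument of \citet{jin17escape}; just be explicit, when you fill in Step~3, that the Step~1 concentration must hold along \emph{both} coupled trajectories (not only the realized one), since the hypothetical second trajectory visits different points --- this is easily arranged by the same union-bound-over-the-filtration device, but your current wording only invokes concentration at the actual iterates.
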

\begin{proof}
See next section.
\end{proof}

\begin{proof}[Proof of Theorem \ref{thm:main_PSGD}]
Combining lemma \ref{lem:large_grad_GD} and \ref{lem:neg_curve_GD}, we know
with probability $1-\frac{\Delta_f}{\ufun} \delta$, algorithm will find $\epsilon$-second order stationary point in following iterations:
\begin{align*}
\frac{\Delta_f}{\eta \epsilon^2} + \frac{\Delta_f \utime}{\ufun}
\le O(\frac{2\Delta_f }{\eta \epsilon^2}\chi^4)
\end{align*}
Let $\delta' = \frac{\Delta_f}{\ufun} \delta$ and substitute $\delta$ in $\chi$ with $\delta'$, since $\chi = \max\{1, \log \frac{d \ell \Delta_f}{\rho\epsilon\delta}\}$, this substitution only affects constants. Finally note in each iteration, we use $m$ queries, which finishes the proof.
\end{proof}

\subsection{Proof of Lemma \ref{lem:neg_curve_GD}}

\begin{lemma}\label{lem:potential_decrease}
Let $\eta \le \frac{1}{\ell}$, then we have SGD satisfies:
\begin{equation*}
f(\x_{t+1}) - f(\x_{t}) \le -\frac{\eta}{4}\norm{\grad f(\x_t)}^2 + 5\eta \norm{\zeta_t}^2,
\end{equation*}
where $\zeta_t = \g_t - \grad f(\x_t) + \xi_t$.
\end{lemma}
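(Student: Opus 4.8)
The plan is to derive this inequality purely deterministically from the $\ell$-gradient Lipschitz property of $f$ (the first bullet of Assumption~\ref{assumption_SGD}); no concentration argument is needed, since $\zeta_t$ is treated simply as the realized vector $\g_t - \grad f(\x_t) + \xi_t$. First I would rewrite the update of Algorithm~\ref{algo:Mini_PSGD}: because $\zeta_t = \g_t - \grad f(\x_t) + \xi_t$, we have $\g_t + \xi_t = \grad f(\x_t) + \zeta_t$, hence $\x_{t+1} - \x_t = -\eta(\grad f(\x_t) + \zeta_t)$. The $\ell$-gradient Lipschitz property then gives the standard quadratic upper bound
\[
f(\x_{t+1}) \le f(\x_t) + \inner{\grad f(\x_t)}{\x_{t+1} - \x_t} + \frac{\ell}{2}\norm{\x_{t+1} - \x_t}^2 .
\]

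Substituting the update and using $\eta \le 1/\ell$ to replace $\frac{\ell \eta^2}{2}$ by $\frac{\eta}{2}$, I would obtain
\[
f(\x_{t+1}) - f(\x_t) \le -\eta\norm{\grad f(\x_t)}^2 - \eta\inner{\grad f(\x_t)}{\zeta_t} + \frac{\eta}{2}\norm{\grad f(\x_t) + \zeta_t}^2 .
\]
Expanding $\norm{\grad f(\x_t) + \zeta_t}^2 = \norm{\grad f(\x_t)}^2 + 2\inner{\grad f(\x_t)}{\zeta_t} + \norm{\zeta_t}^2$, the two inner-product terms cancel exactly, leaving $f(\x_{t+1}) - f(\x_t) \le -\frac{\eta}{2}\norm{\grad f(\x_t)}^2 + \frac{\eta}{2}\norm{\zeta_t}^2$, which, since both squared norms are nonnegative, immediately implies the claimed (deliberately loose) bound with coefficients $-\eta/4$ and $5\eta$. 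Alternatively, if one prefers not to rely on the exact cancellation, one can bound the cross term by Young's inequality, $-\eta\inner{\grad f(\x_t)}{\zeta_t} \le \frac{\eta}{4}\norm{\grad f(\x_t)}^2 + \eta\norm{\zeta_t}^2$, together with a modest expansion of $\norm{\grad f(\x_t)+\zeta_t}^2$, and collect terms to reach an inequality of the same form.

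There is essentially no obstacle here: the statement is a one-line consequence of smoothness plus the elementary identity above, and the stated constants are not tight. The only point requiring a little care is that this is an \emph{unconditional, deterministic} inequality in $\zeta_t$; the probabilistic content — controlling $\norm{\zeta_t}$ in terms of the minibatch size $m$ via Lemma~\ref{lem:concentration} and the perturbation bound $\norm{\xi_t}\le r$ — is supplied separately in the downstream estimates (Lemmas~\ref{lem:large_grad_GD} and \ref{lem:neg_curve_GD}) that invoke this potential-decrease lemma.
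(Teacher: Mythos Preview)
Your proposal is correct and follows essentially the same approach as the paper: apply the $\ell$-smoothness quadratic upper bound, substitute the update $\x_{t+1}-\x_t = -\eta(\grad f(\x_t)+\zeta_t)$, use $\eta\ell\le 1$, and then control the cross term. The only difference is cosmetic: the paper bounds the inner products by Cauchy--Schwarz (obtaining $2\eta\norm{\grad f(\x_t)}\norm{\zeta_t}$) and then applies Young's inequality to reach $-\tfrac{\eta}{4}\norm{\grad f(\x_t)}^2+\tfrac{9\eta}{2}\norm{\zeta_t}^2$, whereas you keep the inner products exact so that they cancel, yielding the sharper $-\tfrac{\eta}{2}\norm{\grad f(\x_t)}^2+\tfrac{\eta}{2}\norm{\zeta_t}^2$, which of course implies the stated bound.
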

\begin{proof}
By assumption, function $f$ is $\ell$-gradient Lipschitz, we have:
\begin{align*}
f(\x_{t+1}) \le & f(\x_t) + \la\grad f(\x_t), \x_{t+1} - \x_t \ra + \frac{\ell}{2} \norm{\x_{t+1} - \x_t}^2 \\
 \le & f(\x_t)  - \eta\la\grad f(\x_t), \grad f(\x_t) + \zeta_t\ra + \frac{\eta^2\ell}{2} (\norm{\grad f(\x_t)}^2 + 2\norm{\grad f(\x_t)}\norm{\zeta_t} + \norm{\zeta_t}^2)\\
 \le & f(\x_t)  - \frac{\eta}{2}\norm{\grad f(\x_t)}^2 + 2\eta \norm{\grad f(\x_t)}\norm{\zeta_t} + \frac{\eta}{2}  \norm{\zeta_t}^2 \\
 \le& f(\x_t) - \frac{\eta}{4}\norm{\grad f(\x_t)}^2  + \frac{9\eta}{2} \norm{\zeta_t}^2
\end{align*}
which finishes the proof.
\end{proof}

\begin{lemma}(Improve or Localize) \label{lem:locality}
Suppose $\{\x_t\}_{t=0}^T$ is a SGD sequence, then
for all $t \le T$:
$$\norm{\x_t - \x_0}^2 \le 8\eta T(f(\x_0) - f(\x_T)) + 50\eta^2 T \sum_{t=0}^{T-1}\norm{\zeta_t}^2,$$
where $\zeta_t = \g_t - \grad f(\x_t) + \xi_t$
\end{lemma}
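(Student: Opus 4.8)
The plan is to unroll the SGD recursion and then trade the accumulated gradient norms against the decrease in function value. Writing the update as $\x_{\tau+1} = \x_\tau - \eta(\grad f(\x_\tau) + \zeta_\tau)$ and telescoping gives, for every $t \le T$,
\[
\x_t - \x_0 = -\eta\sum_{\tau=0}^{t-1}\big(\grad f(\x_\tau) + \zeta_\tau\big).
\]
Applying Cauchy--Schwarz in the form $\norm{\sum_{\tau=0}^{t-1}\u_\tau}^2 \le t\sum_{\tau=0}^{t-1}\norm{\u_\tau}^2$ and then $\norm{p+q}^2 \le 2\norm{p}^2 + 2\norm{q}^2$ yields
\[
\norm{\x_t - \x_0}^2 \le 2\eta^2 t\left(\sum_{\tau=0}^{t-1}\norm{\grad f(\x_\tau)}^2 + \sum_{\tau=0}^{t-1}\norm{\zeta_\tau}^2\right),
\]
so it only remains to bound $\sum_{\tau=0}^{t-1}\norm{\grad f(\x_\tau)}^2$ by the function-value drop.

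For that I would rearrange Lemma \ref{lem:potential_decrease} into the inequality $\norm{\grad f(\x_\tau)}^2 \le \frac{4}{\eta}\big(f(\x_\tau) - f(\x_{\tau+1})\big) + 20\norm{\zeta_\tau}^2$ and sum it. The one place that needs care is that $f(\x_0) - f(\x_t)$ is \emph{not} controlled by $f(\x_0) - f(\x_T)$, since SGD does not decrease $f$ monotonically. This is circumvented by first enlarging the summation range using $\norm{\grad f(\x_\tau)}^2 \ge 0$: for any $t \le T$,
\[
\sum_{\tau=0}^{t-1}\norm{\grad f(\x_\tau)}^2 \le \sum_{\tau=0}^{T-1}\norm{\grad f(\x_\tau)}^2 \le \frac{4}{\eta}\big(f(\x_0) - f(\x_T)\big) + 20\sum_{\tau=0}^{T-1}\norm{\zeta_\tau}^2,
\]
where the second inequality is the telescoped form of the rearranged Lemma \ref{lem:potential_decrease}.

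Substituting this into the displayed bound on $\norm{\x_t - \x_0}^2$, using $t \le T$ and $\sum_{\tau=0}^{t-1}\norm{\zeta_\tau}^2 \le \sum_{\tau=0}^{T-1}\norm{\zeta_\tau}^2$, gives $\norm{\x_t - \x_0}^2 \le 8\eta T\big(f(\x_0) - f(\x_T)\big) + 42\eta^2 T\sum_{\tau=0}^{T-1}\norm{\zeta_\tau}^2$, and $42 \le 50$ finishes the proof. I do not anticipate a genuine obstacle: this is the standard ``improve-or-localize'' estimate, and the only subtle point is the telescoping step just noted, where one must extend the gradient-norm sum to the full horizon $T$ before invoking the potential decrease.
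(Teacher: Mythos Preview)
Your proposal is correct and follows essentially the same route as the paper: both use the per-step potential-decrease lemma to trade $\sum_\tau\norm{\grad f(\x_\tau)}^2$ against $f(\x_0)-f(\x_T)$, then Cauchy--Schwarz in the form $\norm{\sum_\tau \u_\tau}^2\le T\sum_\tau\norm{\u_\tau}^2$, and both extend the partial sum to the full horizon $T$ via nonnegativity. The only cosmetic difference is ordering---the paper first bounds each step $\norm{\x_{\tau+1}-\x_\tau}^2$ and applies Cauchy--Schwarz on the path at the end, whereas you unroll the recursion first---and your constant $42$ is in fact slightly sharper than the paper's $50$.
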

\begin{proof}
For any $t \le T-1$, by Lemma \ref{lem:potential_decrease}, we have:
\begin{align*}
\norm{\x_{t+1} - \x_{t}}^2 \le& \eta^2\norm{\grad f(\x_t) + \zeta_t}^2
\le 2\eta^2 \norm{\grad f(\x_t)}^2 + 2\eta^2 \norm{\zeta_t}^2 \\
\le& 8\eta (f(\x_{t+1} - \x_t)) + 50 \eta^2 \norm{\zeta_t}^2
\end{align*}
By Telescoping argument, we have:
\begin{align*}
\sum_{t=0}^{T-1}\norm{\x_{t+1} - \x_{t}}^2
\le 8\eta (f(\x_{T}) - f(\x_0)) + 50 \eta^2 \sum_{t=0}^{T-1}\norm{\zeta_t}^2
\end{align*}
Finally, by Cauchy-Schwarz, we have for all $t \le T$:
\begin{align*}
\norm{\x_t - \x_0}^2 \le &(\sum_{\tau = 1}^t \norm{\x_\tau - \x_{\tau - 1}})^2
\le t \sum_{\tau = 0}^{t-1} \norm{\x_{\tau+1} - \x_{\tau}}^2
\le T \sum_{\tau = 0}^{T-1} \norm{\x_{\tau+1} - \x_{\tau}}^2
\end{align*}
which finishes the proof.
\end{proof}

To study escaping saddle points, we need a notion of coupling. Recall the PSGD update has two source of randomness:
$\g_t - \grad f(\x_t)$ which is the stochasticity inside the gradient oracle and $\xi_t$ which is the perturbation we deliberately added into the algorithm to help escape saddle points.
Let $\text{SGD}_{\xi}^{(t)}(\cdot)$ denote the update via SGD $t$ times with perturbation $\xi = \{\xi_2, \cdots\}$ fixed. Define Stuck region:
\begin{equation}
\cXs(\tilde{\x}) = \{\x | \x \in \ball_{\tilde{\x}}(\eta r), \text{~and~} \Pr(f(\text{SGD}_{\xi}^{(\utime)}(\x)) - f(\tilde{\x}) \ge -\ufun) \ge \sqrt{\delta}\}
\end{equation}
Intuitively, the later perturbations of coupling sequence are the same, while the very first perturbation is used to escape saddle points. 

\begin{lemma}\label{lem:width}
There exists large enough constant $c$, so that if $\norm{\grad f(\tilde{\x})} \le \epsilon$
and $\lambda_{\min}(\hess f(\tilde{\x})) \le -\sqrt{\rho\epsilon}$, then the width of $\cXs(\tilde{\x})$ along 
the minimum eigenvector direction of $\tilde{\x}$ is at most $\delta \eta r \sqrt{2 \pi/d}$.
\end{lemma}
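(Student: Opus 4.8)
The plan is to run the two-point coupling argument of \citet{jin17escape}, adapted so that the extra stochastic-gradient randomness is absorbed into a small ``noise budget'' controlled by the minibatch size. I would argue by contradiction: suppose the width of $\cXs(\tilde{\x})$ along the minimum eigenvector direction $\e_1$ of $\hess f(\tilde{\x})$ exceeds $\delta\eta r\sqrt{2\pi/d}$, so that there are two points $\x_0,\x_0'\in\cXs(\tilde{\x})$ with $\x_0-\x_0'=\lambda\e_1$ for some $\lambda>\delta\eta r\sqrt{2\pi/d}$. Run two PSGD trajectories $\{\x_t\}$ and $\{\x_t'\}$ from these points, sharing the fixed later perturbations $\xi_2,\xi_3,\dots$ but drawing \emph{independent} minibatches, so that the events $A=\{\x_0\text{ is stuck}\}$ and $B=\{\x_0'\text{ is stuck}\}$ are independent; by the definition of $\cXs(\tilde{\x})$ each has probability at least $\sqrt{\delta}$, hence $\Pr(A\cap B)\ge\delta$.

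First I would set up the difference recursion. Writing $\hat{\x}_t=\x_t-\x_t'$ and $\mat{H}=\hess f(\tilde{\x})$, the fundamental theorem of calculus applied to $\grad f$ gives $\grad f(\x_t)-\grad f(\x_t')=(\mat{H}+\mat{\Delta}_t)\hat{\x}_t$ with $\norm{\mat{\Delta}_t}\le\rho(\norm{\x_t-\tilde{\x}}+\norm{\x_t'-\tilde{\x}})$ by $\rho$-Hessian Lipschitzness, so
\[
\hat{\x}_{t+1}=(\I-\eta\mat{H})\hat{\x}_t-\eta\mat{\Delta}_t\hat{\x}_t-\eta(\zeta_t-\zeta_t'),
\]
where $\zeta_t,\zeta_t'$ are the per-step stochastic-gradient error terms. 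Let $\mathcal{G}$ be the event that $\norm{\zeta_t}$ and $\norm{\zeta_t'}$ are polynomially small for every $t\le\utime$; by Lemma~\ref{lem:concentration} and a union bound, a minibatch size $m=\poly(d,B,\ell,\rho,\sigma,1/\epsilon,\log(1/\delta))$ makes $\Pr(\mathcal{G}^c)<\delta$. On $\mathcal{G}\cap A\cap B$, the Improve-or-Localize lemma (Lemma~\ref{lem:locality}) together with Lemma~\ref{lem:potential_decrease} forces $\norm{\x_t-\tilde{\x}},\norm{\x_t'-\tilde{\x}}=O(\sqrt{\epsilon/\rho})$ for all $t\le\utime$ (a small function decrease bounds the travel distance), hence $\eta\norm{\mat{\Delta}_t}\le\tfrac12\eta\sqrt{\rho\epsilon}$, and the noise term in the recursion is negligible compared with the leading dynamics.

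The second step is the standard geometric-growth induction: along $\e_1$, the most expanding direction of $\I-\eta\mat{H}$ with factor $1+\eta\sqrt{\rho\epsilon}$, one shows by induction on $t\le\utime$ that $\hat{\x}_t$ stays dominated by its $\e_1$-component and $\norm{\hat{\x}_t}\ge\tfrac{\lambda}{2}(1+\eta\sqrt{\rho\epsilon})^t$, since neither the $\mat{\Delta}_t$ correction nor the tiny noise can cancel the growth. With $\utime=\chi c/(\eta\sqrt{\rho\epsilon})$ from \eqref{eq:para_choice} and $\chi\ge\log\frac{d\ell\Delta_f}{\rho\epsilon\delta}$, the factor $(1+\eta\sqrt{\rho\epsilon})^{\utime}$ is large enough that, because $\lambda>\delta\eta r\sqrt{2\pi/d}$, we get $\norm{\hat{\x}_{\utime}}>2\cdot O(\sqrt{\epsilon/\rho})\ge\norm{\x_{\utime}-\tilde{\x}}+\norm{\x_{\utime}'-\tilde{\x}}$, a contradiction. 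Hence $\mathcal{G}\cap A\cap B=\emptyset$, so $\Pr(A\cap B)\le\Pr(\mathcal{G}^c)<\delta$, contradicting $\Pr(A\cap B)\ge\delta$; therefore no two points of $\cXs(\tilde{\x})$ along $\e_1$ can be separated by more than $\delta\eta r\sqrt{2\pi/d}$, which is the claim.

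The part I expect to be most delicate is closing the self-referential loop in the induction: the geometric-growth bound on $\norm{\hat{\x}_t}$ is valid only while both trajectories stay inside the $O(\sqrt{\epsilon/\rho})$ localization ball, and that localization is guaranteed only while both are stuck, so the conditioning must be arranged carefully so that ``stuck'' can legitimately be assumed throughout $t\le\utime$ and the contradiction is read off exactly at $t=\utime$. Two subordinate points are (i) justifying the independence $A\perp B$ while still keeping $\xi_t-\xi_t'=0$ in the difference recursion (fresh gradient samples for the second trajectory, shared later perturbations), and (ii) the calibration of $\utime$ through $\chi$, which is what turns the deterministic exponential width $O(\sqrt{\epsilon/\rho})(1+\eta\sqrt{\rho\epsilon})^{-\utime}$ into the stated $\delta\eta r\sqrt{2\pi/d}$ (so that, downstream, a uniformly random perturbation lands in $\cXs(\tilde{\x})$ with probability at most $\delta$).
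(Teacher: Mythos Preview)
Your proposal is correct and follows essentially the same approach as the paper: couple two trajectories sharing the fixed perturbations $\xi_t$ but with independent minibatch draws, use Improve-or-Localize on the ``both stuck'' event to confine both sequences to an $O(\sqrt{\epsilon/\rho})$ ball, then run the geometric-growth induction on the difference $\hat{\x}_t$ driven by $(\I-\eta\mat{H})$ along $\e_1$ until it exceeds the localization radius. You have also correctly flagged the two subtleties the paper handles implicitly (the self-referential localization/growth loop resolved by reading the contradiction only at $t=\utime$, and the independence of $A,B$ coming from fresh minibatches while $\xi$ is shared so the perturbation terms cancel in the difference recursion).
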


\begin{proof}
To prove this, let $\e_{\min}$ be the minimum eigenvector direction of $\hess f(\tilde{\x})$, it suffices to show for any $\x_0, \modify{\x}_0\in \ball_{\tilde{\x}}(\eta r)$ so that $\x_0 - \modify{\x}_0 = \lambda \e_{\min}$ where $|\lambda| \ge \delta \eta r \sqrt{2\pi/d}$, then either $\x_0 \not \in \cXs(\tilde{\x})$ or $\modify{\x}_0 \not \in\cXs(\tilde{\x})$.
Let $\x_\utime = \text{SGD}^{(\utime)}(\x_0)$ and $\modify{\x}_\utime = \text{SGD}^{(\utime)}(\modify{\x}_0)$ where two sequence are independent. 
To show $\x_0 \not \in \cXs(\tilde{\x})$ or $\modify{\x}_0 \not \in\cXs(\tilde{\x})$. We first argue showing following with probability $1-\delta$ suffices:
\begin{equation} \label{eq:SGD_width_obj}
\min\{f(\x_\utime) - f(\tilde{\x}), f(\modify{\x}_\utime) - f(\tilde{\x})\} \le - \ufun
\end{equation}
Since $\x_\utime$ and $\modify{\x}_0$ are independent, we have 
$$\Pr(\x_1 \in \cXs(\tilde{\x}) \cdot \Pr(\x_2 \in \cXs(\tilde{\x}) = \Pr(\x_1 \in \cXs(\tilde{\x}) \text{~and~} \x_2 \in \cXs(\tilde{\x})) \le \delta$$
This gives $\min\{\Pr(\x_1 \in \cXs(\tilde{\x}), \Pr(\x_2 \in \cXs(\tilde{\x})\} \le \sqrt{\delta}$ i.e. $\x_0 \not \in \cXs(\tilde{\x})$ or $\modify{\x}_0 \not \in\cXs(\tilde{\x})$ by definition.

In the remaining proof, we will proceed proving Eq.\eqref{eq:SGD_width_obj} by showing two steps:
\begin{enumerate}
\item $\max\{f(\x_0) - f(\tilde{\x}), f(\modify{\x}_0) - f(\tilde{\x})\} \le  \ufun$
\item $\min\{f(\x_\utime) - f(\x_0), f(\modify{\x}_\utime) - f(\modify{\x}_0)\} \le - 2\ufun$ with probability $1-\delta$
\end{enumerate}
The final result immediately follow from triangle inequality.

\textbf{Part 1.} Since $\x_0 \in \ball_{\tilde{\x}}(\eta r)$ and $\norm{\grad f(\x)} \le \epsilon$, by smoothness, we have:
\begin{equation*}
f(\x_0) - f(\tilde{\x}) \le \epsilon \eta r + \frac{\ell}{2}(\eta r)^2
\le O(\frac{\epsilon^2}{\ell} \chi^{-3} c^{-6}) \le \ufun
\end{equation*}
The last inequality is due to $\ell/\sqrt{\rho\epsilon} \ge 1$, and constant $c$ large enough. By symmetry, we can also prove same upper bound for 
$f(\modify{\x}_0) - f(\tilde{\x})$.

~

\textbf{Part 2.} Assume the contradiction $\min\{f(\x_\utime) - f(\x_0), f(\modify{\x}_\utime) - f(\modify{\x}_0)\} \ge - 2\ufun$, by Lemma \ref{lem:locality}
(note $\norm{\zeta_t} \le \norm{\g_t - \grad f(\x_t)} + \norm{\xi_t} \le 2r$ with high probability when $m$ is large enough), with $1-\delta/2$ probability, this implies localization:
\begin{align*}
\forall t\le \utime, \quad &\max\{\norm{\x_t - \tilde{\x}}, \norm{\modify{\x}_t - \tilde{\x}}\} \\
\le& \max\{\norm{\x_t - \x_0} + \norm{\x_0 - \tilde{\x}}, \norm{\modify{\x}_t - \modify{\x}_0} + \norm{\modify{\x}_0 - \tilde{\x}}\} \\
\le& \sqrt{8\eta \utime\ufun + 50\eta^2 \utime \epsilon^2 \chi^{-4}c^{-4}} + \eta r \defeq \uspace = O(\sqrt{\frac{\epsilon}{\rho}} \chi^{-1} c^{-2})
\end{align*}
That is, both SGD sequence $\{\x_t\}_{t=0}^{\utime}$ and $\{\modify{\x}_t\}_{t=0}^{\utime}$ will not leave a local ball with radius $\uspace$ around $\tilde{\x}$. Denote $\H = \hess f(\tilde{\x})$. By stochastic gradient update $\x_{t+1} = \x_t - \eta (\g_t(\x_t)+ \xi_t)$, we can track the difference sequence $\w_t \defeq \x_t- \modify{\x}_t$ as:
\begin{align*}
\w_{t+1} =& \w_t -\eta [\grad f(\x_t) - \grad f(\modify{\x}_t)] - \eta \h_t
= (\I - \eta\H)\w_t - \eta (\Delta_t \w_t + \h_t) \\
=& (\I - \eta\H)^{t+1}\w_0 - \eta\sum_{\tau = 0}^t(\I - \eta\H)^{t-\tau} (\Delta_\tau \w_\tau  + \h_\tau),
\end{align*}
where $\H = \hess f(\tilde{\x})$ and $\Delta_t = \int_{0}^1 [\hess f(\modify{\x}_t + \theta (\x_t -\modify{\x}_t) - \H]\mathrm{d} \theta $
and $\h_t = \g_t(\x_t) - \g_t(\modify{\x}_t) - [\grad f(\x_t) - \grad f(\modify{\x}_t)]$. By Hessian Lipschitz, we have
$\norm{\Delta_t} \le \rho \max\{\norm{\x_t - \tilde{\x}}, \norm{\modify{\x}_t - \tilde{\x}} \} \le \rho\uspace$. We use induction to prove following:
\begin{align*}
\norm{\eta\sum_{\tau = 0}^{t-1}(\I - \eta\H)^{t-1-\tau} (\Delta_\tau \w_\tau  + \h_\tau) \w_\tau }
\le \frac{1}{2}\norm{(\I - \eta\H)^{t}\w_0}
\end{align*}
That is, the first term is always the dominating term.
It is easy to check for base case $t=0$; we have $0 \le \norm{\w_0}/2$. Suppose for all $\modify{t} \le t$ the induction holds, this gives:
\begin{align*}
\norm{\w_{\modify{t}}} \le
 \norm{(\I - \eta\H)^{\modify{t}}\w_0} + \norm{\eta\sum_{\tau = 0}^{\modify{t} - 1}(\I - \eta\H)^{\modify{t} - 1-\tau} (\Delta_\tau \w_\tau  + \h_\tau)} 
 \le 2\norm{(\I - \eta\H)^{\modify{t}}\w_0}
\end{align*}
Denote $\gamma = \lambda_{\min}(\hess f(\tilde{\x}))$, for case $t+1 \le \utime$, we have:
\begin{align*}
\norm{\eta\sum_{\tau = 0}^{t}(\I - \eta\H)^{t-\tau} \Delta_\tau \w_\tau }
\le& \eta\rho\uspace \sum_{\tau = 0}^{t} \norm{(\I - \eta\H)^{t-\tau}}\norm{\w_\tau} 
\le \eta\rho\uspace \sum_{\tau = 0}^{t} (1+\eta\gamma)^t \norm{\w_0}\\
\le &\eta\rho\uspace (t+1) \norm{(\I - \eta\H)^{t+1}\w_0}
\le \eta\rho\uspace\utime \norm{(\I - \eta\H)^{t+1}\w_0}\\
\le &\frac{1}{4}\norm{(\I - \eta\H)^{t+1}\w_0},
\end{align*}
where the third last inequality use the fact $\w_0$ is along minimum eigenvector direction of $\H$,
the last inequality uses the fact $\eta\rho\uspace T =c^{-1} \le 1/4$ for $c$ large enough. 

On the other hand, with $1-\delta/2$ probability, we also have:
\begin{align*}
\norm{\eta\sum_{\tau = 0}^{t}(\I - \eta\H)^{t-\tau} \h_{\tau} }
\le \eta \sum_{\tau = 0}^t (1+\eta\gamma)^{t - \tau} \norm{\h_{\tau}}
\le (1+\eta\gamma)^{t+1}\frac{\max_{\tau} \norm{\h_{\tau}}}{\gamma}
\le \frac{1}{4}\norm{(\I - \eta\H)^{t+1}\w_0},
\end{align*}
where the last inequality requires $\max_{\tau} \norm{\h_{\tau}} \le \gamma \norm{\w_0}$ which can be achieved by making minibatch size $m$ large enough. Now, by triangular inequality, we finishes the induction. 

Finally, we have:
\begin{align*}
\norm{\w_\utime} \ge& \norm{(\I - \eta\H)^{\utime}\w_0} - \norm{\eta\sum_{\tau = 0}^{\utime-1}(\I - \eta\H)^{\utime-1-\tau} (\Delta_\tau \w_\tau + \h_{\tau}) }\\
\ge& \frac{1}{2}\norm{(\I - \eta\H)^{\utime}\w_0}
\ge \frac{(1+\eta\sqrt{\rho\epsilon})^\utime \norm{\w_0}}{2}\\
=& 2^{\chi c} \cdot \frac{\delta \epsilon\chi^{-3}c^{-6}}{2\ell} \sqrt{\frac{2\pi}{d}} 
\ge 8\sqrt{\frac{\epsilon}{\rho}} \chi^{-1}c^{-2} = 2\uspace,
\end{align*}
where the last inequality requires 
$$2^{\chi c} \ge \frac{16}{\sqrt{2\pi}} \cdot \frac{\ell \sqrt{d}}{\delta\sqrt{\rho \epsilon}} \chi^2 c^4$$
Since $\chi = \max\{1, \log \frac{d \ell \Delta_f}{\rho\epsilon\delta}\}$, it is easy to verify when $c$ large enough, above inequality holds. 
This gives $\norm{\w_\utime} \ge 2\uspace$
, which contradicts with the localization fact $\max\{\norm{\x_\utime - \tilde{\x}}, \norm{\modify{\x}_\utime - \tilde{\x}}\} \le \uspace$.

\end{proof}

\begin{proof}[Proof of Lemma \ref{lem:neg_curve_GD}]
Let $r_0 = \delta r \sqrt{\frac{2\pi}{d}}$ and applying Lemma \ref{lem:width}, we know $\cXs(\x_t)$ has at most width $\eta r_0$ in 
the minimum eigenvector direction of $\hess f(\x_t)$ and thus,
\begin{align*}
\text{Vol}(\cXs) \le \text{Vol}(\ball_0^{(d-1)}(\eta r)) \cdot \eta r_0
\end{align*}
which gives:
\begin{align*}
\frac{\text{Vol}(\cXs)}{\text{Vol}(\ball^{(d)}_{\x_t}(\eta r))}
\le \frac{\eta r_0 \times \text{Vol}(\ball^{(d-1)}_0(\eta r))}{\text{Vo{}l} (\ball^{(d)}_0( \eta  r))}
= \frac{r_0}{r\sqrt{\pi}}\frac{\Gamma(\frac{d}{2}+1)}{\Gamma(\frac{d}{2}+\frac{1}{2})}
\le \frac{r_0}{r\sqrt{\pi}} \cdot \sqrt{\frac{d}{2}+\frac{1}{2}} \le \delta
\end{align*}
Therefore with $1-\delta$ probability, the perturbation lands in $\ball^{(d)}_{\x_t}(\eta r) - \cXs$, where by definition we have with probability at least $1-\sqrt{\delta}$
\begin{equation*}
f(\text{SGD}_{\xi}^{(\utime)}(\x)) - f(\tilde{\x}) \le -\ufun
\end{equation*}
Therefore the probabilty of escaping saddle point is $(1-\delta)(1-\sqrt{\delta}) \ge  1-2\sqrt{\delta}$. Reparametrizing $\delta' = 2\sqrt{\delta}$ only affects constant factors in $\chi$, hence we finish the proof.
\end{proof}

\end{document}